\documentclass[11pt]{article} 

\usepackage[letterpaper,margin=1in]{geometry}
\usepackage[letterpaper,margin=1in]{geometry}

\def\colorful{0}
\ifnum\colorful=1

\newcommand{\blue}[1]{{\color{blue} #1}}
\else

\newcommand{\blue}[1]{{#1}}

\fi

\newif\ifred
\redtrue

\usepackage[T1]{fontenc}
\usepackage{microtype}

\usepackage{titlesec}
\titleformat*{\paragraph}{\bfseries}

\usepackage{amsthm,amsmath,amssymb,amsfonts,amssymb,mathtools}
\usepackage{amsmath,amssymb,amsfonts,amssymb,mathtools}
\usepackage{xcolor}
\usepackage{empheq}
\usepackage{dsfont}
\usepackage{mathrsfs}
\usepackage{graphicx}
\usepackage{enumitem}
\usepackage{aliascnt} 
\usepackage{xspace}
\usepackage{bbm}

\usepackage{caption}
\usepackage{tikz}
\usetikzlibrary{patterns}
\usetikzlibrary{patterns}
\usetikzlibrary{arrows,shapes,automata,backgrounds,petri,positioning}
\usetikzlibrary{shadows}
\usetikzlibrary{calc}
\usetikzlibrary{spy}
\usetikzlibrary{angles, quotes}
\usetikzlibrary{matrix}
\usepackage{pgf,pgfplots}
\usepackage{pgfmath,pgffor}
\pgfplotsset{compat=1.17}

\usepackage{framed}
 \usepackage{algorithm}
\usepackage[noend]{algpseudocode}
\definecolor[named]{ACMBlue}{cmyk}{1,0.1,0,0.1}
\definecolor[named]{ACMYellow}{cmyk}{0,0.16,1,0}
\definecolor[named]{ACMOrange}{cmyk}{0,0.42,1,0.01}
\definecolor[named]{ACMRed}{cmyk}{0,0.90,0.86,0}
\definecolor[named]{ACMLightBlue}{cmyk}{0.49,0.01,0,0}
\definecolor[named]{ACMGreen}{cmyk}{0.20,0,1,0.19}
\definecolor[named]{ACMPurple}{cmyk}{0.55,1,0,0.15}
\definecolor[named]{ACMDarkBlue}{cmyk}{1,0.58,0,0.21}

\usepackage[colorlinks,citecolor=blue,linkcolor=magenta,bookmarks=true]{hyperref}
 \usepackage[nameinlink]{cleveref}
\creflabelformat{ineq}{#2{\upshape(#1)}#3}
\crefname{sub}{Subsection}{Subsection}
\creflabelformat{Subsection}{#2{\upshape(#1)}#3}
\crefname{sdp}{SDP}{SDP}
\creflabelformat{sdp}{#2{\upshape(#1)}#3}
\crefname{lp}{LP}{LP}
\creflabelformat{lp}{#2{\upshape(#1)}#3}
\usepackage{aliascnt}
\usepackage{cleveref}
\crefname{ineq}{Inequality}{Inequality}
\creflabelformat{ineq}{#2{\upshape(#1)}#3}
\crefname{sub}{Subsection}{Subsection}
\creflabelformat{Subsection}{#2{\upshape(#1)}#3}
\crefname{sdp}{SDP}{SDP}
\creflabelformat{sdp}{#2{\upshape(#1)}#3}
\crefname{lp}{LP}{LP}
\creflabelformat{lp}{#2{\upshape(#1)}#3}



\newtheorem{theorem}{Theorem}[section]

\newtheorem{lemma}[theorem]{Lemma}

\newtheorem{informal theorem}[theorem]{Theorem (informal statement)}

\newtheorem{corollary}[theorem]{Corollary}
 \newtheorem{claim}[theorem]{Claim}
\newtheorem{fact}[theorem]{Fact}

\newtheorem{definition}[theorem]{Definition}

\newcommand{\eqdef}{\coloneqq}

\newcommand{\lp}{\left}
\newcommand{\rp}{\right}
\newcommand\norm[1]{\left\| #1 \right\|}

\renewcommand\vec[1]{\mathbf{#1}}
\DeclareMathOperator*{\pr}{\mathbf{Pr}}
\DeclareMathOperator*{\E}{\mathbf{E}}
\newcommand{\proj}{\mathrm{proj}}

\newcommand{\cN}{\mathcal{N}}

\DeclareMathOperator*{\argmin}{argmin}
\DeclareMathOperator*{\argmax}{argmax}

\newcommand{\e}{\mathbf{e}}

\newcommand{\err}{\mathrm{err}}

\renewcommand{\H}{\mathcal H }

\newcommand{\R}{\mathbb{R}}
\newcommand{\s}{\mathbb{S}}
\newcommand{\Z}{\mathbb{Z}}

\newcommand{\eps}{\epsilon}
\newcommand{\dtv}{d_{\mathrm TV}}
\newcommand{\poly}{\mathrm{poly}}

\newcommand{\var}{\mathbf{Var}}
\newcommand{\cov}{\mathbf{Cov}}

\newcommand{\sgn}{\mathrm{sign}}
\newcommand{\sign}{\mathrm{sign}}

\newcommand{\Ind}{\mathds{1}}

\newcommand{\matr[1]}{\boldsymbol{#1}}

\newcommand{\x}{\vec x}
\newcommand{\w}{\vec w}

\newcommand{\opt}{\mathrm{opt}}

\newcommand{\iid}{{i.i.d.}\ }
\newcommand{\abs}[1]{\lp| #1 \rp|}

\renewcommand\Pr{\pr}

\begin{document}

\title{Polynomial-Time Robust Multiclass Linear Classification under Gaussian Marginals
}

\footnotetext[1]{Authors are listed in alphabetical order.}
\author{
Ilias Diakonikolas\thanks{Supported by NSF Medium Award CCF-2107079, 
ONR Award number N00014-25-1-2268, and an H.I. Romnes Faculty Fellowship.}\\
University of Wisconsin-Madison\\
\texttt{ilias@cs.wisc.edu}
\and
Giannis Iakovidis\thanks{Supported by ONR award number N00014-25-1-2268.}
\\
University of Wisconsin-Madison\\
\texttt{iakovidis@wisc.edu}
\and
Mingchen Ma\thanks{Supported by NSF Award  CCF-2144298 (CAREER).}\\
University of Wisconsin-Madison\\
\texttt{mingchen@cs.wisc.edu}
}

\maketitle

\begin{abstract}
We study the task of agnostic learning of multiclass linear classifiers under the Gaussian distribution. Given labeled examples $(x, y)$ from a distribution over $\mathbb{R}^d \times [k]$, with Gaussian $x$-marginal, the goal is to output a hypothesis whose error is comparable to that of the best $k$-class linear classifier. While the binary case $k=2$ has a well-developed algorithmic theory, much less is known for $k \ge 3$. Even for $k=3$, prior robust algorithms incur exponential dependence on the inverse of the desired accuracy in both complexity and representation size. In this work, we develop new structural results for multiclass linear classifiers and use them to design fully polynomial-time robust learners with dimension-independent error guarantees. Our first result shows that the standard multiclass perceptron algorithm requires super-polynomially many samples and updates, even with clean labels and Gaussian marginals, revealing a basic obstruction absent in the binary case. Our main positive result is a pairwise improper-learning framework which yields an efficient learner with error $\widetilde O(k^{3/2}\sqrt{\mathrm{opt}})+\epsilon$ for general $k$. Additionally, we develop a sharper localization-based framework which leads to error $O(\mathrm{opt})+\epsilon$ for $k=3$, and error $\mathrm{poly}(k)\mathrm{opt}+\epsilon$ for geometrically regular $k$-class linear classifiers.
\end{abstract}

\thispagestyle{empty}

\newpage

\setcounter{page}{1}

\section{Introduction}

A $k$-multiclass linear classifier ($k$-MLC) 
is any function $f: \R^d \to [k]$ of the form 
$f(x) = \argmax_{i \in [k]} (w_i\cdot x)$, where $w_i \in \R^d$. 
Multiclass Linear Classification---the task of learning 
an unknown multiclass linear classifier from random labeled examples---is 
a textbook machine learning problem~\cite{shalev2014understanding}, 
which has been extensively studied both theoretically and empirically 
\cite{platt1999large,hsu2002comparison,aly2005survey,duan2005best, 
tewari2007consistency,kakade2008efficient,huang2011extreme,beygelzimer2019bandit}.

The sample complexity of PAC learning MLC is fairly well-understood, 
even in the presence of noise. 
Specifically, standard arguments, see, 
e.g.,~\cite{shalev2014understanding}, 
give that $\poly(d, k, 1/\eps)$ samples information-theoretically
suffice to achieve 0-1 error $\opt+\eps$, 
where $\opt$ is the optimal error achievable 
by any function in the class. 
Yet the computational complexity of noisy multiclass linear classification is much less understood. 
In the realizable setting (i.e., in the presence of clean labels), 
the problem is solvable in polynomial-time via 
a reduction to linear programming without any distributional assumptions.

In the presence of label noise, the complexity landscape becomes significantly 
more intricate. In the distribution-free setting, 
the problem is known to be computationally intractable, even for $k=2$. Specifically, in the distribution-free agnostic model~\cite{haussler1992decision, kearns1992toward}, computational hardness is known for approximating $\opt$ to 
within any constant factor~\cite{daniely2016complexity,tiegel2023hardness}. 
Moreover, even for the special case of 
Random Classification Noise~\cite{angluin1988learning}---under which binary linear classification is efficiently solvable---recent work~\cite{diakonikolas2025statistical} shows that as soon as $k$ increases 
from $2$ to $3$, it becomes computationally hard to output any hypothesis that 
outperforms random guessing. 

These computational barriers motivate the study of efficient 
robust algorithms for multiclass linear classification under structured 
distributions, most prominently under the Gaussian distribution.

\begin{definition}[(Semi)-Agnostically Learning Multiclass Linear Classifiers] \label{def:agn}
 Let $D$ be a distribution over $\R^d \times [k]$ with $x$-marginal 
 $D_x=\cN(0,I)$. For a hypothesis $h: \R^d \to [k]$, define $\err(h):= \Pr_{(x,y)\sim D} [h(x)\neq y]$.
 Let $\opt:= \min_{f \in C_k} \err(f)$, where $C_k$ is the class of $k$-multiclass linear classifiers. Given \iid sample access to $D$, the goal is to output some hypothesis $\widehat{h}$ such that with high probability $\err(\widehat{h}) \le F(\opt)+\eps$, for some function $F: \R \to \R$. 
\end{definition}

Definition~\ref{def:agn} is the standard notion of (semi)-agnostic PAC learning. 
The (exact) agnostic setting corresponds to the special case where 
$F(\opt) = \opt$. We use the term ``semi-agnostic'' for the setting where $F(\opt)$ 
is a nondecreasing function of $\opt$---independent of the dimension $d$, 
which satisfies $\lim_{t \to 0} F(t) = 0$.

The design of efficient and robust learners for multiclass linear classifiers 
under the Gaussian distribution has received significant attention in recent years. 
For binary linear classification ($k=2$), known hardness results~\cite{diakonikolas2020near,tiegel2023hardness,diakonikolas2023near} indicate that even under Gaussian marginals, 
achieving error $\opt+\eps$ requires complexity $d^{\poly(1/\eps)}$. 
This suggests that the strongest guarantees attainable in fully polynomial time 
are of the form $C\opt+\eps$ for some $C>1$. 
Building on the classical perceptron algorithm~\cite{rosenblatt1958perceptron}, 
a long line of work~\cite{yan2017revisiting,awasthi2017power,DKS18a, diakonikolas2020non, diakonikolas2022learning,diakonikolas2024active} has developed efficient algorithms achieving error $O(\opt)+\eps$. These advances have further inspired progress on robust learning of generalized linear models and single-index models~\cite{frei2020agnostic, diakonikolas2020approximation,diakonikolas2022learning, vardi2021learning, awasthi2022agnostic, guo2024agnostic, zarifis2024robustly,zarifis2025robustly,wang2026robustly,diakonikolas2026robust}.

This progress notwithstanding, the algorithmic aspects 
of {\em multiclass} linear classification remain poorly understood, 
even in the simplest nontrivial case of $k=3$---due to the significantly more intricate geometry of the class.
Specifically, even under Gaussian marginals, the best-known (and only) algorithm~\cite{DIKZ25robust} achieves error $O(\opt)+\eps$ 
with sample complexity $d 2^{\poly(k/\eps)}$ and sample-polynomial time. 
Moreover, the resulting hypothesis requires $2^{\poly(k/\eps)}$ space to store. Such a gap motivates the following fundamental question:
\begin{quote}
{\em 
What error guarantees are achievable in $\mathrm{poly}(d,k,1/\epsilon)$ time
for (semi)-agnostically learning multiclass linear classifiers 
under Gaussian marginals?}
\end{quote}
In this work, we develop the first fully polynomial-time algorithms 
with {\em dimension-independent} error guarantees for this problem.
Along the way, we identify a fundamental obstruction to perceptron-based methods 
and prove new structural results that enable pairwise and localized learning.

A natural first attempt is to extend the perceptron-based robust learning paradigm from the binary case to multiclass linear classifiers. Our first result shows that this attempt fails in a strong sense: the standard multiclass perceptron \cite{crammer2003ultraconservative,beygelzimer2019bandit} may require super-polynomially many updates even for clean labels and Gaussian marginals.
We informally summarize the result below and refer to \Cref{thm:lb-main body} for a formal statement.

\begin{theorem}[Failure of Multiclass Perceptron under Gaussian]
There exists a distribution $D$ over $\R^d\times [k]$ with Gaussian marginal and labels generated by some $k$-MLC with $k \le d$,
such that the multiclass perceptron requires $k^{\Omega(\log k)}$ samples and iterations to achieve error $1/k$.
\end{theorem}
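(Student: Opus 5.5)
We will prove this with an explicit construction in which the multiclass perceptron is forced to make very many small, essentially non-cancelling updates. Recall the update: on a misclassified example $(x,y)$ with prediction $\hat y=\argmax_i w_i\cdot x$, it sets $w_y\leftarrow w_y+x$ and $w_{\hat y}\leftarrow w_{\hat y}-x$, starting from $w_i=0$ (ties broken by a fixed rule).

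\textbf{Construction.} Take $d\ge k$ and let the target have a hierarchical, multi-scale structure. Concretely, split the $k$ classes into a binary tree of depth $\approx\log k$. At level $\ell$ of the tree, the separating directions between sibling groups carry margins (angle gaps) of order $\gamma_\ell\approx k^{-c\ell}$, realized by weight vectors $w_i^*=\sum_\ell s_{i,\ell}\,\gamma_\ell e_\ell + (\text{class-specific small terms})$. The effect is that the Gaussian mass of each class is concentrated in thin wedges whose widths shrink geometrically with depth. The intended lower-bound mechanism is as follows. To correctly classify a $1-1/k$ fraction of the points, the learner must resolve every level. To resolve level $\ell$, the component of $w_i$ along the finer direction must dominate the accumulated noise from updates at coarser levels, and that noise is of size $\sqrt{T}$ times the coarser scales. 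Balancing these requirements across the $\log k$ levels forces $T\ge \prod_\ell \gamma_\ell^{-1}$ or a similar quantity, i.e.\ $k^{\Omega(\log k)}$.

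\textbf{Key steps.}
\begin{enumerate}
\item Show that each class has Gaussian mass about $1/k$ under the target. Then error $<1/k$ requires a constant fraction of the classes to be mostly correct, and hence requires correct ordering at each tree level on their regions.
\item Analyze the perceptron as a random walk. Because $x$ is Gaussian, each update adds a vector with a drift component along the relevant target difference $w^*_y-w^*_{\hat y}$, of magnitude $\Theta(\text{margin scale})$, plus isotropic noise of magnitude about $\sqrt d$ in the orthogonal directions. Update counts per pair are then controlled by how often each confusion occurs.
\item Show that misclassifications at deep levels occur with probability only about $\gamma_\ell/k$. Consequently, samples at least as large as the update count divided by those probabilities are needed. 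This converts the iteration lower bound into the sample lower bound.
\item Prove an invariant by induction on levels. Until $T$ exceeds $\prod_{j\le\ell}\gamma_j^{-2}$, the coordinates of $w$ at level $\ell$ are dominated by noise and by coarse-level drift with high probability. Hence a constant fraction of the level-$\ell$ siblings remain confused, contributing error $\gtrsim 1/k$.
\end{enumerate}

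\textbf{Main obstacle.} The hardest step is the induction in the last step. Updates triggered by coarse-level mistakes inject correlated mass into fine directions, because Gaussian $x$ conditioned on a wedge is not isotropic. I would first try to choose the directions at different levels mutually orthogonal, with class offsets built from disjoint coordinates. Then conditioning on a coarse-level mistake leaves the fine coordinates exactly standard Gaussian and symmetric. This makes their contribution a martingale with increments of bounded variance, so Azuma/Freedman bounds control it.

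The remaining delicate point is the drift. The drift toward resolving level $\ell$ is only generated while level-$(\ell-1)$ is already resolved, so the waiting times multiply across levels rather than add. Establishing this multiplicative compounding is what gives the $k^{\Omega(\log k)}$ bound.
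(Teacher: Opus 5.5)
\paragraph{There is a genuine gap, and it starts with the construction.} Read literally, $w^*_i=\sum_\ell s_{i,\ell}\gamma_\ell e_\ell$, with $s_i$ ranging over the root-to-leaf sign patterns, gives $f^*(x)=(\sign(x_1),\dots,\sign(x_L))$ for \emph{every} choice of $\gamma_\ell>0$.
\begin{itemize}
\item The argmax decouples coordinatewise, so each class is an orthant of mass exactly $1/k$.
\item The same function is also represented with all $\gamma_\ell=1$, so the scales in the \emph{target's} representation are not a source of hardness. The perceptron's trajectory depends only on $D$, which is identical for all $\gamma_\ell$.
\item For the same reason, the drift in Step~2 is $\E[x\mid\text{mistake}]$, which is fixed by the mistake region and not by the scale of $w^*_y-w^*_{\hat y}$.
\item Your proposed remedy for the main obstacle (orthogonal level directions, disjoint coordinates) pushes you further toward this decoupled, well-conditioned instance.
\end{itemize}
If ``margins'' instead means angular gaps between boundaries, then depth-$\ell$ classes have mass about $\gamma_\ell$, which contradicts Step~1. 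With $\gamma_\ell=k^{-c\ell}$, only $O(1)$ levels then carry enough mass to matter at error $1/k$. Separately, $\prod_{\ell\le\log k}k^{c\ell}=k^{\Theta(\log^2 k)}$ does not match the claimed $k^{\Omega(\log k)}$, which signals that the bookkeeping is off. The unspecified ``class-specific small terms'' and the assertion in Step~4 that waiting times multiply across levels carry the whole argument, and neither is supplied.

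\paragraph{The missing idea.} You need a target whose geometry \emph{forces} ill-conditioning in every low-error MLC, plus a simple way to convert that into an iteration bound. The paper's argument runs as follows.
\begin{itemize}
\item It takes $f(x)=\min\{i:x_i>0\}$, the limit of $\argmax_i M^{k-i}x_i$. Here the boundary between $i$ and \emph{every} $j>i$ is $x_i=0$.
\item On a region $B_{ij}$ of mass $2^{-(j-1)}$ only labels $i,j$ occur. Hence error $\eps'$ forces $\theta(w_i-w_j,e_i)\lesssim 2^{2j}\eps'$ for all $i<j\le l$. It suffices to look at the first $l=\Theta(\log k)$ classes, whose masses $2^{-i}$ are still $1/\poly(k)$; this depth limit is why the bound is $k^{\Omega(\log k)}$.
\item The blow-up lemma then gives $|w_{i,i}|\ge|w_{i+1,i+1}|/(3\eps)$ inductively. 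So some $\|w_i\|\ge\Omega(c/(3\eps)^{l-1})$, provided $|w_{l-1,l}-w_{l,l}|\ge c$.
\item Because MLCs are invariant under common scaling and translation, a lower-bounded reference scale like this is essential. The paper secures it for all $t\le n$ with probability $1-O(cn)$, using anti-concentration of the Gaussian initialization and of the fresh coordinate $x_{t,l}$ in each update. Your zero initialization breaks the base case of this argument.
\item Each update moves $\proj_{V_l}w_i$ by at most $\|\proj_{V_l}x\|=O(l^{3/2}\log(1/\eps))$ with high probability. So the relevant norms grow only linearly in $n$, which forces $n\ge(1/\eps)^{\Omega(l)}$.
\end{itemize}
Taking $\eps=1/\poly(k)$ gives the bound. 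No drift or martingale analysis of the dynamics is needed.
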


Thus, the obstacle is not only label noise: even in the realizable Gaussian setting,
the usual multiclass weight-matrix representation can be so ill-conditioned that
perceptron-style dynamics make inverse super-polynomially small progress. 
Motivated by this obstruction, we use an improper pairwise representation. 
Instead of learning a single matrix of class weights, 
the algorithm learns one separator for each pair of labels 
and aggregates the resulting comparisons into a tournament-style
classifier. The main technical challenge is that each binary subproblem is only partially observed:
examples useful for distinguishing labels $i$ and $j$ are obtained only after
conditioning on the event $y \in \{i,j\}$, and the resulting conditional marginal
can be far from Gaussian.
We prove new
structural lemmas showing that, even in this partial learning setting, sufficiently large pairwise error creates a usable correlation signal.
This yields the first fully polynomial-time algorithm with dimension-independent 
robust error guarantee for agnostically learning MLCs under Gaussian marginals.

\begin{theorem}[Robust Learner for MLC]\label{th perceptron}
Consider the problem of agnostically learning a $k$-MLC under the 
Gaussian distribution. There is an algorithm such that for every $\eps \in (0,1)$, it draws 
$m = d\poly(k/\eps)$ examples from the input distribution, runs in $\poly(m)$ time, and 
outputs a hypothesis $\widehat{h}$ such that with probability at least $99\%$, 
$\err(\widehat{h}) \le \Tilde{O}(k^{3/2}\sqrt{\opt})+\eps$.
\end{theorem}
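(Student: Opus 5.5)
The plan is to realize the pairwise improper-learning framework described above. Let $f^*(x)=\argmax_i w_i^*\cdot x$ be an optimal $k$-MLC with error $\opt$. For each pair $i<j$ we learn a halfspace $h_{ij}(x)=\sign(v_{ij}\cdot x - t_{ij})$ meant to approximate the comparison $\sign((w_i^*-w_j^*)\cdot x)$. The final hypothesis $\widehat h$ is a tournament: output any label $i$ that wins all of its comparisons. If no such label exists, output the label with the most wins.

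\textbf{Reduction to pairwise errors.} Suppose $f^*(x)=i$ and $y=i$. Then $\widehat h(x)\neq i$ is only possible if some comparison $h_{ij}$ involving $i$ disagrees with the sign of $(w_i^*-w_j^*)\cdot x$. A union bound therefore gives
\[
\err(\widehat h)\le \opt+\sum_{i<j}\Pr\bigl[f^*(x)\in\{i,j\},\ h_{ij}(x)\neq \sign((w_i^*-w_j^*)\cdot x)\bigr].
\]
So it is enough to make each pairwise term at most $\widetilde O(\sqrt{\opt}/\sqrt k)+\eps/k^2$. Summing over the $k^2$ pairs then gives $\widetilde O(k^{3/2}\sqrt{\opt})+\eps$.

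\textbf{The pairwise subproblem.} For a pair $(i,j)$ we condition on $y\in\{i,j\}$ and set the binary label $\pm1$ according to whether $y=i$. The conditional $x$-marginal is then $\cN(0,I)$ reweighted by $\Pr[y\in\{i,j\}\mid x]$, which is far from Gaussian. However, the pair's own noise is controlled: let $\opt_{ij}$ be the probability that $y\in\{i,j\}$ but $f^*(x)\neq y$, so that $\sum_{i<j}\opt_{ij}\le 2\opt$. On the region $R_{ij}=\{f^*(x)\in\{i,j\}\}$, the clean label equals $\sign((w_i^*-w_j^*)\cdot x)$.

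We run a perceptron-style projected update, or equivalently a convex surrogate such as the LeakyReLU loss, on the conditioned samples. The key structural lemma, which is what I expect to be the main obstacle, is the following. If the current $v$ has pairwise error $\Pr[R_{ij},\ h_v\neq \sign((w_i^*-w_j^*)\cdot x)]\ge \gamma$, then the expected update $\E[\ell'(v)\,x\,\Ind\{y\in\{i,j\}\}]$ projected onto $v^\perp$ correlates with $u^*=(w_i^*-w_j^*)/\|w_i^*-w_j^*\|$ by at least $\widetilde\Omega(\gamma^2)-O(\opt_{ij}\sqrt{\log(1/\opt_{ij})})$.

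\textbf{Proving the structural lemma.} The disagreement region intersected with $R_{ij}$ is a Gaussian set of mass $\gamma$. Inside the two-dimensional span of $v$ and $u^*$, it is a wedge cut by the other classes' constraints. These constraints only remove mass, and $R_{ij}$ is a union of two convex cones. I would use Gaussian anti-concentration together with the convexity of these cones to show that a $\gamma$-fraction of the wedge has $|u^*\cdot x|\gtrsim\gamma$. That part contributes correlation of order $\gamma^2$. The noisy points can subtract at most $\E[|x\cdot u^*|\Ind_{\text{noise}}]\le \opt_{ij}\sqrt{\log(1/\opt_{ij})}$, by the standard Gaussian tail bound.

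\textbf{Convergence and aggregation.} Standard potential arguments then show that after $\poly(k/\eps)$ updates we reach $\gamma\le\widetilde O(\sqrt{\opt_{ij}})+\eps/k^2$. Because of the $\gamma^2$ dependence, the guarantee is a square root rather than linear. Each update uses $d\,\poly(k/\eps)$ samples for the empirical gradients, with rejection sampling on $y\in\{i,j\}$; pairs with conditional mass below $\eps/k^2$ are discarded, since their pairwise term can be charged directly to $\eps$. Since $k^2\ge 1$, Cauchy--Schwarz gives
\[
\sum_{i<j}\sqrt{\opt_{ij}}\le k\sqrt{2\opt}.
\]
Combining this with the logarithmic factors and the union-bound loss over the tournament yields $\widetilde O(k^{3/2}\sqrt{\opt})$. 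If the plain Cauchy--Schwarz bound leaves an extra $\sqrt k$, I would recover it by noting that a point with $f^*(x)=i$ only involves the $k-1$ comparisons touching $i$.

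A union bound over the $\binom k2$ runs, each with failure probability $1/(100k^2)$, gives the stated $99\%$ success probability. The total runtime is $\poly(m)$.
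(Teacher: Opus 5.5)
Your architecture matches the paper's: a one-vs-one tournament, a reduction to pairwise terms, and a perceptron-style projected update on the $\{i,j\}$-labeled examples whose clean part correlates with the target at order $\gamma^2$. Noise is controlled as in \Cref{fact:simpleperturbation}, followed by a potential argument. You benchmark each pair against $w_i^*-w_j^*$ and pay an additive $\opt$; the paper instead benchmarks against the best partial halfspace $\err^*_{ij}$ and bounds $\err(h_W)$ by noisy pairwise errors. Either works.

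Your structural lemma needs no convexity. Write $u^*=av+bu'$ with $u'\perp v$ and $a\ge 0$. Every point with $y=f^*(x)=i$ and $v\cdot x\le 0$ then has $b\,u'\cdot x=u^*\cdot x-a\,v\cdot x\ge u^*\cdot x\ge 0$, and symmetrically for $j$. So anti-concentration of $u^*\cdot x$ alone gives the projected update correlation $\Omega(\gamma^2)$ with $u^*$; \Cref{lm convex body} is the sharp form of this. You do need $a\ge 0$ at initialization (random restarts, as in \Cref{alg pairwise}) and a validation step to select a good iterate. The thresholds $t_{ij}$ are unnecessary.

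The aggregation, however, is wrong as written, in two places.

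First, with your definition of $\opt_{ij}$, $\sum_{i<j}\opt_{ij}=(k-1)\opt$, not $\le 2\opt$. A point with $y=b\neq f^*(x)$ is counted in every one of the $k-1$ pairs containing $b$. Your last paragraph is therefore incoherent: your inequality would give $\widetilde O(k\sqrt{\opt})$, stronger than the theorem, and there is no ``extra $\sqrt k$'' to recover. With the correct count, Cauchy--Schwarz (or Jensen, to carry the logarithms) gives $\sum_{i<j}\sqrt{\opt_{ij}}\le\sqrt{\binom{k}{2}(k-1)\opt}\le k^{3/2}\sqrt{\opt}$. This multiplicity is precisely the source of the $k^{3/2}$, exactly as in \Cref{lm error decomposition}, where $\sum_{i\neq j}(A_i+B_j)=2(k-1)\opt$.

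Second, your displayed union bound drops the condition $y=f^*(x)$, so your $\gamma$ counts points with $f^*(x)\in\{i,j\}$ but $y\notin\{i,j\}$. Such points never enter the update, so the structural lemma is false as stated: relabeling the disagreement region to a third class erases all signal while $\opt_{ij}=0$. The fix is to keep $y=f^*(x)$ in the union bound, i.e.\ use $\err(\widehat h)\le\opt+\sum_{i<j}\Pr[y=f^*(x)\in\{i,j\},\ h_{ij}\text{ wrong}]$, which your verbal argument already justifies. Then $\opt_{ij}$ is exactly the right noise level. Alternatively, enlarge the noise to the paper's $\mathcal{O}_{ij}$, which sums to $O(k\opt)$ over pairs and yields the same final bound.
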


The $\widetilde{O}(\sqrt{\opt})$ guarantee in \Cref{th perceptron} arises from a limitation of global pairwise learning, where the useful signal scales quadratically with the error. 
Our next algorithmic result achieves linear dependence on $\opt$ when the target MLC satisfies mild structural assumptions.
For $k = 3$, this succeeds unconditionally: near any pairwise decision boundary, a constant fraction of misclassified examples belong to the corresponding two classes, ensuring sufficient signal for improvement. For general $k$, this property need not hold in degenerate configurations. We capture the extent to which it holds through two natural quantities (\Cref{def:boundary}): the \emph{effective boundary mass} $\mathcal{T}_{ij}$, measuring how much probability lies on the $i$--$j$ interface, and the \emph{critical angle} $\theta^*_{ij}$, measuring its separation from other classes. Importantly, in high dimensions these quantities are typically bounded away from zero for well-spread classifiers (e.g., when the weight vectors are in general position), and hence the required geometric condition holds broadly. In this regime, localization amplifies the informative signal near the decision boundary and yields an $O(\opt)$ guarantee. In other words, if we restrict $\opt$ over the class of \emph{regular} MLCs, then our algorithm always achieves error $O(\opt)$.
We summarize these results (see \Cref{th k=3} and \Cref{th surface area} for more detailed formal statements) below.

\begin{theorem}[Error Boosting for Multiclass Linear Classification]\label{th o opt}
There is an algorithm for agnostically learning $k$-MLCs under the Gaussian distribution, which draws $m=d\poly(k/\eps)$ examples, runs in $\poly(m)$ time, and has the following guarantee: 
\begin{enumerate}[leftmargin=*]
    \item If $k=3$, the algorithm, with probability at least $99\%$, outputs a hypothesis with error $O(\opt)+\eps$.
    \item For $k>3$, if the optimal $k$-MLC  $f^*$ has regular geometry 
    (see \Cref{sec boost} for a formal definition), then there is some constant $C_{f^*}$ depending on the regularity parameter, such that the algorithm with probability at least $99\%$ outputs a hypothesis with error $C_{f^*} \tilde O(k)  \opt+\eps$.
\end{enumerate}
\end{theorem}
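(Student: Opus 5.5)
The plan is to run the same pairwise improper framework as in \Cref{th perceptron}, but to replace the global pairwise update with a \emph{localized} one. The output hypothesis is again a tournament classifier built from pairwise separators $v_{ij}$, one for each pair $i<j$.

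\textbf{Initialization.} We first invoke the learner of \Cref{th perceptron} to obtain a warm start. Its error is $\tilde O(k^{3/2}\sqrt{\opt})+\eps$. Via the pairwise structural lemmas, this gives each $v_{ij}$ an angle to the target pairwise normal $w_i^*-w_j^*$ bounded by some small $\theta_0$.

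\textbf{Localized rounds.} We then proceed in rounds $t=1,2,\dots$ with a shrinking band width $\sigma_t=2^{-t}\theta_0$. In round $t$, for each pair $(i,j)$, we condition on two events: $y\in\{i,j\}$, and $|v_{ij}\cdot x|\le \sigma_t$. We also rescale coordinates orthogonal to $v_{ij}$, in the style of the binary localization analysis of \cite{awasthi2017power,diakonikolas2020non}. On this localized conditional distribution we run a perceptron/projected-gradient update for a halfspace, for example with a LeakyReLU-type surrogate. The goal is to reduce the angle $\theta(v_{ij},w_i^*-w_j^*)$ by a constant factor per round.

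\textbf{Correlation signal in the band.} The central structural claim is the following. Inside the band, the clean labels restricted to $\{i,j\}$ must carry a constant fraction of the band's probability mass, compared with the total mass of the other classes. If this holds, the localized conditional marginal is close to a rescaled Gaussian in the relevant two-dimensional subspace, and the binary localization argument goes through. The gradient then has correlation $\Omega(\theta)$ with the correct direction, while the noise contributes at most $O(\opt/\Pr[\text{band}])$.

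\textbf{The case $k=3$.} Here the claim holds unconditionally. The three boundaries meet in a common codimension-2 subspace, so a band around the $i$--$j$ boundary is, away from the triple point, dominated by classes $i,j$. This yields a constant (independent of the configuration) lower bound on the relevant mass ratio, after a two-dimensional Gaussian calculation.

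\textbf{The case $k>3$.} The ratio is lower bounded by the effective boundary mass $\mathcal T_{ij}$ together with the critical angle $\theta^*_{ij}$. These enter $C_{f^*}$, giving $\Omega(\mathcal T_{ij})$ usable mass once $\sigma_t\lesssim \theta^*_{ij}$.

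\textbf{Stopping rule and error accounting.} We stop shrinking once $\sigma_t\approx \opt\cdot \mathrm{poly}$ of the regularity parameters, since below that scale noise dominates the signal. At that point each pairwise disagreement region has mass $O(\theta_{ij}\,\mathcal T_{ij})$. The tournament classifier errs only if some relevant pair among the true label's competitors is wrong. Summing over the $O(k)$ neighbors of each class, weighted by boundary mass, gives total error $C_{f^*}\tilde O(k)\opt+\eps$; for $k=3$ this is $O(\opt)+\eps$. Sample complexity is $d\,\mathrm{poly}(k/\eps)$ because each band has mass $\mathrm{poly}(\eps/k)$ and there are $O(\log(1/\eps))$ rounds.

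\textbf{Main obstacle.} I expect the main obstacle to be the structural lemma controlling the localized conditional distribution given $y\in\{i,j\}$. The other classes cut away parts of the band in non-Gaussian ways, and the adversary can concentrate noise there. My first attempt would be a two-dimensional projection argument onto $\mathrm{span}(v_{ij},w^*_i-w^*_j)$, which bounds the cut-away mass by the Gaussian mass within angle $\theta^*_{ij}$ of the boundary's edges.
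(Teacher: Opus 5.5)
There is a genuine gap. Your architecture matches the paper's: a pairwise tournament, a warm start from the $\sqrt{\opt}$ learner, localization around each $v_{ij}$, and regularity measured through $\mathcal T_{ij}$ and $\theta^*_{ij}$. The problem is the step that carries the argument. A lower bound on the fraction of the band with labels in $\{i,j\}$ does \emph{not} make the law of $x$, given the band and $f^*(x)\in\{i,j\}$, close to a rescaled Gaussian on $\mathrm{span}(v_{ij},w^*_{ij})$. Near $H_{ij}$ this law is a Gaussian slab restricted to the cone $B_{ij}$. Suppose a facet normal of $B_{ij}$ lies in that plane; for $k=3$ this already happens whenever $v_{ij}$ lies in the span of the class normals. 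Then the projection vanishes on one half of the slab away from the origin, and only one lobe of the disagreement wedge carries labels in $\{i,j\}$. The binary localization analyses you want to import need log-concavity or two-dimensional density lower bounds, so they do not apply. With a LeakyReLU surrogate, the contribution of correctly classified points is also no longer controlled under this asymmetric law. The missing idea is a shape-free correlation bound for the perceptron-type mistake vector, namely \Cref{lm convex body}: $\E[(u\cdot x)\Ind\{x\in S\}]\ge\sqrt{\pi/2}\,\Pr[S]^2$ for every event $S\subseteq\{u\cdot x\ge 0\}$, where $u$ is the component of $w^*_{ij}$ orthogonal to the current $w$. Clean label-$i$ mistakes lie in $\{u\cdot x\ge 0\}$ and clean label-$j$ mistakes in $\{u\cdot x\le 0\}$. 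So the update correlates with $u$ whenever the squared localized clean-mistake mass dominates the localized noise rate (up to logarithms), whatever the shape of $B_{ij}$.

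The remaining quantitative steps are also not what you assert. For $k>3$ the needed input is not $\Omega(\mathcal T_{ij})$ usable mass once $\sigma_t\lesssim\theta^*_{ij}$. The paper proves a first-order expansion of the clean pairwise error around $B_{ij}$ (\Cref{lm error count}, \Cref{col surface}), with remainder $\tilde O(\sqrt{\log k}\tan^2\theta/\Phi_{ij})$ coming from the $O(\sqrt{\log k})$ Gaussian surface area of $B_{ij}$. This gives a positive lower bound only through a threshold on the \emph{angle}, $\theta\approx\mathcal T_{ij}^2\Phi_{ij}/\sqrt{\log k}$, not on the band width. One must then show that the rescaled instance keeps comparable boundary mass and critical angle (\Cref{lm critical angle} and the total-variation bound in \Cref{lm localization surface}); this is what dictates $\sigma\approx\sin\theta/\sqrt{\mathcal T_{ij}}$, and your halving schedule has no such control. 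The warm start cannot be read off the tournament's overall error, since a wrong $v_{ij}$ can be outvoted. It must come from the per-pair bound of \Cref{lm pairwise} combined with \Cref{col surface}. For $k=3$ your reasoning is wrong: half of the $i$--$j$ hyperplane lies inside the third class, so the band is not dominated by $i,j$ away from the triple point. Moreover, $\theta^*_{ij}$ can be arbitrarily small when $k=3$, so any critical-angle-dependent argument fails to be configuration-free. The correct argument uses central symmetry: the disagreement wedge is centrally symmetric and the third class lies in a halfspace through the origin. Hence at least half of the wedge's localized mass has clean labels in $\{i,j\}$. With a fixed $\sigma$ of order $\mathcal O_{ij}$ found by gridding, \Cref{lm convex body} then gives constant correlation, and neither a warm start nor a shrinking schedule is needed.
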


\paragraph{Preliminaries and Notation.}
For $n \in \mathbb{Z}_+$, let $[n] = \{1,\dots,n\}$. For $x \in \mathbb{R}^d$, let $x_i$ denote its $i$-th coordinate, and let $\|x\| = (\sum_{i=1}^d x_i^2)^{1/2}$ denote the Euclidean norm. For $x,y \in \mathbb{R}^d$, we write $x \cdot y$ for the inner product and $\theta(x,y)$ for the angle between $x$ and $y$. We write $e_i$ for the $i$-th standard basis vector.  We use standard asymptotic notation, and $\widetilde{O}(\cdot)$ hides polylogarithmic factors.

\begin{definition}[Halfspaces and Partial Error]
Let $D$ be a distribution over $\R^d \times [k]$. Let $h(x) = \sign(w \cdot x)$ be a halfspace and $H$ be the class of halfspaces. For $i \neq j \in [k]$, define the $(i,j)$-error of $h$ as
$\err_{ij}(h) := \Pr_{(x,y)\sim D}\big[ w \cdot x < 0,\, y = i \big] + \Pr_{(x,y)\sim D}\big[ w \cdot x > 0,\, y = j \big].$
For a multiclass classifier $f^*$, define
$
\err_{ij}(h, f^*) := \Pr_{(x,y)\sim D}\big[ w \cdot x < 0,\, f^*(x) = i \big] + \Pr_{(x,y)\sim D}\big[ w \cdot x > 0,\, f^*(x) = j \big].
$

\end{definition}

Intuitively, $h$ induces a binary classifier over $\{i,j\}$ by predicting $i$ if $w \cdot x > 0$ and $j$ otherwise, and $\err_{ij}$ measures its misclassification probability on examples whose label lies in $\{i,j\}$.
When it is clear from the context, we will say $h(x) \in \{i,j\}$ to simplify the notation.

\section{Failure of the Multiclass Perceptron under Gaussian Marginals}
\label{sec:lower-bound}

The multiclass perceptron algorithm~\cite{crammer2003ultraconservative,shalev2014understanding} is a natural extension of the binary perceptron to the multiclass setting. It maintains a multiclass linear classifier $h_W(x) = \argmax_{i \in [k]} w_i \cdot x$. Upon receiving an online example $(x,y)$, the algorithm predicts $\widehat{y} = h_W(x)$, and if $\widehat{y} \neq y$, it performs the update
\[
w_y \gets w_y + x, \qquad w_{\widehat y} \gets w_{\widehat y} - x.
\]
It is well-known that the multiclass perceptron converges efficiently when the data satisfies a non-trivial multiclass margin condition. 
In the PAC learning setting, we consider a natural implementation (\Cref{alg:multi-perceptron}) where the weight vectors are initialized independently from a standard Gaussian distribution, and at each time step the algorithm receives an i.i.d.\ sample $(x,y)$ from the underlying distribution. Our first result shows that, even in this benign setting with Gaussian marginals and realizable labels, the algorithm may require super-polynomially many updates to achieve small error.

\begin{algorithm}[h]
\caption{\textsc{Multiclass Perceptron}}\label{alg:multi-perceptron}
		\begin{algorithmic} [1]
\State\textbf{Input:} Sample access to distribution $D$ over $\R^d\times [k]$, sample/iteration number $n$.
\State\textbf{Output:} $h_W(x)=\argmax_{i\in [k]} w_i\cdot x$ multiclass linear classifier
\State Initialize $w_i\sim \mathcal{N}(0,I), i\in [k]$. \label{line:init}
\For{$i \in [n]$}
\State  Draw example $(x,y)\sim D$.
\State Predict $\widehat{y}=h_{W}(x)$.
\If{$\widehat{y}\neq y$}
\State $w_{y}\gets w_{y}+x$, $w_{\widehat{y}}=w_{\widehat{y}}-x$.
\EndIf
\EndFor
\State\Return $h_W$.
\end{algorithmic}
\end{algorithm}

\begin{theorem}[Multiclass Perceptron Lower Bound]\label{thm:lb-main body}
Let $c>0$ be a sufficiently small constant,   $d,k\in \Z_+, k\leq d$, $l\in [k]$ and  $\eps\in (0, 1/l^2]$.
There exists a distribution $D$ over $\R^{d}\times [k]$ with $D_x=\mathcal{N}(0,I_{d})$ and $D_{y\mid x}$ that is realizable by a $k$-MLC, such that for every sample size $n\le 1/\eps^{cl}$,  if the multiclass perceptron is given $n$ i.i.d.\ samples from $D$, then with probability at least $2/3$ (over the sample), its output classifier $h_W$ satisfies
$\E_{(x,y)\sim D}[h_W(x)\neq y]\geq \eps/2^{2l}$.
\end{theorem}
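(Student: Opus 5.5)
We construct a hard instance in which the target $k$-MLC is extremely ill-conditioned: some class regions have tiny Gaussian mass, and separating them requires weight differences whose relevant components are geometrically small. The perceptron must resolve these components, but it only receives updates on the rare misclassified points from those regions. Concretely, we use $l$ coordinates $e_1,\dots,e_l$ and a nested chain of classes. Choose targets $w^*_1=0$ and $w^*_{j+1}=w^*_j+\eps^{j}\,e_j-$ (a shift), arranged so that class $j+1$ beats class $j$ only on a slab/wedge of Gaussian mass about $\eps^{\Theta(1)}\cdot 2^{-O(j)}$. All remaining classes $l+1,\dots,k$ get weight $-\infty$-like vectors (e.g.\ large negative multiples of an unused coordinate, or they simply merge into class 1), so they never win; this is why only $k\ge l$ and $d\ge k$ matter.

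The aim is a hierarchy in which distinguishing class $j+1$ from $j$ requires the learner's $(w_{j+1}-w_j)$ to have a specific sign pattern at scale $\eps^{j}$ relative to the other components. The error of classifying everything in this chain correctly is at least $\eps/2^{2l}$ unless the deepest level is learned. An alternative, cleaner choice is a ``staircase'' in one 2-dimensional plane: let the $l$ class directions be at angles $\theta_j$ with consecutive gaps $\eps^{j}$, so that cone $j$ has mass $\Theta(\eps^{j})$. Under the standard $2^{-2l}$ accounting, the total mass of the last few cones is at least $\eps/2^{2l}$.

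The argument then runs in four steps.

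\emph{Initialization.} With the random Gaussian initialization of \Cref{alg:multi-perceptron}, with probability at least $2/3$ the initial $w_i$ are at constant norm $\sim\sqrt d$ and in generic position. The ordering of the initial scores on the low-dimensional span of the construction is then wrong at the coarse scale, and every update changes the $w$'s by a Gaussian vector $x$ of norm $\sim\sqrt d$.

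\emph{Invariant.} Show by induction on the level $j$ that, to correct the sign pattern at level $j$, the perceptron needs the relevant difference $(w_{j+1}-w_j)\cdot u_j$ along a direction $u_j$ to be accurate to within a relative factor $\eps$ of the level $j-1$ quantity. Updates that help level $j$ occur only on mistakes inside the level-$j$ region, which has probability $\lesssim\eps^{j}$ per sample. Meanwhile, mistakes at coarser levels keep injecting perturbations of size $\Theta(1)$ along $u_j$ (random-walk noise), which destroy level-$j$ progress.

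\emph{Counting.} Quantify this by a potential argument. The number of useful level-$j$ updates before time $n$ is $\lesssim n\eps^{j}$, while the noise from coarser updates has variance proportional to their count. Achieving the required precision $\eps^{j}$ at every level simultaneously then forces $n\ge \eps^{-\Omega(l)}$.

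\emph{Conclusion.} Conclude that for $n\le\eps^{-cl}$, with probability at least $2/3$ some level among the deepest is misordered. The misordered cone then contributes error at least its mass, which is at least $\eps/2^{2l}$.

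The main obstacle is the invariant step. Perceptron dynamics are adaptive, so the drift and noise along $u_j$ are not independent sums. I would try to dominate the projection $(w_{j+1}-w_j)\cdot u_j$ by a martingale plus a bounded drift. The martingale part is controlled via Freedman's inequality. The drift part is bounded by the expected number of level-$j$ mistakes, and we use rotational symmetry of $x$ in directions orthogonal to the construction plane to show that the orthogonal components never help.
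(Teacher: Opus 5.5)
Your proposal has a genuine gap: both the hard instance and the mechanism fail for this statement. In your staircase construction, the cones you expect the perceptron to misorder have mass $\Theta(\eps^{j})$. The deepest ones therefore have total mass $\eps^{l-O(1)}$, which for $\eps\le 1/l^2$ and all but very small $l$ is far below $\eps/2^{2l}$. So misordering only those classes is compatible with error below the claimed floor, and your Conclusion step (``the misordered cone contributes error at least its mass, which is at least $\eps/2^{2l}$'') is false. Your other construction is left unspecified. Its slab masses $\eps^{\Theta(1)}2^{-O(j)}$ also contradict the $\eps^{j}$ rates used in your Counting step. More basically, rarity alone cannot drive this bound. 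A region of mass at least $\eps/2^{2l}$ receives about $n\eps 4^{-l}$ samples, which is large for $n=\eps^{-cl}$ once $l$ is large. The error has to come from a small \emph{angular} mistake on regions of non-negligible mass. In the paper the target is $f(x)=\min\{i:x_i>0\}$ (realized by $\argmax_i M^{k-i}x_i$ for huge $M$), and class $i$ has mass $2^{-i}$. On the event that the coordinates in $\{1,\dots,j-1\}\setminus\{i\}$ are nonpositive and $x_j>0$ (mass $2^{-(j-1)}$), the label is decided by $\sign(x_i)$ alone. Hence an angle $\theta(w_i-w_j,e_i)$ already costs error $\Omega(2^{-2j}\theta(w_i-w_j,e_i))$.

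The missing idea is that the slowness comes from weight norms, not from progress being drowned by noise. \Cref{lem:blowup-main} shows that $\theta(w_i-w_j,e_i)<\eps$ for all $i<j\le l$ forces $\max_i\norm{w_i}=\Omega(\abs{w_{l-1,l}-w_{l,l}}\,(3\eps)^{-(l-1)})$, because the angular accuracies compound multiplicatively along the nested boundaries. Each update moves the weights, projected onto $\mathrm{span}(e_1,\dots,e_l)$, by $O(l^{3/2}\log(1/\eps))$ w.h.p.\ (the $\sqrt d$ norms in your Initialization step are irrelevant). So after $n$ steps these projected norms are $O(n\,\mathrm{poly}(l)\log(1/\eps))$. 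Since the classifier is scale-invariant, one must also rule out that the base scale $\Delta^{(t)}=w^{(t)}_{l-1,l}-w^{(t)}_{l,l}$ collapses. The paper writes $\Delta^{(t)}=\Delta^{(t-1)}+s^{(t)}x_{t,l}$ with $s^{(t)}\in\{0,\pm1,\pm2\}$ and $x_{t,l}\sim\cN(0,1)$ fresh and independent of $\Delta^{(t-1)}$. A union bound over the values of $s^{(t)}$ and over $t$ then gives $\Pr[\exists t\le n:\abs{\Delta^{(t)}}\le c]=O(cn)$. Taking $c=\eps^{(l-1)/4}$ and $n\le\eps^{-(l-1)/6}$, the norms stay too small for all angles to be small, and the angle-to-error bound finishes the proof after reparametrizing $\eps$. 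This handles the adaptivity you flag as the main obstacle without any martingale or Freedman analysis. Your Counting step, by contrast, never shows why noise prevents the required relative precision, nor relative to which scale that precision is measured.
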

We interpret \Cref{thm:lb-main body} as follows.
For \(l=\Theta(\log k)\) and moderate accuracy \(\eps=1/\poly(k)\), the multiclass perceptron requires \( k^{\Omega(\log k)}\) samples/iterations to reach error $1/k^{O(1)}$. 
On the other hand, for \(l=k\) and exponentially small error \(\eps=2^{-\Theta(k)}\), the theorem implies that \(1/\eps^{\Omega(k)}\) samples/iterations are required to reach error $\eps$, which is exponential in \(k\).
This suggests that, even in the noiseless setting, the standard perceptron algorithm fails to learn a hypothesis with error $\eps$ in time $\poly(dk/\eps)$ under the standard Gaussian distribution.

The underlying issue is that the multiclass perceptron succeeds only when the data exhibits a non-trivial \emph{multiclass margin} (see \Cref{def:margins}). Roughly speaking, this requires that for each class $i$, the corresponding weight vector $w^*_i$ is well-scaled, and for any example $x$ with label $i$, the gap $(w^*_i - w^*_j)\cdot x$ is lower bounded by some $\gamma > 0$ for all $j \neq i$. This imposes structural constraints not only on the data distribution but also on the representation of the target classifier. 
However, even under Gaussian marginals, such a condition can easily fail: one can construct an MLC for which the multiclass margin is arbitrarily small on most examples, leading to extremely slow convergence.

\paragraph{Hard Instance.} \label{par:hard instatnce}
Let $f:\R^{d}\to [k]$ with  
$f(x)\eqdef \min_{i\in [k]}\{i: x_i>0 \}$, where the minimum is equal to $k$ if no such $i$ exists.
Specifically, $f$ is the  limit of $f_M(x)= \argmax_{i\in [k]} M^{k-i} x_i$ as $M$ goes to infinity. 
To see this, consider the decision boundary between classes $i$ and  $j, i<j$ of $f$. 
In particular, if $M^{k-i}x_i-M^{k-j}x_j>0$ class $i$ has a higher score than class $j$ under $f_M$. However  $\lim_{M\to \infty}  \theta(M^{k-i}e_i-M^{k-j}e_j,e_i)=0$, hence the decision boundary approaches $x_i\geq 0$.

\begin{figure}[h]
\centering
\begin{tikzpicture}[scale=0.75, transform shape]

\fill[blue!20] (0,-3) rectangle (3,3);
\fill[red!20] (-3,0) rectangle (0,3);

\draw[->, thick] (-3.2,0) -- (3.3,0) node[right] {$e_1$};
\draw[->, thick] (0,-3.2) -- (0,3.3) node[above] {$e_2$};

\node at (1.5,1.5) {\textbf{Class 1}};
\node at (-1.5,1.5) {\textbf{Class 2}};
\node at (-1.6,-1.5) {\textbf{Classes $3,\ldots,k$}};

\end{tikzpicture}
\caption{In our hard instance, the labels are determined by the first positive coordinate.}
\end{figure}

Informally, we show that if an MLC $f_W$ closely approximates $f$, then the norm of at least one of its weight vectors must blow up. The intuition is as follows. The target function $f$ has decision boundaries at $x_i = 0$, separating labels $i$ and $i+1$. If $f_W$ approximates each boundary within angle $\eps$, then the corresponding weight vectors must become increasingly steep: in particular, $\|w_{i+1}\|$ must exceed $\|w_i\|$ by a factor of $\Omega(1/\eps)$. Iterating this argument across all $k$ boundaries implies that some weight vector has norm at least $\eps^{-\Omega(k)}$.
We formalize this intuition in the following lemma.
\begin{lemma}[Good Angle Implies Weight Blow-up]\label{lem:blowup-main}
Let $d,k \in \Z_+,k\leq d$, and for each $i \in [k]$, let $w_i \in \R^{d}$. Let $c>0$ and $0<\eps<1/3$.
If $|w_{k-1,k}-w_{k,k}|\geq c$ and $\theta(w_i-w_j, e_i)<\eps$ for all $i<j\in [k]$,
then there exists $i\in [k]$ such that $\|w_i\|=\Omega( c/(3\eps)^{k-1})$.
\end{lemma}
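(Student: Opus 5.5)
The plan is to convert the angle conditions into a geometric recursion on the sizes of consecutive differences. I read $w_{k-1,k}$ and $w_{k,k}$ as the $k$-th coordinates of $w_{k-1}$ and $w_k$. Set $u_i \eqdef w_i - w_{i+1}$ for $i\in[k-1]$ and $a_i \eqdef u_i\cdot e_i$.

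The condition $\theta(w_i-w_j,e_i)<\eps$ for $i<j$ gives two facts.
\begin{enumerate}
\item $(w_i-w_j)\cdot e_i \ge \cos\eps\,\|w_i-w_j\|>0$. In particular $a_i \ge \cos\eps\,\|u_i\|$.
\item Every coordinate of $w_i-w_j$ other than the $i$-th is at most $\sin\eps\,\|w_i-w_j\|$ in absolute value.
\end{enumerate}
Since $\|w_i\|+\|w_{i+1}\|\ge \|u_i\|\ge a_i$, it suffices to show that $a_1 = \Omega(c/(3\eps)^{k-1})$. Then some $w_i$ with $i\in\{1,2\}$ has norm at least $a_1/2$.

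\textbf{Base case.} Apply fact 2 to the pair $(k-1,k)$, looking at coordinate $k\neq k-1$. This gives $c\le |(u_{k-1})_k|\le \sin\eps\,\|u_{k-1}\|$. Combining with fact 1,
\[
a_{k-1}\ge \cos\eps\,\|u_{k-1}\|\ge c\cot\eps=\Omega(c/\eps).
\]

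\textbf{Recursive step.} The goal is $a_i\ge a_{i+1}/(3\eps)$, up to an absolute constant. Write $w_i-w_{i+2}=u_i+u_{i+1}$ and read off its $(i+1)$-th coordinate. This coordinate equals $(u_i)_{i+1}+a_{i+1}$.

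Apply fact 2 to the pair $(i,i+2)$ at coordinate $i+1$, and to the pair $(i,i+1)$ at coordinate $i+1$. This yields
\[
a_{i+1}\le \sin\eps\bigl(\|u_i+u_{i+1}\|+\|u_i\|\bigr)\le \sin\eps\bigl(2\|u_i\|+\|u_{i+1}\|\bigr).
\]
Now use $\|u_{i+1}\|\le a_{i+1}/\cos\eps$ and $\|u_i\|\le a_i/\cos\eps$ to get
\[
a_{i+1}(1-\tan\eps)\le 2\tan\eps\; a_i .
\]
For $\eps<1/3$ this gives $a_i\ge a_{i+1}\,\frac{1-\tan\eps}{2\tan\eps}$. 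The factor $\frac{1-\tan\eps}{2\tan\eps}$ is $\Omega(1/\eps)$.

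\textbf{Conclusion.} Iterate the recursive step from $i=k-2$ down to $i=1$, for $k-2$ steps in total, starting from the base case. This gives $a_1\ge \Omega(c/\eps)\cdot\Omega(1/\eps)^{k-2}$, which is of order $c/(3\eps)^{k-1}$.

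\textbf{Main obstacle.} I expect the only delicate point to be the constants. Per step, the natural ratio is $\frac{1-\tan\eps}{2\tan\eps}$ rather than exactly $1/(3\eps)$. Matching the base $3\eps$ stated in the lemma may require sharper bounds, for example $\tan\eps\le \eps/(1-\eps)$ on $(0,1/3)$, or bounding $\|u_i+u_{i+1}\|$ more carefully. Alternatively one can absorb the slack into the $\Omega(\cdot)$ by adjusting the constant inside the base. The structural argument itself is the two-coordinate comparison above.
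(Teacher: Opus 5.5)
Your skeleton (consecutive differences $u_i$ and a geometric recursion on $a_i$) is sound and differs from the paper's. However, the constant issue you flag is a genuine gap, not a cosmetic one.

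\textbf{The gap.} The per-step factor is raised to the power $k-2$, so any shortfall relative to $1/(3\eps)$ costs a factor exponential in $k$. Such a loss cannot be hidden in an absolute $\Omega(\cdot)$. Your factor $\frac{1-\tan\eps}{2\tan\eps}$ is at least $\frac{1}{3\eps}$ only when $\tan\eps\le\frac{3\eps}{2+3\eps}$, and this fails for $\eps$ above roughly $0.3$. As $\eps\to 1/3$ the factor tends to about $0.94<1$. Your chain then gives only $a_1\ge c\cot\eps\,\rho^{k-2}$ with $\rho\approx 0.94$, whereas the lemma claims order $c$ there, and growth like $(3\eps)^{-(k-1)}$ slightly below $1/3$.

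\textbf{Your remedies.} The bound $\tan\eps\le\eps/(1-\eps)$ makes things worse: it gives $\frac{1-2\eps}{2\eps}$, which equals $1/2$ at $\eps=1/3$. Enlarging the base changes the statement. Your other suggestion does work. Bound $\|u_i+u_{i+1}\|$ with the $(i,i+2)$ angle condition itself rather than the triangle inequality:
\[
\|u_i+u_{i+1}\|\le\frac{(u_i+u_{i+1})\cdot e_i}{\cos\eps}\le\frac{a_i+\sin\eps\,\|u_{i+1}\|}{\cos\eps},
\]
where the last step is fact 2 for the pair $(i+1,i+2)$ at coordinate $i$. Substituting into $a_{i+1}\le\sin\eps\,(\|u_i+u_{i+1}\|+\|u_i\|)$ gives $a_{i+1}\le 2\tan\eps\,a_i+\tan^2\eps\,a_{i+1}$, that is, $a_i\ge\cot(2\eps)\,a_{i+1}$. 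Moreover $\cot(2\eps)\ge 1/(3\eps)$ on $(0,1/3)$, because $\tan(2\eps)\le 3\eps$ there. Combined with $a_{k-1}\ge c\cot\eps\ge c/(3\eps)$, this yields $a_1\ge c/(3\eps)^{k-1}$. Hence $\max\{\|w_1\|,\|w_2\|\}\ge c/(2(3\eps)^{k-1})$, exactly the stated bound.

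\textbf{Comparison with the paper.} The paper translates all weights so that $w_k=e_k$ and tracks the diagonal entries $w_{i,i}$. The pair $(i,k)$ then acts as an anchor: it says $w_i$ is itself nearly parallel to $e_i$, so $|w_{i,i}|\ge\frac{2\sqrt2}{3\eps}|w_{i,j}|$ for the other coordinates. It combines this with the $(i,i+1)$ condition and a two-case split (whether $|w_{i,i+1}-w_{i+1,i+1}|\ge|w_{i+1,i+1}|/2$) to get $|w_{i,i}|\ge|w_{i+1,i+1}|/(3\eps)$ directly.

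Your route needs only the local triples $(i,i+1,i+2)$ and no case analysis. It is also translation invariant from the start, so returning to the $w_i$ via $\|w_1\|+\|w_2\|\ge\|u_1\|$ is immediate. By contrast, the paper's normalization to $e_k$ rather than $0$ must be undone at the end. That leaves an additive $\|e_k\|=1$, which is negligible only when $c/(3\eps)^{k-1}$ is large or after a rescaling.
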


We sketch the proof of \Cref{thm:lb-main body} below and defer the full proof to \Cref{ap:omitted-lb}.

Notice that the probability mass of class $i$ is $2^{-i}$. Thus, it suffices to focus on the first $l$ classes, effectively reducing the problem from $k$ classes to $l$ classes. 
The proof of \Cref{thm:lb-main body} relies on two key properties of the perceptron updates on this hard instance. First, \Cref{cl: error2anglef} shows that if a classifier $f_W$ achieves sufficiently small error, then each difference vector $w_i - w_j$ must be nearly aligned with $e_i$. In particular, if the condition of \Cref{lem:blowup-main} is maintained throughout the updates, then any such classifier must have at least one weight vector with large norm. 
Second, \Cref{cl:anticoncetration} shows that, starting from a random initialization, the condition of \Cref{lem:blowup-main} holds throughout the execution with constant probability. Since each update has a bounded length, with high probability, this gives the proof of \Cref{thm:lb-main body}.

To prove the first property, we note that for a general MLC, a small classification error does not imply that $w_i - w_j$ is nearly aligned with $w^*_i - w^*_j$. However, for our hard instance, every pair of classes forms a local neighborhood with a simple structure. Specifically, for each $i < j$, there exists a region $B_{ij}$ with probability mass at least $2^{-O(j)}$ such that all examples in $B_{ij}$ have labels in $\{i,j\}$, and the label is determined solely by the sign of $x_i$. 
Let $u := w_i - w_j$ and suppose it makes an angle $\theta$ with $e_i$. Then, conditioned on $B_{ij}$, one can show that an $\Omega(2^{-O(j)} \theta)$ fraction of examples are misclassified by $f_W$. This establishes the desired property.

The second property is established via a stochastic analysis of the perceptron updates using Gaussian anti-concentration. 
With random initialization, the condition of \Cref{lem:blowup-main} holds with constant probability. Conditioned on this event, a failure at the next step occurs only if the update example $x$ has its $l$-th coordinate falling in a small interval, which happens with low probability by anti-concentration.
A key challenge is that this argument cannot be applied independently across all time steps. To address this, we carefully partition the failure event in a way that allows us to leverage fresh Gaussian randomness, thereby controlling the overall failure probability.

\section{Pairwise Partial Learning Framework: Proof of \Cref{th perceptron}}\label{sec upper bound technique}

In this section, we develop techniques for designing efficient robust learners for MLCs under the Gaussian distribution and present the algorithm establishing \Cref{th perceptron}.

\paragraph{Pseudo Multiclass Classifier and Error Decomposition.}

To bypass the difficulty of \Cref{thm:lb-main body}, we instead consider improper learning.
In this paper, we will use a pseudo  $k$-MLC as the output hypothesis, defined as follows.

\begin{definition}[Pseudo $k$-Multiclass Linear Classifier]
    Let $k \in \Z_+$ and $W = \{w_{ij} \in \s^{d-1} \mid i,j \in [k], i< j\}$ be a set of $\binom{k}{2}$ unit vectors. A pseudo $k$-multiclass linear classifier $h_W: \R^d \to [k]$ is defined as $h_W(x) = \argmax_{i \in [k]} s_i(x)$, where for $i \in [k], x \in \R^d$, $s_i(x):= \sum_{j \neq i}\Ind\{w_{ij} \cdot x \ge 0\}$. 
\end{definition}

\begin{algorithm}[htbp]
		\caption{\textsc{Multiclass Learning} (Learn a pseudo $k$-multiclass linear classifier)}\label{alg pairwise to pseudo}
		\begin{algorithmic} [1]
\State\textbf{Input:} Sample access to distribution $D$, error parameter $\eps \in (0,1)$, confidence parameter $\delta\in (0,1)$, access to partial halfspace learning algorithm \textsc{Pairwise Learning}
\State\textbf{Output:} Pseudo $k$-multiclass linear classifier $h_{\widehat W}$

\For{$i,j \in [k], i< j$}
\State $\widehat{w}_{ij} \gets \textsc{Pairwise Learning}(i,j,\eps/k^2,\delta/k^2)$ \Comment{Partially learn a halfspace for label $(i,j)$}
\EndFor
\State Let $\widehat{W} = \{\widehat{w}_{ij} \mid i,j \in [k], i<j\}$ \Comment{Combine the partially learned halfspaces to a pseudo MLC}
\State\Return $h_{\widehat{W}}$
\end{algorithmic}
\end{algorithm}

Intuitively, a pseudo \(k\)-MLC assigns a linear separator to each pair of labels. For every \((i,j)\), the vector \(w_{ij}\) determines whether \(x\) favors class \(i\) over \(j\) via the sign of \(w_{ij}\!\cdot\! x\). Each class receives one ``vote'' for every pairwise comparison it wins, and the classifier outputs the class with the most votes. In this sense, the hypothesis class implements a one-vs-one tournament, aggregating all pairwise linear decisions rather than relying on a single weight matrix.
To evaluate a pseudo MLC under noise, we present the following error decomposition; 
the proof is deferred to \Cref{app proof error decomposition}. 
\begin{lemma}[Error Decomposition]\label{lm error decomposition}
    Let $D$ be a distribution over $\R^d \times [k]$. Let $h_W$ be a pseudo $k$-MLC and 
    let $\eps \in (0,1)$. Suppose that there exist constants $C>1$ and $c \in [0,1/2]$ such that
    for every $i \neq j \in [k]$, $\err_{ij}(h_{ij}) \le C ( {\err^*_{ij}}^{1-c}\log^{c/2}(1/\err^*_{ij}))+\eps$, where $h_{ij} = \sign(w_{ij}\cdot x)$ and $\err^*_{ij} = \inf_{h \in H}(\err_{ij}(h))$. 
    Then, 
    $\err(h_W)  \le C k^{1+c}\opt^{1-c}\log^{c/2}(k/\opt)
 +k^2\eps$.
\end{lemma}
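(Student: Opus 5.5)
The plan is to prove a pointwise reduction from multiclass error to pairwise errors, then compare each pairwise optimum to the optimal MLC $f^*$, and finally aggregate the $\binom{k}{2}$ per-pair bounds using concavity.

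\emph{Step 1 (tournament reduction).} Fix $(x,y)$ with $y=i$, and ignore the measure-zero events $w_{ij}\cdot x=0$. Suppose every pairwise halfspace involving $i$ votes for $i$. Then $s_i(x)=k-1$, while every other class $j$ has lost to $i$, so $s_j(x)\le k-2$. Hence $h_W(x)=i$.

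Contrapositively, $h_W(x)\neq y=i$ implies that some $j\neq i$ has $h_{ij}$ voting for $j$ on $x$. The example $(x,y)$ is then misclassified by $h_{ij}$ in the $(i,j)$ binary problem. A union bound gives
\[
\err(h_W)\le \sum_{i<j}\err_{ij}(h_{ij})\le C\sum_{i<j} g(\err^*_{ij}) + \binom{k}{2}\eps ,
\]
where $g(t):=t^{1-c}\log^{c/2}(1/t)$. The last term is at most $k^2\eps$.

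\emph{Step 2 (comparing to $f^*$).} Let $f^*(x)=\argmax_l w^*_l\cdot x$ attain $\opt$, and use the halfspace $\sign((w^*_i-w^*_j)\cdot x)$ for the pair $(i,j)$. If $y=i$ but $(w^*_i-w^*_j)\cdot x<0$, then $f^*(x)\neq i=y$, and symmetrically for $y=j$. Therefore
\[
\err^*_{ij}\le \opt_{ij}:=\Pr[y\in\{i,j\},\,f^*(x)\neq y].
\]
An example with $y=i$ and $f^*(x)\neq y$ is counted in $\opt_{ij}$ for exactly $k-1$ values of $j$. Summing over pairs gives $\sum_{i<j}\opt_{ij}\le (k-1)\opt$.

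\emph{Step 3 (aggregation).} Let $K=\binom{k}{2}$. If $g$ is nondecreasing and concave on the relevant range, Jensen gives
\[
\sum_{i<j} g(\err^*_{ij})\le \sum_{i<j} g(\opt_{ij})\le K\,g\Bigl(\tfrac{(k-1)\opt}{K}\Bigr)=K\,g(2\opt/k).
\]
Expanding, this equals $\tfrac{k^2-k}{2}\,(2\opt/k)^{1-c}\log^{c/2}(k/(2\opt))\le k^{1+c}\opt^{1-c}\log^{c/2}(k/\opt)$. Multiplying by $C$ yields the claim.

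\emph{Main obstacle.} The delicate step is the monotonicity and concavity of $g$. For $c\in(0,1/2]$, $g$ is increasing and concave only for $t$ below some constant threshold $t_0(c)$; for $c=0$, $g(t)=t$ and there is no issue. I would handle this by replacing $g$ with its least concave nondecreasing majorant $\bar g$ on $[0,1]$. This majorant agrees with $g$ near $0$ and is within a constant factor of $g$ elsewhere, which can be absorbed into $C$.

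Alternatively, when $\opt$ is large enough that $2\opt/k$ exceeds $t_0$, the right-hand side is already $\Omega(k)\ge 1$, and the bound holds trivially because $\err(h_W)\le 1$. So one may assume all relevant arguments lie in the range where $g$ is well behaved.

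One further caveat: individual $\opt_{ij}$ may exceed $t_0$ even when their average does not. Passing to the concave majorant $\bar g$ resolves this cleanly, which is why I would adopt that route first.
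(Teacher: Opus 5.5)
Your proposal is correct and matches the paper's proof step for step: the same one-vs-one union bound $\err(h_W)\le\sum_{i<j}\err_{ij}(h_{ij})$, the same bound of $\err^*_{ij}$ by $\Pr[y\in\{i,j\},\,f^*(x)\neq y]$ (the paper's $A_i+B_j$) via the halfspaces $\sign((w^*_i-w^*_j)\cdot x)$, and the same Jensen aggregation over pairs. Your handling of monotonicity is more careful than the paper's, which substitutes the upper bound into $g$ without comment. Two small corrections to that discussion. First, $g$ is in fact concave on all of $(0,1)$ for $c\in[0,1/2]$; it only stops being monotone past $t_0=e^{-c/(2(1-c))}$. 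Second, your claim that the majorant stays within a constant factor of $g$ is false near $t=1$, where $g\to 0$. You do not need that claim, though: the capped majorant equals $g$ at $2\opt/k$ whenever $2\opt/k\le t_0$, and your trivial-regime argument covers the remaining case.
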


Informally, the lemma shows that if the classifier is near-optimal for \emph{partially} learning every label pair, then the resulting pseudo multiclass classifier is also near-optimal overall. The degradation in the bound arises from aggregating pairwise errors: a single noisy example can affect many binary subproblems—namely all comparisons of the form \((i,r)\) for \(r \neq i\)—and thus may be counted up to \(k\) times, leading to the amplification in the final error.
By \Cref{lm error decomposition}, it suffices to partially learn halfspaces for each label pair and combine them into a pseudo MLC as in \Cref{alg pairwise to pseudo}.

\paragraph{Learning Pairwise Separators from Partial Labels.}
Unlike standard halfspace learning, where all examples are informative, distinguishing labels $i$ and $j$ relies only on examples whose labels lie in $\{i,j\}$. This restriction makes the \emph{partial} learning problem---learning a separator when the labels of some examples are hidden---significantly more challenging, especially under adversarial label noise.
In binary classification with Gaussian marginals, a central approach to robust learning is Chow-parameter matching: in the noiseless setting, the vector $\E[yx]$ aligns with the target normal $w^*$, enabling stable estimation under corruption. In our setting, however, this structure breaks down. Informative examples arise from two disjoint regions corresponding to labels $i$ and $j$, potentially mixed with outliers. As a result, the induced distribution is far from being a Gaussian,
and their geometry can be highly complex. In particular, there is no simple global statistic analogous to Chow parameters that captures the underlying separator.
This lack of structure poses a key obstacle for analysis. To overcome it, we establish a new structural result for partial learning, which may be of independent interest and applies more broadly. The proof is deferred to \Cref{app proof convex body}.

\begin{lemma}[Correlation Bound under the Gaussian Distribution]\label{lm convex body}
Let $w \in \s^{d-1}$ be any unit vector and let $S$ be any measurable event such that
$
S \subseteq \{x \in \R^d : w \cdot x \ge 0\}.
$
Then
\[\E_{x \sim \cN(0,I)}[(w \cdot x)\Ind\{x \in S\}]
\ge
\sqrt{\pi/2}\,\Pr_{x \sim \cN(0,I)}[x \in S]^2. \]  
\end{lemma}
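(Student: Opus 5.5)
Our plan is to reduce to one dimension and then use an extremal (bathtub) argument. Write $t = w\cdot x$. Under $x\sim\cN(0,I)$, $t\sim\cN(0,1)$. Set $g(t) := \Pr[x\in S \mid w\cdot x = t]\in[0,1]$. Since $S\subseteq\{w\cdot x\ge 0\}$, we have $g(t)=0$ for $t<0$. Writing $\phi$ for the standard normal density,
\[
\E[(w\cdot x)\Ind\{x\in S\}] = \int_0^\infty t\, g(t)\phi(t)\,dt,\qquad p:=\Pr[x\in S]=\int_0^\infty g(t)\phi(t)\,dt .
\]
So it suffices to lower bound $\int_0^\infty t g\phi$ over all measurable $g:[0,\infty)\to[0,1]$ with $\int_0^\infty g\phi = p$. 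Note that $p\le 1/2$ automatically.

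By the bathtub principle, the integrand weight $t$ is increasing, so the minimum is attained by concentrating the mass as close to $0$ as possible. Concretely, the minimizer is $g=\Ind\{0\le t\le a\}$, where $a$ is chosen so that $\Phi(a)-1/2 = p$. The exchange argument is: moving mass from larger $t$ to smaller $t$ never increases the objective. The minimum value is then
\[
\int_0^a t\phi(t)\,dt = \phi(0)-\phi(a) = \tfrac{1}{\sqrt{2\pi}}\bigl(1-e^{-a^2/2}\bigr).
\]

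It remains to prove the scalar inequality $\phi(0)-\phi(a)\ge \sqrt{\pi/2}\,(\Phi(a)-1/2)^2$ for all $a\ge0$. We plan to avoid explicit error-function bounds. Instead, set $F(a):=\phi(0)-\phi(a)-\sqrt{\pi/2}\,(\Phi(a)-1/2)^2$. Then $F(0)=0$, and since $\phi'(a)=-a\phi(a)$,
\[
F'(a) = a\phi(a) - \sqrt{2\pi}\,(\Phi(a)-1/2)\phi(a) = \phi(a)\Bigl(a-\sqrt{2\pi}\int_0^a\phi\Bigr).
\]
Because $\phi(s)\le\phi(0)=1/\sqrt{2\pi}$, we get $\sqrt{2\pi}\int_0^a\phi \le a$. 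Hence $F'\ge0$ and $F(a)\ge 0$. This gives exactly the constant $\sqrt{\pi/2}$, which indicates that this is the intended argument: the inequality is tight to second order at $a=0$.

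The main care point is justifying the bathtub/rearrangement step rigorously for measurable $g$. We would do this directly. With $a$ as above, we have $\int (g-\Ind_{[0,a]})\phi = 0$. Moreover, $(t-a)\bigl(g(t)-\Ind_{[0,a]}(t)\bigr)\ge0$ pointwise on $t\ge0$: for $t>a$ the indicator vanishes and $g\ge0$, while for $t\le a$ we have $g-1\le0$. Integrating against $\phi$ gives
\[
\int_0^\infty t\,(g-\Ind_{[0,a]})\,\phi \;\ge\; a\int_0^\infty (g-\Ind_{[0,a]})\,\phi = 0,
\]
which is the desired comparison. Everything else is routine calculus.
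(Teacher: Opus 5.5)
Your proposal is correct and matches the paper's proof essentially step for step: the same reduction to a conditional density $g$ on $[0,\infty)$, the same pointwise exchange inequality showing that $\Ind_{[0,a]}$ is the minimizer, and the same key fact $a \ge \sqrt{2\pi}\int_0^a \phi$. The only cosmetic difference is in the last step. The paper writes this fact as $M'(p)=b\ge\sqrt{2\pi}\,p$, using $M''\ge\sqrt{2\pi}$, and integrates in $p$; you differentiate $F(a)$ and integrate in $a$.
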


In the absence of label noise, if the current halfspace incurs large $(i,j)$-error, then the expectation over misclassified examples yields a vector $v$ satisfying
$
w^*_{ij} \cdot v \ge \Omega\big(\err_{ij}(h_{w_{ij}})^2\big),
$
where $w^*_{ij}$ is the normal vector of the optimal decision boundary between classes $i$ and $j$. 
Thus, although $v$ need not align with $w^*_{ij}$, it still has a positive correlation with $w^*_{ij}$ as long as the noise level on $(i,j)$ does not dominate $\err_{ij}(h_{w_{ij}})^2$. This correlation provides a useful descent direction. We therefore use $v$ to perform a projected gradient update, reducing $\norm{w_{ij} - w^*_{ij}}$. Formally, we rely on the following standard property of projected gradient descent.

\begin{lemma}\label{lm gradient descent}
    Let $w^*,w_i \in \s^{d-1}$ and let $G_i$ be a random vector drawn from some distribution $D$ such that with probability $1$, $G_i \perp w_i$. Let $g_i$ be the mean of $G_i$. Let $\widehat{g_i}$ be the empirical mean of $G_i$ and $\mu_i>0$. The update rule $w_{i+1} = \proj_{\s^{d-1}} (w_{i} + \mu_i\widehat{g_i})$ satisfies the following property:
    \begin{align*}
        \norm{w_{i+1}-w^*}^2 \le \norm{w_{i}-w^*}^2 - 2\mu_i g_i \cdot w^* + 2\mu_i\norm {g_i-\widehat{g_i}} +\mu_i^2\norm{\widehat{g_i}}^2.
    \end{align*}

\end{lemma}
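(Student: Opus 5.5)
The plan is to expand the squared distance after the ambient step and then use that projecting onto the sphere does not increase distance to $w^*$ by much. Write $u = w_i + \mu_i \widehat{g_i}$. Because $G_i \perp w_i$ almost surely, the empirical mean also satisfies $\widehat{g_i}\perp w_i$, and hence so does $g_i$. Then $\norm{u}^2 = 1 + \mu_i^2\norm{\widehat{g_i}}^2 \ge 1$.

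First, the projection step. Here $\proj_{\s^{d-1}}(u) = u/\norm{u}$. For any $u$ with $\norm{u}\ge 1$ and unit $w^*$, I will show $\norm{u/\norm{u} - w^*} \le \norm{u - w^*}$. This is the projection onto the unit ball, which is a convex set containing $w^*$, so the Pythagorean/nonexpansive property gives it directly. Alternatively, one can compute it: $\norm{u-w^*}^2 - \norm{u/\norm{u}-w^*}^2 = (\norm{u}^2-1) - 2(\norm{u}-1)\,(u/\norm{u})\cdot w^* \ge (\norm{u}-1)^2 \ge 0$, using $(u/\norm{u})\cdot w^*\le 1$.

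Second, I expand
\[
\norm{u-w^*}^2 = \norm{w_i-w^*}^2 + 2\mu_i \widehat{g_i}\cdot(w_i - w^*) + \mu_i^2\norm{\widehat{g_i}}^2 .
\]
Since $\widehat{g_i}\cdot w_i = 0$, the middle term equals $-2\mu_i \widehat{g_i}\cdot w^*$. I write this as $-2\mu_i g_i\cdot w^* + 2\mu_i (g_i-\widehat{g_i})\cdot w^*$. By Cauchy--Schwarz with $\norm{w^*}=1$, the error term is at most $2\mu_i\norm{g_i-\widehat{g_i}}$. Combining this with the projection step yields the claimed inequality.

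No step is a real obstacle. The only point needing care is the projection inequality for $\norm{u}\ge 1$, and this is exactly where the orthogonality $G_i\perp w_i$ is used.
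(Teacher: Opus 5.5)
Your proof is correct and follows the paper's argument: bound the post-projection distance by the pre-projection distance, expand the square, remove $\widehat{g_i}\cdot w_i$ by orthogonality, and split $\widehat{g_i}=g_i+(\widehat{g_i}-g_i)$ using Cauchy--Schwarz. Your check that $\norm{w_i+\mu_i\widehat{g_i}}\ge 1$ justifies the projection step, which the paper asserts without proof. Projection onto $\s^{d-1}$ is not nonexpansive in general, but here it coincides with the nonexpansive projection onto the unit ball.
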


\begin{algorithm}[htbp]
\caption{\textsc{Pairwise Initialization} (Initialize a halfspace to distinguish class $i$ from class $j$)}\label{alg pairwise}
\begin{algorithmic}[1]
\State \textbf{Input:} sample access to $D$, labels $i,j$, error $\eps \in (0,1)$, confidence $\delta \in (0,1)$
\State \textbf{Output:} $w_{ij} \in \s^{d-1}$
\State $N \gets d\log(1/\delta)\poly(1/\eps)$,\quad $T \gets \poly(1/\eps)$
\State Initialize $w_{ij}^{(0)} \sim \s^{d-1}$ \Comment{repeat $\log(1/\delta)$ times if needed}
\For{$t=0,\dots,T-1$}
    \State Draw $\{(x^{(r)},y^{(r)})\}_{r=1}^N \sim D$
    \State $\widehat v \gets \frac{1}{N}\sum_{r=1}^N x^{(r)}\!\left(\Ind\{y^{(r)}=i,\, w_{ij}^{(t)}\!\cdot x^{(r)}\le 0\}-\Ind\{y^{(r)}=j,\, w_{ij}^{(t)}\!\cdot x^{(r)}\ge 0\}\right)$
    \State $w_{ij}^{(t+1)} \gets \proj_{\s^{d-1}}\!\left(w_{ij}^{(t)} + \mu\,\proj_{(w_{ij}^{(t)})^\perp}\widehat v\right)$,\quad $\mu=\eps^2/C$
\EndFor
\State Draw fresh $\{(x^{(r)},y^{(r)})\}_{r=1}^N \sim D$
\State \Return $\displaystyle w_{ij}^{(t^*)}$, where 
$t^*=\argmin_{t\in[T]} \sum_{r=1}^N \Ind\{\sign(w_{ij}^{(t)}\!\cdot x^{(r)})\neq y^{(r)},\, y^{(r)}\in\{i,j\}\}$
\end{algorithmic}
\end{algorithm}

To apply \Cref{lm convex body,lm gradient descent}, we rely on two additional facts. First, under an $\eta$-fraction of adversarial label noise, the correlation $\E[\widehat{v}\cdot w^*_{ij}]$ is perturbed by at most $O(\eta\sqrt{\log(1/\eta)})$ (see \Cref{fact:simpleperturbation}). Hence, as long as the current error is $\tilde\Omega(\sqrt{\eta})$, the update yields progress. Second, $\norm{\E[\widehat{v}]} = O(1)$ (see \Cref{lm gaussian event mean}), which ensures a favorable convergence rate.
We summarize the resulting pairwise-distinguishing algorithm in \Cref{alg pairwise} and its guarantee in \Cref{lm pairwise}; the proof of \Cref{lm pairwise} is deferred to \Cref{app proof pairwise}.

\begin{lemma}\label{lm pairwise}
    Consider the problem of agnostically learning a $k$-MLC under the Gaussian distribution. Given any $i \neq j \in [k]$ and $\eps \in (0,1),\delta \in (0,1)$, \Cref{alg pairwise} draws $d\log(1/\delta)\poly(1/\eps)$ examples from $D$, runs in $\poly(m)$ time, and outputs a halfspace $\widehat{h}_{ij}$ such that with probability at least $1-\delta$, $\err_{ij}(\widehat{h}_{ij}) \le \tilde O (\sqrt{\err^*_{ij}}) + \eps$, where $\err^*_{ij} = \inf_{h \in H}(\err_{ij}(h))$.
\end{lemma}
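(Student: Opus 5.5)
Fix the optimal halfspace $w^*:=w^*_{ij}$ attaining (up to $\eps$) $\err^*_{ij}=:\eta$. The plan is a potential argument on $\norm{w^{(t)}-w^*}^2$ via \Cref{lm gradient descent}, run until the current pairwise error drops below a threshold, and then finished by empirical selection.

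\textbf{Step 1: the population update direction.} Write $w=w^{(t)}_{ij}$ and
\[
v=\E\big[x(\Ind\{y=i,w\cdot x\le0\}-\Ind\{y=j,w\cdot x\ge0\})\big], \qquad g=\proj_{w^\perp}v .
\]
Since $\proj_{w^\perp}x=x-(w\cdot x)w$, the $w$-component contributes $(w\cdot w^*)\E[(w\cdot x)(\cdots)]$. Both indicator terms make $(w\cdot x)(\cdots)\le 0$, so this term is a correction of controlled sign. I expect to handle it by the standard trick of bounding $g\cdot w^*$ directly on the region where $w$ and $w^*$ disagree.

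Split the labels into those consistent with the halfspace $w^*$ and the noisy ones, which have total mass at most $\eta$ on $\{i,j\}$. On the clean part, the misclassified examples of label $i$ with $w\cdot x\le0$ and $w^*\cdot x\ge0$, together with those of label $j$ with $w\cdot x\ge0$ and $w^*\cdot x\le 0$, form a set $S$ satisfying the hypothesis of \Cref{lm convex body}, after reflecting the $j$-part via $x\mapsto -x$ or treating the two parts separately. Applying \Cref{lm convex body} gives
\[
w^*\cdot v_{\mathrm{clean}}\ \ge\ \sqrt{\pi/2}\,\Pr[S]^2/2, \qquad \Pr[S]\ge \err_{ij}(h_w)-\eta .
\]

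\textbf{Step 2: noise and the projection term.} By \Cref{fact:simpleperturbation}, the noisy part perturbs $w^*\cdot v$ by $O(\eta\sqrt{\log(1/\eta)})$. The projection term is where I expect the main obstacle: it is $-(w\cdot w^*)\E[(w\cdot x)\cdot(\text{indicators})]$, which is signed favourably only when $w\cdot w^*\ge0$. I would first try to show it is nonnegative, or bounded by the same $O(\eta\sqrt{\log(1/\eta)})$ noise term when $w\cdot w^*\ge 0$. If that fails, I would restrict attention to the trajectory after initialization lands with positive correlation, which is where the $\log(1/\delta)$ repetitions come in. This gives
\[
g\cdot w^*\ \ge\ c\,(\err_{ij}(h_w)-\eta)^2 - O(\eta\sqrt{\log(1/\eta)}),
\]
which is $\Omega(\eps'^2)$ whenever $\err_{ij}(h_w)\ge \eps' := C'\sqrt{\eta}\,\log^{1/4}(1/\eta)+\eps$. \Cref{lm gaussian event mean} gives $\norm{\widehat v}=O(1)$, and uniform concentration with $N=d\log(1/\delta)\poly(1/\eps)$ samples gives $\norm{g-\widehat g}\le \eps^2/C$.

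\textbf{Step 3: descent and selection.} Plugging into \Cref{lm gradient descent} with $\mu=\eps^2/C$, every iteration with $\err_{ij}\ge\eps'$ decreases $\norm{w^{(t)}-w^*}^2\le 4$ by $\Omega(\eps^4)$. Hence within $T=\poly(1/\eps)$ iterations some iterate has $\err_{ij}(h_{w^{(t)}})\le \tilde O(\sqrt{\eta})+\eps$. If no such iterate occurred, the potential would become negative.

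The final empirical minimization over the $T$ candidates on a fresh sample selects an iterate within $\eps$ of the best one, by Hoeffding and a union bound. Combining with the union bound over repetitions yields the claimed guarantee with probability $1-\delta$.
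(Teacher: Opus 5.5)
Your argument is correct. The overall structure matches the paper: a potential argument on $\norm{w^{(t)}-w^*}^2$, \Cref{lm convex body}, the perturbation fact, \Cref{lm gradient descent}, and final selection. The key correlation step is different, and somewhat sharper.

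\textbf{The paper's route.} It writes $w^*=aw+bu$ with $u\perp w$ and uses $g\perp w$ to get $g\cdot w^*=b\,(g\cdot u)$. It then applies \Cref{lm convex body} in the direction $u$. This needs $w\cdot w^*\ge 0$, so that $u\cdot x$ has a fixed sign on the clean mistake cones. It only gives $g\cdot w^*\ge b\cdot\Omega(\eps^2)$. A separate step then shows $b\ge \eps/2$ (otherwise the disagreement region is too light), yielding correlation $\Omega(\eps^3)$ and a per-iteration decrease of $\Omega(\eps^6)$.

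\textbf{Your route.} You expand $g\cdot w^*=v\cdot w^*-(v\cdot w)(w\cdot w^*)$ and apply \Cref{lm convex body} in the directions $\pm w^*$. The clean mistake sets lie in the right halfspaces by the definition of clean labels, with no condition on $w$. This removes the factor $b$, and buys three things:
\begin{itemize}
\item correlation $\Omega(\eps^2)$, which is the correct order when there is no noise and all labels lie in $\{i,j\}$ (then $\err_{ij}(h_w)=\theta/\pi$ and $g\cdot w^*=\Theta(\theta^2)$);
\item a per-iteration decrease of $\Omega(\eps^4)$;
\item consistency with the step size $\mu=\eps^2/C$ actually used in \Cref{alg pairwise}. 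Through \Cref{lm gradient descent}, the paper's $\eps^3$ bound would call for a step of order $\eps^3$.
\end{itemize}

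\textbf{What the paper's decomposition buys.} The $b\,(g\cdot u)$ form is exactly the template reused in the localized analyses (proofs of \Cref{lm localization} and \Cref{lm surface localization}). There the mistake region is cut out by the transformed normal $\tilde w^*_{ij}=a\sigma w+bu$, and $u$ is the direction that survives the change of coordinates.

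\textbf{Loose ends to close rather than hedge.}
\begin{itemize}
\item The projection term is simply nonnegative when $w\cdot w^*\ge 0$. You already observed $v\cdot w\le 0$: each indicator forces $(w\cdot x)(\cdot)\le 0$, noisy examples included. Hence $-(v\cdot w)(w\cdot w^*)\ge 0$, and no comparison with the noise term is needed.
\item Positive correlation must hold along the whole run, not only at $t=0$. For a restart with $w^{(0)}\cdot w^*\ge 0$, this follows because $\norm{w^{(t)}-w^*}^2$ starts at most $2$ (not $4$). It then decreases at every step before the first iterate with small error, so it stays below $2$.
\item Your sets $S_i,S_j$ depend on $y$, whereas \Cref{lm convex body} is stated for events in $x$ alone. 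Its proof goes through verbatim with $\Ind\{x\in S\}$ replaced by $\Pr[(x,y)\in S\mid x]\in[0,1]$. The paper uses it the same way implicitly.
\end{itemize}
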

Therefore, the proof of \Cref{th perceptron} follows by
using \Cref{alg pairwise} as a subroutine in \Cref{alg pairwise to pseudo}.

\section{Error Boosting via Localization}\label{sec boost}

\Cref{th perceptron} yields an $\tilde O(\sqrt{\opt})$ guarantee via global partial learning. We improve this to $O(\opt)$ by localizing near the current decision boundary, where the signal from misclassified examples is stronger.
In the multiclass setting, such localization can fail in degenerate configurations. We address this by introducing geometric quantities—the effective boundary mass and critical angle—that ensure sufficient signal near the boundary.

\paragraph{Warm-up: Error Boosting for Ternary Classification.}  
As a warm-up, we give a polynomial-time algorithm for agnostic ternary classification $(k=3)$ with $O(\opt)$ error.
\begin{theorem}\label{th k=3}
Consider the problem of agnostic learning MLC over $\R^d$ with 3 labels. There is an algorithm that draws $m=d\poly(\log(1/\delta)/\eps)$ samples, runs in $\poly(m)$ time, and outputs a hypothesis with error $O(\opt) +\eps$ with probability at least $1-\delta$.    
\end{theorem}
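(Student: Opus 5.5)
The plan is to keep the pairwise framework of \Cref{alg pairwise to pseudo} and improve each pairwise learner from $\tilde O(\sqrt{\err^*_{ij}})$ to $O(\opt)$ error by localization. \Cref{lm error decomposition} then applies with $c=0$ and $k=3$, giving total error $O(\opt)+\eps$. For $k=3$ the pseudo MLC only has three separators $w_{12},w_{13},w_{23}$, so the decomposition loses only constants. Concretely, for each pair $(i,j)$ I would first run \Cref{alg pairwise}, which by \Cref{lm pairwise} gives a warm start $w^{(0)}_{ij}$ with $(i,j)$-error $\tilde O(\sqrt{\opt})+\eps$.

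From the warm start I would run a localized version of the update. At stage $t$, with current angle bound $\theta_t$, restrict to the band $B_t=\{x: |w^{(t)}_{ij}\cdot x|\le \theta_t\}$, perhaps with rejection sampling to make the in-band marginal close to Gaussian in the orthogonal directions. On $B_t$, compute the same misclassification-weighted vector $\widehat v$ as in \Cref{alg pairwise}, project it onto $(w^{(t)}_{ij})^\perp$, and take the projected step of \Cref{lm gradient descent}. The analysis would mirror the binary localization argument with three ingredients.

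\begin{itemize}
\item \emph{Signal.} Inside the band the angle $\theta(w^{(t)}_{ij},w^*_{ij})$ is comparable to the band width. Applying \Cref{lm convex body} to the rescaled conditional distribution then gives a correlation $\Omega(\theta_t)$ rather than $\Omega(\err^2)$.
\item \emph{Noise.} Noise inside the band has mass at most $\opt/\Pr[B_t]\approx \opt/\theta_t$. By \Cref{fact:simpleperturbation} (after conditioning), its effect on the correlation is $O((\opt/\theta_t)\log)$. This is dominated by the signal until $\theta_t=\Theta(\opt)$, up to logs that the standard rescaled argument removes.
\item \emph{Contraction.} Halve $\theta_t$ each stage. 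After $O(\log(1/\eps))$ stages this gives angle $O(\opt)+\eps$, hence $(i,j)$-error $O(\opt)+\eps$.
\end{itemize}

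The main obstacle is the \emph{partial} nature of the problem. In the band around the $i$--$j$ boundary, some points belong, under $f^*$, to the third class $l$, and carry no $i/j$ signal. The conditional distribution of $\{y\in\{i,j\}\}$ in the band could then have too little mass, and the signal would be swamped. This is exactly where $k=3$ is special, and I would handle it with a geometric claim. For any $3$-MLC, if $w$ is close to the true $i$--$j$ boundary normal and that boundary carries non-negligible error mass, then within the band a constant fraction of the $f^*$-mass near the misclassified region has labels in $\{i,j\}$.

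The reason is that the three boundaries are rays meeting at a codimension-2 point (a 2D picture after projecting to the span of the $w^*$'s). The $i$--$j$ interface is a half-hyperplane, and near it class $l$ occupies a wedge whose boundary mass is controlled by Gaussian surface area. If instead the $i$--$j$ interface has tiny Gaussian mass, then the pair's error is automatically small and nothing is needed.

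So I would case on the effective boundary mass of the pair $(i,j)$, in the spirit of the quantity $\mathcal{T}_{ij}$ introduced later for general $k$:
\begin{itemize}
\item If it is $O(\opt)$ or below $\eps$, accept the warm start.
\item Otherwise, the $k=3$ geometry gives a constant lower bound on the relevant critical angle, and localization proceeds.
\end{itemize}
Finally, I would select among the candidate hypotheses, one per stage or case, by empirical $(i,j)$-error on fresh samples, exactly as in \Cref{alg pairwise}, and take a union bound over the three pairs for confidence $\delta$.
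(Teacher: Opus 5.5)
There is a genuine gap at the step you yourself flag as the main obstacle, and your case split cannot close it.

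For $k=3$, on $H_{ij}$ one has $w^*_i\cdot x=w^*_j\cdot x$, so the two constraints defining $B_{ij}$ collapse to the single constraint $(w^*_i-w^*_l)\cdot x\ge 0$. Hence $\mathcal{T}_{ij}\ge 1/2$ for every $3$-MLC, and your first case is vacuous. Even if it occurred, ``accepting the warm start'' only gives $\tilde O(\sqrt{\opt})$, not $O(\opt)$.

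Everything therefore rests on your second case, and its claim is false: three classes do not force a constant critical angle. For instance, $w^*_i=e_1$, $w^*_j=-e_1$, $w^*_l=\beta e_2$ gives $\mathcal{T}_{ij}=1/2$ but $\tan\theta^*_{ij}=\beta\to 0$; class $l$ is a thin cone hugging the inactive half of $H_{ij}$. Any argument that controls the third class near the $i$--$j$ interface via surface area and critical angle, as in \Cref{lm error count}, pays an error term of order $\tan^2\theta/\Phi_{ij}$. It also needs an initial angle below roughly $\Phi_{ij}$. So it yields at best a $\poly(\Phi^{-1})\opt$ bound, not $O(\opt)$ with a universal constant.

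The same problem affects your initialization. \Cref{lm pairwise} bounds the partial error of $w^{(0)}_{ij}$, not its angle to $w^*_{ij}$. In the multiclass setting you need an extra argument to turn one into the other before a band schedule can start at some $\theta_0$.

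The missing idea is a central-symmetry argument that holds for every $3$-MLC with no regularity assumption. After localization, write $x=\Sigma^{1/2}z$. The disagreement set $A(w)=\{z:(w\cdot z)(\tilde w^*_{ij}\cdot z)\le 0\}$ satisfies $A=-A$. The single remaining class is contained in the halfspace $\{\tilde w^*_{li}\cdot z\ge 0\}$ through the origin. By symmetry of $\cN(0,I)$, at most half of the mass of $A$ has clean label $l$. This gives:
\begin{itemize}
\item the constant fraction of informative points in the band directly, which feeds \Cref{lm convex body};
\item with $\Sigma=I$, the error-to-angle conversion $\Pr[(w\cdot x)(w^*_{ij}\cdot x)<0,\,f^*(x)\in\{i,j\}]\ge\theta/(2\pi)$.
\end{itemize}

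With this, the paper needs neither a warm start nor a shrinking schedule. From a random start it grids a single $\sigma\in[2(C-1)\mathcal{O}_{ij},\,2(C-1)\mathcal{O}_{ij}+\eps]$. Whenever $\err_{ij}(h_w)\ge C\mathcal{O}_{ij}+\eps$, one gets $\sin\theta\ge\sigma$. Hence $\tilde\theta\ge\pi/4$, so $\Pr[A]\ge 1/4$ and the clean $\{i,j\}$-disagreement is at least $1/8$. The in-band noise is at most $\mathcal{O}_{ij}/\sigma=O(1/C)$.

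Your warm-start-plus-halving architecture could be made to work once this lemma replaces the critical-angle case analysis. You would still have to handle stages where the true angle is far below $\theta_t$.

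Finally, \Cref{lm error decomposition} is stated relative to $\err^*_{ij}$, so it does not apply to per-pair bounds of the form $O(\mathcal{O}_{ij})+\eps$. Instead, sum the six pairwise errors directly, as the paper does.
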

The proof of \Cref{th k=3} is deferred to \Cref{app proof k=3}.
The key to achieving this improvement is an improved partial learning algorithm, \Cref{alg pairwise localization}.

\begin{lemma}\label{lm localization}
Consider the problem of agnostic learning a $3$-MLC under the Gaussian distribution. There is an algorithm such that
given any $i \neq j$ and $\eps,\delta \in (0,1)$, it draws $m=d\log(1/\delta)\poly(1/\eps)$ examples from $D$, runs in $\poly(m)$ time, and outputs a halfspace $\widehat{h}_{ij}$ such that $\err_{ij}(\widehat{h}_{ij}) \le O(\mathcal{O}_{ij})+\eps$, with probability $1-\delta$, where 
$\mathcal{O}_{ij} = \sum_{a\neq b:a \in \{i,j\}\; \mathrm{or }\; b \in \{i,j\}}\opt_{a,b}$, $\opt_{a,b} = \Pr_{(x,y) \in D}[f^*(x) = a, y=b]$.  
\end{lemma}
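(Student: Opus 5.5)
The plan is to follow the binary localization paradigm of \cite{awasthi2017power,diakonikolas2020non}, adapted to the partial setting where only examples with $y\in\{i,j\}$ carry signal.

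\textbf{Initialization.} Run \Cref{alg pairwise} to obtain $w^{(0)}$ with $\err_{ij}(h_{w^{(0)}})\le \tilde O(\sqrt{\mathcal{O}_{ij}})+\eps$. If $\mathcal{O}_{ij}$ is below a small constant, we will convert this into an angle bound $\theta(w^{(0)},w^*_{ij})\le \theta_0$ for a small constant $\theta_0$. Here $w^*_{ij}$ is the normal of the $i$--$j$ boundary of $f^*$.

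For $k=3$ this conversion is unconditional. The third class $\ell$ occupies a convex cone $\{(w^*_\ell-w^*_i)\cdot x>0,(w^*_\ell-w^*_j)\cdot x>0\}$. The $i$--$j$ interface is the half-hyperplane $\{(w^*_i-w^*_j)\cdot x=0\}$ outside this cone, and it carries a constant fraction of the 2D Gaussian "angular" mass. The exception is when $\ell$'s cone is nearly a halfspace, but then classes $i,j$ have mass almost zero near the boundary anyway, and $\err_{ij}$ is trivially small. I would handle this by a case split on the angle of the $\ell$-wedge.

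On the non-degenerate side, a halfspace $w$ at angle $\theta$ from $w^*_{ij}$ misclassifies $\Omega(\theta)$ mass of $f^*$-labels in $\{i,j\}$. Hence $\err_{ij}(h_w,f^*)\gtrsim \theta$, and $\err_{ij}(h_w)\ge \err_{ij}(h_w,f^*)-\mathcal{O}_{ij}$. The error $\err_{ij}(h_w)$ changes from $\err_{ij}(h_w,f^*)$ only on points where $y\ne f^*(x)$ and one of $y,f^*(x)$ lies in $\{i,j\}$, whose total mass is $\mathcal{O}_{ij}$.

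\textbf{Localized rounds.} In round $t$ we maintain $\theta(w^{(t)},w^*_{ij})\le\theta_t$ and condition on the band $B_t=\{|w^{(t)}\cdot x|\le b_t\}$ with $b_t\asymp\theta_t$. After rescaling the $w^{(t)}$-direction by $1/b_t$, the conditional distribution is close to Gaussian: it is Gaussian in $w^\perp$ and nearly uniform in the band direction. Within the band, the $f^*$-boundary is a half-hyperplane tilted by angle $\theta_t/b_t=\Theta(1)$ after rescaling. The misclassified region $S$, restricted to $f^*(x)\in\{i,j\}$, lies on one side of $w^*_{ij}$ after a sign flip, exactly as in \Cref{lm convex body}.

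I would then prove a band analogue of \Cref{lm convex body}: for $S\subseteq\{w^*\cdot x\ge0\}$ the correlation is at least $\Omega(\Pr[S\mid B_t]^2)$ in the rescaled geometry. The key point is that $\Pr[S\mid B_t]=\Omega(1)$ because the disagreement wedge occupies a constant fraction of the band. For $k=3$ this again uses the fact that the $\ell$-wedge cannot cover most of the band near the $i$--$j$ boundary unless we are in the degenerate case.

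The noise affects the conditional correlation by at most $O(\eta_t\sqrt{\log(1/\eta_t)})$, where $\eta_t=\mathcal{O}_{ij}/\Pr[B_t]\asymp\mathcal{O}_{ij}/\theta_t$; this is \Cref{fact:simpleperturbation} applied conditionally. As long as $\theta_t\ge C\,\mathcal{O}_{ij}$, this perturbation is below the constant signal. Running projected gradient steps via \Cref{lm gradient descent} with the vector $v$ from \Cref{alg pairwise} restricted to $B_t$ for $O(1)$ steps then halves the angle: $\theta_{t+1}=\theta_t/2$.

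\textbf{Termination and final bound.} We stop after $O(\log(1/\eps))$ rounds, once $\theta_t=\Theta(\mathcal{O}_{ij})+\eps$. The Gaussian mass of the disagreement region is then $O(\theta_t)$, giving $\err_{ij}\le O(\mathcal{O}_{ij})+\eps$. Since $\mathcal{O}_{ij}$ is unknown, we guess $\theta$-schedules over a geometric grid and select the best candidate on fresh samples, as in the final step of \Cref{alg pairwise}. The sample complexity is $d\log(1/\delta)\poly(1/\eps)$, since each band has mass $\Omega(\eps)$ and each round needs $d\poly(1/\eps)$ samples.

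\textbf{Main obstacle.} The hardest step is the band-conditional signal lower bound: showing that, conditioned on $B_t$, the misclassified examples with labels in $\{i,j\}$ have constant conditional mass and correlate with $w^*_{ij}$ despite the third class possibly cutting through the band. I would first try parametrizing everything in the 3-dimensional span of $w^{(t)}$, $w^*_{ij}$ and the third-class boundary normal, and doing a direct computation. This amounts to a case analysis of where the apex of the class-$\ell$ cone sits relative to the band.
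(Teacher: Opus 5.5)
\paragraph{Gap: the $k=3$ signal bound.} Your architecture is a workable alternative to the paper's: a warm start from \Cref{alg pairwise}, hard bands $\{|w^{(t)}\cdot x|\le b_t\}$ with a halving schedule, and hypothesis selection. However, the step you flag as the main obstacle is precisely the $k=3$-specific content of the lemma. You leave it as an unexecuted 3D case analysis, and your plan for the ``degenerate'' case rests on a false claim.

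You assert that when the class-$\ell$ cone is nearly a halfspace, classes $i,j$ have almost no mass near the boundary, so $\err_{ij}$ is trivially small. This fails. In the plane take $w^*_\ell=e_2$, $w^*_i=-p\,e_1$, $w^*_j=p\,e_1$ with $p\to 0$. Then class $\ell$ is $\{x_2\ge p|x_1|\}$, essentially a halfspace. Yet the $i$--$j$ interface is the ray $\{x_1=0,\,x_2\le 0\}$, deep inside the $\{i,j\}$ region. Tilting $w$ by a small $\theta$ away from $w^*_{ij}$ in this plane gives $\err_{ij}(h_w,f^*)=\theta/(2\pi)$. So the case split, as described, would not close either the initialization step or the band step.

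\paragraph{The missing idea: central symmetry.} This is how the paper proves \Cref{cl error prob}. The disagreement set $A=\{(w\cdot x)(w^*_{ij}\cdot x)<0\}$ satisfies $A=-A$, while the third class lies in the halfspace $\{w^*_{\ell i}\cdot x\ge 0\}$. Hence, under any law invariant under $x\mapsto -x$, one has $\Pr[A\cap\{f^*=\ell\}]\le \Pr[A]/2$. This covers the standard Gaussian, the paper's $\cN(0,\Sigma)$, and your band-conditional law. There is therefore no degenerate case at all:
\begin{itemize}
\item $\err_{ij}(h_w,f^*)\ge \theta/(2\pi)$ unconditionally, which is what your initialization step needs.
\item Inside $B_t$, the $\{i,j\}$-labeled part of the disagreement region has conditional mass at least $\Pr[A\mid B_t]/2=\Omega(1)$ whenever the current angle is $\gtrsim b_t$.
\end{itemize}

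\paragraph{A smaller correction.} Apply \Cref{lm convex body} in the direction $u$, the component of $w^*_{ij}$ orthogonal to $w^{(t)}$, rather than to $w^*_{ij}$. The update is projected onto $(w^{(t)})^\perp$, and the misclassified $i$-points lie in $\{u\cdot x\ge 0\}$. Since $u\perp w^{(t)}$, $u\cdot x$ is still exactly $\cN(0,1)$ under the band conditioning. So the lemma applies verbatim and no band analogue is needed. The same holds for \Cref{fact:simpleperturbation} after dividing by $\Pr[B_t]$.

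\paragraph{Comparison once fixed.} With the symmetry fact your plan goes through. The paper needs neither a warm start nor an angle schedule. It fixes one soft localization scale $\sigma\approx 2(C-1)\mathcal{O}_{ij}$, guessed over an $\eps$-grid, and starts from a random direction. It then notes that whenever $\err_{ij}(h_w)\ge C\mathcal{O}_{ij}+\eps$, the transformed angle exceeds $\pi/4$. So the signal is constant while the noise is at most $\mathcal{O}_{ij}/\sigma$, and it runs $\poly(1/\eps)$ gradient steps. Your multi-scale version needs only $O(\log(1/\eps))$ rounds. The price is the warm start and the usual bookkeeping for the invariant $\theta(w^{(t)},w^*_{ij})\le\theta_t$: a step size of order $\theta_t$, so that iterates already below $\theta_t/2$ do not drift out.
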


We give an overview of \Cref{lm localization} and defer the full proof to \Cref{app proof localization}. Compared with \Cref{lm pairwise}, \Cref{lm localization} provides an error guarantee with respect to a different benchmark, namely $\mathcal{O}_{ij}$, which measures the corruption of the optimal MLC $f^*$ on labels $i$ and $j$. This weaker benchmark is still sufficient to obtain a pseudo-MLC with total error $O(k\opt)$. The advantage of this formulation is that it allows us to exploit the additional structural properties of multiclass linear classifiers.

While this is not immediate, in the case $k=3$ the key structural insight is the following: if the current halfspace $h_w$ incurs a significant $(i,j)$-error on clean examples, then after conditioning on a sufficiently small band around $w$, a constant fraction of the retained examples remain misclassified.
For a cleaner analysis, we implement this localization via a soft truncation. Specifically, we consider $x \sim \cN(0,\Sigma)$, where $\Sigma^{1/2}w=\sigma w$ for some $\sigma\in(0,1)$ and $\Sigma^{1/2}u=u$ for all $u \perp w$. This distribution can be realized by rejection sampling from $\cN(0,I)$ with acceptance probability depending on $w\cdot x$ (see \Cref{lem:rejection-sampling}).
Writing $x=\Sigma^{1/2}z$ with $z\sim\cN(0,I)$, and defining $y^*(z)=f^*(x)$, we may equivalently view $z$ as labeled by a transformed MLC with boundary $\tilde w^*_{ij}=\Sigma^{1/2}w^*_{ij}$. Let
\[
E^*_{ij}(w)=\{(z,y^*) : y^*=i,\, w\cdot z\le 0\}\ \cup\ \{(z,y^*) : y^*=j,\, w\cdot z\ge 0\}.
\]
We claim that if $\err_{ij}(h_w)>\Omega(\mathcal{O}_{ij})+\eps$, then for $\sigma=\Theta(\mathcal{O}_{ij}+\eps)$, $\Pr_{z\sim\cN(0,I)}[E^*_{ij}(w)] > 1/8.$
The argument proceeds in two steps. First, large error implies that $\theta=\theta(w,w^*_{ij})=\Omega(\mathcal{O}_{ij}+\eps)$. Writing $w^*_{ij}=\cos\theta\, w+\sin\theta\, u$ with $u\perp w$, the choice of $\sigma$ ensures that $\tilde\theta=\theta(w,\tilde w^*_{ij})>\pi/4$, and hence $\Pr[(w\cdot z)(\tilde w^*_{ij}\cdot z)<0] \ge 1/4.$
Second, not all such disagreements correspond to labels in $\{i,j\}$ due to competing classes. However, by symmetry, a constant fraction of these points satisfy the remaining constraints (e.g., $\tilde w^*_{ki}\cdot z\le 0$), and thus belong to labels $\{i,j\}$. 
For these points, if $y^*=i$ and $w\cdot z\le 0$, then $u\cdot z\ge 0$, and by \Cref{lm convex body} we can estimate a vector with constant correlation with $u$, which yields progress via \Cref{lm gradient descent}. 
Finally, since we only use examples with labels in $\{i,j\}$, the effective corruption level is at most $\mathcal{O}_{ij}$. By the choice of $\sigma$, the fraction of corrupted samples in the localized region is negligible compared to the signal, ensuring robustness.
This concludes the proof overview of \Cref{lm localization}.

\paragraph{Boosting through Effective Decision Boundary.}
Having established error boosting for $k=3$, a natural question is whether localization remains effective for larger $k$. In general, the answer is no: localization can fail in degenerate configurations. For example, suppose $f^*$ is such that labels $i$ and $j$ depend only on a low-dimensional subspace $V$, while the current direction $w$ satisfies $w \perp V$. Then, regardless of the chosen band around $w$, the fraction of informative examples is at most $\Pr_{x\sim\cN(0,I)}[f^*(x)\in\{i,j\}]$, which may be arbitrarily small, leaving insufficient signal for improvement.
Our key observation is that such configurations are atypical in high dimensions. For a broad class of well-behaved multiclass linear classifiers, we show that a refined localization scheme still yields sufficient signal to boost the error from $\sqrt{\opt}$ to $O(\opt)$, with the constant depending on a suitable regularity parameter of $f^*$.
We formalize this through geometric quantities that capture the regularity of an MLC.

\begin{definition}[Effective Decision Boundary and Critical Angle]\label{def:boundary}
    Let $f^*(x) = \argmax_{i \in [k]} (w^*_i \cdot x)$ be a $k$-MLC. For $i,j \in [k], i \neq j$, we define the effective decision boundary of $f^*$ for classes $i,j$ as 
    \begin{align*}
     B_{ij}(f^*):=\{x \mid w^*_{ij}\cdot x = 0, w^*_{ir}\cdot x \ge 0, w^*_{jr}\cdot x \ge 0, \forall r \in [k] \setminus \{i,j\}\},   
    \end{align*}
    where for $i\neq j \in [k]$, $w^*_{ij} = (w^*_i-w^*_j)/\|w^*_i-w^*_j\|$.
The Gaussian measure of $B_{ij}$ is defined as $\mathcal{T}_{ij} \eqdef \Pr_{x \sim \cN(0,I)}[B_{ij}]$, where $x$ is drawn from a $(d-1)$-dimensional standard Gaussian supported over $H_{ij}:=\{x \mid w^*_{ij}\cdot x = 0\}$. Define the critical angle $\tan \theta^*_{ij} = \min_{r \in [k]\setminus\{i,j\}}\{\min\{|\tan\theta(w^*_{ij},w^*_{ir})| ,|\tan\theta(w^*_{ji},w^*_{jr})|\}\}$ and $\Phi_{ij} = \min\{\tan \theta^*_{ij},1\}.$
\end{definition}

\begin{figure}[htbp]
\centering
\begin{tikzpicture}[scale=1.05, >=stealth]

\begin{scope}[shift={(-3.4,0)}]

\coordinate (O) at (0,0);
\coordinate (A) at (-2.2,-1.8);  
\coordinate (B) at (2.2,-1.8);
\coordinate (C) at (0,2.1);

\fill[blue!10]   (A) -- (O) -- (B) -- cycle;
\fill[red!10]    (-2.2,1.8) -- (O) -- (A) -- cycle;
\fill[green!12]  (2.2,1.8) -- (O) -- (-2.2,1.8) -- cycle;

\draw[ultra thick, red!80!black] (O) -- (A);
\draw[ultra thick, dashed, gray!70] (O) -- (2.2,1.8);

\node[blue!70!black] at (0,-1.15) {$i$};
\node[red!70!black] at (-1.15,0.45) {$j$};
\node[green!50!black] at (0,1) {$r$};

\node[
  red!80!black,
  rotate=-39,
  fill=white,
  inner sep=1.2pt
] at (-1.0,-0.82) {$B_{ij}(f^*)$};

\node[
  gray!80,
  rotate=-39,
  fill=white,
  inner sep=1.2pt
] at (0.95,0.78) {\small inactive};

\node at (0,-2.55) {\small (a) Effective $i$--$j$ boundary};

\end{scope}

\begin{scope}[shift={(3.4,0)}]

\coordinate (O) at (0,0);

\draw[->, ultra thick, red] (O) -- (2.0,0) node[right] {$w^*_{ij}$};
\draw[->, thick] (O) -- (1.55,0.95) node[above right] {$w^*_{ir}$};
\draw[->, thick] (O) -- (0.95,1.55) node[above] {$w^*_{is}$};
\draw[->, thick] (O) -- (1.65,-0.55) node[below right] {$w^*_{jr}$};

\draw[red, thick] (0.72,0) arc[start angle=0,end angle=31.5,radius=0.72];
\node[red!80!black] at (0.95,0.23) {$\theta^*_{ij}$};

\draw[gray] (0.55,0) arc[start angle=0,end angle=58,radius=0.55];
\draw[gray] (0.9,0) arc[start angle=0,end angle=-18.5,radius=0.9];

\node[align=center] at (0,-1.35) {\small smallest separation from\\[-1mm]\small competing boundaries};

\node at (0,-2.55) {\small (b) Critical angle};

\end{scope}

\end{tikzpicture}
\caption{
Illustration of the geometric quantities used for localization.
{\bf Left:} the effective decision boundary $B_{ij}(f^*)$ is the active portion of the $i$--$j$ boundary where classes $i$ and $j$ are adjacent; the dashed continuation is inactive because another class dominates there.
{\bf Right:} the critical angle $\theta^*_{ij}$ measures the minimum angular separation between the $i$--$j$ boundary normal and competing boundary normals; the smaller the critical angle is, the more sensitive label changes near the decision boundary.
}
\label{fig:effective-boundary-critical-angle}
\end{figure}

The above definition formalizes the portion of the decision boundary between classes $i$ and $j$ that is actually relevant under the multiclass rule. While the hyperplane $\{x \mid w^*_{ij} \cdot x = 0\}$ describes the pairwise separator between $i$ and $j$, not every point on this hyperplane corresponds to an ambiguity between these two labels in the full multiclass classifier. Indeed, at a point $x$ on this hyperplane, another class $r \notin \{i,j\}$ may have a larger score and thus determine the label.

To capture only the meaningful part of the boundary, we define the effective decision boundary $B_{ij}(f^*)$ as the subset of the hyperplane where both $i$ and $j$ dominate all other classes. In other words, $B_{ij}(f^*)$ consists of points where $i$ and $j$ tie, and no third class interferes. This set represents the true interface across which the classifier switches between labels $i$ and $j$ (see \Cref{fig:effective-boundary-critical-angle} (a)). 

We quantify the “size” of this interface via its Gaussian measure $\mathcal{T}_{ij}$, taken with respect to the $(d-1)$-dimensional standard Gaussian distribution supported on the hyperplane $H_{ij} =\{x \mid w^*_{ij} \cdot x = 0\}$. Intuitively, $\mathcal{T}_{ij}$ captures how prominent the $i$–$j$ boundary is under the data distribution.

Finally, the critical angle $\theta^*_{ij}$ measures how well-separated this boundary is from the decision boundaries involving other classes. It is defined as the minimum angle between $w^*_{ij}$ and any competing direction $w^*_{ir}$ or $w^*_{jr}$ for $r \not\in \{i,j\}$. A larger critical angle indicates that the $i$–$j$ boundary is geometrically well-isolated, which in turn ensures that the effective boundary $B_{ij}(f^*)$ is stable and less prone to interference from other classes (see \Cref{fig:effective-boundary-critical-angle} (b)).

In high dimensions, these quantities are unlikely to be too small for well-spread classifiers, unless the weight vectors are arranged in a highly degenerate way (e.g., nearly collinear or confined to a low-dimensional subspace). In particular, we show that the quantities $\mathcal T_{i,j}$ and $\theta_{i,j}^*$ are at least $1/k^{O(1)}$ for all pairs $i,j\in[k]$, with high probability for an MLC with randomly sampled weight vectors. 
We refer the reader to \Cref{sec:randommlcs} for more details.

Based on these two definitions, we have the following theorem.
\begin{theorem}\label{th surface area}
    Consider the problem of agnostically learning $k$-MLC over $\R^d$, there is an algorithm that draws $n=d\poly(k/\eps)\log(1/\delta)$ samples, runs in $\poly(n)$ time, and with probability at least $1-\delta$ outputs a hypothesis with error $k\log k\poly(\mathcal{T}^{-1}\Phi^{-1})\opt +\eps$, where $\mathcal{T} = \min_{(i,j):i \neq j \in [k]}\mathcal{T}_{ij}$ and $\Phi = \min_{(i,j):i \neq j \in [k]}\Phi_{ij}$, with $\Phi_{ij}:=\min\{|\tan \theta^*_{ij}|,1\}$.
\end{theorem}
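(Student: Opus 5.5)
The plan is to reuse the pseudo-MLC framework of \Cref{alg pairwise to pseudo}, with a localized pairwise learner in place of \Cref{alg pairwise}, and to aggregate through an analogue of \Cref{lm error decomposition} stated in terms of the benchmark $\mathcal{O}_{ij}$.

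\textbf{Warm start.} For each pair $(i,j)$ I first run \Cref{alg pairwise}. By \Cref{lm pairwise} this gives $w_{ij}$ with $\err_{ij}(h_{w_{ij}})\le \tilde O(\sqrt{\err^*_{ij}})+\eps$, so the angle $\theta(w_{ij},w^*_{ij})$ is already $\tilde O(\sqrt{\opt})$ (after normalizing by the boundary mass).

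\textbf{Localized boosting.} Next I follow the template of \Cref{lm localization}. For a current $w$, rejection sampling (\Cref{lem:rejection-sampling}) produces $x\sim\cN(0,\Sigma)$ with variance $\sigma^2$ along $w$. I reparametrize $x=\Sigma^{1/2}z$ and update with the vector $\widehat v$ of \Cref{alg pairwise} computed on $z$, projected onto $w^\perp$. I also use a shrinking schedule $\sigma_t\propto\theta_t$, in the style of the standard binary localization argument.

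\textbf{Structural claim.} The key claim replaces the $k=3$ symmetry argument. Suppose $\theta=\theta(w,w^*_{ij})\ge C(\mathcal{O}_{ij}+\eps)/(\mathcal{T}_{ij}\Phi_{ij})^{O(1)}$ and $\sigma=\Theta(\theta)$. Then the clean disagreement set $E^*_{ij}(w)$ has $z$-probability at least $\poly(\mathcal{T}_{ij}\Phi_{ij})$. The proof would go as follows. In $z$-space the transformed boundary $\tilde w^*_{ij}$ makes a constant angle with $w$, so the disagreement wedge is a constant-probability region in the plane spanned by $w,u$. Its orthogonal part, however, must still land where $i,j$ dominate all other classes $r$.
\begin{itemize}[leftmargin=*]
\item Points $z$ whose projection to $H_{ij}$ lies deep inside $B_{ij}(f^*)$, at distance $\gtrsim\sigma$ from the competing hyperplanes $w^*_{ir}\cdot x=0$ and $w^*_{jr}\cdot x=0$, remain labeled $i$ or $j$ under the small perturbation along $w^*_{ij}$ that the localization introduces.
\item The critical angle controls how fast the competing constraints move when we step off $H_{ij}$ by $\sigma$. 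A displacement of size $\sigma$ along the normal shifts $w^*_{ir}\cdot x$ by at most $\sigma\cot\theta^*_{ij}$-type amounts.
\item Gaussian anti-concentration on $H_{ij}$ then shows that the set of such points keeps mass $\mathcal{T}_{ij}-O(k\sigma/\Phi_{ij})$.
\end{itemize}
I would choose $\sigma\le c\,\mathcal{T}\Phi/k$ at the coarse stage and argue the band scales otherwise, which is where the $\poly(\mathcal{T}^{-1}\Phi^{-1})$ loss arises.

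\textbf{Progress and aggregation.} Given the claim, \Cref{lm convex body} applied inside the wedge (where $u\cdot z\ge0$) yields correlation $\poly(\mathcal{T}\Phi)$ with $u$. Corruptions in the localized sample have relative mass $O(\mathcal{O}_{ij}/\sigma)$, which is dominated by the choice of $\sigma$. \Cref{lm gradient descent} then shrinks $\theta$ by a constant factor per phase, and $O(\log(1/\eps))$ phases give $\err_{ij}\le \poly(\mathcal{T}^{-1}\Phi^{-1})\mathcal{O}_{ij}+\eps$. Summing over pairs, each $\opt_{a,b}$ appears in $\mathcal{O}_{ij}$ for $O(k)$ pairs. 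The vote-counting step of \Cref{lm error decomposition} needs more care: a point is misclassified only if some pair $(f^*(x),r)$ is wrong, and a dyadic/union counting over losing margins gives the extra $\log k$. Together these give total error $k\log k\,\poly(\mathcal{T}^{-1}\Phi^{-1})\opt+\eps$.

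\textbf{Main obstacle.} The hard step is the structural claim: showing that localization around an arbitrary $w$, which may be oblique to the other class boundaries, retains $\poly(\mathcal{T}\Phi)$ mass of informative $\{i,j\}$-points. I would try to prove it first for $w$ within the warm-start angle and reduce to a two-dimensional slice computation combined with Gaussian surface-area bounds on $H_{ij}$.
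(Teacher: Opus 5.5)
Your architecture matches the paper's: a warm start from \Cref{alg pairwise}, rejection-sampling localization, correlation via \Cref{lm convex body}, progress via \Cref{lm gradient descent}, and pairwise aggregation. The genuine difference is how you prove the localized signal bound.

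\textbf{Your route versus the paper's.} You take $\sigma\asymp\theta$, so the angle in $z$-space is constant. You then argue directly that wedge points whose $H_{ij}$-projection lies at distance $\gtrsim\theta\abs{u\cdot z}/\Phi_{ij}$ inside $B_{ij}$ keep labels in $\{i,j\}$. The paper instead takes $\sigma\asymp\sin\theta/\sqrt{\mathcal{T}_{ij}}$ and re-applies the unlocalized \Cref{col surface} to the transformed classifier $\tilde f(z)=f^*(\Sigma^{1/2}z)$. This forces it to compare $\tilde{\mathcal{T}}_{ij}$ with $\mathcal{T}_{ij}$ through a TV bound, which is why $\sqrt{\mathcal{T}_{ij}}$ appears in $\sigma$. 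It must also compare $\tilde\Phi_{ij}$ with $\Phi_{ij}$ via \Cref{lm critical angle}.

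Your direct route avoids both transfer lemmas, which is a real simplification. In exchange you must handle the coupling between the wedge and $B_{ij}$ yourself. Given $u\cdot z$, the localized wedge has conditional probability $\Theta(\min\{\abs{u\cdot z},1\})$. So after truncating $\abs{u\cdot z}$ and using anti-concentration of $u\cdot z$ on $B_{ij}$, what you get is $\Omega(\mathcal{T}_{ij}^2)$, not ``mass $\mathcal{T}_{ij}-O(k\sigma/\Phi_{ij})$''. Using geometric phases instead of the paper's additive decrease of $\phi_t$ is fine.

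\textbf{First repair: the $k$-accounting.} Your union bound over the $2(k-2)$ competing facets gives a boundary layer of mass $O(k\sigma/\Phi_{ij})$ and forces $\sigma\lesssim\mathcal{T}\Phi/k$. So per pair you actually prove $\poly(k\,\mathcal{T}^{-1}\Phi^{-1})\,\mathcal{O}_{ij}$, not $\poly(\mathcal{T}^{-1}\Phi^{-1})\,\mathcal{O}_{ij}$.
\begin{itemize}
\item The paper bounds that layer by the $O(\sqrt{\log k})$ Gaussian surface area of $B_{ij}$, an intersection of at most $2k$ halfspaces. \emph{This} is where the theorem's $\log k$ comes from.
\item Aggregation needs no dyadic counting. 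If $h_W(x)\neq y$, then $y$ lost some pairwise vote, so $\err(h_W)\le\sum_{i<j}\err_{ij}(\widehat h_{ij})$. Since $\sum_{(i,j)}\mathcal{O}_{ij}=O(k\,\opt)$, this supplies the factor $k$ and nothing more.
\item Your extra $\poly(k)$ can legitimately be absorbed, but you must say why. The sum $\sum_{j}\mathcal{T}_{ij}$ is $\sqrt{2\pi}$ times the Gaussian surface area of the cone $\{f^*=i\}$, hence $O(\sqrt{\log k})$. Therefore $\mathcal{T}=O(\sqrt{\log k}/k)$ and $k=O(\mathcal{T}^{-2})$.
\end{itemize}

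\textbf{Second repair: the warm start.} Turning the $\tilde O(\sqrt{\mathcal{O}_{ij}})+\eps$ error of \Cref{alg pairwise} into an angle bound is exactly the multiclass error-to-angle step that the paper stresses is not automatic. It needs three things:
\begin{itemize}
\item the lower bound of \Cref{col surface}, whose linear term $\tan\theta\,\mathcal{T}_{ij}^2/20$ is valid only below $\theta_0$;
\item monotonicity of the disagreement wedge in $\theta$, to handle angles above $\theta_0$;
\item a case split in which $\mathcal{O}_{ij}\ge\poly(\mathcal{T}_{ij}\Phi_{ij})$ makes the target bound trivial.
\end{itemize}
Your deep-interior argument with $\sigma=1$ supplies this. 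You should state it explicitly, because your localized step works only once $\theta$ is already below that threshold.
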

We defer the proof of \Cref{th surface area} to \Cref{app proof surface area}.
The main algorithmic ingredient is \Cref{alg pairwise localization surface} with the guarantee stated in \Cref{lm surface localization}. We defer the proof of \Cref{lm surface localization} to \Cref{app proof surface localization}.

\begin{lemma}\label{lm surface localization}
Consider the problem of agnostic learning a $k$-multiclass linear classifier under the standard Gaussian distribution. Given any $i \neq j \in [k]$ and $\eps,\delta \in (0,1)$, \Cref{alg pairwise localization surface} draws $m=d\poly(k/\eps,\log(1/\delta))$ examples from $D$, runs in $\poly(m)$ time and with probability at least $1-\delta$ outputs a halfspace $\widehat{h}_{ij}$ such that $\err_{ij}(\widehat{h}_{ij}) \le \poly(\mathcal{T}_{ij}^{-1}\Phi_{ij}^{-1})\log k \mathcal{O}_{ij}+\eps$, where 
$\mathcal{O}_{ij} = \sum_{a\neq b:a \in \{i,j\}\; \mathrm{or }\; b \in \{i,j\}}\opt_{a,b}$, $\opt_{a,b} = \Pr_{(x,y) \in D}[f^*(x) = a, y=b]$.  
\end{lemma}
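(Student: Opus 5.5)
We would follow the localized template behind \Cref{lm localization} and replace the $k=3$ symmetry argument with a quantitative one based on $\mathcal{T}_{ij}$ and $\Phi_{ij}$.

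\textbf{Initialization and iteration.} First run \Cref{alg pairwise} (via \Cref{lm pairwise}) to get a warm start $w^{(0)}$ with $\err_{ij}(h_{w^{(0)}})\le \tilde O(\sqrt{\mathcal{O}_{ij}})+\eps$. Then iterate over a geometrically decreasing schedule of localization widths $\sigma_t$. At each stage, use \Cref{lem:rejection-sampling} to draw samples from $\cN(0,\Sigma_t)$, where $\Sigma_t^{1/2}w^{(t)}=\sigma_t w^{(t)}$ and $\Sigma_t^{1/2}$ is the identity on $(w^{(t)})^\perp$. On these samples, form the misclassification-weighted vector $\widehat v$ restricted to labels $\{i,j\}$, exactly as in \Cref{alg pairwise}. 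Project it onto $(w^{(t)})^\perp$ and take a projected step as in \Cref{lm gradient descent}. Finally, select the best iterate on fresh samples.

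\textbf{Structural claim.} Suppose $\theta=\theta(w,w^*_{ij})\ge C\,\poly(\mathcal{T}_{ij}^{-1}\Phi_{ij}^{-1})(\mathcal{O}_{ij}+\eps)$, and choose $\sigma=\Theta(\theta)$. Then in the transformed space $z=\Sigma^{-1/2}x$, the clean disagreement region $E^*_{ij}(w)$ has Gaussian mass at least $\poly(\mathcal{T}_{ij}\Phi_{ij})$.

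We would prove this in three steps.
\begin{enumerate}
\item Write $w^*_{ij}=\cos\theta\,w+\sin\theta\,u$. After the transformation, the angle $\tilde\theta$ between $w$ and $\tilde w^*_{ij}$ exceeds $\pi/4$.
\item Decompose $z$ into its component along $\tilde w^*_{ij}$ and the hyperplane $\tilde H_{ij}$. Take points of the transformed effective boundary $\tilde B_{ij}$ and push them off the hyperplane by a distance $\lesssim \Phi_{ij}$ along $\pm\tilde w^*_{ij}$.
\item The competing constraints $\tilde w^*_{ir}\cdot z\ge 0$ and $\tilde w^*_{jr}\cdot z\ge 0$ make angle at least $\theta^*_{ij}$ (up to the distortion from $\Sigma$) with $\tilde w^*_{ij}$. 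Hence a slab of width $\Phi_{ij}$ around $\tilde B_{ij}$, minus a region of relative measure controlled by $\Phi_{ij}$, still carries label $i$ or $j$.
\end{enumerate}
Intersecting this slab with the wedge $\{(w\cdot z)(\tilde w^*_{ij}\cdot z)<0\}$ gives mass $\gtrsim \mathcal{T}_{ij}\Phi_{ij}\cdot\Omega(1)$. Here we need that the hyperplane marginal of $\tilde B_{ij}$ is comparable to $\mathcal{T}_{ij}$. The transformation changes the Gaussian only along $w$, which is far from orthogonal to $\tilde H_{ij}$ after localization, so we would bound the loss by a $\poly$ factor.

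\textbf{Progress and noise.} Given the mass bound, apply \Cref{lm convex body} to the clean misclassified set, which lies in $\{u\cdot z\ge 0\}$ up to the $\sigma$-scaling. This gives $g\cdot u\ge \Omega(\poly(\mathcal{T}_{ij}\Phi_{ij})^2)$.

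For the noise, only examples with $y\in\{i,j\}$ or $f^*(x)\in\{i,j\}$ contribute to the discrepancy, so the corrupted mass is at most $\mathcal{O}_{ij}$. The rejection sampling inflates densities by at most $1/\sigma$. By \Cref{fact:simpleperturbation}, the corruption therefore shifts $g\cdot u$ by $O((\mathcal{O}_{ij}/\sigma)\sqrt{\log(\sigma/\mathcal{O}_{ij})})$. This is dominated by the signal once $\theta\gtrsim\poly(\mathcal{T}_{ij}^{-1}\Phi_{ij}^{-1})\mathcal{O}_{ij}$. The logarithm is absorbed by the $\log k$ factor (and $\eps$), using the normalization that $\mathcal{O}_{ij}\ge$ something like $1/\poly(k)$ or else the term is negligible.

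With $\|g\|=O(1)$ from \Cref{lm gaussian event mean}, \Cref{lm gradient descent} shrinks $\theta$ by a constant factor per stage, with step size proportional to $\sigma_t$. The process halts at $\theta=O(\poly(\mathcal{T}_{ij}^{-1}\Phi_{ij}^{-1})\log k\,\mathcal{O}_{ij}+\eps)$. We then convert back to error with $\err_{ij}(h_w)\le \err_{ij}(h_{w^*_{ij}},f^*)+O(\theta)+\mathcal{O}_{ij}$. The first term vanishes because $w^*_{ij}$ separates $i$ from $j$ under $f^*$.

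\textbf{Main obstacle.} We expect the hardest step to be the structural claim. Controlling how the anisotropic $\Sigma$ distorts both the critical angles and the hyperplane measure of $B_{ij}$ requires care, especially when $w$ is nearly orthogonal to some $w^*_{ir}$. We would first try to handle this by showing that the $\Phi_{ij}$-slab around $B_{ij}$ in the original space maps into the label-$\{i,j\}$ region after scaling, since scaling by $\sigma\le1$ along $w$ only tilts the competing normals toward $w^\perp$ by a bounded amount.
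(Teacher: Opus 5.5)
You differ from the paper in the localization scale and schedule, not in the pipeline. The warm start, rejection-sampling localization, $\{i,j\}$-restricted update vector, \Cref{lm convex body} for the signal, the $\mathcal{O}_{ij}/\sigma$ noise bound via \Cref{fact:simpleperturbation}, \Cref{lm gradient descent}, and best-iterate selection all match. You take $\sigma=\Theta(\theta)$ with a geometric schedule, so $\tilde\theta$ is a constant angle as in the $k=3$ case. \Cref{alg pairwise localization surface} instead uses $\sigma_t\approx 3\sin\phi_t/\sqrt{\mathcal{T}_{ij}}$ with a slowly decreasing upper bound $\phi_t\ge\theta_t$, so only $\tan\tilde\theta\approx\sqrt{\mathcal{T}_{ij}}$. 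Your scale can work and has advantages: it does not need $\mathcal{T}_{ij}$, it converges geometrically, and \Cref{lm critical angle} with $C=O(1)$ gives $\tilde\Phi_{ij}=\Omega(\Phi_{ij})$ rather than $\Omega(\sqrt{\mathcal{T}_{ij}}\Phi_{ij})$.

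However, your proof of the structural claim, which carries all the weight, has a step that fails as written. A point $z\in\tilde B_{ij}$ can be pushed along $\tilde w^*_{ij}$ only by about $\tilde\Phi_{ij}\,\mathrm{dist}(z,\partial\tilde B_{ij})$ before some $\tilde w^*_{ir}\cdot z\ge 0$ or $\tilde w^*_{jr}\cdot z\ge 0$ flips. So there is no uniform slab of width $\Phi_{ij}$ carrying labels $\{i,j\}$, and what you must discard is governed by the Gaussian surface area of $\tilde B_{ij}$, not by $\Phi_{ij}$. The missing ingredient is the one in \Cref{lm error count}. Remove an inner layer of width $\delta$; its mass is $O(\sqrt{\log k}\,\delta)$ because $\tilde B_{ij}$ is an intersection of $O(k)$ halfspaces. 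Then use anti-concentration of $u'\cdot z$ on $\tilde B_{ij}$, where $u'$ is the unit component of $w$ inside $\tilde H_{ij}$, since the wedge over $z$ only extends to $\tan\tilde\theta\,|u'\cdot z|$. With $\delta\approx\tilde{\mathcal{T}}_{ij}/\sqrt{\log k}$ this gives mass $\gtrsim\tilde{\mathcal{T}}_{ij}^2\tilde\Phi_{ij}/\sqrt{\log k}$, not $\mathcal{T}_{ij}\Phi_{ij}$. Alternatively, apply the ``in particular'' clause of \Cref{col surface} to the transformed classifier; it is monotone in the angle, so it covers $\tilde\theta>\pi/4$. That $\sqrt{\log k}$, squared through \Cref{lm convex body}, is the true source of the $\log k$ in the statement. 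The $\sqrt{\log(\sigma/\mathcal{O}_{ij})}$ factor needs no assumption like $\mathcal{O}_{ij}\ge 1/\poly(k)$: at the threshold $\sigma/\mathcal{O}_{ij}$ is only $\poly(\mathcal{T}_{ij}^{-1}\Phi_{ij}^{-1})\log k$, and above it the noise term only shrinks.

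The comparison of $\tilde{\mathcal{T}}_{ij}$ with $\mathcal{T}_{ij}$ also needs a different argument. Write $w=a w^*_{ij}+b e$ with $e\in H_{ij}$. The pushforward of the standard Gaussian on $\tilde H_{ij}$ to $H_{ij}$ is $\cN(0,\Sigma')$ with variance $\rho^{-2}$ along $e$, where $\rho^2=1+b^2(\sigma^{-2}-1)$. The paper picks $\sigma\ge 2\sin\theta/\sqrt{\mathcal{T}_{ij}}$ precisely so that $(b/\sigma)^2\le\mathcal{T}_{ij}/4$ and a total-variation bound gives $|\tilde{\mathcal{T}}_{ij}-\mathcal{T}_{ij}|\le\mathcal{T}_{ij}/8$. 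With your $\sigma$, $b/\sigma=\Theta(1)$ and that bound is vacuous. Your heuristic also points the wrong way: the distortion is large exactly because $w$ has a large component inside the hyperplane. It is repairable. The density ratio $\rho\,e^{-(\rho^2-1)(e\cdot x)^2/2}$ is at least $\rho(\mathcal{T}_{ij}/4)^{\rho^2-1}$ on $\{|e\cdot x|\le\sqrt{2\log(4/\mathcal{T}_{ij})}\}$, and that set keeps at least half of the mass of $B_{ij}$. Hence $\tilde{\mathcal{T}}_{ij}\ge(\mathcal{T}_{ij}/4)^{\rho^2}=\poly(\mathcal{T}_{ij})$, since $\rho=O(1)$.

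Finally, you need to state the stage invariant that replaces the paper's $\phi_t$ bookkeeping. For example: $\theta\le\sigma_s$ at the start of stage $s$, with steps small enough that $\theta$ cannot climb from below $\sigma_s/4$ past $\sigma_s/2$. You should also note that you are analyzing a variant of \Cref{alg pairwise localization surface}, not the algorithm named in the statement.
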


\begin{algorithm}[htbp]
\caption{\textsc{Pairwise Localization} (Error Boosting through Localization)}\label{alg pairwise localization surface}
\begin{algorithmic}[1]
\State \textbf{Input:} sample access to $D$ over $\R^d\times [k]$, labels $i,j$, error $\eps \in (0,1)$, confidence $\delta \in (0,1)$
\State \textbf{Output:} $w_{ij} \in \s^{d-1}$
\State Set $N \gets d(1/\eps)^C\log(1/\delta)$, $T \gets (1/\eps)^C$, $\eta \gets \eps^C$
\State Initialize $w_{ij}^{(0)}$ using \Cref{alg pairwise} with error parameter $\eps/10$. 
\State $\phi_0 \gets \tilde \Theta(\mathcal{T}_{ij}^2 \Phi_{ij}/\sqrt{\log k})$ \Comment{Parameters $\mathcal{T}_{ij},\Phi_{ij}$ can be obtained via gridding.}
\For{$t=0,\dots,T-1$}
    \State $\sigma_t \gets \min\{3\sin \phi_t/\sqrt{\mathcal{T}_{ij}},\,1\}$
    \State Define acceptance probability $p_t(x)=\exp\!\left(-(\sigma_t^{-2}-1)(w_{ij}^{(t)}\cdot x)^2/2\right)$
    \State Let $D'(w_{ij}^{(t)},\sigma_t)$ be the distribution induced by accepting $x\sim D_x$ with probability $p_t(x)$
    \State Draw $\{(x^{(r)},y^{(r)})\}_{r=1}^N \sim D'(w_{ij}^{(t)},\sigma_t)$ and compute
    \[
    \widehat v_t = \frac{1}{N}\sum_{r=1}^N x^{(r)}\!\left(\Ind\{y^{(r)}=i,\, w_{ij}^{(t)}\cdot x^{(r)} \le 0\}
    - \Ind\{y^{(r)}=j,\, w_{ij}^{(t)}\cdot x^{(r)} \ge 0\}\right)
    \]
    \State $w_{ij}^{(t+1)} \gets \proj_{\s^{d-1}}\!\left( w_{ij}^{(t)} + \mu \proj_{(w_{ij}^{(t)})^\perp} \widehat v_t \right)$,\quad $\mu=\eps^3/C$, \quad $\phi_{t+1} \gets \phi_t - \eta$
\EndFor
\State Draw $\{(x^{(r)},y^{(r)})\}_{r=1}^N \sim D$
\State \Return $w_{ij}^{(t^*)}$ where 
$t^* = \argmin_{t\in[T]} \sum_{r=1}^N \Ind\{\sign(w_{ij}^{(t)}\cdot x^{(r)}) \neq y^{(r)},\, y^{(r)}\in\{i,j\}\}$
\end{algorithmic}
\end{algorithm}

To prove \Cref{lm surface localization}, we establish several novel structural results for MLC.
Our first structural result relates the error of the current hypothesis to the geometry of the decision boundary, captured by the effective boundary mass and critical angle. 
In contrast to binary linear classification---where the error is directly proportional 
to the angle---no such simple geometric characterization is available in the multiclass setting, which poses a key obstacle to algorithm design.

\begin{lemma}\label{col surface}
     Let $f^*(x) = \argmax_{i \in [k]} (w^*_i \cdot x)$ be a $k$-MLC on $\R^d$ 
     and let $h_w(x) = \sign(w \cdot x)$. For $i,j \in [k], i \neq j$, let $\theta = \theta(w^*_{ij},w)$. 
    Then, it holds
    \begin{align*}
        & \Pr_{x\sim \cN(0,I)}[(w^*_{ij}\cdot x)(w\cdot x)<0, f^*(x) \in \{i,j\}] \le 2\sqrt{e} \tan \theta \mathcal{T}_{ij}\sqrt{\log(1/\mathcal{T}_{ij})}+\tilde O(\sqrt{\log k}\tan^2\theta/\Phi_{ij}) \\
        & \Pr_{x\sim \cN(0,I)}[(w^*_{ij}\cdot x)(w\cdot x)<0, f^*(x) \in \{i,j\}] \ge \tan \theta \,  \mathcal{T}_{ij}^2/20 - \tilde O(\sqrt{\log k}\tan^2\theta/\Phi_{ij}) \;.
    \end{align*}
In particular, if $\theta \ge \theta_0 = \tilde\Theta  (\mathcal{T}^2_{ij}\Phi_{ij}/\sqrt{\log k})$,
then it always holds 
\[\Pr_{x\sim \cN(0,I)}[(w^*_{ij}\cdot x)(w\cdot x)<0, f^*(x) \in \{i,j\}] \ge \tilde \Omega(\mathcal{T}_{ij}^4\Phi_{ij}/\sqrt{\log k}). \]
\end{lemma}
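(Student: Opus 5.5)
I would reduce everything to a two-dimensional picture in the plane spanned by $w^*_{ij}$ and $w$. Write $w = \cos\theta\, w^*_{ij} + \sin\theta\, u$ with $u \perp w^*_{ij}$, and decompose $x = s\, w^*_{ij} + x'$ with $x' \in H_{ij}$. Here $s = w^*_{ij}\cdot x$ and $x'\sim\cN(0,I_{d-1})$ on $H_{ij}$, independent of $s$. The disagreement region $\{(w^*_{ij}\cdot x)(w\cdot x)<0\}$ is exactly $\{0 < |s| < \tan\theta\, |u\cdot x'|,\ \sgn(s) = -\sgn(u\cdot x')\}$, a wedge of angular width $\theta$ around $H_{ij}$. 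Along each fiber (fixed $x'$), the constraint $f^*(x)\in\{i,j\}$ is a statement about the $r$-constraints $w^*_{ir}\cdot x\ge 0$, $w^*_{jr}\cdot x\ge 0$ (for $f^*=i$ one needs $w^*_{ir}\cdot x \ge 0$ for all $r$ and $s>0$, and symmetrically for $j$). The idea is to compare the set $\{x: f^*(x)\in\{i,j\}\}$ inside the wedge with the cylinder $\{x' \in B_{ij}\}\times\{s \text{ in wedge}\}$.

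\textbf{Main term.} Since $x'$ and $s$ are independent, the cylinder contributes
\[
\E_{x'}\bigl[\Ind\{x'\in B_{ij}\}\Pr_s[0<s<\tan\theta\,|u\cdot x'|]\bigr]\approx \E_{x'}\bigl[\Ind\{x'\in B_{ij}\}\tan\theta\,|u\cdot x'|/\sqrt{2\pi}\bigr]
\]
up to $O(\tan^3\theta)$ corrections. For the upper bound I would bound $\E[|u\cdot x'|\Ind_{B_{ij}}]\le \mathcal{T}_{ij}\cdot O(\sqrt{\log(1/\mathcal{T}_{ij})})$. The maximum of $\E[|g|\Ind_A]$ over sets of mass $\mathcal{T}$ is attained by Gaussian tails, and this gives the $2\sqrt e\tan\theta\,\mathcal{T}_{ij}\sqrt{\log(1/\mathcal{T}_{ij})}$ term. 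For the lower bound, anti-concentration of $u\cdot x'$ gives $\Pr[|u\cdot x'|\le \mathcal{T}_{ij}/c]\le \mathcal{T}_{ij}/2$. Hence on at least half of $B_{ij}$ we have $|u\cdot x'|\gtrsim \mathcal{T}_{ij}$, so $\E[|u\cdot x'|\Ind_{B_{ij}}]\gtrsim \mathcal{T}_{ij}^2$. Only the sign-compatible half of the wedge is counted, so constants degrade to the stated $1/20$.

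\textbf{Error term (main obstacle).} I must control the symmetric difference between the true region and the cylinder inside the wedge. A point $x = s w^*_{ij} + x'$ with $|s|\le \tan\theta\,|u\cdot x'|$ can differ in membership only if some constraint $w^*_{ir}\cdot x\ge 0$ flips between $x'$ and $x$. Writing $w^*_{ir} = \cos\alpha\, w^*_{ij} + \sin\alpha\, v_r$ with $v_r \perp w^*_{ij}$, a flip requires $|v_r\cdot x'| \le |s|\cot\alpha \le \tan\theta\,|u\cdot x'|/\tan\alpha$, where $|\tan\alpha|\ge \tan\theta^*_{ij}\ge\Phi_{ij}$. So the bad event is contained in
\[
\bigcup_r \bigl\{|v_r\cdot x'|\le \tan\theta\,|u\cdot x'|/\Phi_{ij}\bigr\}\cap\{|s|\le\tan\theta\,|u\cdot x'|\}.
\]
Summing naively over $r$ would cost a factor $k$. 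Instead I would truncate $|u\cdot x'|\le O(\sqrt{\log k})$ and control the union through the layered (polyhedral) structure: within the wedge, the region $f^*\in\{i,j\}$ is a polyhedral cone. Its boundary layer of thickness $\tan\theta\sqrt{\log k}/\Phi_{ij}$ has Gaussian mass bounded by a surface-area-type estimate for the cone section, which I would aim to show is $\tilde O(\sqrt{\log k})$ using Nazarov/Ball-type bounds with the $\sqrt{\log k}$ dependence on facet count. Multiplying by the wedge thickness $\tan\theta$ yields the $\tilde O(\sqrt{\log k}\tan^2\theta/\Phi_{ij})$ error, and the truncation tail is negligible. This step is the part I expect to be delicate.

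\textbf{Final claim.} Plug $\theta\ge\theta_0$ into the lower bound. If $\tan\theta$ is small, choosing $\theta_0=\tilde\Theta(\mathcal{T}_{ij}^2\Phi_{ij}/\sqrt{\log k})$ makes the quadratic error at most half the main term at $\theta=\theta_0$. This gives $\tilde\Omega(\mathcal{T}_{ij}^4\Phi_{ij}/\sqrt{\log k})$. For larger $\theta$ the lower bound is not valid directly, because the quadratic term dominates. Instead I would use monotonicity in $\theta$ within the relevant half-plane region: the disagreement region for $\theta\le\pi/2$ contains the wedge for $\theta_0$ rotated in the same $u$ direction. For $\theta>\pi/2$ I would apply the same argument with the complementary wedge.
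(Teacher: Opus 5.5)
Your proposal follows the paper's proof essentially step for step:
\begin{itemize}
\item the fiber decomposition $x=s\,w^*_{ij}+x'$, with the cylinder over $B_{ij}$ as the main term (this is \Cref{lm error count});
\item the critical-angle argument that confines label flips to an inner/outer boundary layer of the polytope $B_{ij}\subseteq H_{ij}$, whose mass is controlled by the $O(\sqrt{\log k})$ Gaussian surface area of intersections of $O(k)$ halfspaces;
\item the Gaussian-tail extremal bound (\Cref{fact:simpleperturbation}) for the upper bound;
\item anti-concentration of $u\cdot x'$ on $B_{ij}$ for the lower bound;
\item monotonicity of the disagreement wedge in $\theta$ for the final claim.
\end{itemize}

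One parameter needs correcting. If you truncate at $|u\cdot x'|\le R=O(\sqrt{\log k})$, the discarded part is only bounded by $O(\tan\theta\, e^{-R^2/2})=\tan\theta\cdot k^{-\Theta(1)}$. That is not $\tilde O(\sqrt{\log k}\tan^2\theta/\Phi_{ij})$ once $\tan\theta\ll k^{-\Theta(1)}$. You need $R=\Theta(\sqrt{\log(1/\tan\theta)})$; the paper takes $r=2\sqrt{\log(1/\tan\theta)}$. This choice is exactly where the polylogarithmic factor hidden in the $\tilde O$ comes from.
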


\Cref{col surface} provides a first-order approximation of the error in terms of these geometric quantities, and may be of independent interest. The proof of \Cref{col surface} is technical, and we give its overview below. We refer to \Cref{app proof surface} for a complete proof.

The key technical lemma for proving \Cref{col surface} is \Cref{lm error count}, whose proof is deferred to \Cref{app proof error count}.

\begin{lemma}\label{lm error count}
    Let $f^*(x) = \argmax_{i \in [k]} (w^*_i \cdot x) : \R^d \to [k]$ be a multiclass linear classifier and let $h_w(x) = \sign(w \cdot x)$ be a halfspace. For $i,j \in [k], i \neq j$, let $\theta = \theta(w^*_{ij},w)$. 
    Write $w = a w^*_{ij} + b u$, with $a,b \in (0,1)$, $a^2+b^2 = 1$, $u \perp w^*_{ij}$ and $u \in \s^{d-1}$.
    Then, it holds
    \begin{align*}
        \abs{\err_{i,j}(h_w,f^*)-\int_{z \in B_{ij}} \phi_{d-1}(z) d z\int_0^ {\tan\theta \abs{u\cdot z} }\phi_1(t) dt }  \le \tilde O(\sqrt{\log k}\tan^2\theta/\Phi_{ij}).
    \end{align*}
Furthermore, for every $\theta_0<\theta$, it holds 
\begin{align*}
       \err_{i,j}(h_w,f^*)\ge \int_{z \in B_{ij}} \phi_{d-1}(z) d z\int_0^ {\tan\theta_0 \abs{u\cdot z} }\phi_1(t) dt   - \tilde O(\sqrt{\log k}\tan^2\theta_0/\Phi_{ij}),
\end{align*}
where $\Phi_{ij} = \min\left\{ |\tan \theta^*_{ij}|,1 \right\}$.
\end{lemma}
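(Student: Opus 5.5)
We reduce to a two-dimensional computation in the plane spanned by $w^*_{ij}$ and $u$, fibered over the hyperplane $H_{ij}$. Write $x = s\,w^*_{ij} + z$ with $z\in H_{ij}$ and $s\sim\cN(0,1)$ independent of $z\sim\phi_{d-1}$. Since $w = a w^*_{ij}+bu$ with $b/a=\tan\theta$, the disagreement region $\{(w^*_{ij}\cdot x)(w\cdot x)<0\}$ is, for fixed $z$, the interval of $s$ between $0$ and $-\tan\theta\,(u\cdot z)$. It has length $\tan\theta\,|u\cdot z|$.

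The error $\err_{ij}(h_w,f^*)$ counts those $x$ in this disagreement wedge with $f^*(x)\in\{i,j\}$ and the wrong side. On the wedge, $f^*(x)=i$ iff $s>0$ and $w^*_{ir}\cdot x\ge 0$ for all $r$ (and symmetrically for $j$). So
\[
\err_{ij}(h_w,f^*)=\int_{H_{ij}}\phi_{d-1}(z)\int_{I(z)}\phi_1(s)\,\Ind\{x(s,z)\in R_i\cup R_j\}\,ds\,dz,
\]
where $R_i,R_j$ are the label cones and $I(z)$ is the interval above. The main term $\int_{B_{ij}}\phi_{d-1}(z)\int_0^{\tan\theta|u\cdot z|}\phi_1$ is what we get by replacing the indicator $\Ind\{x(s,z)\in R_i\cup R_j\}$ with $\Ind\{z\in B_{ij}\}$. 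Here we use that for $z\in H_{ij}$ the conditions $w^*_{ir}\cdot z\ge0$ and $w^*_{jr}\cdot z\ge0$ coincide on the hyperplane, so $z\in B_{ij}$ iff $z$ lies in the closure of both cones. We also use symmetry of $\phi_1$ to fold the interval onto $[0,\tan\theta|u\cdot z|]$.

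The core step is bounding the symmetric difference. Moving from $z$ to $z+s w^*_{ij}$ with $|s|\le\tan\theta|u\cdot z|$ changes each constraint $w^*_{ir}\cdot x$ by $s\,(w^*_{ir}\cdot w^*_{ij})$. So a mismatch requires some $r$ with $|w^*_{ir}\cdot z|\le |s|\,|\cos\theta(w^*_{ir},w^*_{ij})|$, or the analogous condition for $j$. Decompose $w^*_{ir}=\cos\alpha\, w^*_{ij}+\sin\alpha\, v_r$ with $v_r\perp w^*_{ij}$. Then $w^*_{ir}\cdot z=\sin\alpha\,(v_r\cdot z)$, and the condition becomes $|v_r\cdot z|\le |s|/|\tan\alpha|\le \tan\theta\,|u\cdot z|/\Phi_{ij}$.

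For each $r$, the $(z,s)$-measure of this bad set is a two-dimensional Gaussian computation in the variables $(v_r\cdot z, u\cdot z)$. Integrating $s$ over length $\tan\theta|u\cdot z|$ and the slab $|v_r\cdot z|\lesssim \tan\theta|u\cdot z|/\Phi_{ij}$ gives $O(\tan^2\theta/\Phi_{ij})$ per $r$. Summing over $k$ values of $r$ would only give a factor $k$, so this is the main obstacle.

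To get $\sqrt{\log k}$ instead of $k$, I would truncate. With probability $1-1/\poly$ we have $|u\cdot z|\le O(\sqrt{\log k})$ and $\max_r$ quantities are controlled. More importantly, the bad event only matters when $z$ is near the boundary of $B_{ij}$ (within distance $\tau=\tan\theta|u\cdot z|/\Phi_{ij}$ of it), not near every facet hyperplane. The union of facets is the boundary of a convex cone $B_{ij}\subset H_{ij}$. So I would use the Gaussian surface area bound for convex sets: the $\tau$-neighborhood of the boundary of a convex set has Gaussian measure $O(\tau\sqrt{\log k})$ for polytopes with $k$ facets (Nazarov/Ball-type bound), after conditioning on $|u\cdot z|\le\tilde O(1)$. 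Integrating over $s$ then gives the $\tilde O(\sqrt{\log k}\tan^2\theta/\Phi_{ij})$ error term. The regions $R_i$ and $R_j$ near $z$ are also convex cones, so their one-sided fibers are handled the same way.

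The second, lower-bound statement with $\theta_0<\theta$ follows by monotonicity. The true wedge for $\theta$ contains, fiberwise, the wedge for $\theta_0$: the intervals $[0,\tan\theta_0|u\cdot z|]$ sit inside the $\theta$ intervals, because $u$ is the same direction, so only the slope changes. The error is therefore at least the error restricted to the $\theta_0$-subwedge. Applying the same symmetric-difference bound with $\tan\theta_0$ in place of $\tan\theta$ then yields the claimed inequality, with error term $\tilde O(\sqrt{\log k}\tan^2\theta_0/\Phi_{ij})$.
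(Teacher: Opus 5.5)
Your decomposition is the paper's: fiber $x=z+tw^*_{ij}$ over $H_{ij}$, take the main term from $z\in B_{ij}$, use the critical angle to turn a shift $\abs{t}\le\tan\theta\abs{u\cdot z}$ into a facet slack $\tau=\tan\theta\abs{u\cdot z}/\Phi_{ij}$, replace the union bound by the $O(\sqrt{\log k})$ Gaussian surface area of a polyhedron with $O(k)$ facets, truncate $\abs{u\cdot z}$, and get the $\theta_0$ bound from fiberwise monotonicity.

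\textbf{The core step is false for $z\notin B_{ij}$.} Your containment (mismatch only if $z$ is within distance $\tau$ of $\partial B_{ij}$) is correct for $z\in B_{ij}$, where $\mathrm{dist}(z,\partial B_{ij})=\min_r v_r\cdot z$. For exterior $z$, $f^*(z+tw^*_{ij})\in\{i,j\}$ only forces every violated constraint to have slack below $\tau$. So the exterior contributors lie in $S_\tau:=\{z\in H_{ij}: v_r\cdot z\ge-\tau\ \forall r\}$, the outward facet-shift of the cone. When $B_{ij}$ has an acute edge, $S_\tau$ is not contained in the metric $\tau$-neighborhood of $B_{ij}$. Concretely:
\begin{itemize}
\item Let two facets meet at angle $2\beta$ with bisector $e$, and let $w^*_{ir}=\cos\alpha\,w^*_{ij}+\sin\alpha\,v_r$ for both. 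Then $z=-Le$ is at distance $L$ from $B_{ij}$ but violates each constraint only by $L\sin\beta$.
\item Take $u=e$. Any shift $t\in[L\sin\beta\tan\alpha,\,L\tan\theta]$ is admissible once $\sin\beta\tan\alpha<\tan\theta$, and it gives $f^*(x)=i$.
\item Yet $L\gg\tau$ whenever $\tan\theta\ll\Phi_{ij}$.
\end{itemize}
So the metric-neighborhood bound you invoke does not cover the exterior error. What you need is $\gamma(S_\tau\setminus B_{ij})\le\tau\sup_{\tau'\le\tau}\Gamma(S_{\tau'})=O(\tau\sqrt{\log k})$, where $\gamma$ is the Gaussian measure on $H_{ij}$ and $\Gamma$ is Gaussian surface area. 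This holds because every shifted polyhedron $S_{\tau'}$ still has $O(k)$ facets, so Nazarov's bound applies uniformly in $\tau'$. This is exactly why the paper introduces the shifted polytope $S_{ij}(r)$ and bounds its inner boundary layer of width $2r\tan\theta/\Phi_{ij}$, rather than a neighborhood of $\partial B_{ij}$.

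\textbf{The truncation scale is also wrong.} If you discard $\{\abs{u\cdot z}>R\}$ with the trivial bound, the only control you have on that part is about $\tan\theta\,\E[\abs{u\cdot z}\Ind\{\abs{u\cdot z}>R\}]\approx\tan\theta\,e^{-R^2/2}$. With $R=O(\sqrt{\log k})$ as written, this is an error of order $\tan\theta/\poly(k)$. That is first order in $\tan\theta$, so it is not $\tilde O(\sqrt{\log k}\tan^2\theta/\Phi_{ij})$ as $\theta\to0$, which the lemma asserts uniformly in $\theta$. The cutoff must depend on $\theta$. The paper takes $r=2\sqrt{\log(1/\tan\theta)}$: the tail is then $O(\tan^2\theta)$, the boundary-layer term is $O(\sqrt{\log k}\,r^2\tan^2\theta/\Phi_{ij})$, and the factor $\log(1/\tan\theta)$ is what $\tilde O$ absorbs. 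With these two repairs your argument becomes the paper's proof. Your monotonicity argument for the $\theta_0$ inequality is fine as written.
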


We first provide an overview of \Cref{lm error count}. Since $x\sim \cN(0,I)$, we write $x = z+tw^*_{ij}$, where $z \in H_{ij}$ is a $(d-1)$-dimensional gaussian, while $t \sim \cN(0,1)$. For any fixed $z \in H_{ij}$, we have $\sign(w_{ij}^*\cdot x) = \sign(t)$, while $\sign(w\cdot x) = \sign(t + \tan\theta (u\cdot z))$. 
This gives an exact expression of the error as 
\begin{align*}
    \int_{z \in H_{ij}} \phi_{d-1}(z) d z\int_0^ {\tan\theta \abs{u\cdot z} } \Ind\{f^*(x) \in \{i,j\}\} \phi_1(t) dt
\end{align*}
We decompose the mistake region into two contributions: those from points $x$ whose projection $z$ lies in $B_{ij}$, corresponding to perturbations of points whose label is $i$ or $j$, and those from points whose projection lies in $H_{ij}\setminus B_{ij}$, where the label at the projected point is not $i$ or $j$.
First, we claim that the second term is of a lower order term in  $\tan\theta$. To see this, for every $w^*_{ir}$, we decompose it along $w^*_{ij}$ as $a_rw^*_{ij} + b_r u_r$, where $u_r \perp w^*_{ij}$, $u_r \in \s^{d-1}$ and $a_r^2 + b_r^2 = 1$.  
Then, for any fixed $z\in H_{ij}\setminus B_{ij}$, we have
$w^*_{ir}\cdot x=b_ru_r\cdot z+a_rt$.  This implies that if $\abs{u_r\cdot z} = s$, then only when $\abs{t}> \tan \theta^*_{ij} s$, $f^*(x)$ can be in $\{i,j\}$. As a result, all examples that are $\abs{u\cdot z} \tan\theta/\tan \theta^*_{ij}$ far away from any facet of the boundary of $B_{ij}$ have no error contribution. Furthermore, as $B_{ij}$ is an intersection of $2k$ halfspaces, the Gaussian surface area of $B_{ij}$ is at most $O(\sqrt{\log k})$. This implies that examples near $\partial B_{ij}$ would have an error contribution at most $O(\sqrt{\log k}\tan^2\theta/\tan\theta^*_{ij})$.
The first contribution is analyzed similarly. For those examples in $B_{ij}$ but far from the decision boundary, the error contribution is exactly the same as the approximation we use in \Cref{lm error count}, while examples that are close to the decision boundary only have a tiny contribution to the error.
In fact, by the anti-concentration of a Gaussian distribution, conditioned on $B_{ij}$, at least half of the $z$ will have $\abs{u\cdot z} \ge \Omega( \mathcal{T}_{ij})$, which gives \Cref{col surface}.

However, even with \Cref{col surface}, it is not immediate that a significant fraction of misclassified examples with labels in $\{i,j\}$ will appear near the decision boundary. Two issues arise. 
First, due to the second-order term, \Cref{col surface} is only informative when the angle $\theta$ is sufficiently small, with the threshold depending on $f^*$. How can we obtain an initialization in this regime? 
Second, localization alters the geometry of the decision boundary. How should the parameter $\sigma$ be chosen so that the resulting localized distribution preserves sufficient signal?
We next address these two challenges.

First, we show that the output $w_{ij}$ of \Cref{alg pairwise} already lies in the regime where \Cref{col surface} is effective. In particular, whenever $\poly(\mathcal{T}_{ij}\Phi_{ij})^{-1}\mathcal{O}_{ij} < \sqrt{\mathcal{O}_{ij}}$, it holds that $\theta(w_{ij}, w^*_{ij}) \le \tilde \Theta\!\left(\frac{\mathcal{T}_{ij}^2 \Phi_{ij}}{\sqrt{\log k}}\right).$
To see this, we may assume without loss of generality that $\tilde O(\sqrt{\mathcal{O}_{ij}}) > \eps/2$, since otherwise running \Cref{alg pairwise} with $\eps'=\eps/2$ already yields a sufficiently accurate hypothesis. If instead $\theta$ exceeds the above threshold, then \Cref{col surface} implies $\tilde O(\sqrt{\mathcal{O}_{ij}}) \ge \tilde \Omega\!\left(\frac{\mathcal{T}_{ij}^4 \Phi_{ij}}{\sqrt{\log k}}\right),$
contradicting the assumed regime. 
Thus, \Cref{alg pairwise} provides a suitable initialization.

Second, we claim that setting $\sigma = \Theta(\sin \theta/\sqrt{\mathcal{T}_{ij}})$ suffices to make the error after localization large enough for us to make a correct update. Formally, we have the following lemma that quantifies the correct choice of parameters for our localization. We defer the proof to \Cref{app proof localization surface}.
\begin{lemma}\label{lm localization surface}
   Let $f^*(x) = \argmax_{i \in [k]} (w^*_i \cdot x) : \R^d \to [k]$ be a multiclass linear classifier and let $h_w(x) = \sign(w \cdot x)$ be a halfspace. For $i,j \in [k], i \neq j$, let $\theta = \theta(w^*_{ij},w)$ with $\sin \theta < \sqrt{\mathcal{T}_{ij}}/C$ for a sufficiently large constant $C>0$. 
    Then we have 
    \begin{align*}
        \Pr_{x\sim \cN(0,\Sigma)}[(w^*_{ij}\cdot x)(w\cdot x)<0, f^*(x) \in \{i,j\}] \ge  \tilde \Omega(\mathcal{T}_{ij}^{4.5}\Phi_{ij}/\sqrt{\log k})\;,
    \end{align*}
where $\Sigma = I+(\sigma^2-1)ww^\top$ for any $\sigma \in [2\sin\theta/\sqrt{\mathcal{T}_{ij}},4\sin\theta/\sqrt{\mathcal{T}_{ij}}]$.
\end{lemma}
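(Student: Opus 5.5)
Write $x=\Sigma^{1/2}z$ with $z\sim\cN(0,I)$, so that $f^*(x)=\tilde f^*(z)$, where $\tilde f^*(z)=\argmax_i (\Sigma^{1/2}w^*_i)\cdot z$ is again a $k$-MLC. The event $\{(w^*_{ij}\cdot x)(w\cdot x)<0\}$ becomes $\{(\tilde w^*_{ij}\cdot z)(w\cdot z)<0\}$, where $\tilde w^*_{ij}\propto\Sigma^{1/2}w^*_{ij}$ and we used $\Sigma^{1/2}w=\sigma w$. Thus the quantity to bound equals the probability controlled by \Cref{col surface}, applied to the transformed classifier $\tilde f^*$ with the angle $\tilde\theta=\theta(w,\tilde w^*_{ij})$. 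The plan is to apply the lower bound of \Cref{col surface} to $\tilde f^*$ at a suitable effective angle. We then verify three things: the transformed angle is of constant size yet below the threshold where the second-order term dominates, the transformed boundary mass $\tilde{\mathcal T}_{ij}$ is comparable to $\mathcal T_{ij}$, and the transformed critical angle is not much smaller than $\Phi_{ij}$.

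\textbf{Step 1 (angle).} Write $w^*_{ij}=\cos\theta\,w+\sin\theta\,u$ with $u\perp w$. Then $\Sigma^{1/2}w^*_{ij}=\sigma\cos\theta\,w+\sin\theta\,u$, so
\[
\tan\tilde\theta=\frac{\sin\theta}{\sigma\cos\theta}\in\Bigl[\tfrac{\sqrt{\mathcal T_{ij}}}{4\cos\theta},\tfrac{\sqrt{\mathcal T_{ij}}}{2\cos\theta}\Bigr],
\]
that is, $\tan\tilde\theta=\Theta(\sqrt{\mathcal T_{ij}})$. This size is not small enough for the error term $\tilde O(\sqrt{\log k}\tan^2\tilde\theta/\tilde\Phi)$ to be negligible against $\tan\tilde\theta\,\tilde{\mathcal T}^2$. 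I therefore use the second statement of \Cref{lm error count}, which lets me evaluate the lower bound at a smaller $\theta_0<\tilde\theta$. I choose $\tan\theta_0=c\,\tilde{\mathcal T}_{ij}^2\tilde\Phi_{ij}/\sqrt{\log k}$ (up to logarithms), exactly as in the ``in particular'' part of \Cref{col surface}. This gives
\[
\Pr\bigl[(\tilde w^*_{ij}\cdot z)(w\cdot z)<0,\ \tilde f^*(z)\in\{i,j\}\bigr]\ \ge\ \tilde\Omega\bigl(\tilde{\mathcal T}_{ij}^4\tilde\Phi_{ij}/\sqrt{\log k}\bigr).
\]
The exponent $4.5$ in the statement suggests that one factor $\sqrt{\mathcal T_{ij}}$ is lost somewhere. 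It arises either from $\tan\tilde\theta$ itself limiting $\theta_0$, or from the comparison in Step 2.

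\textbf{Step 2 (geometry is preserved).} I need $\tilde{\mathcal T}_{ij}\gtrsim\mathcal T_{ij}$ (possibly losing a factor $\sqrt{\mathcal T_{ij}}$) and $\tilde\Phi_{ij}\gtrsim\Phi_{ij}$. This is the main obstacle. The map $\Sigma^{1/2}$ shrinks the $w$-direction by $\sigma$, and $w$ is within angle $\theta$ of $w^*_{ij}$. The approach is to rotate $w$ onto $w^*_{ij}$, which changes everything by $O(\theta)$, and then to note that scaling along $w^*_{ij}$ fixes the hyperplane $H_{ij}$ pointwise. Consider the restrictions of the constraints $w^*_{ir}\cdot x\ge0$ and $w^*_{jr}\cdot x\ge 0$ to $H_{ij}$. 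Since $\Sigma^{1/2}$ acts as the identity on $H_{ij}$ up to the $O(\theta)$ tilt, these restrictions are essentially unchanged. Hence $\tilde B_{ij}$ is close to $B_{ij}$ as a subset of an $O(\theta)$-tilted hyperplane. The tilt moves the set by a distance of order $\theta|u\cdot z|$. By the Gaussian surface area bound $O(\sqrt{\log k})$ for intersections of $2k$ halfspaces, the tilt costs at most $\tilde O(\sqrt{\log k}\,\theta/\Phi_{ij})$ in measure. Because $\sin\theta<\sqrt{\mathcal T_{ij}}/C$, this is a fraction of $\mathcal T_{ij}$ only after further care. I expect the honest bound to be something like $\tilde{\mathcal T}_{ij}\ge \mathcal T_{ij}/2$ under the stated hypothesis, possibly using a measure $\Omega(\mathcal T_{ij})$ subset of $B_{ij}$ bounded away from its boundary. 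The same reasoning applies to the critical angle. Compressing a direction by $\sigma\le1$ can only increase the tangent of the angle between normals whose $w$-components differ, since $\tan\theta(\Sigma^{1/2}a,\Sigma^{1/2}b)$ grows when the common component is shrunk. So $\tilde\Phi_{ij}\ge\Omega(\Phi_{ij})$, up to the $O(\theta)$ misalignment.

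\textbf{Step 3 (assemble).} Substitute the estimates from Step 2 into the Step 1 bound. The polynomial factors then give $\tilde\Omega(\mathcal T_{ij}^{4.5}\Phi_{ij}/\sqrt{\log k})$, with the extra $\sqrt{\mathcal T_{ij}}$ absorbing the losses from the tilt in Step 2 and from the constraint $\theta_0\le\tilde\theta=\Theta(\sqrt{\mathcal T_{ij}})$.

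If the direct comparison of $\tilde B_{ij}$ with $B_{ij}$ proves too lossy, the fallback is to skip transforming $\mathcal T_{ij}$ altogether. Instead, I would redo the integral representation of \Cref{lm error count} directly under $\cN(0,\Sigma)$. Along the $w$-direction the variance is $\sigma^2$, so the inner integral $\int_0^{\tan\theta|u\cdot z|}$ is rescaled by $1/\sigma$. This gives the first-order term $\approx\mathcal T_{ij}\cdot\Theta(\tan\theta/\sigma)\cdot\E[|u\cdot z|\mid B_{ij}]$. Anticoncentration on $B_{ij}$ gives $\E[|u\cdot z|\mid B_{ij}]\ge\Omega(\mathcal T_{ij})$, and the boundary corrections are controlled via the critical angle exactly as before.
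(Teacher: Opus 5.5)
Your reduction matches the paper's. You change variables $x=\Sigma^{1/2}z$, view $\tilde f^*$ as a $k$-MLC, and note that $\tan\tilde\theta=\tan\theta/\sigma=\Theta(\sqrt{\mathcal T_{ij}})$ lies well above the threshold $\theta_0$. You then invoke the ``in particular'' part of \Cref{col surface}.

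The gap is Step~2, the bound $\tilde{\mathcal T}_{ij}\gtrsim\mathcal T_{ij}$. You call it the main obstacle but never establish it, and the mechanism you propose cannot work. The relevant set is $\tilde B_{ij}=\Sigma^{-1/2}B_{ij}$, inside $\tilde H_{ij}=\Sigma^{-1/2}H_{ij}$, so the map that matters is $\Sigma^{-1/2}$, not $\Sigma^{1/2}$. For $x\in H_{ij}$ it displaces $x$ by $(\sigma^{-1}-1)\sin\theta\,|u\cdot x|$, where $u$ is the unit direction of the projection of $w$ onto $H_{ij}$. Since $\sin\theta/\sigma\in[\sqrt{\mathcal T_{ij}}/4,\sqrt{\mathcal T_{ij}}/2]$, the tilt is $\Theta(\sqrt{\mathcal T_{ij}})$, not $O(\theta)$. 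A surface-area estimate would then cost about $\sqrt{\log k}\sqrt{\mathcal T_{ij}}/\Phi_{ij}\gg\mathcal T_{ij}$. Even your $O(\theta)$ figure is not $o(\mathcal T_{ij})$, as you concede.

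The missing observation is that no set needs to be perturbed. $\Sigma^{1/2}$ restricted to $\tilde H_{ij}$ is a linear bijection onto $H_{ij}$ that carries $\tilde B_{ij}$ exactly onto $B_{ij}$. Hence $\tilde{\mathcal T}_{ij}=\Pr_{x\sim\cN(0,\Sigma')}[x\in B_{ij}]$, where $\cN(0,\Sigma')$ is the pushforward of the standard Gaussian on $\tilde H_{ij}$. This is a Gaussian on $H_{ij}$ that is standard except along $u$, where its precision is $\rho^2=1+\sin^2\theta\,(\sigma^{-2}-1)\le 1+\mathcal T_{ij}/4$. A one-dimensional KL/Pinsker bound then gives $|\tilde{\mathcal T}_{ij}-\mathcal T_{ij}|\le\mathcal T_{ij}/8$. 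This is exactly why $\sigma$ carries the factor $1/\sqrt{\mathcal T_{ij}}$.

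Your fallback does not avoid this. Under $\cN(0,\Sigma)$ the $H_{ij}$-component of $x$ is correlated with $w^*_{ij}\cdot x$, and its law on the boundary is this same rescaled Gaussian. So the decomposition behind \Cref{lm error count} would have to be redone, including the $\theta_0$-monotonicity step you need at effective angle $\Theta(\sqrt{\mathcal T_{ij}})$.

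Your critical-angle justification is also false as stated: compressing along $w$ can collapse angles between vectors whose $w$-components differ. Take two unit vectors in a common 2-plane through $w$, on the same side, at angles $\phi_1<\phi_2<\pi/2$ from $w$. After applying $\Sigma^{1/2}$ and normalizing, their angles from $w$ become $\arctan(\tan\phi_i/\sigma)$. Both are within $\sigma/\tan\phi_1$ of $\pi/2$, so the images are at most $\sigma/\tan\phi_1$ apart, however far apart the originals were.

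What rescues the step is that one vector of every relevant pair, $\pm w^*_{ij}$, lies within angle $\theta\ll\sigma$ of $\pm w$, where the map expands angles by roughly $1/\sigma$. The paper isolates this in \Cref{lm critical angle}, with $\sigma=C\alpha$ and $C\asymp 1/\sqrt{\mathcal T_{ij}}$, and obtains $\tilde\Phi_{ij}\ge\Omega(\sqrt{\mathcal T_{ij}}\Phi_{ij})$. (Using this proximity more carefully, one can in fact get $\tilde\Phi_{ij}\gtrsim\Phi_{ij}$.) That lemma is where the extra $\sqrt{\mathcal T_{ij}}$ in the exponent $4.5$ comes from, not the tilt or the choice of $\theta_0$; the latter costs nothing since $\theta_0\ll\sqrt{\mathcal T_{ij}}$. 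With $\tilde{\mathcal T}_{ij}\ge 7\mathcal T_{ij}/8$ and this critical-angle bound, the ``in particular'' part of \Cref{col surface} finishes the proof.
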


As in the case $k=3$, write $x=\Sigma^{1/2}z$ with $z\sim\cN(0,I)$ and define $y^*(z)=f^*(x)$. Then $z$ can be viewed as labeled by a transformed MLC with boundary $\tilde w^*_{ij}=\Sigma^{1/2}w^*_{ij}$. By \Cref{col surface}, 
\[
\Pr_{x\sim\cN(0,\Sigma)}[(w^*_{ij}\cdot x)(w\cdot x)<0,\ f^*(x)\in\{i,j\}]
\;\ge\;
\tilde\Omega\!\left(\frac{\tilde{\mathcal{T}}_{ij}^4 \tan \tilde{\theta}^*_{ij}}{\sqrt{\log k}}\right),
\]
which depends on the geometry of the transformed classifier.
A key issue is that if $\sigma$ is too small, the quantities $\tilde{\mathcal{T}}_{ij}$ and $\tan \tilde{\theta}^*_{ij}$ may shrink significantly. We avoid this by choosing $\sigma=\Theta(\sin\theta/\sqrt{\mathcal{T}_{ij}})$, for which these quantities remain comparable to their original values (see \Cref{lm critical angle}).
Combined with \Cref{lm localization surface}, this implies that, starting from a warm-start, 
whenever $\err_{ij}(h_w)$ is large, the angle $\theta$ is also sufficiently large, and the localized region contains a nontrivial fraction of misclassified examples with labels in $\{i,j\}$. At the same time, the fraction of corrupted samples in this region is small, so the resulting update direction remains reliable. Hence, this choice of $\sigma$ enables safe progress.
The remaining issue is that $\theta$ is unknown. To address this, we maintain an upper bound $\phi_t$ on $\theta_t$ and decrease it gradually. If $\theta_t \ll \phi_t$, then each update only slightly perturbs $w$, so $\phi_{t+1}$ remains a valid upper bound. If instead $\theta_t$ is close to $\phi_t$, the update reduces $\theta_t$. After $\poly(1/\eps)$ iterations, $\theta$ becomes sufficiently small, yielding a low-error hypothesis.

\section{Conclusions and Limitations}
We study agnostic learning of multiclass linear classifiers under Gaussian marginals and show that, unlike the binary case, standard approaches such as the multiclass perceptron can fail even in noiseless settings. To address this, we develop a pairwise learning framework together with structural results connecting classification error to the geometry of effective decision boundaries. This yields the first fully polynomial-time algorithms achieving error $\widetilde{O}(k^{3/2}\sqrt{\opt})+\epsilon$, with improvements to $O(\opt)+\epsilon$ for $k=3$ and $\poly(k)\opt+\epsilon$ under natural regularity conditions. Our analysis highlights the role of localization and Gaussian geometry in overcoming the $\sqrt{\opt}$ barrier. On the other hand, our results rely crucially on the Gaussian assumption, incur nontrivial dependence on $k$, and require geometric regularity for optimal (in $\opt$) error guarantees. We believe that characterizing the best error guarantee that can be achieved in $\poly(d,k,1/\eps)$ time is an important future direction.
Additionally, our algorithms are improper, and it remains open whether similar guarantees can be achieved by proper learners or extended to broader distributional settings.

\bibliographystyle{alpha}

\bibliography{mydb}

\newpage

\appendix

\section*{Supplementary Material}
The supplementary material is organized as follows.
In \Cref{app facts}, we provide necessary background on multiclass linear classifiers as well as probability lemmas that will be used in the proofs.
In \Cref{ap:omitted-lb}, we give the proof of \Cref{thm:lb-main body}, the lower bound on the perceptron updates.
In \Cref{app upper bound technique}, we prove the technical results developed in \Cref{sec upper bound technique} and give the proof of \Cref{th perceptron}.
In \Cref{app boost}, we provide the omitted details for the results in \Cref{sec boost}.

\section{Omitted Facts and Preliminaries}\label{app facts}
\begin{definition}[Binary and Multiclass margins \cite{shalev2014understanding}]\label{def:margins} 
\item[\textbf{Binary Margin}:] Let $(x,y)\in \R^d\times \{\pm 1\}$ and let  $h_w(x)=\sign( w\cdot x)$, for $w\in\s^{d-1}$ be a halfspace. 
We define the binary margin of $(x,y)$ to be
$$m_{\mathrm{bin}}(x,y,w)= \frac{y(w\cdot x)}{\norm{x}}\;.$$
\item{\textbf{Multiclass Margin}:} Let $(x,y)\in \R^d\times [k]$ and let $h_W(x)=\argmax_{i\in [k]} w_i\cdot x$ for $W=(w_1,\dots,w_k)$ with $w_i\in\R^d, i\in [k]$ and $\sum_{i=1}^k \norm{w_i}^2 \le 1$ be a multiclass linear classifier.
We define the multiclass margin of $(x,y)$ to be 
\[
m_{\mathrm{multi}}(x,y,W)=\min_{j\neq y} (w_{y}\cdot x-w_{j}\cdot x)/\norm{x}\;.
\]
\end{definition}
\begin{fact}[see e.g., \cite{vershynin2018high}]\label{fact:gaussiannorm}
For every $\tau > 0$ and $x \sim \mathcal{N}(0,I_d)$, it holds
\[
\Pr\!\left[\left|\|x\|^2-d\right|\ge \tau\right]\le e^{-\Omega(\min\{\tau,\tau^2/d\})}\;.
\]
\end{fact}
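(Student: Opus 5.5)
This is the standard Bernstein-type concentration for a $\chi^2_d$ variable, and I would prove it directly with the Chernoff method. Write $\|x\|^2-d=\sum_{i=1}^d (x_i^2-1)$, a sum of $d$ i.i.d.\ centered variables. Each summand is sub-exponential, with an explicitly computable moment generating function:
\[
\E\bigl[e^{\lambda(x_i^2-1)}\bigr]=e^{-\lambda}(1-2\lambda)^{-1/2}\qquad\text{for } \lambda<1/2 .
\]

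First, I would bound the log-MGF. For $|\lambda|\le 1/4$, the function $-\lambda-\tfrac12\log(1-2\lambda)$ has Taylor expansion $\lambda^2+\tfrac43\lambda^3+\cdots$. A direct comparison of the series (or a second-derivative bound on $[-1/4,1/4]$) shows it is at most $2\lambda^2$. By independence,
\[
\E\bigl[e^{\lambda(\|x\|^2-d)}\bigr]\le e^{2d\lambda^2}\qquad\text{for } |\lambda|\le 1/4 .
\]

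Second, for the upper tail, Markov's inequality gives
\[
\Pr\bigl[\|x\|^2-d\ge\tau\bigr]\le \exp\bigl(-\lambda\tau+2d\lambda^2\bigr).
\]
I would choose $\lambda=\min\{\tau/(4d),1/4\}$ and split into two cases.
\begin{itemize}
\item If $\tau\le d$, then $\lambda=\tau/(4d)$ and the exponent equals $-\tau^2/(8d)$.
\item If $\tau>d$, then $\lambda=1/4$ and the exponent is $-\tau/4+d/8\le-\tau/8$.
\end{itemize}
In both cases the tail is at most $e^{-\min\{\tau,\tau^2/d\}/8}$. The lower tail $\Pr[\|x\|^2-d\le-\tau]$ is handled identically with $\lambda\in[-1/4,0)$, again using the bound $2\lambda^2$ on the log-MGF. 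The lower tail is in fact trivial once $\tau>d$, since $\|x\|^2\ge0$.

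Finally, a union bound over the two tails gives $2e^{-\min\{\tau,\tau^2/d\}/8}$. The only point needing care is absorbing this leading factor $2$ into the $\Omega(\cdot)$, which is the standard convention for this fact. When $\min\{\tau,\tau^2/d\}$ exceeds a large constant, $2e^{-a/8}\le e^{-a/16}$. In the complementary regime the claim is only meaningful up to constants and is used in the paper only with large deviations. The one genuine calculation is the elementary inequality $-\lambda-\tfrac12\log(1-2\lambda)\le2\lambda^2$ on $|\lambda|\le1/4$; everything else is routine.
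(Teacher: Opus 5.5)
Your Chernoff computation is correct, and it is the standard Bernstein-type argument behind the cited reference; the paper gives no proof of its own. The log-MGF $\psi(\lambda)=-\lambda-\tfrac12\log(1-2\lambda)$, the choice $\lambda=\min\{\tau/(4d),1/4\}$, and the two-case computation all check out. Together they give $\Pr[|\|x\|^2-d|\ge\tau]\le 2e^{-\min\{\tau,\tau^2/d\}/8}$ for every $\tau>0$. One small slip: the second-derivative route only gives $\psi(\lambda)\le 4\lambda^2$, because $\psi''(\lambda)=2/(1-2\lambda)^2$ reaches $8$ at $\lambda=1/4$. It is the series comparison that gives $2\lambda^2$ (in fact $8(\ln 2-\tfrac12)\lambda^2$), and any constant would do anyway.

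The weak point is your last step. It is not accurate that for small $a=\min\{\tau,\tau^2/d\}$ the claim is ``only meaningful up to constants.'' Read literally, as a bound $e^{-ca}$ with no leading constant, it says $\Pr[|\|x\|^2-d|\ge\tau]\le 1-\Omega(a)$ as $a\to 0$. That is an anti-concentration statement, and no sum of Chernoff bounds can deliver it: each one-sided Chernoff bound tends to $1$ as $\tau\to 0$, so their sum tends to $2$.

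To prove the fact in that literal form you need one more ingredient, for example that the $\chi^2_d$ density is $\Omega(1/\sqrt d)$ on $[d-\sqrt d,d+\sqrt d]$, uniformly in $d$. This gives $\Pr[|\|x\|^2-d|<\tau]\ge c\min\{1,\tau/\sqrt d\}$, and hence a bound $e^{-c'a}$ whenever $\tau\le d$ and $a=O(1)$. The remaining case $d<\tau=O(1)$ involves only boundedly many $d$, and it follows from $\max_{d=O(1)}\Pr[\|x\|^2\ge 2d]<1$.

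Alternatively, state the fact with the prefactor $2$, as the cited source does. That is all the paper needs, since it only invokes the fact for large deviations, where your bound $2e^{-a/8}\le e^{-a/16}$ already suffices.
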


\begin{fact}[Fact B.4 in \cite{diakonikolas2024active}]\label{fact:simpleperturbation}
Let $D$ be a distribution on $\R^d \times \{0,\pm 1\}$ with standard normal $\x$-margin and let $\w,\mathbf{u}$ be two orthogonal unit vectors. Let $B$ be any interval over $\R$ and let $S(\x,y)$ be any event over $\R^d \times \{0,\pm 1\}$, such that $S(\x,y) \subseteq \{\w\cdot \x \in B\}$ then it holds
\begin{align*}
    \E_{(x,y)\sim D}\left[\abs{u\cdot x} \mathbf{1}\{S(x,y)\} \right]\le 2\sqrt{e} \Pr_{(x,y)\sim D}[S(x,y)] \sqrt{\log\left(\frac{\Pr_{(x,y)\sim D}[w\cdot x \in B]}{\Pr_{(x,y)\sim D}[S(x,y)]}\right)}.
\end{align*}
\end{fact}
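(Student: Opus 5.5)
The plan is to reduce the statement to a one-dimensional tail computation for $Z:=u\cdot x$, using only two facts: $S\subseteq A:=\{w\cdot x\in B\}$, and $Z$ is independent of $w\cdot x$ because $u\perp w$ and $x$ is standard Gaussian. The label $y$ may depend on $x$ arbitrarily, so I will never try to condition on it. I will only use the containment $\Ind\{S\}\le \Ind\{A\}$ together with the total mass. Write $p:=\Pr[S]$, $q:=\Pr[A]$ and $L:=\log(q/p)\ge 0$.

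The first step is the layer-cake formula
\[
\E\bigl[|Z|\Ind\{S\}\bigr]=\int_0^\infty \Pr\bigl[S,\ |Z|>s\bigr]\,ds .
\]
For each $s$ I bound the integrand in two ways. Trivially $\Pr[S,|Z|>s]\le p$. Also, by containment and independence,
\[
\Pr[S,|Z|>s]\le \Pr[A,|Z|>s]=q\,\Pr_{g\sim\cN(0,1)}[|g|>s]\le 2q\,e^{-s^2/2}.
\]
Splitting the integral at a threshold $s_0$ then gives
\[
\E\bigl[|Z|\Ind\{S\}\bigr]\le p\,s_0+2q\int_{s_0}^\infty e^{-s^2/2}\,ds\le p\,s_0+\frac{2q\,e^{-s_0^2/2}}{s_0}.
\]
I will choose $s_0=\sqrt{2\log(2q/p)}$, so that $2q e^{-s_0^2/2}=p$. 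The bound then becomes $p\,(s_0+1/s_0)$.

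The second step is to compare $s_0+1/s_0$ with $2\sqrt e\sqrt{L}$, and this comparison is where I expect the only real obstacle. In the regime $L\ge 1$ it is routine. We have $s_0=\sqrt{2L+2\log 2}\le\sqrt{(2+2\log 2)L}\le 1.9\sqrt L$. We also have $1/s_0\le 1/\sqrt{2}\le 0.71\sqrt L$. Hence $s_0+1/s_0\le 2.7\sqrt L\le 2\sqrt e\sqrt L$, since $2\sqrt e\approx 3.3$.

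When $L$ is close to $0$, however, the right-hand side as literally written tends to $0$ while the left-hand side need not. For example, $S=A$ gives left-hand side $q\sqrt{2/\pi}>0$ against right-hand side $0$. So the inequality cannot hold verbatim for $p$ comparable to $q$. I would therefore prove it in the form $2\sqrt e\,p\,\sqrt{\max\{1,\log(q/p)\}}$, equivalently under the proviso $p\le q/e$, which is how such facts are used downstream. In the complementary regime $L<1$, the same computation gives $s_0+1/s_0\le\sqrt{2+2\log 2}+1/\sqrt{2\log 2}\le 2.7\le 2\sqrt e$. This yields the bound $2\sqrt e\,p$ there.

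Optionally, I would remark that the bathtub principle shows the extremal $S$ is $A\cap\{|Z|\ge\tau\}$, possibly with fractional (randomized) membership via $y$. This confirms that the $\sqrt{\log(q/p)}$ rate is tight up to constants, although that is not needed for the proof.
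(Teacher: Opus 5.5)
The paper does not prove this statement. It is quoted from \cite{diakonikolas2024active}, so there is no in-paper argument to compare against.

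Your proof is correct and complete. Each step checks out:
\begin{itemize}
\item the layer-cake identity;
\item the two bounds $\Pr[S,\ |u\cdot x|>s]\le\min\{p,\ 2q e^{-s^2/2}\}$, where the second uses only $S\subseteq A$ and the independence of $u\cdot x$ from $w\cdot x$, and never conditions on $y$;
\item the Mills-ratio estimate;
\item the choice $s_0=\sqrt{2\log(2q/p)}$, which gives $p\,(s_0+1/s_0)$;
\item the numerics: $s_0+1/s_0\le 2.7\sqrt{L}$ for $L\ge 1$ and $s_0+1/s_0\le 2.7$ for $0\le L<1$, against $2\sqrt{e}\approx 3.30$.
\end{itemize}
You are also right that the inequality is false as literally written. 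For $S=A$ with $q>0$, the left side is $q\sqrt{2/\pi}$ and the right side is $0$.

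One small refinement: the statement fails only for $p/q$ close to $1$, not for all comparable $p,q$. By your bathtub reduction, the extremal events $A\cap\{|u\cdot x|\ge\tau\}$ violate it only when $p/q$ is above roughly $0.93$. So the bound with $\sqrt{\max\{1,\log(q/p)\}}$ that you prove is the right form. It coincides with the stated one whenever $p\le q/e$.

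On the route: the usual way to prove bounds of this type is H\"older's inequality under $\Pr[\cdot\mid A]$, where $u\cdot x$ is still standard normal and $\Pr[S\mid A]=p/q$. Taking exponent $r=2\log(q/p)$ and using $\|u\cdot x\|_r\le\sqrt{r}$ gives $\sqrt{2e}\,p\sqrt{\log(q/p)}$. But this needs $r\ge 1$, so it too only works away from $p=q$. Your layer-cake split covers every $0<p\le q$ in one computation and makes the small-$L$ regime explicit. It also never uses that $B$ is an interval: any Borel $B$ works, since only the independence of $\{w\cdot x\in B\}$ from $u\cdot x$ matters.

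The repair costs nothing downstream:
\begin{itemize}
\item In \Cref{lm pairwise}, \Cref{lm localization} and \Cref{lm surface localization}, the fact is used only inside $O(\cdot)$ or $\tilde O(\cdot)$ bounds.
\item In the upper bound of \Cref{col surface}, where $q=1$ and $p=\mathcal{T}_{ij}$, one has $\mathcal{T}_{ij}\le 1/2$ whenever $k\ge 3$ and $\Phi_{ij}>0$. This is because $B_{ij}$ then lies in a half of $H_{ij}$, so at most constants change.
\end{itemize}
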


\begin{fact}[Paley--Zygmund inequality]\label{fact:paley-zigumnd}
Let $Z$ be a non-negative random variable. Then
\[
\Pr\!\left[ Z > \frac{\mathbb{E}[Z]}{2} \right]
\;\ge\;
\frac{1}{4}\cdot \frac{\mathbb{E}[Z]^2}{\mathbb{E}[Z^2]}.
\]
\end{fact}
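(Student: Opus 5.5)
The plan is the standard second-moment argument: split $\E[Z]$ according to whether $Z$ exceeds half its mean, then bound the large-value part by Cauchy--Schwarz.

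First I dispose of the degenerate cases. If $\E[Z]=0$, the right-hand side is $0$ and there is nothing to prove. If $\E[Z^2]=\infty$, I read the right-hand side as $0$, so the claim is again trivial. Hence I will assume $0<\E[Z]$ and $\E[Z^2]<\infty$. Then $\E[Z]\le \sqrt{\E[Z^2]}<\infty$, and also $\E[Z^2]>0$.

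Next, let $A=\{Z>\E[Z]/2\}$ and decompose
\[
\E[Z]=\E[Z\Ind\{A^c\}]+\E[Z\Ind\{A\}].
\]
Since $Z\ge 0$, on $A^c$ we have $Z\le \E[Z]/2$, so the first term is at most $\E[Z]/2$. For the second term, Cauchy--Schwarz applied to $Z$ and $\Ind\{A\}$ gives
\[
\E[Z\Ind\{A\}]\le \sqrt{\E[Z^2]}\sqrt{\Pr[A]}.
\]
Combining the two bounds yields
\[
\frac{\E[Z]}{2}\le \sqrt{\E[Z^2]\Pr[A]}.
\]

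Finally, both sides are nonnegative, so I can square them and divide by $\E[Z^2]>0$. This gives $\Pr[A]\ge \frac14\,\E[Z]^2/\E[Z^2]$, which is exactly the claim. No step is a genuine obstacle. The only care needed is the use of nonnegativity of $Z$, both to bound the contribution on $A^c$ and to justify squaring the final inequality, together with the finiteness bookkeeping done at the start.
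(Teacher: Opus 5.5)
Your proof is correct: it is the standard second-moment proof of Paley--Zygmund with $\theta=1/2$, splitting $\E[Z]$ over $\{Z>\E[Z]/2\}$ and its complement and then applying Cauchy--Schwarz, and the paper gives no proof of its own, only citing this as a standard fact. One small remark: the bound on $\E[Z\Ind\{A^c\}]$ and the squaring step really use only $\E[Z]\ge 0$ (the bound $Z\le \E[Z]/2$ on $A^c$ is the definition of $A^c$), so that is the sole place where nonnegativity of $Z$ enters.
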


\begin{fact}[Lemma 7.11 \cite{diakonikolas2023algorithmic} ]\label{lem:rejection-sampling}
Let $v \in \R^d$ be a unit vector and let $\sigma \in (0,1]$.
Suppose $x \sim \cN(0,I)$ is accepted with probability $p(x)=\exp(-(\sigma^{-2}-1)(v\cdot x)^2/2)$. 
Then a random sample is accepted with probability $\sigma$ and  the distribution of $x$ conditioned on acceptance is $\cN(0, \Sigma_{v,\sigma})$,
where
\[
\Sigma_{v,\sigma} \;=\; I + (\sigma^2-1)vv^\top.
\]
\end{fact}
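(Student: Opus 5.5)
The proof is a direct density computation: decompose $x$ along $v$ and its orthogonal complement, and observe that the acceptance rule only touches the one-dimensional component along $v$.

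\textbf{Step 1 (setup).} Write $x = t v + x_\perp$, where $t = v\cdot x \sim \cN(0,1)$ and $x_\perp = (I - vv^\top)x$ is a standard Gaussian on the $(d-1)$-dimensional subspace $v^\perp$. Because $x$ is standard Gaussian, $t$ and $x_\perp$ are independent. The acceptance probability $p(x)=\exp(-(\sigma^{-2}-1)t^2/2)$ depends only on $t$. It is a valid probability, i.e.\ lies in $(0,1]$, since $\sigma\in(0,1]$ gives $\sigma^{-2}-1\ge 0$.

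\textbf{Step 2 (acceptance rate).} The overall acceptance probability is
\[
\E[p(x)] = \int_{\R} \frac{1}{\sqrt{2\pi}} e^{-t^2/2}\, e^{-(\sigma^{-2}-1)t^2/2}\,dt = \int_{\R} \frac{1}{\sqrt{2\pi}} e^{-t^2/(2\sigma^2)}\,dt = \sigma ,
\]
since the last integrand is $\sigma$ times the $\cN(0,\sigma^2)$ density.

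\textbf{Step 3 (conditional law).} By Bayes' rule, the density of $x$ conditioned on acceptance is $\phi_d(x)p(x)/\sigma$. This factors as the density of $x_\perp$, which is unchanged, times $\frac{1}{\sigma\sqrt{2\pi}}e^{-t^2/(2\sigma^2)}$, which is the $\cN(0,\sigma^2)$ density in $t$. Hence, conditioned on acceptance:
\begin{itemize}
\item $t\sim\cN(0,\sigma^2)$;
\item $x_\perp$ is still standard Gaussian on $v^\perp$;
\item $t$ and $x_\perp$ remain independent.
\end{itemize}
Therefore $x$ is Gaussian with mean zero and covariance
\[
\sigma^2 vv^\top + (I - vv^\top) = I + (\sigma^2-1)vv^\top = \Sigma_{v,\sigma}.
\]

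The only point needing care is the factorization in Step 3, namely that conditioning preserves independence and leaves $x_\perp$ unchanged. This holds because the acceptance weight is a function of $t$ alone, so the joint density remains a product.
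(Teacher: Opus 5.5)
Your proof is correct and complete. Because the acceptance weight depends only on $t=v\cdot x$, the density $\phi_d(x)p(x)/\sigma$ factors into an $\cN(0,\sigma^2)$ density in $t$ times the unchanged standard Gaussian on $v^\perp$, which gives both the acceptance rate $\sigma$ and the covariance $\sigma^2 vv^\top+(I-vv^\top)$.

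The paper gives no proof of its own here; it imports the fact from the cited reference. Your one-dimensional computation along $v$ is the standard argument. An equivalent matrix form is $\phi_d(x)p(x)\propto\exp\bigl(-\tfrac12 x^\top(I+(\sigma^{-2}-1)vv^\top)x\bigr)$, whose precision matrix is $\Sigma_{v,\sigma}^{-1}$, with $\det\Sigma_{v,\sigma}=\sigma^2$ giving the normalizing constant.
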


\begin{fact}[Komatsu's Inequality]\label{fact bias}
    For any $t \in \R$ the bias $p$ of a halfspace $h(x) = \sgn(w^* \cdot x+ t)$ can be bounded as 
    \begin{align*}
        \sqrt{\frac{2}{\pi}} \frac{\exp(-t^2/2)}{t+\sqrt{t^2+4}} \le p \le \sqrt{\frac{2}{\pi}} \frac{\exp(-t^2/2)}{t+\sqrt{t^2+2}}. 
    \end{align*}
\end{fact}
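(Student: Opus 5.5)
The plan is to reduce both bounds to a statement about the Gaussian Mills ratio and then prove each one by an ODE comparison argument. I interpret the bias $p$ as the Gaussian tail $Q(t):=\Pr_{g\sim\cN(0,1)}[g>t]$, which is the relevant mass of the halfspace since $w^*\cdot x\sim\cN(0,1)$ for unit $w^*$. The upper bound will need $t\ge 0$: for $t<0$ its right-hand side tends to $0$ as $t\to-\infty$ while $Q(t)\to 1$, so it fails there. This restriction is the one the paper uses. The lower bound will hold for every $t\in\R$.

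\textbf{Reduction.} Define the Mills ratio $R(t):=e^{t^2/2}\int_t^\infty e^{-s^2/2}\,ds$. Then $Q(t)=\frac{1}{\sqrt{2\pi}}e^{-t^2/2}R(t)$. Since $\sqrt{2/\pi}=\frac{2}{\sqrt{2\pi}}$, the claim becomes
\[
g_4(t)\le R(t)\le g_2(t),\qquad g_c(t):=\frac{2}{t+\sqrt{t^2+c}} .
\]
Two identities drive the argument. First, $R$ satisfies the linear ODE $R'(t)=tR(t)-1$. Second, for any differentiable $g$, the function $F(t):=e^{-t^2/2}\bigl(R(t)-g(t)\bigr)$ has
\[
F'(t)=-e^{-t^2/2}\bigl(g'(t)-t\,g(t)+1\bigr).
\]
Also $F(t)\to 0$ as $t\to\infty$, because both $R$ and $g_c$ are bounded for large $t$.

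\textbf{Sign computation.} Write $s=\sqrt{t^2+c}$. Using $(s-t)(s+t)=c$, we get $g_c=2(s-t)/c$ and $g_c'=-g_c/s$. Then $1-t g_c=(s-t)^2/c$, and therefore
\[
g_c'-t g_c+1=\frac{s-t}{cs}\,\bigl(s^2-ts-2\bigr)=\frac{s-t}{cs}\,\bigl(t^2+c-2-t\sqrt{t^2+c}\bigr).
\]
The prefactor $(s-t)/(cs)$ is always positive.

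\textbf{Lower bound ($c=4$).} The bracket equals $t^2+2-t\sqrt{t^2+4}$. It is positive for all $t$, because $(t^2+2)^2=t^4+4t^2+4>t^2(t^2+4)$ (and it is trivially positive when $t\le 0$). Hence $F'<0$, so $F$ is decreasing. Since $F\to 0$ at $+\infty$, we get $F\ge 0$ everywhere, that is, $R\ge g_4$.

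\textbf{Upper bound ($c=2$).} The bracket equals $t^2-t\sqrt{t^2+2}$, which is negative for $t>0$. Hence $F'>0$ on $(0,\infty)$. Together with $F\to 0$ at $+\infty$, this gives $F\le 0$, that is, $R\le g_2$ for $t>0$. The case $t=0$ follows by continuity. Multiplying both inequalities back by $\frac{1}{\sqrt{2\pi}}e^{-t^2/2}$ yields the stated bounds on $p$.

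The main obstacle is the algebraic simplification of $g_c'-t g_c+1$ into a form whose sign is evident. Once the identity $(s-t)(s+t)=c$ is used, this reduces to the comparisons of $t^2+2$ with $t\sqrt{t^2+4}$ and of $t^2$ with $t\sqrt{t^2+2}$.
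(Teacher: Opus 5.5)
The paper gives no proof of this statement. It quotes Komatsu's inequality as a classical fact and only uses the upper bound, in the proof of \Cref{lm error count}, in the weakened two-sided form $\Pr[|u\cdot z|>r]\le e^{-r^2/2}/r$ at a positive threshold $r$.

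Your argument is a correct, self-contained proof, and it is essentially the standard differential-inequality proof for the Mills ratio. The identity $F'=-e^{-t^2/2}(g'-tg+1)$ is right. So is the simplification via $(s-t)(s+t)=c$ to $\frac{s-t}{cs}\bigl(t^2+c-2-t\sqrt{t^2+c}\bigr)$, with its positive prefactor, and both sign checks hold. The limit $F(t)\to 0$ at $+\infty$ then closes the comparison.

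Beyond the bare citation, your route also exposes the domain issue. The statement's ``for any $t\in\R$'' is false for the upper bound: already at $t=-1$ one has $Q(-1)\approx 0.841$, while the right-hand side is $\approx 0.661$. Your bracket $t\bigl(t-\sqrt{t^2+2}\bigr)$ is positive for $t<0$, which shows exactly where the comparison breaks. With $p=\Pr[h(x)=-1]=Q(t)$, the lower bound does hold on all of $\R$, as you show. Since the paper only applies the upper bound at positive arguments, restricting it to $t\ge 0$ costs nothing downstream.
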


\section{Omitted Details from \Cref{sec:lower-bound}}
\label{ap:omitted-lb}

\subsection{Proof of \Cref{lem:blowup-main}}

Here we present our good angle implies  weight blow-up lemma that is an integral part of our lower bound. 
\begin{lemma}[Restatement of \Cref{lem:blowup-main}]\label{lem:blowup}
Let $d,k \in \Z_+,k\leq d$, and for each $i \in [k]$, let $w_i \in \R^{d}$. Let $c>0$ and $0<\eps<1/3$.
If $|w_{k-1,k}-w_{k,k}|\geq c$ and $\theta(w_i-w_j, e_i)<\eps$ for all $i<j\in [k]$,
then there exists $i\in [k]$ such that $\|w_i\|=\Omega( c/(3\eps)^{k-1})$.
\end{lemma}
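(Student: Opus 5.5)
The plan is to track the $k$-th coordinates of the weight vectors and show that each step backward from class $k$ to class $1$ costs a factor of roughly $1/(3\eps)$. Write $w_{i,m}$ for the $m$-th coordinate of $w_i$. The hypothesis $\theta(w_i-w_j,e_i)<\eps$ says that $w_i-w_j$ is nearly parallel to $e_i$, with a positive component along $e_i$. Consequently, for every $m\neq i$,
\[
|w_{i,m}-w_{j,m}| \le \tan\eps\,(w_{i,i}-w_{j,i}) \le \tan\eps\,\|w_i-w_j\|,
\qquad
w_{i,i}-w_{j,i}\ge \cos\eps\,\|w_i-w_j\|.
\]
The hypothesis $|w_{k-1,k}-w_{k,k}|\ge c$ gives a lower bound on a \emph{transverse} coordinate of $w_{k-1}-w_k$: the $k$-th coordinate, which is not the $(k-1)$-th. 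Since $\tan\eps\le 3\eps/2$ for $\eps<1/3$, we get
\[
\|w_{k-1}-w_k\|\ge c/\tan\eps \gtrsim c/\eps .
\]
More usefully, $w_{k-1,k-1}-w_{k,k-1}\ge \cos\eps\cdot c/\tan\eps$, so the gap in the $(k-1)$-th coordinate is amplified by a factor of about $1/\eps$.

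The main step is an induction downward on $i$. The quantity to track is $g_i:=|w_{i,i+1}-w_{i+1,i+1}|$, the transverse gap of the pair $(i,i+1)$ in coordinate $i+1$. The claim is that $g_i\gtrsim g_{i+1}/(3\eps)$. To prove it, use the triangle of indices $i<i+1<i+2$, together with the identity
\[
w_i-w_{i+2}=(w_i-w_{i+1})+(w_{i+1}-w_{i+2}).
\]
Take coordinate $i+1$ in this identity. The term $w_{i+1}-w_{i+2}$ contributes its large longitudinal part, which is at least $\cos\eps\,\|w_{i+1}-w_{i+2}\|\ge g_{i+1}\cos\eps/\tan\eps$ by the bound from the first paragraph. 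The left side $w_i-w_{i+2}$ has coordinate $i+1$ transverse, hence bounded by $\tan\eps\,\|w_i-w_{i+2}\|$. Therefore $w_i-w_{i+1}$ must cancel most of it, and its $(i+1)$-th coordinate satisfies
\[
g_i\ \ge\ \frac{\cos\eps}{\tan\eps}\,g_{i+1}-\tan\eps\,\|w_i-w_{i+2}\|.
\]
The subtracted term is problematic because $\|w_i-w_{i+2}\|$ could itself be huge. The fix is to case-split. If $\|w_i-w_{i+2}\|$ is at least of order $g_{i+1}/\eps^2$, we already obtain a norm blow-up at the next level. Otherwise, the subtracted term is at most half of the main term, and $g_i\ge g_{i+1}/(3\eps)$ follows. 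A cleaner way to organize this is to define $N_i:=\max_{j>i}\|w_i-w_j\|$ and prove $N_i\ge N_{i+1}/(3\eps)$ directly, using the same coordinate-$(i+1)$ cancellation.

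The base case is $g_{k-1}\ge c$ (equivalently, $N_{k-1}\ge c$). After $k-1$ levels the induction gives $N_1\ge c/(3\eps)^{k-1}$ up to constants. This means $\|w_1-w_j\|$ is that large for some $j$, so by the triangle inequality $\max(\|w_1\|,\|w_j\|)$ is at least half of it, which gives the stated $\Omega(c/(3\eps)^{k-1})$.

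The main obstacle is making the constant per level exactly $1/(3\eps)$ rather than something weaker like $1/(C\eps)$, while handling the error term from the third vector cleanly. My first attempt would be the $N_i$ formulation with crude bounds $\tan\eps\le 1.1\eps$ and $\cos\eps\ge 0.9$, valid for $\eps<1/3$. I would check that the losses combine into the factor 3, and if they do not, I would absorb the slack into the $\Omega(\cdot)$ by allowing a constant in the base of the exponent.
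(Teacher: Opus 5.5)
Your proposal is correct once you commit to $N_i:=\max_{j>i}\|w_i-w_j\|$, and it takes a different route from the paper. The paper shifts the weights so that $w_k=e_k$, which makes each $w_i$ ($i<k$) nearly parallel to $e_i$ away from coordinate $k$. It then tracks the diagonal entries, proving $|w_{i,i}|\ge|w_{i+1,i+1}|/(3\eps)$ by a two-case split on whether $|w_{i,i+1}-w_{i+1,i+1}|\ge|w_{i+1,i+1}|/2$, and concludes from $\|w_1\|\ge|w_{1,1}|$.

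Your potential is translation invariant and needs no case split. If $j^*>i+1$ attains $N_{i+1}$, take the $(i+1)$-th coordinate of the identity $w_{i+1}-w_{j^*}=(w_i-w_{j^*})-(w_i-w_{i+1})$:
\[
\cos\eps\,N_{i+1}\le\bigl|(w_{i+1}-w_{j^*})_{i+1}\bigr|\le\sin\eps\,\bigl(\|w_i-w_{j^*}\|+\|w_i-w_{i+1}\|\bigr)\le 2\sin\eps\,N_i .
\]
Hence $N_i\ge N_{i+1}/(2\tan\eps)\ge N_{i+1}/(3\eps)$, because $\tan\eps\le 1.04\,\eps$ on $(0,1/3)$.

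This is why the $N_i$ version closes and your $g_i$ version does not. In the $N_i$ version the troublesome term $\|w_i-w_{i+2}\|$ is bounded by the very quantity you are lower-bounding. In the $g_i$ case split, the branch with large $\|w_i-w_{i+2}\|$ gives no bound on $g_i$ to pass down to the next level. So drop the $g_i$ formulation, and drop the fallback as well. The fallback would not prove the stated bound: enlarging the base from $3$ to $C$ loses a factor $(C/3)^{k-1}$, which is not a constant.

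What your route buys:
\begin{itemize}
\item a shorter, normalization-free argument;
\item a slightly better per-level factor, $\cot\eps/2$ instead of $1/(3\eps)$;
\item a clean final step $\max(\|w_1\|,\|w_{j}\|)\ge N_1/2$, which avoids the additive loss the paper incurs when undoing its shift by $w_k-e_k$.
\end{itemize}
The paper's route instead gives explicit coordinatewise growth of the normalized weights.

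One bookkeeping fix: there are only $k-2$ induction steps from $N_{k-1}$ down to $N_1$. To reach the exponent $k-1$ you therefore need the base case $N_{k-1}\ge c/\tan\eps\ge c/(3\eps)$ from your first paragraph (the $k$-th coordinate of $w_{k-1}-w_k$ is transverse), not $N_{k-1}\ge c$.
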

\begin{proof}[Proof of \Cref{lem:blowup-main}]
For simplicity of notation, for any vector $v \in \mathbb{R}^{\blue{d}}$ and any coordinate $i \in [k]$, we write $\|v\|_{-i} := \|(v_1,\dots,v_{i-1},v_{i+1},\dots,v_{\blue{d}})\|$ for the Euclidean norm of $v$ after removing its $i$-th coordinate.
First wlog we prove the statement for $w_k=e_k$ this is because we can subtract $w_k-e_k$ from all the weights. This will not change the separation $|w_{k-1,k}-w_{k,k}|$ and moreover if for some $i\in [k]$, $\|w_i -w_k+e_k\|=\Omega ( c/(3\eps)^{k-1})$ then $\|w_i\|=\Omega( c/(3\eps)^{k-1})$ or $\|w_k\|=\Omega( c/(3\eps)^{k-1})$.

Now note that for any classes $i,j\in [k],i<j$ we have that  $\theta(w_i-w_j, e_i)<\eps$ implies $\cos^2\theta(w_i-w_j, e_i) \geq 1-\eps^2$. 
Moreover, $$\cos^2\theta(w_i-w_j, e_i)=\frac{(w_{i,i}-w_{j,i})^2}{\|w_i-w_{j}\|^2}=\frac{(w_{i,i}-w_{j,i})^2}{(w_{i,i}-w_{j,i})^2+\|w_i-w_{j}\|_{-i}^2}\;.$$ 
Hence, $(w_{i,i}-w_{j,i})^2\geq (1-\eps^2)\|w_{i} -w_{j}\|_{-i}^2/\eps^2$. 
As a result, when $\eps\in(0,1/3)$ we have 
$$|w_{i,i} -w_{j,i}| \geq \frac{2\sqrt{2}}{3\eps}\|w_{i} -w_{j}\|_{-i} \;.$$
Applying the above for the classes $k-1$ and $k$ we get 
$|w_{k-1,k-1}|\geq |w_{k-1,k}-w_{k,k}|/\eps\geq c2\sqrt{2}/(3\eps)$.
Moreover, for every $i$ we have that $\theta(w_{i}-w_{k}, e_i)\leq \eps$ implies that 
$|w_{i,i}|\geq (2\sqrt{2}/(3\eps))|w_{i,j}|$ for all  $i,j\in [k],j\neq i$. 

Now for a fixed label $1\leq i\leq k-2$ we will show that $|w_{i,i}|\geq (1/(3\eps))|w_{i+1,i+1}|$.

From $\theta(w_{i}-w_{i+1}, e_i)\leq \eps$, we have that 
$|w_{i,i}-w_{i+1,i}|\geq (2\sqrt{2}/(3\eps))|w_{i,i+1}-w_{i+1,i+1}|$.
Moreover, from the above we have $|w_{i+1,i}|\leq (3\eps/(2\sqrt{2}))|w_{i+1,i+1}|$.
Hence 
$$|w_{i,i}|\geq \frac{2\sqrt{2}}{3\eps}|w_{i,i+1}-w_{i+1,i+1}| -\frac{3\eps}{2\sqrt{2}}|w_{i+1,i+1}| \;.$$

We consider two cases. If $|w_{i,i+1}-w_{i+1,i+1}|\ge |w_{i+1,i+1}|/2$, then
$$|w_{i,i}|\ge \frac{2\sqrt{2}}{3\eps}|w_{i,i+1}-w_{i+1,i+1}|-\frac{3\eps}{2\sqrt{2}} |w_{i+1,i+1}|\ge \left(\frac{\sqrt{2}}{3\eps}-\frac{3\eps}{2\sqrt{2}}\right)|w_{i+1,i+1}| \ge \frac{1}{3\eps}|w_{i+1,i+1}|\;,$$
where the last inequality uses $\eps<1/3$. Otherwise, $|w_{i,i+1}-w_{i+1,i+1}|<|w_{i+1,i+1}|/2$, and so by the reverse triangle inequality,
$|w_{i,i+1}|\ge |w_{i+1,i+1}|-|w_{i,i+1}-w_{i+1,i+1}|\ge |w_{i+1,i+1}|/2$.
Since $\theta(w_i-w_k,e_i)\le \eps$, it follows that $|w_{i,i}|\ge \frac{2\sqrt{2}}{3\eps}|w_{i,i+1}|\ge \frac{1}{3\eps}|w_{i+1,i+1}|$.
Thus, in both cases, $|w_{i,i}|\ge \frac{1}{3\eps}|w_{i+1,i+1}|$.

Finally, applying the above statement inductively we get that $$\|w_{1}\|\geq |w_{1,1}|\geq |w_{k-1,k-1}|/(3\eps)^{k-2} =\Omega( c/(3\eps)^{k-1})\;,$$ which concludes the proof of \Cref{lem:blowup-main}.  
\end{proof}

Next we need the following lemma that relates the angle to error discrepancies. It is true that any discrepancy in error implies a discrepancy in decision boundary angles, however the inverse is not always true. This is because the discrepancy can happen to lower score classes that are not actually predicted. 
A phenomenon that does not happen for the binary case the error and angle discrepancies are the same up to a constant (for the gaussian distribution). 
In particular, in the multiclass case we can count only a part of the angular sector of discrepancy.
\begin{lemma}\label{lem:error2angle}
Let $d,k\in \Z_+$.
Let $f_W, f_{V}:\R^d\to [k]$ be multiclass linear classifiers and $D$ a distribution over $\R^d$ that assigns measure $0$ to all origin aligned hyperplanes $\{v\cdot x=0\}$ for all $v\in \R^d$. 
Denote by $R_{i,j}=\{x: \sign((w_i-w_j)\cdot x)\neq \sign((v_i-v_j)\cdot x) \}$.
It holds that 
$\Pr_{x\sim D}[f_W(x)\neq f_V(x)]\geq \pr_{x\sim D}[f_V(x)\in \{i,j\} , x\in R_{i,j}]$.
\end{lemma}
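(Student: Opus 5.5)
The plan is to show the set inclusion
\[
\{x : f_V(x)\in\{i,j\},\ x\in R_{i,j}\} \subseteq \{x : f_W(x)\neq f_V(x)\}
\]
up to a $D$-null set, and then take probabilities. Because $D$ gives measure zero to every origin-centered hyperplane, I can discard all points lying on $\{(w_a-w_b)\cdot x=0\}$ or $\{(v_a-v_b)\cdot x=0\}$ for any pair $a\neq b$. This is a finite union of hyperplanes, hence null. For the remaining points, both argmaxes are unique and every sign appearing in $R_{i,j}$ is strictly $\pm1$.

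Next, fix such a generic $x$ with $f_V(x)\in\{i,j\}$ and $x\in R_{i,j}$. By symmetry, assume $f_V(x)=i$. Since $i$ is the unique maximizer of $v_a\cdot x$, we get $(v_i-v_j)\cdot x>0$. Membership in $R_{i,j}$, together with genericity, then forces $(w_i-w_j)\cdot x<0$, that is, $w_j\cdot x>w_i\cdot x$. So $i$ is not a maximizer of $w_a\cdot x$, and therefore $f_W(x)\neq i=f_V(x)$. The case $f_V(x)=j$ is identical with the roles of $i$ and $j$ swapped.

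Finally, monotonicity of probability gives
\[
\Pr[f_W\neq f_V]\ \ge\ \Pr[f_V\in\{i,j\},\ x\in R_{i,j}].
\]
There is no real obstacle here. The only care needed is with ties: the argmax tie-breaking convention and the boundary points where a sign equals $0$. Both are handled by the null-hyperplane assumption on $D$, which is exactly why that assumption appears in the statement.
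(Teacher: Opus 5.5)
Your proposal is correct and is essentially the paper's proof. Both arguments show $\{f_V\in\{i,j\}\}\cap R_{i,j}\subseteq\{f_W\neq f_V\}$ up to a null set: taking $f_V(x)=i$ gives $(v_i-v_j)\cdot x>0$, hence $(w_i-w_j)\cdot x<0$ and $f_W(x)\neq i$. One caveat, which the paper's argument shares: discarding $\{(w_i-w_j)\cdot x=0\}$ and $\{(v_i-v_j)\cdot x=0\}$ as null requires $w_i\neq w_j$ and $v_i\neq v_j$. If, say, $w_i=w_j$, that set is all of $\R^d$ and the inequality can genuinely fail. Also, only these two sets need removing, not the hyperplanes for every pair $(a,b)$.
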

\begin{proof}
Fix $i,j\in[k]$ and define
\[
A \;\eqdef\; \{x\in\R^d : f_V(x)\in\{i,j\}\}\ \cap\ R_{i,j}.
\]
We claim that
\[
A \subseteq \{x\in\R^d : f_W(x)\neq f_V(x)\}.
\]
Indeed, fix any $x\in A$. Since $x\in R_{i,j}$, we have
\[
\sign((w_i-w_j)\cdot x)\neq \sign((v_i-v_j)\cdot x).
\]

Consider the case where $f_V(x)=i$. Then $v_i\cdot x \ge v_r\cdot x$ for all $r\in[k]$, and in particular
$v_i\cdot x \ge v_j\cdot x$, i.e.\ $(v_i-v_j)\cdot x \ge 0$. Since $(v_i-v_j)\cdot x\neq 0$ a.s., we have
$(v_i-v_j)\cdot x>0$, hence $\sign((v_i-v_j)\cdot x)=+1$. Because $x\in R_{i,j}$, it follows that
$\sign((w_i-w_j)\cdot x)=-1$, i.e.\ $(w_i-w_j)\cdot x<0$ and thus $w_j\cdot x>w_i\cdot x$.
Therefore $f_W(x)\neq i=f_V(x)$.
The same argument holds for $f_V(x)=j$.

Consequently,
\[
\Pr_{x\sim D}[f_W(x)\neq f_V(x)]
\;\ge\;
\Pr_{x\sim D}[A]
\;=\;
\Pr_{x\sim D}\big[f_V(x)\in\{i,j\},\ x\in R_{i,j}\big]\;.
\]
\end{proof}

Now we can prove our theorem about the complexity of the multiclass perceptron.

\subsection{Proof of \Cref{thm:lb-main body}}

\begin{proof}[Proof of \Cref{thm:lb-main body}]
Assume that $D_{y\mid x}$ is realizable by $f$ (even though $f$ is not realizable by a multiclass linear classifier receiving samples from an $f_M$ with $M$ sufficiently large suffices to for any finite sample size). 
Denote the number of mistakes by $m$ and by $n$ the total number of iterations/samples, $m<n$. 
Let $x_1,\dots,x_n$ to be the samples.
Also denote by $h_W$ the classifier returned by \Cref{alg:multi-perceptron} and by $w_i^{(t)}\in \R^{d},i\in [k]$ its weight vectors at iteration $t\in [n]$.

Let $l\in [k]$ denote the number of classes over which we measure the error. 
Note that we will apply \Cref{lem:blowup} only for the first $l$ classes for the projections of $w_{i}^{(t)}$ onto the space $V_{l}\eqdef \mathrm{span}(\{e_i\}_{i=1}^l)$.

Essentially we need to show that (1) the update vectors have bounded norm with high probability and (2) with high probability for $1/\eps^{O(l)}$ iterations $|w_{l-1,l}-w_{l,l}|\geq \eps^{O(l)}$ in order to apply \Cref{lem:blowup}. 
We begin by proving (2) using the anticoncetration of the gaussian. 
\begin{lemma}[Anticoncetration of $|w_{l-1,l}^{(t)}-w_{l,l}^{(t)}|$]\label{cl:anticoncetration}
For every $c\in (0,1)$,
it holds that $\Pr[\exists t\in [n]: |w_{l-1,l}^{(t)}-w_{l,l}^{(t)}|\leq c]=O( cn)$.
\end{lemma}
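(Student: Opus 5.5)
Let $Z_t \eqdef w_{l-1,l}^{(t)}-w_{l,l}^{(t)}$ be the scalar we track. The plan is a union bound over time steps in which each step's bad event is controlled by a fresh Gaussian coordinate. At initialization, $w_{l-1}^{(0)},w_l^{(0)}\sim\cN(0,I)$ are independent, so $Z_0\sim\cN(0,2)$. By anti-concentration, $\Pr[|Z_0|\le c]=O(c)$.

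Next we examine how $Z_t$ changes. An update at step $t$ on example $(x_t,y_t)$ with prediction $\widehat y_t$ adds $x_t$ to $w_{y_t}$ and subtracts it from $w_{\widehat y_t}$. The $l$-th coordinates of $w_{l-1}$ and $w_l$ change only if $\{y_t,\widehat y_t\}\cap\{l-1,l\}\neq\emptyset$. In that case $Z_{t+1}=Z_t+s_t\,x_{t,l}$ with $s_t\in\{-2,-1,0,1,2\}$. Here $s_t=\pm2$ when $\{y_t,\widehat y_t\}=\{l-1,l\}$, $s_t=\pm1$ when exactly one of them is in $\{l-1,l\}$, and $s_t=0$ otherwise. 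Hence
\[
Z_t \;=\; Z_0+\sum_{s<t} s_s\, x_{s,l}.
\]

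The key step is to partition the failure event by the first time $t$ at which $|Z_t|\le c$, and then exploit fresh randomness at that time.

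If $t=0$, the probability is $O(c)$ as computed above.

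If $t\ge1$ and $s_{t-1}=0$, then $Z_t=Z_{t-1}$, so the first failure would already have occurred at $t-1$, which is impossible. So whenever the first failure is at $t\ge1$, we must have $s_{t-1}\neq0$, i.e. $|s_{t-1}|\ge1$.

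In that case we condition on the entire history $\mathcal F$ up to step $t-1$ and on all coordinates of $x_{t-1}$ except $x_{t-1,l}$. The nuisance is that $s_{t-1}$ itself depends on $x_{t-1,l}$, through the prediction $\widehat y_{t-1}$ and through the label $f(x_{t-1})$. To handle this, we bound the bad event by a union over the at most four nonzero values $\sigma\in\{\pm1,\pm2\}$:
\[
\Bigl\{|Z_{t-1}+s_{t-1}x_{t-1,l}|\le c,\ s_{t-1}\neq0\Bigr\}\ \subseteq\ \bigcup_{\sigma\in\{\pm1,\pm2\}}\Bigl\{\bigl|Z_{t-1}+\sigma x_{t-1,l}\bigr|\le c\Bigr\}.
\]
For each fixed $\sigma$, the condition $|Z_{t-1}+\sigma x_{t-1,l}|\le c$ places $x_{t-1,l}$ in an interval of length at most $2c$ determined by $\mathcal F$. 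Since $x_{t-1,l}\sim\cN(0,1)$ is independent of $\mathcal F$ and of the other coordinates, each such event has conditional probability at most $2c/\sqrt{2\pi}$. The total is therefore $O(c)$ per step.

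Summing over $t\in[n]$, together with the initial $O(c)$, gives $O(cn)$.

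The main obstacle is justifying that $x_{t-1,l}$ is still standard Gaussian after conditioning, even though the update sign and the labels depend on it. Replacing $s_{t-1}$ by the union over its finitely many possible values removes this dependence. One also needs $Z_{t-1}$ to be $\mathcal F$-measurable, which holds because the weights at step $t-1$ depend only on past samples and the initialization. Realizability via $f_M$ with large $M$ plays no role here, since the labels only influence $s_{t-1}$.
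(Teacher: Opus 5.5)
Your proposal is correct and follows essentially the same argument as the paper. The paper runs an induction based on $\{\exists i\le t+1: |\Delta^{(i)}|\le c\}\subseteq\{\exists i\le t: |\Delta^{(i)}|\le c\}\cup\{|\Delta^{(t)}|>c,\ |\Delta^{(t+1)}|\le c\}$, which is your first-failure-time decomposition. It then proceeds exactly as you do: it drops the coefficient $0$, union-bounds over the nonzero update coefficients $b\in\{\pm1,\pm2\}$ to decouple them from $x_{t,l}$, and uses that the fresh coordinate $x_{t,l}\sim\cN(0,1)$ is independent of $\Delta^{(t)}$ to get $O(c)$ per step. Your version is slightly cleaner, since it avoids the paper's informal conditioning on $\{|\Delta^{(t)}|>c\}$ and states the independence from the past weights explicitly.
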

\begin{proof} 
Denote by $w^{(t)}_i\in \R^{d}$ the vectors $w_i,i\in [k]$ after the $t$'th update $t\in [n]$. 
Denote the quantity $\Delta^{(t)}\eqdef w_{l-1,l}^{(t)}-w_{l,l}^{(t)}$.
We will prove the statement by induction.

First for the base case note that because of the random initialization (Line \ref{line:init}) we have that $\Delta^{(0)}\sim \mathcal{N}(0,2)$ and therefore  $\Pr[|\Delta^{(0)}|\leq c]\leq \frac{2}{\pi}c/\sqrt{2}\leq 8c$. 

Now assume that for some $t\in \{0,\dots,n-1\}$, $\Pr[\exists i\in [t] : |\Delta^{(i)}|\leq c] \leq 8ct$,  we will argue that $\Pr[\exists i\in [t+1]: |\Delta^{(i)}|\leq c] \leq 8c(t+1)$.
Define $s^{(t)}=\Ind\{f(x_t)=l-1\}-\Ind\{f(x_t)=l\}-\Ind\{f_{w^{(t-1)}}(x_t)=l-1\}+\Ind\{f_{w^{(t-1)}}(x_t)=l\}$ and note that $\Delta^{(t)}= \Delta^{(t-1)} + x_{t,l}s^{(t)}$.
Note that the following event relationship trivially holds
$\{|\Delta^{(t+1)}| \leq c \}\subseteq  \{|\Delta^{(t)}|\leq c \}\cup  \{|\Delta^{(t)}|> c \land  |\Delta^{(t+1)}| \leq c\}$. 
Therefore,  
\begin{align*}
\{\exists i\in [t+1] :|\Delta^{(i)}|\leq c\}&=\{\exists i\in [t+1] :|\Delta^{(i)}|\leq c\} \\
&\subseteq\{\exists i\in [t] :|\Delta^{(i)}|\leq c\}\cup \{|\Delta^{(t)}|> c \land  |\Delta^{(t+1)}| \leq c\}
\end{align*}
Therefore, because $s^{(t)}\in \{0,\pm 1,\pm 2\}$ we have that
\begin{align*}
\Pr[\exists i\in [t+1] :|\Delta^{(i)}|\leq c]&
\leq 8ct+ \sum_{b\in \{0,\pm 1,\pm 2\}}\Pr[|\Delta^{(t)}+bx_{t,l}|\leq c, |\Delta^{(t)}|>c] \\
&\leq 8ct+ \sum_{b\in \{\pm 1,\pm 2\}}\Pr[|\Delta^{(t)}+bx_{t,l}|\leq c, |\Delta^{(t)}|>c] 
\end{align*}
Now fix $b\in \{\pm 1,\pm 2\}$ given that $|\Delta^{(t)}|>c$ and $Z\sim \mathcal{N}(0,1)$ we have 
\begin{align*}
\Pr[|\Delta^{(t)}+bZ| \le c]
= \Pr\left[Z \in \left[\frac{-c-\Delta^{(t)}}{b},\,\frac{c-\Delta^{(t)}}{b}\right]\right]
\le \frac{2c}{|b|}\cdot \sup_{z}\varphi(z)
\le 2c \;.
\end{align*}
Union bounding over the value of $b$, we have 
$$\sum_{b\in \{\pm 1,\pm 2\}}\Pr[|\Delta^{(t)}+bx_{t,l}|\leq c, |\Delta^{(t)}|>c]\leq \sum_{b\in \{\pm 1,\pm 2\}}\Pr[|\Delta^{(t)}+bx_{t,l}|\leq c|\; \mid\; \Delta^{(t)}|>c]\leq 8c \;.$$
Therefore,
$\Pr[\exists i\in [t+1] :|\Delta^{(i)}|\leq c]\leq 8c(t+1)\;.$
As a result we have that the statement follows by induction. 
\end{proof}

Now we apply our \Cref{lem:error2angle} and prove that under some angle separation we have comparable error with respect to $f$.
\begin{claim}[Angle implies error]\label{cl: error2anglef}
Let $1\le i<j\le k$, let $u\coloneqq w_i-w_{j}$ and let $\theta_{i,j}\eqdef \theta(u,e_i)$. 
It holds that $\Pr[f_W(x)\neq f(x)]\geq \theta_{i,j}/2^{2j}$.
\end{claim}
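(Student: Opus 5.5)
We apply \Cref{lem:error2angle} with $f_V=f$ (or $f_M$ with $M$ large, whose boundaries converge to those of $f$) and $D=\cN(0,I)$. This gives $\Pr[f_W(x)\neq f(x)]\ge \Pr[f(x)\in\{i,j\},\, x\in R_{i,j}]$. Here, in the limit, $R_{i,j}=\{\sign(u\cdot x)\neq \sign(x_i)\}$, since the $(i,j)$ boundary of $f$ is $x_i=0$. The task is therefore to lower bound the Gaussian mass of the part of the wedge between $\{u\cdot x=0\}$ and $\{x_i=0\}$ on which $f$ outputs $i$ or $j$.

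We localize to a region $B_{ij}$ on which this is easy. Take
\[
B_{ij}=\{x: x_r<0 \ \forall r<j,\, r\neq i\}\cap\{x_j>0\}.
\]
On $B_{ij}$, the first positive coordinate is $i$ if $x_i>0$ and $j$ otherwise, so $f(x)\in\{i,j\}$ and the label is $\sign(x_i)$. This region constrains only the coordinates $r\le j$ with $r\neq i$. Since $B_{ij}$ is a product event, the error is at least
\[
\Pr[x\in B_{ij},\ \sign(u\cdot x)\neq\sign(x_i)].
\]
Write $u=\cos\theta\, e_i+\sin\theta\, v$ with $v\perp e_i$ a unit vector. Let $g:=v\cdot x$; it is a function of the coordinates other than $x_i$, some of which are constrained by $B_{ij}$. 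Conditioned on those other coordinates, $x_i\sim\cN(0,1)$ is independent. Disagreement occurs exactly when $x_i$ lies between $0$ and $-\tan\theta\, g$, an interval of length $\tan\theta\,|g|$.

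The probability of this event is therefore
\[
\E\big[\Ind\{B_{ij}\}\cdot \Pr_{x_i}[x_i\in(0,-\tan\theta\, g)]\big]\ge \E\big[\Ind\{B_{ij}\}\,\min(\tan\theta|g|,1)\,\phi(1)\big].
\]
It remains to show $\E[\Ind\{B_{ij}\}\min(|g|,1)]\gtrsim 2^{-j}$, with constants absorbed to reach $2^{-2j}$; then $\tan\theta\ge\theta$ gives the claim. This is the main obstacle, because $v$ may have components along the constrained coordinates, so $g$ conditioned on $B_{ij}$ is a sum of a half-normal-type part and a free Gaussian part.

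First, $\Pr[B_{ij}]=2^{-(j-1)}$. Conditioned on $B_{ij}$, the coordinates remain independent, each either a signed half-normal (for constrained coordinates) or a standard Gaussian (for free ones). The variable $g=\sum_r v_r x_r$, with $\|v\|=1$, is then a linear form in independent variables of unit second moment and bounded fourth moment. I would apply Paley--Zygmund (\Cref{fact:paley-zigumnd}) to $g^2$. Since $\E[g^2]\ge \sum v_r^2\var(x_r)\ge 1-2/\pi$ and $\E[g^4]=O(1)$, we get $\Pr[|g|\ge c_0\mid B_{ij}]\ge c_1$ for absolute constants $c_0,c_1>0$. Hence
\[
\E[\Ind\{B_{ij}\}\min(|g|,1)]\ge c_0c_1 2^{-(j-1)}.
\]

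For large $\theta$ (with $\tan\theta\ge 1$, or $\theta>\pi/2$) we instead use the lower bound with $\min(\cdot,1)$ and $\theta\le\pi$; the slack between $2^{-j}$ and $2^{-2j}$ covers the constants, at least for $j\ge$ a small constant. If needed, small $j$ is handled by direct inspection.
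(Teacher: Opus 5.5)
Your route is the paper's: the same region $B_{ij}$, conditioning on all coordinates except $x_i$ to get a conditional disagreement probability of order $\min\{1,\tan\theta\,|g|\}$, and Paley--Zygmund applied to $g^2$ under $B_{ij}$ with $\E[g^2\mid B_{ij}]\ge 1-2/\pi$. You diverge only at the fourth moment, and there your justification is wrong as stated. Under $B_{ij}$ the $j-1$ constrained coordinates are half-normals with means $\pm\sqrt{2/\pi}$. So $\mu:=\E[g\mid B_{ij}]$ can be as large as $\sqrt{2(j-1)/\pi}$ (take $v$ aligned with the vector of these means), and then $\E[g^4\mid B_{ij}]\ge\mu^4=\Theta(j^2)$, not $O(1)$.

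The conclusion still holds. Write $g=\mu+\tilde g$, where $\tilde g$ is a centered linear form in independent variables with bounded fourth moments. Then $\E[g^4\mid B_{ij}]=O((\mu^2+1)^2)$, while $\E[g^2\mid B_{ij}]\ge\mu^2+1-2/\pi$. Hence the Paley--Zygmund ratio is still $\Omega(1)$, and $\Pr[|g|\ge c_0\mid B_{ij}]\ge c_1$.

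With this repair your argument gives $\Omega(2^{-j}\min\{1,\tan\theta\})$, which is stronger than what the paper proves. The paper uses only the crude bound $\E[g^4\mid B_{ij}]\le\E[g^4]/\Pr[B_{ij}]=O(2^{j})$, so its Paley--Zygmund probability is only $\Omega(2^{-j})$. That is exactly where the second factor of $2^{-j}$ in the statement comes from.

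Both arguments establish the bound only up to an absolute constant; the paper's proof ends with $\Omega(2^{-2(j-1)}\theta)$, not the explicit constant $1$. So your appeal to direct inspection for small $j$ is no weaker than the paper's own treatment.
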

\begin{proof}
Define the event
\[
B_{i,j} \;\coloneqq\; \{x_1\le 0,\ldots,x_{i-1}\le 0,\ x_{i+1}\le 0,\ldots,x_{j-1}\le 0,\ x_j>0\}.
\]
Then $\Pr[B_{i,j}]= 2^{-(j-1)}$ by independence of Gaussian coordinates.
Moreover, on $B_{i,j}$ it holds $f(x)=i$ iff $x_i>0$ and $f(x)=j$ iff $x_i\le 0$.

Define
\[
E_{i,j} \;\coloneqq\; B_{i,j} \cap \{\sign(u\cdot x)\neq \sign(x_i)\}.
\]
By \Cref{lem:error2angle} $E_{i,j}\subseteq\{f_W(x)\neq f(x)\}$. Therefore
\[
\Pr[f_W(x)\neq f(x)] \;\ge\; \Pr[E_{i,j}] \;=\; \Pr[B_{i,j}]\cdot \Pr[\sign(u\cdot x)\neq \sign(x_i)\mid B_{i,j}].
\]

It remains to lower bound $\Pr[\sign(u\cdot x)\neq \sign(x_i)\mid B_{i,j}]$ in terms of $\theta_{i,j}$.
Write $u=\alpha e_i + b$ with $b\perp e_i$ and $\alpha=u_i$; set $r\coloneqq b\cdot x$, so $u\cdot x=\alpha x_i+r$.
Note that for $\alpha=0$ the $\Pr[\sign(u\cdot x)\neq \sign(x_i)\mid   B_{i,j}]=1/2$. 
Therefore it suffices to consider the case where $\alpha\neq 0$.
The coordinate $x_i\sim\mathcal N(0,1)$ is independent of $B_{i,j}$, since $B_{i,j}$ does not constrain $x_i$.
Conditioning on $r$, the event $\{\sign(\alpha x_i+r)\neq \sign(x_i)\}$  if $\alpha\neq 0$ contains an interval adjacent to $0$ of length $|r|/\alpha$.
Hence
\[
\Pr[\sign(\alpha x_i+r)\neq \sign(x_i)\mid r,B_{i,j}]\;=\Omega\left(\;\min\Big\{1,\frac{|r|}{|\alpha|}\Big\}\right)\;.
\]
Taking expectation over $r$ yields
\[
\Pr[\sign(u\cdot x)\neq \sign(x_i)\mid B_{i,j}]
\;=\Omega\;
\left(\E\Big[\min\Big\{1,\frac{|r|}{|\alpha|}\Big\}\ \Bigm|\ B_{i,j}\Big]\right).
\]
Note that $r=\|b\| \widehat{b}\cdot x$ for a unit vector $\widehat{b}\in \R^d$ and $\|b\|/\alpha=\tan(\theta_{i,j})$.
We will lower bound the variance of $\widehat{b}\cdot x$ and that will lead us to lower bound the probability of $|\widehat{b}\cdot x|$ being small from Paley-Zigmund (\Cref{fact:paley-zigumnd}).  
For simplicity we may assume that $B=\{x_1\geq 0,x_2\geq 0,\dots, x_{i-1}\geq 0,x_{i+1}\geq 0,\dots,x_j\geq 0\}$ this is because we can change the sign of the corresponding entries of $\widehat{b}$ without affecting its length. 
Specifically, denote $S_{i,j}=\{1,\dots, i-1,i+1,\dots,j\}$ we have
\begin{align*}
\E[|\widehat{b}\cdot x|^2 \mid B_{i,j}]
&= \E\!\left[ \sum_{j=1}^d \widehat{b}_j^2 x_j^2 +2\sum_{j<l}\widehat{b}_j\widehat{b}_l x_jx_l  \,\middle|\, B_{i,j}\right]\\
&=\sum_{j=1}^d \widehat{b}_j^2 +2\sum_{j<l}\widehat{b}_j\widehat{b}_l\E[x_jx_l\mid B_{i,j}]\\
&= \sum_{j=1}^d \widehat{b}_j^2 +\frac{4}{\pi}\sum_{\substack{j<l\\ j,l\in S_{i,j}}}\widehat{b}_j\widehat{b}_l\\
&=1-\frac{2}{\pi}\|\widehat b_{S_{i,j}}\|_2^2 +\frac{2}{\pi}\left(\sum_{j\in S_{i,j}}\widehat{b}_j\right)^2\\
&\ge 1-\frac{2}{\pi}=\Omega(1)\;, 
\end{align*}
where  in the third equality we used that, conditioned on $B_{i,j}$, the coordinates remain independent,
$\E[x_j^2\mid B_{i,j}]=1$ for all $j$, and for $j\neq l$ we have
$\E[x_jx_l\mid B_{i,j}]=\frac{2}{\pi}$ if $j,l\in S_{i,j}$, and $0$ otherwise.
In the fourth equality we completed the square.

Hence, from \Cref{fact:paley-zigumnd} we have that 
\begin{align*}
    \pr[ |\widehat{b}\cdot x|^2\geq \frac{1}{2}\E[|\widehat{b}\cdot x|^2 \mid B_{i,j}]] \geq \frac{1}{4} \frac{\E[|\widehat{b}\cdot x|^2 \mid B_{i,j}]^2 }{\E[|\widehat{b}\cdot x|^4 \mid B_{i,j}]} =\Omega( 2^{-(j-1)})\;, 
\end{align*}
where we used  the fact that $\E[|\widehat{b}\cdot x|^4 \mid B_{i,j}]=\E[|\widehat{b}\cdot x|^4 \Ind\{x\in B_{i,j}\}]/\Pr[B_{i,j}]=O( 2^{j-1})$.

Therefore\[
\E\Big[\min\Big\{1,\frac{|r|}{|\alpha|}\Big\}\ \Bigm|\ B_{i,j}\Big]
\;=\Omega\;
(2^{-(j-1)})\min\Big\{1,\frac{\|b\|_2}{|\alpha|}\Big\}
\;=\;
2^{-(j-1)}
\min\{1,\tan\theta_{i,j}\}
\;=\Omega(2^{-(j-1)}
\theta_{i,j})
\]
Putting everything together,
\[
\Pr[f_W(x)\neq f(x)] \;\ge\; \Pr[B_{i,j}]\cdot \Pr[\sign(u\cdot x)\neq \sign(x_i)\mid B_{i,j}]
=\Omega( 2^{-2(j-1)}\theta_{i,j})\;,
\]
which completes the proof of \Cref{cl: error2anglef}.
\end{proof}

Now we will need the following standard fact about the concentration of the norm of the gaussian to relate norm blow-up to large number of iterations.

First recall from \Cref{lem:blowup} for any $\eps<1/9$ we have that if for $t\in [n]$ $|w_{l-1,l}^{(t)}-w_{l,l}^{(t)}|<c$ and 
$\|w_i^{(t)}\|\leq c/\eps^{(l-1)/2}$ for all $i\in [l]$, then all  pairwise angles for the first $l$ classes can not be smaller than $\eps$.

Consider number of samples $n\leq 1/\eps^{(l-1)/6}$.
If we apply \Cref{cl:anticoncetration}  and $c=\eps^{(l-1)/4}$ we have that with probability at least $1-1/6$,  for all $t\in [n]$
$|w_{l-1,l}^{(t)}-w_{l,l}^{(t)}|\geq c$.
Furthermore, applying \Cref{fact:gaussiannorm} we have that with probability at least $1-1/6$ for all $t\in [n]$ $\|\proj_{V_l} x^{(t)}\|=O( l\sqrt{l}\log(1/\eps))$, since $\proj_{V_l}x$ is distributed as $\cN(0,I_l)$. 

Thus, by the union bound, 
with probability $2/3$ both events apply.
Hence, we have that for all $t\in [n]$ $|w_{l-1,l}^{(t)}-w_{l,l}^{(t)}|\geq \eps^{(l-1)/4}$
and $\|\proj_{V_l}w_i^{(t)}\|=O( n l\sqrt{l}\log(1/\eps))=O( l\sqrt{l}/e^{-(l-1)/5})$.
As a result, by \Cref{lem:blowup} we have that not all pairwise angles between the first $l$ classes can be smaller than $\eps/(l)^{1/\Omega(l)}$, which is at least $\eps^2$ for $\eps\leq 1/l^2$, since $\|\proj_{V_l}w_i^{(t)}\|=O( \eps^{(l-1)/4}l\sqrt{l}/\eps^{(l-1)/2} )$ and $\theta(\proj_{V_l}(w_i^{(t)}-w^{(t)}_{j}),e_i)\leq \theta(w_i^{(t)}-w^{(t)}_{j},e_i) $ for all $i\in [l]$. 
Which from \Cref{cl: error2anglef} implies that the error at least  $\eps^2/2^{2l}$. A simple reparameterization $\eps^2$ to $\eps$   concludes the proof of \Cref{thm:lb-main body}.
\end{proof}

\section{Omitted Details from \Cref{sec upper bound technique}}\label{app upper bound technique}

\subsection{Proof of \Cref{lm error decomposition}}\label{app proof error decomposition}

\begin{lemma}[Restatement of \Cref{lm error decomposition}]
    Let $D$ be a distribution over $\R^d \times [k]$ and $f^*(x) = \argmax_{i \in [k]}(w^{(i)}\cdot x)$ be a multiclass linear classifier such that $\err(f^*) = \opt$. Let $h_W$ be a pseudo $k$-multiclass linear classifier. Let $\eps \in (0,1)$. Suppose that there exist constant $C>1$ and $c \in [0,1/2]$ such that
    for every $i \neq j \in [k]$, $\err_{ij}(h_{ij}) \le C ( {\err^*_{ij}}^{1-c}\log^{c/2}(1/\err^*_{ij}))+\eps$, where $h_{ij} = \sign(w_{ij}\cdot x)$ and $\err^*_{ij} = \inf_{h \in H}(\err_{ij}(h))$, 
    then 
    $\err(h_W)  \le C k^{1+c}\opt^{1-c}\log^{c/2}(k/\opt)
 +k^2\eps$.
\end{lemma}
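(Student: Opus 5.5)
We argue pointwise: whenever $h_W$ errs on an example, at least one pairwise separator involving the true label errs on that example. The resulting union bound expresses $\err(h_W)$ through the partial errors $\err_{yj}(h_{yj})$. Each of these is then compared with the partial errors of the pairwise separators induced by $f^*$, whose total is $O(k\,\opt)$. Finally, concavity of the map $t\mapsto t^{1-c}\log^{c/2}(1/t)$ handles the aggregation.

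\textbf{Step 1: pointwise charging.} Fix $(x,y)$ with $h_W(x)\neq y$. Suppose $h_{yj}$ classified $x$ correctly (as $y$) for every $j\neq y$. Then $s_y(x)=k-1$. Every other class $r$ loses its comparison with $y$, so $s_r(x)\le k-2$. Hence $y$ is the unique argmax and $h_W(x)=y$; ties and boundary points have measure zero under the Gaussian. So some $j\neq y$ has $h_{yj}$ predicting $j$ on $x$. Taking a union bound,
\[
\err(h_W)\le \sum_{i}\sum_{j\neq i}\Pr[y=i,\ h_{ij}(x)\ \text{predicts } j]\le \sum_{i<j}\err_{ij}(h_{ij}).
\]
Each ordered event is one of the two terms of $\err_{ij}$ for the unordered pair $\{i,j\}$, so the double sum over ordered pairs is at most the sum over unordered pairs.

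\textbf{Step 2: benchmark.} Let $w^*_{ij}=w^{(i)}-w^{(j)}$. The halfspace $\sign(w^*_{ij}\cdot x)$ errs on $(x,y)$ with $y=i$ only if $w^{(j)}\cdot x> w^{(i)}\cdot x$, which forces $f^*(x)\neq i=y$; the same holds for $y=j$. Hence
\[
\err^*_{ij}\le e_{ij}:=\Pr[y\in\{i,j\},\ f^*(x)\neq y].
\]
Each example with $f^*(x)\neq y$ is counted in $e_{ij}$ only for pairs containing $y$. There are $k-1$ such pairs, so $\sum_{i<j}e_{ij}\le (k-1)\opt\le k\,\opt$.

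\textbf{Step 3: aggregation.} Set $g(t)=t^{1-c}\log^{c/2}(1/t)$.
\begin{itemize}
\item On $(0,\tau]$ with $\tau$ a small constant, $g$ is nondecreasing and concave (we check $g''\le 0$ there).
\item Monotonicity gives $\err_{ij}(h_{ij})\le C g(e_{ij})+\eps$. Summing over the at most $k^2/2$ pairs, the additive terms contribute at most $k^2\eps$.
\item Let $M=\binom{k}{2}$. Jensen's inequality and monotonicity give
\[
\sum_{i<j} g(e_{ij})\le M\,g\!\left(\frac{k\,\opt}{M}\right)\lesssim k^2\,g\!\left(\frac{2\opt}{k}\right)\approx k^{2}\,\frac{\opt^{1-c}}{k^{1-c}}\,\log^{c/2}\!\left(\frac{k}{\opt}\right)=k^{1+c}\,\opt^{1-c}\,\log^{c/2}\!\left(\frac{k}{\opt}\right).
\]
\end{itemize}
This is the claimed bound, up to absorbing constants into $C$.

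\textbf{Main obstacle.} The delicate step is Step 3. First, concavity and monotonicity of $g$ hold only for small arguments. We handle this with a case split: if $\opt$ exceeds a small constant divided by $k$, the bound exceeds $1$ and is trivial; otherwise $k\,\opt/M$ lies in the good range. Second, constant factors must be tracked, for example between $k\,\opt/M\approx 2\opt/k$ and $\opt/k$, so that they are absorbed into $C$.
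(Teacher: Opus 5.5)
Your proposal is correct and follows the paper's proof: the paper likewise bounds $\err(h_W)$ by the sum of the pairwise errors $\err_{ij}(h_{ij})$, bounds $\err^*_{ij}$ by $A_i+B_j=\Pr[y\in\{i,j\},\,f^*(x)\neq y]$ (your $e_{ij}$) via the halfspace $\sign((w^*_i-w^*_j)\cdot x)$, and applies Jensen with average $2\opt/k$. The only differences are that the paper works over ordered pairs rather than your unordered ones (equivalent) and does not address the concavity/monotonicity issue you flag. For $c\le 1/2$, the map $t\mapsto t^{1-c}\log^{c/2}(1/t)$ is actually concave on all of $(0,1)$ and nondecreasing on $[0,e^{-1/2}]$. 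So if you use the exact sum $\sum_{i<j}e_{ij}=(k-1)\opt$ together with $e_{ij}\le\opt$, the case split is needed only for $\opt>e^{-1/2}$, where the bound exceeds $1$. (The split as you stated it, triggered once $\opt$ exceeds a small constant over $k$, does not by itself make the bound exceed $1$ when $C$ is close to $1$.) The constant then comes out as exactly $C$, with nothing to absorb.
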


\begin{proof}[Proof of \Cref{lm error decomposition}]
For $i, j \in [k]$, let $h^*_{ij} = \sign((w^*_i-w^*_j)\cdot x)$.
Notice that for $i \in [k]$,
$f^*(x) = i$ if and only if $h^*_{ij}(x) = 1$ for all $j \neq i$. This implies that  
\begin{align*}
    \err_{ij}(h^*_{ij}) & := \Pr_{(x,y)\sim D}[h^*_{ij}(x) \neq y, y \in \{i,j\}] 
    = \sum_{r \in [k]} \Pr_{(x,y)\sim D}[h^*_{ij}(x) \neq y, f^*(x) = r,  y \in \{i,j\}] \\
    & \le \Pr_{(x,y)\sim D}[f^*(x) = i, y = j] + \Pr_{(x,y)\sim D}[f^*(x) = j, y = i] + \sum_{r \not \in \{i,j\}}\Pr_{(x,y)\sim D}[f^*(x)=r, y \in \{i,j\} ]   \\
    & = \opt_{ij}+\opt_{ji}+ \sum_{r \not \in \{i,j\}} (\opt_{r,i}+\opt_{r,j}). 
\end{align*}
Here $\opt_{ij}:=\Pr_{(x,y)\sim D}[f^*(x) =i, y = j]$.
This implies that if for $i,j \in [k]$, we have hypothesis $h_{ij}$ such that 
$\err_{ij}(h_{ij}) \le  C ( {\err^*_{ij}}^{1-c}\log^{c/2}(1/\err^*_{ij})) + \eps$, then the induced pseudo $k$-multiclass linear classifier satisfies
\begin{align*}
    \err(h_W)  & = \sum_{j = 1}^k \Pr_{(x,y)\sim D}[h_W(x) \neq y, y=j] \le \sum_{j = 1}^k \sum_{i \neq j} \Pr_{(x,y)\sim D}[w_{ji}\cdot x \le 0, y=j] \\
    & = \frac{1}{2} \sum_{(i,j):i \neq j} (\Pr_{(x,y)\sim D}[w_{ji}\cdot x <0, y=i ]+\Pr_{(x,y)\sim D}[w_{ji}\cdot x >0, y=j ]) \\
    &\le \frac{1}{2} \sum_{(i,j):i \neq j} \left(C( \sum_{r \neq i} \opt_{ri} + \sum_{r \neq j} \opt_{rj} )^{1-c}\log^{c/2}(1/(\sum_{r \neq i} \opt_{ri} + \sum_{r \neq j} \opt_{rj}))+\eps\right) \\
    & = \frac{1}{2}C \sum_{i \neq j}(A_i+B_j)^{1-c}\log^{c/2}(1/(A_i+B_j)) + k^2\eps.
\end{align*}

Here, for each $i,j \in [k], i \neq j$, we denote by $A_i = \sum_{r \neq i}\opt_{ri}$ and $B_j=\sum_{r \neq j}\opt_{rj}$. Notice that $\sum_{i \in [k]} A_i = \sum_{j \in [k]}B_j = \opt$. 
This implies that $\sum_{i \neq j}(A_i + B_j) = 2(k-1)\opt.$
Using Jensen's inequality, we have 
\begin{align*}
    &\sum_{i \neq j}\frac{1}{k(k-1)}(A_i+B_j)^{1-c}\log^{c/2}(1/(A_i+B_j))\\& \le \left(\sum_{i \neq j}\frac{1}{k(k-1)}(A_i+B_j)\right)^{1-c}\log^{c/2}\left(\sum_{i \neq j}\frac{1}{k(k-1)}(A_i+B_j)\right)^{-1} \\
    &= (2\opt/k)^{1-c}\log^{c/2}(k/(2\opt)).
\end{align*}
Together, this implies that 
\begin{align*}
    \err(h_W) \le  C k(k-1)(\opt/k)^{1-c}\log^{c/2}(k/\opt) + k^2\eps \le C k^{1+c}\opt^{1-c}\log^{c/2}(k/\opt)
 +k^2\eps. 
 \end{align*}    
\end{proof}

\subsection{Proof of \Cref{lm convex body}}\label{app proof convex body}

\begin{lemma}[Restatement of \Cref{lm convex body}]
Let $w \in \s^{d-1}$ be any unit vector and let $S$ be any measurable event such that
$
S \subseteq \{x \in \R^d : w \cdot x \ge 0\}.
$
Then
\begin{align*}
\E_{x \sim \cN(0,I)}[(w \cdot x)\Ind\{x \in S\}]
\ge
\sqrt{\pi/2}\,\Pr_{x \sim \cN(0,I)}[x \in S]^2.    
\end{align*}
\end{lemma}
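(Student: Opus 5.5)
Write $t = w\cdot x$ and $p = \Pr[x\in S]$. The expectation $\E[(w\cdot x)\Ind\{x\in S\}]$ depends on $S$ only through the conditional probabilities $g(t) := \Pr[x\in S \mid w\cdot x = t]\in[0,1]$. Since $t\sim\cN(0,1)$ has density $\phi_1$, and $S\subseteq\{w\cdot x\ge 0\}$ forces $g(t)=0$ for $t<0$, we get
\[
\E[(w\cdot x)\Ind\{x\in S\}] = \int_0^\infty t\, g(t)\,\phi_1(t)\,dt, \qquad p = \int_0^\infty g(t)\phi_1(t)\,dt .
\]
This reduces the statement to a one-dimensional extremal problem: minimize $\int_0^\infty t g\phi_1$ over measurable $g:[0,\infty)\to[0,1]$ with $\int_0^\infty g\phi_1 = p$. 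Note that $p\le 1/2$.

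The minimizer is found by a bathtub (rearrangement) argument. Because the weight $t$ is increasing, the minimum is attained by putting all the mass as close to $0$ as possible, i.e.\ $g=\Ind_{[0,a]}$ with $\int_0^a\phi_1 = p$. To justify this, compare any feasible $g$ with $g^*=\Ind_{[0,a]}$. The difference $g^*-g$ is nonnegative on $[0,a]$ and nonpositive on $(a,\infty)$, and it integrates to zero against $\phi_1$. Therefore
\[
\int (t-a)(g-g^*)\phi_1 \ge 0,
\]
which gives $\int t g\phi_1 \ge \int t g^*\phi_1$.

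It remains to lower bound the extremal value $\int_0^a t\phi_1(t)\,dt = (1-e^{-a^2/2})/\sqrt{2\pi}$ in terms of $p$. Since $\phi_1(t)\le 1/\sqrt{2\pi}$, we have $p\le a/\sqrt{2\pi}$, so $a\ge\sqrt{2\pi}\,p$. The lower bound should then follow from this relation between $a$ and $p$. This is the main obstacle: the crude bound $\int_0^a t\phi_1 \ge \phi_1(a)a^2/2$ loses a factor $e^{-a^2/2}$, which is only bounded below when $a$ is bounded. So I would split into cases.

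\begin{itemize}
\item If $p$ is small (say $a\le 1$), use $\int_0^a t\phi_1 \ge \phi_1(a)\,a^2/2 \ge \phi_1(1)\cdot\pi p^2$.
\item If $a$ is large, then $p$ is bounded away from $0$, and the value is at least the constant $\int_0^1 t\phi_1$, which dominates $p^2\le 1/4$ times a constant.
\end{itemize}

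Finally, I would check whether the constant $\sqrt{\pi/2}$ comes out exactly. The function $F(p)=(1-e^{-a(p)^2/2})/\sqrt{2\pi}$ should be examined against $\sqrt{\pi/2}\,p^2$ on $[0,1/2]$. Near $0$, $F\approx a^2/(2\sqrt{2\pi})\approx \pi p^2/\sqrt{2\pi} = \sqrt{\pi/2}\,p^2$, which is consistent with the claimed constant. Then $dF/dp = a$, because $dF/da = a\phi_1(a)$ and $dp/da=\phi_1(a)$. Since $a\ge\sqrt{2\pi}\,p$, integrating gives $F(p)\ge\int_0^p\sqrt{2\pi}\,s\,ds = \sqrt{\pi/2}\,p^2$ exactly. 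This derivative argument replaces the case split and yields the stated constant.
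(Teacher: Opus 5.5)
Your proposal is correct and follows essentially the same route as the paper. Both arguments reduce to the one-dimensional profile $g(t)$, use the bathtub comparison to show that $g=\Ind_{[0,a]}$ is extremal, and then integrate $dF/dp=a$ with $a\ge\sqrt{2\pi}\,p$ to obtain $\sqrt{\pi/2}\,p^2$; the paper phrases the last step as $M''(p)=1/\phi(b)\ge\sqrt{2\pi}$, and your preliminary case split becomes unnecessary once you use this derivative argument.
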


\begin{proof}[Proof of \Cref{lm convex body}]
First note that 
\begin{align*}
\E_{x \sim \cN(0,I)}[(w \cdot x)\Ind\{x \in S\}] = \E_{z \sim \cN(0,1)}[z\E_{x\sim \cN(0,I)}[\Ind\{x \in S\}|w\cdot x=z]]
\end{align*}
Define $g(z) \eqdef \E_{x\sim \cN(0,I)}[\Ind\{x \in S\}\mid w\cdot x=z].$
Note that $\E_{z\sim N(0,1)}[g(z)]=\pr_{x\sim \cN(0,I)}[x\in S]$ and $0\le g\leq 1$ almost everywhere.
Furthermore, since $S\subseteq \{x\in \R^d:w \cdot x\geq 0\}$ we have that 
$g(z)=0$ for every $z<0$.

Thus, it suffices to show the following statement: 
if $g:[0,\infty)\to[0,1]$ is measurable, then
$$
\int_0^\infty z g(z)\phi(z)\,dz
\ge
\sqrt{\pi/2}\left(\int_0^\infty g(z)\phi(z)\,dz\right)^2,
$$
where $\phi(z)$ is the density function of $N(0,1)$.
Fix
$$
p := \int_0^\infty g(z)\phi(z)\,dz.
$$
Choose $b \ge 0$ such that
$$
\int_0^b \phi(z)\,dz = p.
$$
We claim that among all measurable $g:[0,\infty)\to[0,1]$ with $\int_0^\infty g(z)\phi(z)\,dz = p$, the quantity
$
\int_0^\infty z g(z)\phi(z)\,dz
$
is minimized by $g=\Ind\{z \in [0,b]\}$.

Indeed,
\begin{align*}
\int_0^\infty z g(z)\phi(z)\,dz - \int_0^b z\phi(z)\,dz
&=
\int_0^b z(g(z)-1)\phi(z)\,dz + \int_b^\infty z g(z)\phi(z)\,dz.
\end{align*}
On $[0,b]$ we have $z \le b$ and $g(z)-1 \le 0$, hence
$$
z(g(z)-1) \ge b(g(z)-1).
$$
On $[b,\infty)$ we have $z \ge b$ and $g(z)\ge 0$, hence
$$
z g(z) \ge b g(z).
$$
Therefore
\begin{align*}
\int_0^\infty z g(z)\phi(z)\,dz - \int_0^b z\phi(z)\,dz
&\ge
b\int_0^b (g(z)-1)\phi(z)\,dz + b\int_b^\infty g(z)\phi(z)\,dz \\
&=
b\left(\int_0^\infty g(z)\phi(z)\,dz - \int_0^b \phi(z)\,dz\right) \\
&= 0\;,
\end{align*}
where in the last inequality we used the definition of $b$. 
So a minimizing one-dimensional event is the event $z\in[0,b]$.

It remains to prove
$$
\int_0^b z\phi(z)\,dz
\ge
\sqrt{\pi/2}\left(\int_0^b \phi(z)\,dz\right)^2.
$$
Define
$$
p(b) := \int_0^b \phi(z)\,dz,
\qquad
m(b) := \int_0^b z\phi(z)\,dz.
$$
View $m$ as a function of $p$, namely $M(p):=m(b)$ where $p=p(b)$. Then by the Leibnitz derivative rule
$$
M'(p)=\frac{m'(b)}{p'(b)}=\frac{b\phi(b)}{\phi(b)}=b,
$$
and hence
$$
M''(p)=\frac{db}{dp}=\frac{1}{\phi(b)} \ge \frac{1}{\phi(0)}=\sqrt{2\pi}.
$$
Also note that 
$M(0)=0$, $M'(0)=0.$
Therefore, by the fundamental theorem of calculus for every $p\in[0,1/2]$,
$$
M'(p)=M'(0)+\int_0^p M''(u)\,du
\ge
\int_0^p \sqrt{2\pi}\,du
=
\sqrt{2\pi}\,p,
$$
and so
$$
M(p)=M(0)+\int_0^p M'(u)\,du
\ge
\int_0^p \sqrt{2\pi}\,u\,du
=
\frac{\sqrt{2\pi}}{2}p^2
=
\sqrt{\pi/2}\,p^2.
$$ 
This completes the proof of the lemma.
\end{proof}

\subsection{Proof of \Cref{lm gaussian event mean}}\label{app proof conditional mean}

\begin{fact}\label{lm gaussian event mean}
    Let $x\sim \cN(0,\Sigma)$, where $\Sigma\succeq 0$ and $\norm{\Sigma}_2\le 1$. Then for any event $A$,
    $$
        \norm{\E[x\Ind\{x\in A\}]}
        \le
        \frac{1}{\sqrt{2\pi}}.
    $$
\end{fact}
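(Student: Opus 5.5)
The plan is to reduce the vector bound to a one-dimensional statement by duality. The norm of $v \eqdef \E[x\Ind\{x\in A\}]$ equals $\sup_{u\in\s^{d-1}} u\cdot v = \sup_u \E[(u\cdot x)\Ind\{x\in A\}]$. So it suffices to show that for every fixed unit vector $u$ and every event $A$, $\E[(u\cdot x)\Ind\{x\in A\}]\le 1/\sqrt{2\pi}$. (If $v=0$ there is nothing to prove; otherwise take $u=v/\norm{v}$.)

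Fix $u$. The scalar $z\eqdef u\cdot x$ is a centered Gaussian with variance $s^2 = u^\top\Sigma u\le \norm{\Sigma}_2\le 1$. The degenerate case $s=0$ gives $z=0$ almost surely, and the bound is trivial. We have pointwise $z\Ind\{x\in A\}\le z\Ind\{z>0\} = \max(z,0)$, since dropping the indicator can only help when $z>0$ and restricting to $z>0$ removes the nonpositive contributions. Hence
\[
\E[(u\cdot x)\Ind\{x\in A\}] \le \E[\max(z,0)] = s\,\E_{g\sim\cN(0,1)}[\max(g,0)] = \frac{s}{\sqrt{2\pi}} \le \frac{1}{\sqrt{2\pi}},
\]
where we use $\int_0^\infty g\phi(g)\,dg=\phi(0)=1/\sqrt{2\pi}$.

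Taking the supremum over $u$ gives the claim. There is essentially no obstacle. The only points needing care are the sign trick (the event $A$ is arbitrary and may correlate with $z$, which is why we bound by $\max(z,0)$ rather than attempting any conditioning) and checking that the variance bound $u^\top\Sigma u\le 1$ follows from $\norm{\Sigma}_2\le 1$ together with $\Sigma\succeq 0$.
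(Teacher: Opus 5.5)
Your proof is correct and is essentially the paper's argument: normalize the mean vector to get a unit direction $u$, bound $\E[(u\cdot x)\Ind\{x\in A\}]$ pointwise by $\E[(u\cdot x)_+]=\sqrt{u^\top\Sigma u}/\sqrt{2\pi}$, and conclude with $u^\top\Sigma u\le\norm{\Sigma}_2\le 1$. Your added observation that the pointwise bound works for any event, even one that also depends on the labels, is accurate, and it is how the paper applies the fact.
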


\begin{proof}
    If $\E[x\Ind\{x\in A\}]=0$, the claim is immediate. Otherwise, let
    $$
        u:=\frac{\E[x\Ind\{x\in A\}]}{\norm{\E[x\Ind\{x\in A\}]}}.
    $$
    Then
    $$
        \norm{\E[x\Ind\{x\in A\}]}
        =
        \E[(u\cdot x)\Ind\{x\in A\}]
        \le
        \E[(u\cdot x)_+].
    $$
    Since $u\cdot x\sim \cN(0,u^\top\Sigma u)$, we have
    $$
        \E[(u\cdot x)_+]
        =
        \frac{\sqrt{u^\top\Sigma u}}{\sqrt{2\pi}}
        \le
        \frac{\sqrt{\norm{\Sigma}_2}}{\sqrt{2\pi}}
        \le
        \frac{1}{\sqrt{2\pi}}.
    $$
    This concludes the proof.
\end{proof}

\subsection{Proof of \Cref{lm gradient descent}}\label{app proof gradient descent}

\begin{lemma}[Restatement of \Cref{lm gradient descent}]
Let $w^*,w_i \in \s^{d-1}$ and let $G_i$ be a random vector drawn from some distribution $\mathcal{D}$ such that with probability $1$, $G_i \perp w_i$. Let $g_i$ be the mean of $G_i$. Let $\widehat{g_i}$ be the empirical mean of $G_i$ and $\mu_i>0$. The update rule $w_{i+1} = \proj_{\s^{d-1}} (w_{i} + \mu_i\widehat{g_i})$ satisfies the following property,
    \begin{align*}
        \norm{w_{i+1}-w^*}^2 \le \norm{w_{i}-w^*}^2 - 2\mu_i g_i \cdot w^* + 2\mu_i\norm {g_i-\widehat{g_i}} +\mu_i^2\norm{\widehat{g_i}}^2.
    \end{align*}
In particular, if $g_i \cdot w^* \ge \alpha>0$ and $\norm{g_i} \le \beta$  and $\norm{g_i-\widehat{g_i}} \le \min(\alpha,\beta)/2$, then by setting $\mu_i = \alpha/(C\beta^2)$ for some large enough constant $C>0$, we have $\norm{w_{i+1}-w^*}^2 \le \norm{w_{i}-w^*}^2 - \alpha^2/(C\beta^2)$.
\end{lemma}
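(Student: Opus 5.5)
The plan is to compare $w_{i+1}$ with the unprojected point $v := w_i + \mu_i \widehat{g_i}$. I will show that projecting onto the sphere does not increase the distance to $w^*$, and then expand $\norm{v-w^*}^2$ directly.

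First I record orthogonality. Since $G_i \perp w_i$ almost surely, the mean $g_i$ and the empirical mean $\widehat{g_i}$ are both orthogonal to $w_i$. Hence $\norm{v}^2 = 1 + \mu_i^2\norm{\widehat{g_i}}^2 \ge 1$. For a point with $\norm{v}\ge 1$, $\proj_{\s^{d-1}}(v) = v/\norm{v}$ coincides with the Euclidean projection onto the closed unit ball. That projection is nonexpansive, and $w^*$ lies in the ball and is fixed by it. Therefore
\[
\norm{w_{i+1}-w^*}^2 \le \norm{v-w^*}^2 = \norm{w_i-w^*}^2 + 2\mu_i \widehat{g_i}\cdot(w_i - w^*) + \mu_i^2\norm{\widehat{g_i}}^2 .
\]

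Next I handle the cross term. Using $\widehat{g_i}\cdot w_i = 0$, it equals $-2\mu_i \widehat{g_i}\cdot w^*$. I write this as
\[
-2\mu_i \widehat{g_i}\cdot w^* = -2\mu_i g_i\cdot w^* + 2\mu_i (g_i-\widehat{g_i})\cdot w^* .
\]
Since $\norm{w^*}=1$, Cauchy--Schwarz gives $(g_i-\widehat{g_i})\cdot w^* \le \norm{g_i-\widehat{g_i}}$. This yields the first inequality of the lemma.

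For the quantitative consequence, I plug the assumptions into that inequality:
\begin{itemize}
\item $g_i\cdot w^*\ge\alpha$;
\item $\norm{g_i-\widehat{g_i}}\le \alpha/2$;
\item $\norm{\widehat{g_i}}\le \norm{g_i}+\beta/2 \le 3\beta/2$.
\end{itemize}
The linear terms combine to at most $-2\mu_i\alpha + \mu_i\alpha = -\mu_i\alpha$, and the quadratic term is at most $\tfrac94\mu_i^2\beta^2$. With $\mu_i = \alpha/(C\beta^2)$ the decrease is at least
\[
\frac{\alpha^2}{C\beta^2} - \frac{9\alpha^2}{4C^2\beta^2} \ge \frac{\alpha^2}{2C\beta^2} \quad \text{for } C\ge 5 .
\]
This is the claimed $\Omega(\alpha^2/\beta^2)$ decrease. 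To get it in exactly the stated form $\alpha^2/(C\beta^2)$, I would take $\mu_i = \alpha/(C'\beta^2)$ with $C'=C/2$, which requires $C\ge 10$ and is allowed since $C$ is a sufficiently large constant.

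The only step needing care is the projection step. One must observe that $\norm{v}\ge1$, which holds precisely because the step direction is orthogonal to $w_i$; this lets the sphere projection be treated as the nonexpansive ball projection. Everything else is routine expansion.
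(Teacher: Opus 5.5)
Your proof is correct and follows the paper's argument. You compare $w_{i+1}$ with the unprojected point $w_i+\mu_i\widehat{g_i}$, expand the square, use $\widehat{g_i}\cdot w_i=0$ and Cauchy--Schwarz, and then plug in the bounds $-\mu_i\alpha$ and $\tfrac94\mu_i^2\beta^2$.

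You are also more careful than the paper on two points. First, the paper asserts the projection inequality without noting that $\norm{w_i+\mu_i\widehat{g_i}}\ge 1$ is what makes it valid. Second, the paper claims a decrease of exactly $\alpha^2/(C\beta^2)$ with the same $C$ as in the step size, but the bound only yields $\tfrac{\alpha^2}{C\beta^2}\bigl(1-\tfrac{9}{4C}\bigr)$, so your rescaling of the step-size constant is the right fix.
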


\begin{proof}[Proof of \Cref{lm gradient descent}]
    We first observe that 
\begin{align*}
    \norm{w_{i+1}-w^*}^2 = \norm{\proj_{\s^{d-1}}(w_{i} + \mu_i\widehat{g_i}) - \proj_{\s^{d-1}} (w^*)}^2 \le \norm{w_{i} + \mu_i\widehat{g_i}-w^*}^2. 
\end{align*}
It remains to upper bound $\norm{w_{i} + \mu_i\widehat{g_i}-w^*}^2$. We have 
\begin{align*}
    \norm{w_{i} + \mu_i\widehat{g_i}-w^*}^2 & = \norm{w_{i}-w^*}^2 + 2\mu_i\widehat{g_i}\cdot(w_i-w^*)+ \mu_i^2\norm{\widehat{g_i}}^2 \\ 
    & \le  \norm{w_{i}-w^*}^2 - 2\mu_i\widehat{g_i}\cdot w^*+ \mu_i^2\norm{\widehat{g_i}}^2 \\
    & =  \norm{w_{i}-w^*}^2 - 2\mu_ig_i \cdot w^*+ 2\mu_i(g_i-\widehat{g_i}) \cdot w^* +\mu_i^2\norm{\widehat{g_i}}^2 \\
    & \le \norm{w_{i}-w^*}^2 - 2\mu_ig_i \cdot w^*+ 2\mu_i\norm {g_i-\widehat{g_i}} +\mu_i^2\norm{\widehat{g_i}}^2. 
\end{align*}
Here, in the second equality, we use the fact that $\widehat{g_i} \cdot w_i = 0$. Since $g_i \cdot w^* \ge \alpha>0$ and $\norm{g_i} \le \beta$  and $\norm{g_i-\widehat{g_i}} \le \min(\alpha,\beta)/2$, we have $2\mu_i\norm{g_i-\widehat{g_i}} \le \mu_i\alpha$ and $\norm{\widehat{g_i}} \le \norm{g_i} + \norm{g_i-\widehat{g_i}} \le 3\beta/2$. Therefore,
\begin{align*}
    \norm{w_{i+1}-w^*}^2
    &\le \norm{w_{i}-w^*}^2 - \mu_i\alpha + \frac{9}{4}\mu_i^2\beta^2.
\end{align*}
By setting $\mu_i = \alpha/(C\beta^2)$ for a sufficiently large enough constant $C>0$,  we have $\norm{w_{i+1}-w^*}^2 \le \norm{w_{i}-w^*}^2 - \alpha^2/(C\beta^2)$. 
This completes the proof of \Cref{lm gradient descent}.
\end{proof}

\subsection{Proof of \Cref{lm pairwise}}\label{app proof pairwise}

\begin{lemma}[Restatement of \Cref{lm pairwise}]
    Consider the problem of agnostically learning a $k$-multiclass linear classifier under the standard Gaussian distribution. Given any $i \neq j \in [k]$ and $\eps \in (0,1),\delta \in (0,1)$, \Cref{alg pairwise} draws $d\poly(1/\eps)\log(1/\delta)$ examples from $D$, runs in $\poly(m)$ time and outputs a halfspace $\widehat{h}_{ij}$ such that with probability at least $1-\delta$, $\err_{ij}(\widehat{h}_{ij}) \le \tilde O (\sqrt{\err^*_{ij}}) + \eps$, where $\err^*_{ij} = \inf_{h \in H}(\err_{ij}(h))$.
\end{lemma}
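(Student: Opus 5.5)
We analyze \Cref{alg pairwise} as a projected-gradient method on the sphere, driven by \Cref{lm gradient descent} with a potential $\norm{w^{(t)}-w^*}^2$. Here $w^*$ is the normal of an optimal halfspace $h^*$ for the $(i,j)$ problem, with $\err_{ij}(h^*)=\eta:=\err^*_{ij}$ (up to an arbitrarily small slack). Fix an iterate $w=w^{(t)}$. Let $g=\E[\proj_{w^\perp}\widehat v]$ be the population update direction. The vector $\widehat v$ is built from $x\Ind\{y=i,w\cdot x\le 0\}-x\Ind\{y=j,w\cdot x\ge 0\}$.

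We split the misclassified examples according to whether $h^*$ is also wrong on them. Let $S_i=\{y=i,\ w\cdot x\le 0,\ w^*\cdot x> 0\}$ and $S_j=\{y=j,\ w\cdot x\ge 0,\ w^*\cdot x<0\}$. The remaining mistakes of $h_w$ are ones where $h^*$ errs too, so they lie in an event of mass at most $\eta$. Then
\[
g\cdot w^*=\E\big[(w^*\cdot x)\Ind_{S_i}\big]+\E\big[(-w^*\cdot x)\Ind_{S_j}\big]+R,
\]
where $R$ collects the terms from the noisy examples.

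\begin{itemize}
\item The projection onto $w^\perp$ does not change $w^*\cdot(\cdot)$ up to a term $(w\cdot w^*)(w\cdot x)$. On $S_i\cup S_j$ this term has sign opposite to the signal, so it can only help. I would verify this carefully, or instead use the decomposition $w^*=\cos\theta\,w+\sin\theta\,u$ and work with $u$.
\item By \Cref{lm convex body}, applied to $S_i\subseteq\{w^*\cdot x\ge 0\}$ and to $S_j$ with $-w^*$, the first two terms are at least $\sqrt{\pi/2}\,(\Pr[S_i]^2+\Pr[S_j]^2)\ge \Omega\big((\err_{ij}(h_w)-\eta)^2\big)$.
\item By \Cref{fact:simpleperturbation}, with $B=\R$ and the event being the $\eta$-mass noisy set, $|R|\le O(\eta\sqrt{\log(1/\eta)})$.
\end{itemize}

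Hence, whenever $\err_{ij}(h_w)\ge C\sqrt{\eta\log(1/\eta)}^{1/2}\cdot\sqrt{\cdot}$, which is $\tilde O(\sqrt\eta)+\eps$, we get $g\cdot w^*\ge\alpha:=\Omega(\eps^2)$. By \Cref{lm gaussian event mean}, $\norm g\le\beta=O(1)$.

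Next we control sampling and iterate \Cref{lm gradient descent}. Each coordinate of $\widehat v$ is an average of subgaussian vectors. So with $N=d\,\poly(1/\eps)\log(T/\delta)$ samples, $\norm{\widehat g-g}\le\alpha/2$ at every iteration with probability $1-\delta/2$, by a union bound over the $T$ steps. With step size $\mu=\Theta(\eps^2)$, every iteration in which the error exceeds the threshold decreases $\norm{w-w^*}^2$ by $\Omega(\eps^4)$. Since this potential is at most $4$, within $T=\poly(1/\eps)$ iterations some iterate $w^{(t)}$ must satisfy $\err_{ij}(h_{w^{(t)}})\le\tilde O(\sqrt\eta)+\eps/2$.

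It remains to handle the final selection step. The validation sample estimates every $\err_{ij}(h_{w^{(t)}})$ within $\eps/4$ simultaneously, by Hoeffding and a union bound over $T$. So the selected $t^*$ has error at most $\tilde O(\sqrt\eta)+\eps$. The random initialization needs no special property; the $\log(1/\delta)$ repetition only serves to boost confidence.

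The main obstacle is the lower bound on the signal $g\cdot w^*$. First, the projection onto $w^\perp$ must not destroy the correlation; I would show that the dropped component has a favorable sign on $S_i\cup S_j$. Second, the split between clean and noisy mistakes must be clean enough that \Cref{lm convex body} applies to sets contained in a half-space of $w^*$. This quadratic signal $(\err-\eta)^2$ against the linear noise term $\eta\sqrt{\log(1/\eta)}$ is exactly what produces the $\tilde O(\sqrt{\eta})$ guarantee.
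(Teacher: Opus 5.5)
Your overall route is the paper's: projected gradient with potential $\|w^{(t)}-w^*\|^2$, \Cref{lm convex body} on the clean mistakes, \Cref{fact:simpleperturbation} on the noisy ones, \Cref{lm gaussian event mean} and \Cref{lm gradient descent}, then validation-based selection. Applying \Cref{lm convex body} to $w^*$ itself, together with the sign of the projection correction, is a nice variant. It gives $g\cdot w^*\ge\Omega(\eps^2)$ directly, without the factor $b=\sin\theta(w,w^*)$ that the paper has to remove by separately arguing $b\ge\eps/2$.

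There is one genuine gap. The sign claim in your first bullet holds only when $w\cdot w^*\ge 0$, so your closing claim that the random initialization ``needs no special property'' is false. Indeed $\proj_{w^\perp}(x)\cdot w^*=w^*\cdot x-(w\cdot w^*)(w\cdot x)$. On $S_i$, where $w\cdot x\le 0$, the correction is nonnegative only if $w\cdot w^*\ge 0$, and symmetrically on $S_j$.

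For an obtuse iterate the update can genuinely point the wrong way. Write $w^*=aw+bu$ with $a<0<b$ and $u\perp w$. Let $\{y=i\}=\{w\cdot x<0,\ u\cdot x<0,\ w^*\cdot x>0\}$, which is a wedge of positive mass since $a(w\cdot x)>0$ there, and assign label $j$ nowhere. Then $\err^*_{ij}=0$ and $\err_{ij}(h_w)=\Pr[y=i]$ is not small, yet $g\cdot w^*=b\,\E[(u\cdot x)\Ind\{y=i\}]<0$.

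The fix is what the paper does, in two parts.
\begin{enumerate}
\item A uniform $w^{(0)}$ lands in $\{w\cdot w^*>0\}$ only with probability $1/2$, so a single run fails with constant probability. That is why one uses $O(\log(1/\delta))$ independent initializations and lets the final validation choose among all their iterates.
\item For a good run, assume toward contradiction that every iterate has error above the threshold. Induction on the potential then gives $\|w^{(t)}-w^*\|^2\le\|w^{(0)}-w^*\|^2<2$, i.e.\ $w^{(t)}\cdot w^*>0$ for all $t$. So your sign argument, and hence the $\Omega(\eps^4)$ potential decrease, is available at every step.
\end{enumerate}

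With this added, your proof goes through. You should also write the garbled threshold cleanly as $\err_{ij}(h_w)\ge \eta+C\bigl(\eta^{1/2}\log^{1/4}(1/\eta)+\eps\bigr)$.
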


\begin{proof}[Proof of \Cref{lm pairwise}]
    Let $h^*(x) = \sign(w^*\cdot x)$ be the optimal halfspace such that $\err^*_{ij}:=\err_{ij}(h^*) = \inf_{h \in \H} \err_{ij}(h)$. 
    In the rest of the proof, we will also treat the image of $h^*$ to be $\{i,j\}$. That is to say $h^*(x) = i$ if $w^* \cdot x \ge 0$ and $h^*(x) = j$, otherwise.

Let $w \in \s^{d-1}$ such that $w\cdot w^* \ge 0$.
We write $w^* = a w + b u$, where $a,b \in (0,1)$, $a^2 + b^2 = 1$ and $u \perp w, u \in \s^{d-1}$. 
We define the pair wise mistake cones with respect to the clean labels
\begin{align*}
    M_i(w) := \{x \in \mathbb{R}^d : h^*(x)=i,\; w\cdot x \le 0\},
\qquad
M_j(w) := \{x \in \mathbb{R}^d : h^*(x)=j,\; w\cdot x \ge 0\}\;.
\end{align*}

Next, we define random vector 
\begin{align*}
G_{ij}(w):=\proj_{w^\perp}(x\Ind\{w\cdot x\le 0,  y=i\}-x\Ind\{w \cdot x \ge 0, y = j\}).    
\end{align*}
Denote by $g_{ij}(w) = \E_{(x,y)\sim D}[G_{ij}(w)]$ and $g^*_{ij}(w) = \E_{x\sim \cN(0,I)}[x \Ind\{x \in M_i(w), h^*(x) = y\} - x \Ind\{x \in M_j(w), h^*(x) = y\}]$.
We next lower bound the correlation between $g_{ij}(w)$ and $w^*$.
Since $G_{ij}(w) \perp w$, we have that 
\begin{align*}
    g_{ij}(w) \cdot w^* & = g_{ij}(w) \cdot (a w + b u) = b g_{ij}(w) \cdot u \\&\ge b g^*_{ij}(w) \cdot u - b \E_{(x,y)\sim D}[\abs{x\cdot u}\Ind\{h^*(x) \neq y, y \in \{i,j\}\}]  \\
    & = b \left(g^*_{ij}(w) \cdot u -  \E_{(x,y)\sim D}[\abs{x\cdot u}\Ind\{h^*(x) \neq y, y \in \{i,j\}\}]\right) \;.
\end{align*}
Since $\Pr_{(x,y)\sim D}[h^*(x) \neq y, y \in \{i,j\}] = \err_{ij}^*$, by \Cref{fact:simpleperturbation}, we have that 
\begin{align*}
    \E_{(x,y)\sim D}[\abs{x\cdot u}\Ind\{h^*(x) \neq y, y \in \{i,j\}\}] \le O(\err^*_{ij}\sqrt{\log(1/\err^*_{ij})}).
\end{align*}
Notice that for every $x \in M_i(w)$, $w^*\cdot x \ge 0$ and for every $x \in M_j(w)$, $w^*\cdot x \le  0$.
This implies that for every $x \in M_i(w)$, $ u \cdot x \ge 0$ and for every $x \in M_j(w)$, $ u \cdot x \le 0$.
Thus, by \Cref{lm convex body}, and Jensen's inequality, we have 
\begin{align*}
    g^*_{ij}(w) \cdot u & \ge \sqrt{\pi/2}\left( \Pr_{x \sim \cN(0,I)}[x \in M_i(w), h^*(x) = y]^2 + \Pr_{x \sim \cN(0,I)}[x \in M_j(w), h^*(x) = y]^2\right) \\
    & \ge \Omega \left(\Pr_{x \sim N(0,I)}[x \in M_i(x) \cup M_j(x), h^*(x) = y]^2\right) \;.
\end{align*}
Notice that for the halfspace $h_w(x) = \sign(w\cdot x)$
\begin{align*}
  \Pr_{(x,y) \sim D}[x \in M_i(x) \cup M_j(x), h^*(x) = y] + \err^*_{ij} \ge   \err_{ij}(h_w) \ge \Pr_{(x,y) \sim D}[x \in M_i(x) \cup M_j(x), h^*(x) = y].
\end{align*}
This implies that when $\err_{ij}(h_w)>\tilde \Omega(\sqrt{\err^*_{ij}}) + \eps$,
we have 
\begin{align*}
  g_{ij}(w) \cdot w^* \ge b( g^*_{ij}(w) \cdot u - \E_{(x,y)\sim D}[\abs{x\cdot u}\Ind\{f^*(x) \neq y, y \in \{i,j\}\}] ) \ge b\eps^2 .
\end{align*}
Notice that if $b<\eps/2$, then we have $\Pr_{(x,y)\sim D}[x \in M_i(x) \cup M_j(x), h^*(x) = y] \le \eps/2$ and we have $\err_{ij}(h_w)>\err^*_{ij} + \eps/2$, which gives a contradiction. This implies that $g_{ij}(w) \cdot w^* \ge \eps^3/2$.
Furthermore, by \Cref{lm gaussian event mean}, we have $\norm{g_{ij}(w)} \le O(1)$.
Notice that by Hoeffding's inequality, for every $\delta>0$, by taking $d\poly(1/\eps)\log(1/\delta)$ examples from $D$, the empirical mean estimation $\widehat{g}_{ij}(w)$ satisfies $\norm{g_{ij}(w) - \widehat{g}_{ij}(w)}< \eps^3/4$.

We now show the correctness of \Cref{alg pairwise}, let $T = \poly(1/\eps)$.
Without loss of generality, we assume that $w^0_{ij}\cdot w>0$, because otherwise, by randomly sample $O(\log(1/\delta))$ random direction from $\s^{d-1}$, with probability at least $1-\delta/100$, there must be at least one of such $w^0_{ij}$.

Suppose that for $t \in [T]$, we have $\err_{ij}(h^t_{ij}) = \err_{ij}(\sign(w^t_{ij}\cdot x))>\tilde \Omega(\sqrt{\err^*_{ij}}) + \eps/4$.
Then, for $t \in [T]$, we have 
$g_{ij}(w_{ij}^t)\cdot w^*>\eps^3/2$, $\norm{g_{ij}(w_{ij}^t)}\le O(1)$
and with probability at least $1-\delta/(100T)$, $\norm{g_{ij}(w^t_{ij}) - \widehat{g}_{ij}(w^t_{ij})}< \eps^3/4$.
By \Cref{lm gradient descent}, we have 
\begin{align*}
    \norm{w^*-w^{t+1}_{ij}}^2 \le \norm{w^*-w^{t}_{ij}}^2 - \Omega(\eps^6).
\end{align*}
Since $\norm{w^0_{ij}-w^*} \le 2$, this implies that after $T$ rounds, $\norm{w^*-w^{t+1}_{ij}}^2 <0$, which gives contradiction.
This implies that there must be some $t \in [T]$ such that $\err_{ij}(\sign(w^t_{ij}\cdot x))\le \tilde O(\sqrt{\err^*_{ij}}) + \eps/4$.

Finally, by drawing $\poly(\log(1/\delta)/\eps)$ examples from $D$, for each $t \in [T]$, the empirical error of $\sign(w^t_{ij}\cdot x)$ is $\eps/4$ close to $\err_{ij}(\sign(w^t_{ij}\cdot x))$. Thus, the one with the smallest empirical error must have  $\err_{ij}(\sign(w^t_{ij}\cdot x))\le \tilde O(\sqrt{\err^*_{ij}}) + \eps$. This means with probability at least $1-\delta$, we learn a halfspace $\sign(\hat w_{ij}\cdot x)$ such that $\err_{ij}(\sign(\hat w_{ij}\cdot x))\le \tilde O(\sqrt{\err^*_{ij}}) + \eps$.

\end{proof}

\subsection{Proof of \Cref{th perceptron}}\label{app proof perceptron}

\begin{theorem}[Restatement of \Cref{th perceptron}]
Consider the problem of agnostic learning a $k$-multiclass linear classifier under the standard Gaussian distribution. There is an algorithm such that for every $\eps,\delta \in (0,1)$, it draws a set of $m = d\poly(1/\eps,k)\log(1/\delta)$ examples from $D$ runs in $\poly(m)$ time and outputs a hypothesis $\widehat{h}$ such that with probability at least $1-\delta$, $\err(\widehat{h}) \le \Tilde{O}(k^{3/2}\sqrt{\opt})+\eps$.
\end{theorem}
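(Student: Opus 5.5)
The plan is to run \Cref{alg pairwise to pseudo} with \Cref{alg pairwise} as the \textsc{Pairwise Learning} subroutine, and then combine \Cref{lm pairwise} with \Cref{lm error decomposition} for $c=1/2$.

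\textbf{Step 1 (pairwise guarantees).} For each of the $\binom{k}{2}$ pairs $i<j$, call \Cref{alg pairwise} with accuracy $\eps' = \eps/k^2$ and confidence $\delta' = \delta/k^2$. \Cref{lm pairwise} then gives, with probability $1-\delta'$,
\[
\err_{ij}(\widehat h_{ij}) \le C\sqrt{\err^*_{ij}}\,\log^{1/4}(1/\err^*_{ij}) + \eps' .
\]
Here the $\tilde O$ in \Cref{lm pairwise} comes from the $\sqrt{\log(1/\err^*_{ij})}$ factor in \Cref{fact:simpleperturbation}. The progress condition is $\err_{ij}^2 \gtrsim \err^*_{ij}\sqrt{\log(1/\err^*_{ij})}$, so the factor appears exactly in the form $\err^{*1/2}\log^{1/4}$ required by \Cref{lm error decomposition} with $c=1/2$. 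For the reverse pair $(j,i)$ use $-\widehat w_{ij}$; then $\err_{ji}(-\widehat w_{ij})$ and $\err_{ij}(\widehat w_{ij})$ agree up to the measure-zero boundary, and the hypothesis of \Cref{lm error decomposition} holds for every ordered pair. A union bound over the $\binom{k}{2}$ calls gives overall success probability $1-\delta$.

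\textbf{Step 2 (aggregation).} Apply \Cref{lm error decomposition} with $c=1/2$:
\[
\err(h_{\widehat W}) \le C k^{3/2}\sqrt{\opt}\,\log^{1/4}(k/\opt) + k^2\eps' = \tilde O(k^{3/2}\sqrt{\opt}) + \eps .
\]
The bound used inside that lemma, $\err^*_{ij} \le \err_{ij}(h^*_{ij}) \le \opt_{ij}+\opt_{ji}+\sum_{r}(\opt_{ri}+\opt_{rj})$, is already established in its proof, so monotonicity of $t\mapsto t^{1/2}\log^{1/4}(1/t)$ on small $t$ carries the pairwise guarantee through. When $\err^*_{ij}$ is not small, the bound is trivially at least $1$.

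\textbf{Step 3 (complexity).} Each call uses $d\,\poly(k^2/\eps)\log(k^2/\delta)$ samples and polynomial time. Summing over $k^2$ calls gives $m = d\,\poly(k/\eps)\log(1/\delta)$ and total time $\poly(m)$. Evaluating $h_{\widehat W}$ costs $O(k^2 d)$.

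\textbf{Main obstacle.} The only delicate point is checking that the $\tilde O$ in \Cref{lm pairwise} really has the form $\err^{*\,1-c}\log^{c/2}$ required by \Cref{lm error decomposition}, including the monotonicity and concavity needed for the Jensen step. If that bookkeeping is awkward, I would first upper bound $\err^*_{ij}$ by the $\opt$-sum, then replace $\log(1/\err^*_{ij})$ with $\log(k/\opt)$ via a case split on whether $\err^*_{ij}\ge \opt/k^2$. Only polylogarithmic factors absorbed by $\tilde O$ are lost this way.
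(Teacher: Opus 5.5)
Your proposal is correct and matches the paper's proof. You run \Cref{alg pairwise to pseudo} with \Cref{alg pairwise} at accuracy $\eps/k^2$ and confidence $\delta/k^2$, take a union bound over pairs, and feed the per-pair $\tilde O(\sqrt{\err^*_{ij}})$ guarantee of \Cref{lm pairwise} into \Cref{lm error decomposition} with $c=1/2$. Your extra bookkeeping (the $\err^{*1/2}\log^{1/4}$ form, reverse pairs via $-\widehat w_{ij}$, monotonicity and Jensen, sample counting) only spells out what the paper's short proof leaves implicit.
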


\begin{proof}[Proof of \Cref{th perceptron}]
    We show that \Cref{alg pairwise to pseudo} using \Cref{alg pairwise} as a halfspace learner satisfies the statement. For each $i \neq j \in [k]$, we know that with probability at least $1-\delta/k^2$, we have $\widehat{h}_{ij} = \sign(\widehat{w}_{ij}\cdot x)$ satisfies $\err_{ij}(\sign(\hat w_{ij}\cdot x))\le \tilde O(\sqrt{\err^*_{ij}}) + \eps/k^2 $, where $\err^*_{ij} = \inf_{h \in \H}(\err_{ij}(h))$. By \Cref{lm error decomposition}, we have that the pseudo $k$-multiclass linear classifier $h_{\widehat{W}}$ satisfies $\err(h_{\widehat{W}}) \le \tilde O(k^{3/2}\sqrt{\opt})+\eps$.
\end{proof}

\section{Omitted Details from \Cref{sec boost}}\label{app boost}

\subsection{Proof of \Cref{lm localization}}\label{app proof localization}

\begin{algorithm}[htbp]
 		\caption{\textsc{Pairwise Localization} (Learn to distinguish class $i$ from class $j$ for $k=3$)}\label{alg pairwise localization}
		\begin{algorithmic} [1]
\State\textbf{Input:} Sample access to distribution $D$ over $\R^d\times [k]$, labels $i,j\in [k]$, error $\eps$, confidence $\delta$. 
\State\textbf{Output:} $w_{ij} \in \s^{d-1}$ that approximately separates class $i$ and class $j$.
\State Let $C$ be a sufficiently large constant and let $N=d/\eps^C\log(1/\delta)$, $T=(1/\eps)^C$,
\State Let $\mathcal{S}=\{\ell\eps : 1\le \ell \le \lceil 1/\eps\rceil\}\cap(0,1]$.
\For{$\sigma \in \mathcal{S}$}
\State Initialize $t\gets 0$, $w_{ij}^{(t)} \sim \s^{d-1}$.
\While{$t<T$}
\State For $x\sim N(0,I)$, accept $x$ with probability
$p_t(x)=\exp(-(\sigma^{-2}-1)(w_{ij}^{(t)}\cdot x)^2/2)$.
\State Denote by $D'(w_{ij}^{(t)},\sigma)$ the distribution of the accepted samples.

\State Draw $N$ samples from $D'(w_{i,j}^{(t)})$ and let $\widehat{v}$ be the empirical estimate of $$v= \E_{(x,y)\sim D'(w_{ij}^{(t)})}[ x\Ind\{\{x \mid y=i, w_{ij}^{(t)}\cdot x \le 0\})] - \E_{(x,y)\sim D'(w_{ij}^{(t)})}[ x\Ind\{\{x \mid y=j, w_{ij}^{(t)}\cdot x \ge 0\})]$$
\State $w_{ij}^{(t+1)} \gets \proj_{\s^{d-1}}( w_{ij}^{(t)} + \mu \proj_{(w_{ij}^{(t)})^\perp} \widehat{v}$), $\mu= \epsilon/C$, $t\gets t+1$.

\EndWhile
\State Use $N$ fresh samples from $D$ to estimate the empirical error $\widehat{\err}_{ij}(w_{ij}^{(t)})$. \label{line: empriical error}
\State $h_{i,j}^\sigma= \argmin_{t \in [T]} \widehat{\err}_{ij}(h^t_{ij})$. \label{line: t-select}
\EndFor
\State\Return $\argmin_{\sigma \in \mathcal S} \widehat{\err}_{ij}(h_{i,j}^\sigma)$. \label{line: sigma-select}
\end{algorithmic}
\end{algorithm}

\begin{lemma}[Restatement of \Cref{lm localization}]
Consider the problem of agnostic learning a $3$-multiclass linear classifier under the standard Gaussian distribution. Given any $i \neq j \in [3]$ and $\eps,\delta \in (0,1)$, \Cref{alg pairwise localization} draws $m=d\poly(1/\eps)\log(1/\delta)$ examples from $D$, runs in $\poly(m)$ time and outputs a halfspace $\widehat{h}_{ij}$ such that with probability at least $1-\delta$, $\err_{ij}(\widehat{h}_{ij}) \le O(\mathcal{O}_{ij})+\eps$, where 
$\mathcal{O}_{ij} = \sum_{a\neq b:a \in \{i,j\}\; \mathrm{or }\; b \in \{i,j\}}\opt_{a,b}$, $\opt_{a,b} = \Pr_{(x,y) \in D}[f^*(x) = a, y=b]$.  
\end{lemma}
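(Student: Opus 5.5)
The plan is to prove \Cref{lm localization} by a potential argument on $\norm{w^{(t)}_{ij}-w^*_{ij}}$, in the spirit of \Cref{lm pairwise}, but run on the localized distribution $D'(w,\sigma)$. Here $w^*_{ij}=(w^*_i-w^*_j)/\norm{w^*_i-w^*_j}$ comes from the optimal $3$-MLC $f^*$, and the clean labels are $y^*=f^*(x)$.

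\textbf{Reduction to large angle.} Let $\theta=\theta(w,w^*_{ij})$. We have
\[
\err_{ij}(h_w)\le \err_{ij}(h_w,f^*)+\mathcal{O}_{ij},
\]
because a noisy example with label in $\{i,j\}$ either has $f^*(x)\in\{i,j\}$ and $y\neq f^*(x)$, or has $f^*(x)=r$ and $y\in\{i,j\}$; both are counted in $\mathcal{O}_{ij}$. Also $\err_{ij}(h_w,f^*)\le\Pr[(w\cdot x)(w^*_{ij}\cdot x)<0]=\theta/\pi$. Hence if $\err_{ij}(h_w)>C'\mathcal{O}_{ij}+\eps$, then $\theta\ge C''(\mathcal{O}_{ij}+\eps)$. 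Since the grid $\mathcal S$ is fine, the outer loop contains some $\sigma\in\mathcal S$ with $\sigma=\Theta(\mathcal{O}_{ij}+\eps)$ and $\sigma\le\sin\theta/4$, provided $C''$ is large. I also need $\theta\le\pi/2$ (up to a sign flip of $w$), which I obtain by random restarts as in \Cref{lm pairwise}. The final selection steps (lines \ref{line: t-select}, \ref{line: sigma-select}) cost only $\eps$ by Hoeffding.

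\textbf{Structural step.} This is the main obstacle. Write $x=\Sigma^{1/2}z$ with $z\sim\cN(0,I)$ and $\Sigma=I+(\sigma^2-1)ww^\top$. The transformed normal is $\tilde w^*_{ij}\propto\Sigma^{1/2}w^*_{ij}=\sigma\cos\theta\,w+\sin\theta\,u$. With $\sigma\le\sin\theta/4$ this gives $\tilde\theta>\pi/4$, so $\Pr_z[(w\cdot z)(\tilde w^*_{ij}\cdot z)<0]\ge1/4$. I must show that a constant fraction of this disagreement wedge has $f^*\in\{i,j\}$. For $k=3$ the only competitor is the class $r$. The set $\{f^*=r\}$ is a cone bounded by $\tilde w^*_{ri}\cdot z\ge0$ and $\tilde w^*_{rj}\cdot z\ge0$, and the cones $\{f^*=r\}$, $\{f^*=i\}$, $\{f^*=j\}$ partition space. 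The vectors $\tilde w^*_i-\tilde w^*_r$, $\tilde w^*_r-\tilde w^*_j$ and $\tilde w^*_j-\tilde w^*_i$ sum to zero. Consequently the wedge $\{w\cdot z\le0\le\tilde w^*_{ij}\cdot z\}$ and its antipode $\{w\cdot z\ge0\ge \tilde w^*_{ij}\cdot z\}$ cannot both be mostly covered by class $r$. Indeed, class $r$ lies on one side of $\tilde w^*_{ij}$ up to the $i$–$j$ ray structure, and by the symmetry $z\mapsto -z$ only one of the two antipodal wedges can meet the cone $\{f^*=r\}$ substantially. The cleanest route, which I would try first, is a 2D reduction: project onto $\mathrm{span}(w,u)$, or rather reason in the plane spanned by $w,\tilde w^*_{ij}$ after integrating out orthogonal directions. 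The aim is a bound like
\[
\Pr_z\big[E^*_{ij}(w)\big]\ge 1/8 .
\]

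\textbf{Descent step.} On $E^*_{ij}$, label $i$ with $w\cdot z\le0$ forces $u\cdot x\ge0$, and label $j$ with $w\cdot z\ge0$ forces $u\cdot x\le0$. Since $u\perp w$, $u\cdot x=u\cdot z$, so I apply \Cref{lm convex body} to $z$ with direction $u$. This gives
\[
g^*\cdot u\ge\Omega(1)
\]
for the clean-label part of $v$ under $D'$ (note that $\proj_{w^\perp}x$ has the same law as $\proj_{w^\perp}z$). Noise contributes at most $O(\eta'\sqrt{\log(1/\eta')})$ by \Cref{fact:simpleperturbation}, where $\eta'\le\mathcal{O}_{ij}/\sigma$ is the corrupted mass after rejection sampling, using the acceptance rate $\sigma$ from \Cref{lem:rejection-sampling}. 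Taking the constant in $\sigma=\Theta(\mathcal{O}_{ij}+\eps)$ large makes $\eta'$ a small constant, so $g\cdot w^*_{ij}\ge\Omega(\sin\theta)\ge\Omega(\eps)$. \Cref{lm gaussian event mean} bounds $\norm{g}=O(1)$, and \Cref{lm gradient descent} with $\mu=\eps/C$ then decreases $\norm{w-w^*_{ij}}^2$ by $\Omega(\eps^2)$ per step. Within $T$ steps some iterate therefore satisfies $\err_{ij}\le O(\mathcal{O}_{ij})+\eps$. One subtlety is that $\sigma$ is fixed while $\theta$ shrinks. I would handle this by requiring only $\sigma\lesssim\sin\theta$ during descent: once $\theta$ drops below $C''\sigma$, the error is already $O(\mathcal{O}_{ij}+\eps)$ and we stop, and the argmin selection picks that iterate up.
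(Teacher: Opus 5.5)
Your plan follows the paper's proof essentially step for step: the same reduction from large $\err_{ij}$ to large $\theta$, a single grid value $\sigma=\Theta(\mathcal{O}_{ij}+\eps)$ that works for every high-error iterate, the bound $\tilde\theta>\pi/4$, \Cref{lm convex body} applied along $u$, noise controlled by \Cref{fact:simpleperturbation} with corrupted mass at most $\mathcal{O}_{ij}/\sigma$, and then \Cref{lm gradient descent} plus hypothesis selection. For the structural step you flagged as the main obstacle, you need no 2D reduction, and your claim that class $r$ lies on one side of $\tilde w^*_{ij}$ is not true in general; the paper's one-line argument is that $\{f^*=r\}\subseteq\{\tilde w^*_{ri}\cdot z\ge 0\}$, a halfspace through the origin, which by central symmetry of the wedge $A=\{(w\cdot z)(\tilde w^*_{ij}\cdot z)\le 0\}$ contains exactly half of $A$'s Gaussian mass, so $\Pr[A\cap\{f^*\in\{i,j\}\}]\ge \Pr[A]/2\ge 1/8$.
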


\begin{proof}[Proof of \Cref{lm localization}]

   Let $f^*(x) = \argmax_{i \in [3]} (w_{i}^*\cdot x)$ and let $h^*_{ij} = \sign(w^*_{ij} \cdot x)$, where $w^*_{ij} = (w^*_i - w^*_j)/\norm{w^*_i - w^*_j}$. 
    In the rest of the proof, we will also treat the image of a halfspace $\sign(w \cdot x)$ to be $\{i,j\}$. That is to say $\sign(w\cdot x) = i$ if $w \cdot x \ge 0$ and $\sign(w \cdot x) = j$, otherwise.

    Let $w \in \s^{d-1}$ be a unit vector such that $w\cdot w^*_{ij}>0$. We define 
\begin{align*}
    M_i(w) := \{x \in \mathbb{R}^d : h^*_{ij}(x)=i,\; w\cdot x \le 0\},
\qquad
M_j(w) := \{x \in \mathbb{R}^d : h^*_{ij}(x)=j,\; w\cdot x \ge 0\}\;.
\end{align*}
and define the mistake set $E_{ij}(w) = \{(x,y) \mid y = i, w\cdot x \le 0\} \cup \{(x,y) \mid y = j, w\cdot x \ge 0\}.$
Notice that for halfspace $h_w = \sign(w\cdot x)$, we have $\err_{ij}(h_w) = \Pr_{(x,y)\sim D}[(x,y) \in E_{ij}(w)]$

Define the localization matrix $\Sigma \in \R^{d \times d}$ to be a symmetric matrix such that $\Sigma w = \sigma^2 w$ and $\Sigma u = u$, for every $u \perp w$.
For each $x \in \R^d$, we write $x = \Sigma^{1/2} z$, $z \sim \cN(0,I)$ we will associate an artificial label $\tilde y(z):= y(x)$ for each $z \in \R^d$ and we define the artificial mistake set as 
\begin{align*}
    \widetilde E_{ij}(w) = \{(z,\tilde y) \mid \tilde y = i, w\cdot z \le 0\} \cup \{(z,\tilde y) \mid \tilde y = j, w\cdot z \ge 0\}.
\end{align*}
Denote by $D'(w)$ the distribution of $(z,\tilde y)$.

We prove the following claim.
\begin{claim}\label{cl error prob}
There is some large universal constant $C>0$ such that if $\Pr_{(x,y)\sim D}[E_{ij}(w)] \ge C \mathcal{O}_{ij} + \eps $, then by choosing $\sigma \in [2(C-1) \mathcal{O}_{ij}, 2(C-1)\mathcal{O}_{ij}+\eps]$, $\Pr_{(z,\tilde y)\sim D'(w)}[\widetilde{E}_{ij}(w)] \ge 1/10$.
\end{claim}

\begin{proof}[Proof of \Cref{cl error prob}]
We write $w^*_{ij} = a w + b u$, where $a,b>0$, $a^2+b^2 = 1$, $u \in \s^{d-1}, u \perp w.$
For each example $z$, with $x = \Sigma^{1/2} z$, we also associate an artificial clean label $y^*(z) = f^*(x)$.
This implies that $y^*(z) = i$ if and only if $w^*_{i\ell} \cdot \Sigma^{1/2} z \ge 0$, for $\ell \neq i$.
we will first show that $\Pr_{z \sim \cN(0,I)}[h_w(z) \neq y^*(z), y^*(z) \in \{i,j\}] \ge 1/6$.
Since $\Pr_{(x,y)\sim D}[E_{ij}(w)] \ge C \mathcal{O}_{ij} + \eps $, we know that 
\begin{align*}
    \Pr_{x \sim \cN(0,I)}[(w\cdot x)(w^*_{ij}\cdot x) \le 0] \ge \Pr_{(x,y)\sim D}[E_{ij}(w)] -  \sum_{a\neq b:a \in \{i,j\}\; \mathrm{or }\; b \in \{i,j\}}\opt_{a,b} \ge (C-1)\mathcal{O}_{ij} + \eps,
\end{align*}
Since $x \sim \cN(0,I)$, we know that $\Pr_{x \sim \cN(0,I)}[(w\cdot x)(w^*_{ij}\cdot x) \le 0] = \theta(w^*_{ij},w)/\pi \ge (C-1)\mathcal{O}_{ij} + \eps.$
Write $\theta = \theta(w^*_{ij},w)$ for short.
Since for $\theta \in (0,\pi/2)$, we know that $\sin \theta \ge 2\theta/\pi$, which implies that $b=\sin \theta \ge 2\left( (C-1)\mathcal{O}_{ij} + \eps \right)$.
Write
\begin{align*}
\tilde{w^*_{ij}} = \Sigma^{1/2} w^*_{ij} = a \sigma w + b u.    
\end{align*}
If we choose $ 2 (C-1)\mathcal{O}_{ij} \le \sigma \le  2\left( (C-1)\mathcal{O}_{ij} + \eps\right)$, because its smaller than $b/a$ then $b/\sqrt{b^2+(a\sigma)^2} \ge 1/\sqrt{2}$, which implies that
$\tilde \theta = \theta(\tilde{w^*_{ij}},w) \in (\pi/4,\pi/2)$.
Define the wedge 
\begin{align*}
A(w):=\{z\in\R^d:(w\cdot z)(\tilde w^*_{ij}\cdot z)\le 0\}.    
\end{align*}
This implies that 
\begin{align}\label{eq error halfspace}
    \Pr_{z\sim \cN(0,I)}[A(w)] = \tilde \theta /\pi \ge 1/4. 
\end{align}
Now notice that 
 \begin{align*}
    A(w) = \{z \mid (w\cdot z)(\Tilde{w}^*_{ij}\cdot z) \le 0 \} & = (\{z \mid (w\cdot z)(\Tilde{w^*_{ij}}\cdot z) \le 0 \} \cap \{z \mid y^*(z) \in  \{i,j\}\}) \\
     & \cup (\{z \mid (w\cdot z)(\Tilde{w}^*_{ij}\cdot z) \le 0 \} \cap \{z \mid y^*(z) \in [k] \setminus \{i,j\} \}).
 \end{align*}
Without loss of generality, we assume that $[k] \setminus \{i,j\} = \{k\}$. Notice that $y^*(z) = k$ implies that $w^*_{ki} \cdot \Sigma^{1/2} z \ge 0$, which is equivalent to $\tilde w^*_{ki} \cdot z \ge 0$.

Observe that $A(w)$ is centrally symmetric. If $z \in A(w)$, then $-z \in A(w)$.
, since both
$w\cdot z$ and $\tilde w_{ij}^*\cdot z$ change sign under $z\mapsto -z$, and hence their
product is unchanged. Since \(z\sim N(0,I)\) is symmetric under $z\mapsto -z$ and \(A=-A\), the two sets \(A\cap\{\tilde w_{ki}^*\cdot z\ge 0\}\) and \(A\cap\{\tilde w_{ki}^*\cdot z\le 0\}\) have the same Gaussian measure.

 Therefore, 
 \begin{align*}
 \Pr_{z \sim \cN(0,I)}[\{z \mid (w\cdot z)(\Tilde{w^*_{ij}}\cdot z) \le 0 \} \cap \{z \mid y^*(z) = k\}] &\le \Pr_{z \sim \cN(0,I)}[\{z \mid (w\cdot z)(\Tilde{w}^*_{ij}\cdot z) \le 0, \Tilde{w}^*_{ki}\cdot z \ge 0 \}\\ &= \Pr_{z \sim \cN(0,I)}[\{z \mid (w\cdot z)(\Tilde{w}^*_{ki}\cdot z) \le 0\}]/2\;.    
 \end{align*}
Combining with \eqref{eq error halfspace}, we have  
\begin{align*}
    \Pr_{z \sim \cN(0,I)}[\{z \mid (w\cdot z)(\Tilde{w}^*_{ij}\cdot z) \le 0 \} \cap \{z \mid y^*(z) \in\{ i,j\}\}] \ge 1/8.
\end{align*}
Since
\begin{align*}
 &   \Pr_{z \sim \cN(0,I)}[\widetilde{E}_{ij}(w)]\\ & = \Pr_{z \sim \cN(0,I)} [\{(z,\tilde y) \mid \tilde y = i, w\cdot z \le 0\} \cup \{(z,\tilde y) \mid \tilde y = j, w\cdot z \ge 0\}] \\
    & \geq \Pr_{z \sim \cN(0,I)}[\{z \mid (w\cdot z)(\Tilde{w}^*_{ij}\cdot z) \le 0 \} \cap \{z \mid y^*(z) \in  \{i,j\}\}] - \Pr_{z \sim \cN(0,I)}[y^*(z) \in  \{i,j\}, \tilde y \neq y^*] \\
    & \ge 1/8 - \Pr_{z \sim \cN(0,I)}[y^*(z) \in  \{i,j\}, \tilde y \neq y^*].
\end{align*}
We next upper bound $\Pr_{z \sim \cN(0,I)}[y^*(z) \in  \{i,j\}, \tilde y \neq y^*]$. By the definition of $y^*$ and $\tilde y$,
we have 
\begin{align*}
    \Pr_{(z,\tilde y) \sim D'(w)}[y^*(z) \in  \{i,j\}, \tilde y \neq y^*] = \Pr_{x \sim \cN(0,\Sigma)}[f^*(x) \in \{i,j\}, f^*(x) \neq y(x)]. 
\end{align*}
Notice that a sample $x$ from $\cN(0,\Sigma)$ can be obtained from the conditional distribution of the rejection sampling procedure described in \Cref{lem:rejection-sampling}.
Also by \Cref{lem:rejection-sampling} the total probability of acceptance is $\sigma$.
This implies that 
\begin{align*}
    \Pr_{x \sim \cN(0,\Sigma)}[f^*(x) \in \{i,j\}, f^*(x) \neq y(x)] &\le \Pr_{x \sim \cN(0,I)}[f^*(x) \in \{i,j\}, f^*(x) \neq y(x)]/\sigma \\
    & \le 
    \sum_{a \in \{i,j\}, b \neq a} \opt_{a,b}/\sigma \le \mathcal{O}_{ij}/\sigma \\
    & \le 1/(2(C-1)) \le 1/40,
\end{align*}
for a large enough constant $C>0$.
Here, the last inequality follows the choice of $\sigma \ge 2 (C-1)\mathcal{O}_{ij}$.
This implies
$\Pr_{z \sim \cN(0,I)}[\widetilde{E}_{ij}(w)] \ge 1/10$ and concludes the proof of \Cref{cl error prob}.
\end{proof}

We now make the choice of the localization parameter $\sigma$ algorithmic. The algorithm runs the update procedure for all guesses $\sigma \in \{\eps,2\eps,\dots,\lceil 1/\eps\rceil\eps\}\cap(0,1]$. Since \Cref{cl error prob} holds for every $\sigma$ in an interval of width at least $\eps$, at least one of these guesses satisfies the condition of \Cref{cl error prob}. In the rest of the proof, we fix such a guess $\sigma$ and analyze the corresponding run of the algorithm.

We define the random vector
\begin{align*}
    G_{ij}(w):=\proj_{w^\perp}(z\Ind\{w\cdot z\le 0,  \tilde y=i\}-z\Ind\{w \cdot z \ge 0, \tilde y = j\}).
\end{align*}
Denote by $g_{ij}(w) = \E_{(z,\tilde y)\sim D'(w)}[G_{ij}(w)]$ and $g^*_{ij}(w) = \E_{z\sim \cN(0,I)}[z \Ind\{(w\cdot z)(\tilde w^*_{ij}\cdot z) \le 0, \tilde y=y^*=i\} - z \Ind\{(w\cdot z)(\tilde w^*_{ij}\cdot z) \le 0, \tilde y=y^*=j\}]$.
We next lower bound the correlation between $g_{ij}(w)$ and $w^*_{ij}$.
Since $G_{ij}(w) \perp w$, we know that 
\begin{align*}
    g_{ij}(w) \cdot w^* & = g_{ij}(w) \cdot (a w + b u) = b g_{ij}(w) \cdot u \\&\ge b g^*_{ij}(w) \cdot u - b \E_{(z,\tilde y)\sim D'(w)}[\abs{z\cdot u}\Ind\{y^*\neq \tilde y, y^* \in \{i,j\}\;\mathrm{or}\; \tilde y \in \{i,j\}\}]\;.
\end{align*}
Notice that 
\begin{align*}
    \Pr_{(z,\tilde y) \sim D'(w)}[y^*\neq \tilde y, y^* \in \{i,j\}\;\mathrm{or}\; \tilde y \in \{i,j\}] &= \Pr_{x\sim \cN(0,\Sigma)}[f^*\neq y, f^* \in \{i,j\}\;\mathrm{or}\;  y \in \{i,j\}]\\& \le \mathcal{O}_{ij}/\sigma = O(1/C).
\end{align*}
By \Cref{fact:simpleperturbation}, we have 
\begin{align*}
    \E_{(z,\tilde y)\sim D'(w)}[\abs{z\cdot u}\Ind\{y^*\neq \tilde y, y^* \in \{i,j\}\;\mathrm{or}\; \tilde y \in \{i,j\}\}] =\tilde{O}(1/C).
\end{align*}
Recall that $\tilde w^*_{ij} = a \sigma w + bu$, which implies that for every $z$ such that $y^*(z) = i$, if $(w\cdot z)(\tilde w ^*_{ij}\cdot z) \le 0$, then we have $u \cdot z \ge 0$. 
Similarly, if $(w\cdot z)(\tilde w ^*_{ij}\cdot z) \le 0$, and $y^*(z) = j$ then we have $u \cdot z \le 0$.

By \Cref{lm convex body}, and Jensen's inequality, we have 
\begin{align*}
    &g^*_{ij}(w) \cdot u \\& {\ge} \sqrt{\frac{\pi}{2}}\left( \Pr_{(z,\tilde y)\sim D'(w)}[(w\cdot z)(\tilde w^*_{ij}\cdot z) {\le} 0, \tilde y{=}y^*{=}i]^2 {+} \Pr_{(z,\tilde y)\sim D'(w)}[(w\cdot z)(\tilde w^*_{ij}\cdot z) {\le} 0, \tilde y{=}y^*{=}j]^2\right) \\
    & \ge \Omega \left(\Pr_{(z,\tilde y)\sim D'(w)}[(w\cdot z)(\tilde w^*_{ij}\cdot z) \le 0, \tilde y=y^* \in \{i,j\}]^2\right)\;.
\end{align*}
By \Cref{cl error prob}, we have that $\Pr_{(z,\tilde y)\sim D'(w)}[\widetilde{E}_{ij}(w)] \ge 1/10$. Since
\begin{align*}
    \Pr_{(z,\tilde y)\sim D'(w)}[\widetilde{E}_{ij}(w)] \le  \Pr_{(z,\tilde y)\sim D'(w)}[(w\cdot z)(\tilde w^*_{ij}\cdot z) {\le} 0, \tilde y{=}y^* {\in} \{i,j\}] + \Pr_{(z,\tilde y)\sim D'(w)}[ \tilde y \in \{i,j\}, y^* {\neq} \tilde y],
\end{align*}
and 
\begin{align*}
\Pr_{(z,\tilde y)\sim D'(w)}[ \tilde y \in \{i,j\}, y^* \neq \tilde y] =  \Pr_{x \sim \cN(0,\Sigma)}[y(x) \in \{i,j\}, f^*(x) \neq y(x)]  \le \mathcal{O}_{ij}/\sigma = O(1/C).
\end{align*}
If $C$ is sufficiently large this implies that 
\begin{align*}
    \Pr_{(z,\tilde y)\sim D'(w)}[(w\cdot z)(\tilde w^*_{ij}\cdot z) \le 0, \tilde y=y^* \in \{i,j\}] \ge 1/20
\end{align*}
and 
$g^*_{ij}(w) \cdot u \ge \Omega(1)$.
This implies that when $\err_{ij}(h_w) \ge C \mathcal{O}_{ij} + \eps$, we have 
\begin{align*}
    g_{ij}(w) \cdot w^* \ge \Omega(b).
\end{align*}
Notice that if $b<\eps$, then we have $\Pr_{(x,y)\sim D}[(w\cdot x)(w^*_{ij}\cdot x)\leq 0] \le \eps/2$, which implies that $\err_{ij}(h_w)\leq \mathcal{O}_{ij} + \eps$ is small.
This implies that $g_{ij}(w) \cdot w^* \ge \Omega(\eps)$.
Furthermore, by \Cref{lm gaussian event mean}, we have $\norm{g_{ij}(w)} \le O(1)$.
Notice that by Hoeffding's inequality, for every $\delta>0$, by taking $d\poly(1/\eps)\log(1/\delta)$ examples from $D'(w)$, which can be done via a rejection sampling from $D$ with an acceptance rate at least $\eps$,
the empirical mean estimation $\widehat{g}_{ij}(w)$ satisfies $\norm{g_{ij}(w) - \widehat{g}_{ij}(w)}< \eps/100$.

We now show the correctness of \Cref{alg pairwise localization}, let $T = O(1/\eps^2)$.
Suppose that for $t \in [T]$, we have $\err_{ij}(h^t_{ij}) = \err_{ij}(\sign(w^t_{ij}\cdot x))>C \mathcal{O}_{ij}+ \eps$.
Then, for $t \in [T]$, we have 
$g_{ij}(w_{ij}^t)\cdot w^*_{ij}>\Omega(\eps)$, $\norm{g_{ij}(w_{ij}^t)}\le O(1)$
and with probability at least $1-\delta/T$, $\norm{g_{ij}(w^t_{ij}) - \widehat{g}_{ij}(w^t_{ij})}< \eps/100 $.
By \Cref{lm gradient descent}, we have 
\begin{align*}
    \norm{w^*_{ij}-w^{t+1}_{ij}}^2 \le \norm{w^*_{ij}-w^{t}_{ij}}^2 - \Omega(\eps^2).
\end{align*}
Since $\norm{w^0_{ij}-w^*_{ij}} \le 2$, this implies that after $T$ rounds, $\norm{w^*_{ij}-w^{t+1}_{ij}}^2 <0$, which gives contradiction.
This implies that there must be some $t \in [T]$ such that $\err_{ij}(\sign(w^t_{ij}\cdot x))\le O(\mathcal{O}_{ij}) + \eps$.

Finally, for each \(\sigma \in S\) and \(t \in [T]\), let
$\widehat{\err}_{ij}^{\,\sigma,t}$
denote the empirical estimate of \(\err_{ij}(\sign(w_{ij}^{\sigma,t}\cdot x))\) computed in Line \ref{line: empriical error} using \(N\) fresh samples from \(D\). Since \(|S| \le O(1/\eps)\) and \(T=O(1/\eps^2)\), by taking
$
N=\poly(1/\eps)\log(|S|T/\delta)
$, by Hoeffding's inequality together with a union bound implies that, with probability at least \(1-\delta\), simultaneously for all \(\sigma \in S\) and \(t \in [T]\),
\[
\Bigl|\widehat{\err}_{ij}^{\,\sigma,t}-\err_{ij}(\sign(w_{ij}^{\sigma,t}\cdot x))\Bigr|\le \eps.
\]
Now fix a value \(\sigma^\star \in S\) satisfying the condition from the above argument. By the above argument, there exists some \(t^\star \in [T]\) such that
\[
\err_{ij}(\sign(w_{ij}^{\sigma^\star,t^\star}\cdot x)) \le O(\mathcal{O}_{ij})+\eps.
\]
Hence, by concentration,
\[
\widehat{\err}_{ij}^{\,\sigma^\star,t^\star}
\le O(\mathcal{O}_{ij})+2\eps.
\]
Since \(h_{ij}^{\sigma^\star}\) is chosen in Line \ref{line: t-select} to minimize the empirical error over \(t\in[T]\), we have
\[
\widehat{\err}_{ij}(h_{ij}^{\sigma^\star})
\le \widehat{\err}_{ij}^{\,\sigma^\star,t^\star}
\le O(\mathcal{O}_{ij})+\eps.
\]
Finally, the algorithm outputs in Line \ref{line: sigma-select} the hypothesis corresponding to the value of \(\sigma\in S\) with minimum empirical error. Therefore,
\[
\widehat{\err}_{ij}(\widehat h_{ij})
\le \widehat{\err}_{ij}(h_{ij}^{\sigma^\star})
\le O(\mathcal{O}_{ij})+2\eps,
\]
and another application of the same concentration bound yields
\[
\err_{ij}(\widehat h_{ij}) \le O(\mathcal{O}_{ij})+3\eps.
\]
Setting $\eps$ multiplicatively less than a constant which we implicitly do in the algorithm completes the proof of \Cref{lm localization}.

\end{proof}

\subsection{Proof of \Cref{th k=3}}\label{app proof k=3}

\begin{theorem}[Restatement of \Cref{th k=3}]
Consider the problem of agnostic learning MLC over $\R^d$ with 3 labels. There is an algorithm that draws $m=d\poly(\log(1/\delta)/\eps)$ samples, runs in $\poly(m)$ time, and outputs a hypothesis with error $O(\opt) +\eps$ with probability at least $1-\delta$.    
\end{theorem}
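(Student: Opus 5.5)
The plan is to run \Cref{alg pairwise to pseudo} with \Cref{alg pairwise localization} as the pairwise subroutine. For each of the three pairs $i<j$ in $[3]$ we call it with accuracy $\eps/9$ and confidence $\delta/3$. We output the resulting pseudo $3$-MLC $h_{\widehat W}$. By \Cref{lm localization} and a union bound over the three pairs, with probability at least $1-\delta$ every pair satisfies $\err_{ij}(\widehat h_{ij}) \le C\,\mathcal{O}_{ij}+\eps/9$ for a universal constant $C$. The sample complexity is $3\cdot d\,\poly(1/\eps)\log(3/\delta)$ and the running time is polynomial, as claimed.

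It remains to turn these pairwise guarantees into a bound on $\err(h_{\widehat W})$. We cannot apply \Cref{lm error decomposition} verbatim, because its hypothesis uses the benchmark $\err^*_{ij}$ rather than $\mathcal{O}_{ij}$. We therefore redo its first step directly, which is the routine part of the argument. If $h_W(x)\neq y=j$, then some comparison $(j,i)$ must be lost: otherwise $s_j(x)=k-1$ and $j$ is the unique maximizer. (Ties among vote counts should be broken consistently; a lost comparison still suffices.) Hence
\[
\err(h_W)\le \sum_{j}\sum_{i\neq j}\Pr[\widehat h_{ij} \text{ errs},\ y=j]\le \sum_{i<j}\err_{ij}(\widehat h_{ij}) \le C\sum_{i<j}\mathcal{O}_{ij}+\eps/3 .
\]

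Each $\opt_{a,b}$ with $a\neq b$ appears in $\mathcal{O}_{ij}$ only for pairs $\{i,j\}$ meeting $\{a,b\}$. There are at most $3$ such pairs for $k=3$, and in general at most $2k-3$. Therefore $\sum_{i<j}\mathcal{O}_{ij}\le 3\sum_{a\neq b}\opt_{a,b}=3\,\opt$, and so $\err(h_{\widehat W})\le 3C\opt+\eps = O(\opt)+\eps$.

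The only delicate point I anticipate is justifying the inequality $\err(h_W)\le\sum_{i<j}\err_{ij}(\widehat h_{ij})$ with the correct convention for the orientation of $w_{ij}$ versus $w_{ji}$ and for ties. This should be mechanical, mirroring the proof of \Cref{lm error decomposition}. All substantive work is contained in \Cref{lm localization}, which we take as given.
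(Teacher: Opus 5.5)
Your proposal is correct and follows the paper's proof. You run the pairwise-to-pseudo-MLC reduction with the $k=3$ localization subroutine, bound $\err(h_W)$ by $\sum_{i<j}\err_{ij}(\widehat h_{ij})$ (a label-$j$ mistake forces $j$ to lose a comparison), and charge $\sum_{i<j}\mathcal{O}_{ij}$ to a constant multiple of $\opt$. Your multiplicity count ($2k-3$ pairs meet $\{a,b\}$) is slightly sharper than the paper's cruder $2Ck\,\opt$ bound, but for $k=3$ both give $O(\opt)+\eps$.
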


\begin{proof}[Proof of \Cref{th k=3}]
    By \Cref{lm localization}, for $k=3$ we have that for every $i \neq j \in [k]$, we obtain a hypothesis $\widehat{h}_{ij} = \sign(w_{ij}\cdot x)$ such that $\err_{ij}(\widehat{h}_{ij}) \le C \mathcal{O}_{ij} + \eps/k^2$ through \Cref{alg pairwise localization}. 
    By using \Cref{alg pairwise localization} as a subroutine in \Cref{alg pairwise to pseudo}, the pseudo $k$-multiclass linear classifier $h_W$ obtained from $\{\widehat{h}_{ij}\}$ satisfies
    \begin{align*}
    \err(h_W)  & = \sum_{j = 1}^k \Pr_{(x,y)\sim D}[h_W(x) \neq y, y=j] \le \sum_{j = 1}^k \sum_{i \neq j} \Pr_{(x,y)\sim D}[w_{ji}\cdot x \le 0, y=j] \\
    & = \frac{1}{2} \sum_{(i,j):i \neq j} (\Pr_{(x,y)\sim D}[w_{ij}\cdot x <0, y=i ]+\Pr_{(x,y)\sim D}[w_{ij}\cdot x >0, y=j ]) \\
    &\le \frac{1}{2} \sum_{(i,j):i \neq j} \left(C\mathcal{O}_{ij}+\eps/k^2\right)  \leq \frac{C}{2} \sum_{(i,j):i \neq j} \left(\sum_{r \neq i} (\opt_{ri} + \opt_{ir}) + \sum_{r \neq j} (\opt_{rj} + \opt_{jr}) \right) + \eps\\
    & \le 2Ck\opt + \eps.
\end{align*}
Here, in the last inequality, we use the fact that $\sum_{(i,j): i \neq j} \sum_{r \neq i}\opt_{ri} \le k \opt$.
\end{proof}

\subsection{Proof of \Cref{lm error count}}\label{app proof error count}

\begin{lemma}[Restatement of \Cref{lm error count}]
    Let $f^*(x) = \argmax_{i \in [k]} (w^*_i \cdot x) : \R^d \to [k]$ be a multiclass linear classifier and let $h_w(x) = \sign(w \cdot x)$ be a halfspace. For $i,j \in [k], i \neq j$, let $\theta = \theta(w^*_{ij},w)$. 
    Write $w = a w^*_{ij} + b u$, with $a,b \in (0,1)$, $a^2+b^2 = 1$, $u \perp w^*_{ij}$ and $u \in \s^{d-1}$.
    Then, it holds
    \begin{align*}
        \abs{\err_{i,j}(h_w,f^*)-\int_{z \in B_{ij}} \phi_{d-1}(z) d z\int_0^ {\tan\theta \abs{u\cdot z} }\phi_1(t) dt }  \le \tilde O(\sqrt{\log k}\tan^2\theta/\Phi_{ij}).
    \end{align*}
Furthermore, for every $\theta_0<\theta$, it holds 
\begin{align*}
       \err_{i,j}(h_w,f^*)\ge \int_{z \in B_{ij}} \phi_{d-1}(z) d z\int_0^ {\tan\theta_0 \abs{u\cdot z} }\phi_1(t) dt   - \tilde O(\sqrt{\log k}\tan^2\theta_0/\Phi_{ij}),
\end{align*}
where $\Phi_{ij} = \min\left\{|\tan \theta^*_{ij}|,1 \right\}$.
\end{lemma}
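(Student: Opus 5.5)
We decompose $x\sim\cN(0,I)$ as $x = z + t\,w^*_{ij}$ with $z\in H_{ij}$ a $(d-1)$-dimensional standard Gaussian and $t\sim\cN(0,1)$ independent. Since $w = a w^*_{ij} + b u$ with $b/a=\tan\theta$, we have $w\cdot x = a\,(t + \tan\theta\,(u\cdot z))$. Hence $\sign(w^*_{ij}\cdot x)\neq\sign(w\cdot x)$ exactly when $t$ lies between $0$ and $-\tan\theta\,(u\cdot z)$. Moreover, on this disagreement region, $f^*(x)=i$ with $w\cdot x<0$, or $f^*(x)=j$ with $w\cdot x>0$, is (up to null sets) the same as $f^*(x)\in\{i,j\}$. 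This yields the exact formula
\[
\err_{i,j}(h_w,f^*)=\int_{H_{ij}}\phi_{d-1}(z)\int_{I(z)}\Ind\{f^*(z+tw^*_{ij})\in\{i,j\}\}\phi_1(t)\,dt\,dz,
\]
where $I(z)$ is the interval of length $\tan\theta\,|u\cdot z|$ adjacent to $0$. By symmetry of $\phi_1$ we may write the inner integral as $\int_0^{\tan\theta|u\cdot z|}$ after fixing the sign.

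Next, for each $r\notin\{i,j\}$ we write $w^*_{ir}=a_r w^*_{ij}+b_r u_r$ and $w^*_{jr}=a'_r w^*_{ji}+b'_r u'_r$, so that $w^*_{ir}\cdot x=b_r(u_r\cdot z)+a_r t$. The condition $f^*(x)\in\{i,j\}$ requires, for every $r$, that $w^*_{ir}\cdot x\ge0$ or $w^*_{jr}\cdot x\ge0$. Comparing with membership of $z$ in $B_{ij}$ (the same constraints at $t=0$), the indicator $\Ind\{f^*(x)\in\{i,j\}\}$ can differ from $\Ind\{z\in B_{ij}\}$ only if some facet constraint flips between $t=0$ and $t$. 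That requires $b_r|u_r\cdot z|\le |a_r||t|$, i.e.\ $|u_r\cdot z|\le |t|/|\tan\theta(w^*_{ij},w^*_{ir})|\le |t|/\Phi_{ij}$ (up to the $\min\{\cdot,1\}$ convention). Since $|t|\le\tan\theta|u\cdot z|$, the discrepancy between the exact error and the target integral is bounded by
\[
\int_{H_{ij}}\phi_{d-1}(z)\,\tan\theta\,|u\cdot z|\,\Ind\Bigl\{\exists r:\ \mathrm{dist}(z,\text{facet}_r)\le \tan\theta|u\cdot z|/\Phi_{ij}\Bigr\}\,dz .
\]
The plan is to control this by truncation. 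On the event $|u\cdot z|\le \sqrt{2\log(1/\tan\theta)}$ (the complement contributes $O(\tan^2\theta)$), the relevant $z$ lie in a band of width $\rho=\tilde O(\tan\theta/\Phi_{ij})$ around $\partial B_{ij}$.

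The main obstacle is bounding the Gaussian measure of this band by $O(\sqrt{\log k})\,\rho$, which uses the fact that $B_{ij}$ is an intersection of at most $2(k-2)$ halfspaces in $H_{ij}$ and so has Gaussian surface area $O(\sqrt{\log(2k)})$ (Nazarov's bound). The subtlety is that Nazarov's bound controls the inner and outer $\rho$-neighborhood of a convex body, here the union over facets of "near facet $r$" sets, rather than neighborhoods of individual facets. A near-facet point that lies outside $B_{ij}$ but far from $\partial B_{ij}$ is irrelevant, since then another constraint must be violated by a margin and does not flip. So we reduce to the $\rho$-neighborhood of $\partial B_{ij}$, whose measure is $O(\sqrt{\log k}\,\rho)$ for small $\rho$ via integration of surface area over parallel bodies (convexity makes $\rho$-enlargements convex). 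Multiplying by the $\tan\theta|u\cdot z|=\tilde O(\tan\theta)$ weight gives $\tilde O(\sqrt{\log k}\tan^2\theta/\Phi_{ij})$. This proves the first inequality.

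For the second inequality with $\theta_0<\theta$, we use monotonicity. On $B_{ij}$ far from its boundary, the indicator equals $1$ on the shorter interval $[0,\tan\theta_0|u\cdot z|]\subseteq I(z)$, and the error integrand is nonnegative. So
\[
\err_{i,j}(h_w,f^*)\ge\int_{B_{ij}}\phi_{d-1}\int_0^{\tan\theta_0|u\cdot z|}\Ind\{f^*\in\{i,j\}\}\phi_1\,dt\,dz .
\]
Repeating the boundary-band argument with $\theta_0$ in place of $\theta$ then costs only $\tilde O(\sqrt{\log k}\tan^2\theta_0/\Phi_{ij})$.
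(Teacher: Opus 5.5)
Your plan follows the paper's proof almost step for step: the split $x=z+t\,w^*_{ij}$, the exact integral formula, the flip criterion $|u_r\cdot z|\le |t|/\Phi_{ij}$, truncation of $|u\cdot z|$, Nazarov's $O(\sqrt{\log k})$ bound, and monotonicity in the angle for the second claim. However, the step you identify as the main obstacle is resolved incorrectly. For $z\in H_{ij}\setminus B_{ij}$ you correctly note that $z$ contributes only if \emph{every} violated constraint flips, i.e.\ every violated facet has $|u_r\cdot z|\le\rho$. It does not follow that $z$ lies within $\rho$ of $\partial B_{ij}$; your justification, that otherwise ``another constraint must be violated by a margin'', is false when two facets of $B_{ij}$ meet at a small dihedral angle.

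Here is a concrete counterexample. Take $d=3$, $k=4$, $w^*_1=e_3$, $w^*_2=-e_3$, $w^*_3=(0,-1,0)$, $w^*_4=(-\epsilon,1,0)$. Then $H_{12}=\{x_3=0\}$, $B_{12}=\{z_2\ge 0,\ z_2\le \epsilon z_1\}$ and $\Phi_{12}=1$. For $z=(-L,-\epsilon L/2,0)$ one checks that $f^*(z+te_3)=1$ iff $t\ge \epsilon L/2$. With $w=\cos\theta\, e_3+\sin\theta\, e_1$ and $\epsilon<2\tan\theta$, every such $z$ therefore contributes. Yet $\mathrm{dist}(z,B_{12})\approx L$, while your band width is $\tan\theta\,|u\cdot z|/\Phi_{12}=\tan\theta\, L$. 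The lemma's bound depends only on $\Phi_{ij}$, which constrains the angles between $w^*_{ij}$ and the $w^*_{ir}$, not the dihedral angles of $B_{ij}$ inside $H_{ij}$. So this discrepancy cannot be absorbed into the stated bound.

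The fix is the one the paper uses with $S_{ij}(r)$. Let $B^{(s)}_{ij}=\{z\in H_{ij}: u_r\cdot z\ge -s\ \forall r\}$ denote $B_{ij}$ with every facet shifted outward by $s$.
\begin{itemize}
\item Contributing outside points lie in $B^{(\rho)}_{ij}\setminus B_{ij}$.
\item Contributing inside points lie in $B_{ij}\setminus B^{(-\rho)}_{ij}$. This set is exactly the inner $\rho$-band of $B_{ij}$, so your argument there is fine.
\end{itemize}
Each $B^{(s)}_{ij}$ is again an intersection of at most $2(k-2)$ halfspaces. Writing $\gamma$ and $\Gamma$ for the $(d-1)$-dimensional Gaussian measure and surface area, Nazarov gives $\gamma(B^{(\rho)}_{ij}\setminus B^{(-\rho)}_{ij})\le\int_{-\rho}^{\rho}\Gamma(B^{(s)}_{ij})\,ds=O(\sqrt{\log k}\,\rho)$.

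This also repairs your justification of the band estimate. Minkowski enlargements $B_{ij}+sB_2$ are convex but are not polytopes, and convexity alone only yields Ball's $O(d^{1/4})$ surface-area bound. It is the facet-shifted polytopes that carry the $\sqrt{\log k}$ bound. Your second inequality only involves $z\in B_{ij}$, so it goes through unchanged.
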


\begin{proof}[Proof of \Cref{lm error count}]
Recall that for $i,j \in [k]$, $w^*_{ij} = (w^*_i-w^*_j)/\|w^*_i-w^*_j\|$. Denote by 
\begin{align*}
H_{ij} = \{x \mid w^*_{ij}\cdot x=0\},    
\end{align*}
 the subspace spanned by examples $x \perp w^*_{ij}$.
For a Gaussian example $x \sim \cN(0,I)$, we write
\begin{align*}
    x = z + t w^*_{ij},
\end{align*}
where $z \in H_{ij}$ sampled from a $d-1$ dimensional standard Gaussian and $t \sim \cN(0,1)$.
Notice that $(w^*_{ij}\cdot x) = t \sim \cN(0,1)$, which implies that $\sign(w^*_{ij}\cdot x) = \sign(t)$.
This implies that $(w^*_{ij}\cdot x)(w \cdot x)< 0$ if and only if $\sign(w \cdot x) \neq \sign(t)$.
Notice that $(w\cdot x) = (at + bu\cdot z)$. Without loss of generality, we assume that $u\cdot z \le 0$.
This implies that $(w^*_{ij}\cdot x)(w \cdot x)< 0$ if and only if $t \in [0,\tan\theta \abs{u\cdot z}]$.
This implies that
\begin{align*}
   & \Pr_{x\sim \cN(0,I)}[(w^*_{ij}\cdot x)(w\cdot x)<0, f^*(x) \in \{i,j\}]\\
    & = \int_{z \in H_{ij}} \phi_{d-1}(z) d z\int_0^ {\tan\theta \abs{u\cdot z} } \Ind\{f^*(x) \in \{i,j\}\} \phi_1(t) dt \\
    & = \int_{z \in B_{ij}} \phi_{d-1}(z) d z\int_0^ {\tan\theta \abs{u\cdot z} } \Ind\{f^*(x) \in \{i,j\}\} \phi_1(t) dt\\&\quad + \int_{z \in H_{ij} \setminus B_{ij}} \phi_{d-1}(z) d z\int_0^ {\tan\theta \abs{u\cdot z} } \Ind\{f^*(x) \in \{i,j\}\} \phi_1(t) dt.
\end{align*}

We first show that the second term above is small.

To see this, we fix $x = z + t w^*_{ij}$, with $u \cdot z = -r, z \in H_{ij}\setminus B_{ij}$ and $t \in [0,r \tan \theta]$, which implies that $(w^*_{ij}\cdot x)(w\cdot x)<0$. 
For every $l \in [k]\setminus \{i,j\}$, we write  $w^*_{i\ell} = a_{i\ell} w^*_{ij} + b_{i\ell} u_{i\ell}$, with $a_{i\ell}^2+b_{i\ell}^2=1$, $u_{i\ell} \perp w^*_{ij}$, $u_{i\ell} \in \s^{d-1}$.
\begin{align*}
    w^*_{i\ell} \cdot x = (a_{i\ell} w^*_{ij} + b_{i\ell} u_{i\ell}) \cdot (z + t w^*_{ij}) = b_{i\ell}(u_{i\ell}\cdot z) + a_{i\ell} t.
\end{align*}
Since $\abs{u_{i\ell}\cdot z}$ is the distance between $z$ and the halfspace $\{z\in H_{ij}\mid u_{i\ell}\cdot z = 0 \}$
, 
and $z \in H_{ij}\setminus B_{ij}$, we know that $w^*_{\ell i} \cdot x>0$, if $\abs{t}<\abs{b_{i\ell}/a_{i\ell}} \abs{u_{i\ell}\cdot z}$. And when this happens, it must be the case that $f^*(x) \not \in \{i,j\}$.
This implies that for every $z \in H_{ij}\setminus B_{ij}$, with such that $\abs{u_{s\ell}\cdot z} \le r \tan\theta/ \Phi_{ij}$, for some $s \in \{i,j\}$ and $\ell \in [k] \setminus \{i,j\}$, we have 
\begin{align*}
    \int_0^ {\tan\theta \abs{u\cdot z} } \Ind\{f^*(x) \in \{i,j\}\} \phi_1(t) dt = 0.
\end{align*}
Denote by $Z_{ij}(r)$ the set of $z \in H_{ij} \setminus B_{ij}$ that does not satisfy the property above.
Therefore, it suffices to look at the probability mass of points with $z$ that is in $Z_{ij}(r)$.
Define $S_{ij}(r):= \{z \in H_{ij} \mid u_{s\ell}\cdot z \le 2r \tan\theta/ \Phi_{ij}, s \in \{i,j\}$ and $\ell \in [k] \setminus \{i,j\}\}$.
We notice that for all $z \in Z_{ij}(r)$, we have $z \in S_{ij}(r)$ and $\mathrm{dis}(z,\partial S_{ij}(r)) \le 2r \tan\theta/ \Phi_{ij}.$
This implies that 
\begin{align}\label{eq boundary mass}
    \int_{Z_{ij}(r)} \phi_{d-1}(z) dz \le \int_{z \in S_{ij}(r), \mathrm{dis}(z,\partial S_{ij}(r)) \le 2r \tan\theta/ \Phi_{ij}} \phi_{d-1}(z) dz \le O(\sqrt{\log k} r\tan\theta/\Phi_{ij})
\end{align}

Let
\[
I_{\mathrm{out}}
:=
\int_{z \in H_{ij} \setminus B_{ij}} \phi_{d-1}(z)\, dz
\int_0^{\tan\theta \abs{u\cdot z} } \Ind\{f^*(x) \in \{i,j\}\} \phi_1(t)\, dt .
\]
Then
\begin{align*}
I_{\mathrm{out}}
&=
\int_{z \in H_{ij} \setminus B_{ij}} \phi_{d-1}(z)\, dz
\int_0^{\tan\theta \abs{u\cdot z} } \Ind\{f^*(x) \in \{i,j\}\} \phi_1(t)\, dt
 \\
&\le
\int_{\substack{z \in H_{ij} \setminus B_{ij}\\ \abs{u\cdot z}\le r}} \phi_{d-1}(z)\, dz
\int_0^{\tan\theta \abs{u\cdot z} } \Ind\{f^*(x) \in \{i,j\}\} \phi_1(t)\, dt
\;+\;
\Pr[\abs{u\cdot z}>r]
 \\
&\le
\int_{\substack{z \in H_{ij} \setminus B_{ij}\\ \abs{u\cdot z}\le r\\ z \in Z_{ij}(r)}}
\phi_{d-1}(z)\, dz
\int_0^{r\tan\theta} \phi_1(t)\, dt
\;+\;
\exp(-r^2/2)/r
 \\
&\le
O(\sqrt{\log k})(r\tan\theta)^2/\Phi_{ij}
\;+\;
\exp(-r^2/2)/r \;.
\end{align*}
In the first inequality, we split according to whether $\abs{u\cdot z}\le r$ or $\abs{u\cdot z}>r$, and bound the second part by \Cref{fact bias}.
In the second inequality, we use the argument above: if $z\in H_{ij}\setminus B_{ij}$ is not in $Z_{ij}(r)$, 
then for every $t\in[0,r\tan\theta]$ we have $f^*(z+t w^*_{ij})\notin\{i,j\}$, so such points do not contribute to the integral.
In the last inequality, we use \eqref{eq boundary mass}.

By choosing $r = 2\sqrt{\log (1/\tan\theta)}$, we obtain that 
\begin{align*}
    \int_{z \in H_{ij} \setminus B_{ij}} \phi_{d-1}(z) d z\int_0^ {\tan\theta \abs{u\cdot z} } \Ind\{f^*(x) \in \{i,j\}\} \phi_1(t) dt \le O(\sqrt{\log k}\tan^2\theta \log(1/\tan\theta)/\Phi_{ij}).
\end{align*}
We next show that 
\begin{align*}
   \abs{\int_{z \in B_{ij}} \phi_{d-1}(z) d z\int_0^ {\tan\theta \abs{u\cdot z} } (\Ind\{f^*(x) \in \{i,j\}\}-1) \phi_1(t) dt 
   } \le \tilde O(\sqrt{\log k}\tan^2\theta/\Phi_{ij}).
\end{align*}
Similarly, by symmetry, we will fix
$x = z + t w^*_{ij}$, with $u \cdot z = -r, z \in  B_{ij}$ and $t \in [0,r \tan \theta]$, which implies that $(w^*_{ij}\cdot x)(w\cdot x)<0$.
Notice that if $\mathrm{dis}(z,\partial B_{ij})>r\tan \theta/\Phi_{ij}$, then it must be the case that $w^*_{il}\cdot x>0$ and $w^*_{jl}\cdot x>0$, meaning $f^*(x) \in \{i,j\}$.
This implies that 
\begin{align*}
    \int_0^ {\tan\theta \abs{u\cdot z} } \Ind\{f^*(x) \in \{i,j\}\} \phi_1(t) dt = \int_0^ {\tan\theta \abs{u\cdot z} }\phi_1(t) dt
\end{align*}
and 
\begin{align*}
   & \abs{\int_{z \in B_{ij}} \phi_{d-1}(z) d z\int_0^ {\tan\theta \abs{u\cdot z} } \Ind\{f^*(x) \in \{i,j\}\} \phi_1(t) dt - \int_{z \in B_{ij}} \phi_{d-1}(z) d z\int_0^ {\tan\theta \abs{u\cdot z} }\phi_1(t) dt} \\
   \le  &   \int_{z \in B_{ij},\mathrm{dis}(z,\partial B_{ij})<\abs{u\cdot z}\tan \theta/\Phi_{ij}} \phi_{d-1}(z) d z\int_0^ {\tan\theta \abs{u\cdot z}}\phi_1(t) dt \\
   \le & \int_{z \in B_{ij},\mathrm{dis}(z,\partial B_{ij})<r\tan \theta/\Phi_{ij}} \phi_{d-1}(z) d z\int_0^ {r\tan\theta }\phi_1(t) dt + \exp(-r^2/2)/r \\
   \le & O(r\tan\theta\,\Gamma(B_{ij})/\Phi_{ij})\cdot O(r\tan\theta) + \exp(-r^2/2)/r \;.
\end{align*}

By choosing $r = 2\sqrt{\log (1/\tan\theta)}$, we obtain that
\begin{align*}
    \abs{\int_{z \in B_{ij}} \phi_{d-1}(z) d z\int_0^ {\tan\theta \abs{u\cdot z} } (\Ind\{f^*(x) \in \{i,j\}\}-1) \phi_1(t) dt } \le \tilde O(\sqrt{\log k}\tan^2\theta/\Phi_{ij}) \;.
\end{align*}
This implies that 
\begin{align*}
        \abs{\Pr_{x\sim \cN(0,I)}[(w^*_{ij}\cdot x)(w\cdot x)<0, f^*(x) \in \{i,j\}]-\int_{z \in B_{ij}} \phi_{d-1}(z) d z\int_0^ {\tan\theta \abs{u\cdot z} }\phi_1(t) dt }  \le  O(\sqrt{\log k}\tan^2\theta) \;.
\end{align*}
Finally, note that for $\theta \ge \theta_0$, the error at angle $\theta$ is at least the error at angle $\theta_0$, since decreasing the angle from $\theta$ to $\theta_0$ only shrinks the conic disagreement region. Indeed,
\begin{align*}
   & \Pr_{x\sim \cN(0,I)}[(w^*_{ij}\cdot x)(w\cdot x)<0, f^*(x) \in \{i,j\}]\\
    & = \int_{z \in H_{ij}} \phi_{d-1}(z) d z\int_0^ {\tan\theta \abs{u\cdot z} } \Ind\{f^*(x) \in \{i,j\}\} \phi_1(t) dt \\
    & = \int_{z \in H_{ij}} (\Ind\{z\in B_{ij}\} +\Ind\{z\in H_{ij}\setminus B_{ij}\} \phi_{d-1}(z) d z\int_0^ {\tan\theta \abs{u\cdot z} } \Ind\{f^*(x) \in \{i,j\}\} \phi_1(t) dt \\
    & \ge \int_{z \in B_{ij}} \phi_{d-1}(z) d z\int_0^ {\tan\theta_0 \abs{u\cdot z} } \Ind\{f^*(x) \in \{i,j\}\} \phi_1(t) dt \\
    & \ge \int_{z \in B_{ij}} \phi_{d-1}(z) d z\int_0^ {\tan\theta_0 \abs{u\cdot z} }\phi_1(t) dt   - \tilde O(\sqrt{\log k}\tan^2\theta_0/\Phi_{ij}).
\end{align*}
\end{proof}

\subsection{Proof of \Cref{col surface}}\label{app proof surface}
In this section, we give the proof of \Cref{col surface}. We restate \Cref{col surface} as follows.

\begin{lemma}[Restatement of \Cref{col surface}]
    Let $f^*(x) = \argmax_{i \in [k]} (w^*_i \cdot x) : \R^d \to [k]$ be a multiclass linear classifier and let $h_w(x) = \sign(w \cdot x)$ be a halfspace. For $i,j \in [k], i \neq j$, let $\theta = \theta(w^*_{ij},w)$. 
    Then, it holds
    \begin{align*}
        & \Pr_{x\sim \cN(0,I)}[(w^*_{ij}\cdot x)(w\cdot x)<0, f^*(x) \in \{i,j\}] \le 2\sqrt{e} \tan \theta \mathcal{T}_{ij}\sqrt{\log(1/\mathcal{T}_{ij})}+\tilde O(\sqrt{\log k}\tan^2\theta/\Phi_{ij}) \\
        & \Pr_{x\sim \cN(0,I)}[(w^*_{ij}\cdot x)(w\cdot x)<0, f^*(x) \in \{i,j\}] \ge \tan \theta \mathcal{T}_{ij}^2/20 - \tilde O(\sqrt{\log k}\tan^2\theta/\Phi_{ij})
    \end{align*}
In particular, if $\theta \ge \theta_0 = \tilde\Theta  (\mathcal{T}^2_{ij}\Phi_{ij}/\sqrt{\log k})$,
then it always holds 
$\Pr_{x\sim \cN(0,I)}[(w^*_{ij}\cdot x)(w\cdot x)<0, f^*(x) \in \{i,j\}] \ge \tilde \Omega(\mathcal{T}_{ij}^4\Phi_{ij}/\sqrt{\log k})$.
\end{lemma}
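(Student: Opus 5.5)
The plan is to reduce everything to \Cref{lm error count}. Note that $\Pr_{x\sim\cN(0,I)}[(w^*_{ij}\cdot x)(w\cdot x)<0, f^*(x)\in\{i,j\}]$ equals $\err_{ij}(h_w,f^*)$ up to orientation. So, up to the additive $\tilde O(\sqrt{\log k}\tan^2\theta/\Phi_{ij})$ error, it suffices to bound the main term
\[
J(\theta):=\int_{z\in B_{ij}}\phi_{d-1}(z)\,dz\int_0^{\tan\theta\abs{u\cdot z}}\phi_1(t)\,dt
\]
from above and below. Here $z$ is a $(d-1)$-dimensional standard Gaussian on $H_{ij}$ and $u\in H_{ij}$ is a unit vector. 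Since $\phi_1\le 1/\sqrt{2\pi}$, we have $\int_0^{s}\phi_1 \le s/\sqrt{2\pi}$. Also, for $s\lesssim 1$, $\int_0^{s}\phi_1\ge s\,\phi_1(1)$; otherwise we truncate at $\min\{s,1\}$.

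\textbf{Upper bound.} Use $J(\theta)\le \tan\theta\,\E_z[\abs{u\cdot z}\Ind\{z\in B_{ij}\}]$ and apply \Cref{fact:simpleperturbation} in the $(d-1)$-dimensional Gaussian on $H_{ij}$. Take the event $S=\{z\in B_{ij}\}$ and the trivial interval $B=\R$, with the auxiliary direction chosen orthogonal to $u$; if $d-1=1$, argue directly. This gives $\E[\abs{u\cdot z}\Ind\{S\}]\le 2\sqrt e\,\mathcal T_{ij}\sqrt{\log(1/\mathcal T_{ij})}$, which is the first inequality after adding the error from \Cref{lm error count}.

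\textbf{Lower bound.} Gaussian anti-concentration of $u\cdot z\sim\cN(0,1)$ gives $\Pr[\abs{u\cdot z}\le \mathcal T_{ij}/4]\le \mathcal T_{ij}/(2\sqrt{2\pi})<\mathcal T_{ij}/2$. Hence at least half of the mass $\mathcal T_{ij}$ of $B_{ij}$ has $\abs{u\cdot z}\ge \mathcal T_{ij}/4$. On that set, assuming $\tan\theta\le 1$ so the inner limit is at most a constant after truncation, the inner integral is at least $c\tan\theta\,\mathcal T_{ij}$. This yields $J(\theta)\ge \tan\theta\,\mathcal T_{ij}^2/20$ after tracking constants, e.g.\ $(\mathcal T_{ij}/2)\cdot(\mathcal T_{ij}/4)\cdot\phi_1(1)\tan\theta\ge\tan\theta\,\mathcal T_{ij}^2/20$. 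If $\tan\theta$ is large, the second-order error term dominates and the bound is vacuous anyway.

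\textbf{The ``in particular'' claim.} This is the main subtlety, since the error term grows like $\tan^2\theta$, so the lower bound cannot be applied at $\theta$ directly. Instead I would use the second part of \Cref{lm error count} with the threshold $\theta_0<\theta$:
\[
\mathrm{err}\ge J(\theta_0)-\tilde O(\sqrt{\log k}\tan^2\theta_0/\Phi_{ij})\ge \tan\theta_0\,\mathcal T_{ij}^2/20-\tilde O(\sqrt{\log k}\tan^2\theta_0/\Phi_{ij}).
\]
Choose $\tan\theta_0=c\,\mathcal T_{ij}^2\Phi_{ij}/\sqrt{\log k}$, with $c$ a small enough polylogarithmic factor to absorb the hidden $\log(1/\tan\theta_0)$. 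Then the quadratic term is at most half the linear term, leaving $\tilde\Omega(\mathcal T_{ij}^4\Phi_{ij}/\sqrt{\log k})$. The only care needed is verifying that the $\tilde O$ in \Cref{lm error count} hides just $\log(1/\tan\theta_0)$ factors, so that a self-consistent choice of $\theta_0$ exists; this is routine by taking $c=1/\mathrm{polylog}(k/(\mathcal T_{ij}\Phi_{ij}))$.
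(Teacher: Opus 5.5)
Your proposal follows the paper's proof essentially step for step: you reduce to \Cref{lm error count}, bound the main term above via \Cref{fact:simpleperturbation} with the trivial interval, bound it below via Gaussian anti-concentration of $u\cdot z$ on $B_{ij}$, and get the ``in particular'' claim from the second part of \Cref{lm error count} at $\tan\theta_0 = c\,\mathcal{T}_{ij}^2\Phi_{ij}/\sqrt{\log k}$. (The paper instead truncates on $\{\tan\theta\abs{u\cdot z}<1\}$ using $\phi_1\ge 1/5$ and threshold $\mathcal{T}_{ij}/2$, which is an immaterial difference.) The one slip is your sample arithmetic, since $(1/2)(1/4)\phi_1(1)\approx 0.030<1/20$; to actually reach $1/20$, use the sharper mass bound $\mathcal{T}_{ij}(1-1/(2\sqrt{2\pi}))\approx 0.8\,\mathcal{T}_{ij}$ together with $\phi_1(1/4)$ (your truncated limit is at most $1/4$), which gives about $0.077\tan\theta\,\mathcal{T}_{ij}^2$.
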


\begin{proof}[Proof of \Cref{col surface}]
First we prove the upper bound.
    By \Cref{lm error count}, we have that 
    \begin{align*}
        &\Pr_{x\sim \cN(0,I)}[(w^*_{ij}\cdot x)(w\cdot x)<0, f^*(x) \in \{i,j\}]\\ & \le \int_{z \in B_{ij}} \phi_{d-1}(z) d z\int_0^ {\tan\theta \abs{u\cdot z} }\phi_1(t) dt + \tilde O(\sqrt{\log k}\tan^2\theta/\Phi_{ij}) \\
        & \le \tan\theta \int \abs{u\cdot z} \Ind\{z\in B_{ij}\} \phi_{d-1}(z) dz + \tilde O(\sqrt{\log k}\tan^2\theta/\Phi_{ij}) \\
        & \le 2\sqrt{e}\tan\theta \mathcal{T}_{ij}\sqrt{\log(1/\mathcal{T}_{ij})} + \tilde O(\sqrt{\log k}\tan^2\theta/\Phi_{ij}).
    \end{align*}
Here, the last inequality follows from \Cref{fact:simpleperturbation}.
For the lower bound, we have 
\begin{align*}
        & \Pr_{x\sim \cN(0,I)}[(w^*_{ij}\cdot x)(w\cdot x)<0, f^*(x) \in \{i,j\}] \\
        &\ge \int_{z \in B_{ij}} \phi_{d-1}(z) d z\int_0^ {\tan\theta \abs{u\cdot z} }\phi_1(t) dt - \tilde O(\sqrt{\log k}\tan^2\theta/\Phi_{ij}) \\
        & \ge \int_{z \in B_{ij},\tan\theta \abs{u\cdot z}<1} \phi_{d-1}(z) d z\int_0^ {\tan\theta \abs{u\cdot z} }\phi_1(t) dt - \tilde O(\sqrt{\log k}\tan^2\theta/\Phi_{ij}) \;.
    \end{align*}    
Notice that $\phi_1(t) \ge 1/5$ for $t \in [0,1]$, which implies that if $\tan\theta\abs{u\cdot z} \le 1$, then 
\begin{align*}
    \int_{z \in B_{ij},\tan\theta \abs{u\cdot z}<1} \phi_{d-1}(z) d z\int_0^ {\tan\theta \abs{u\cdot z} }\phi_1(t) dt \ge \tan\theta \int_{z \in B_{ij},\tan\theta \abs{u\cdot z}<1} \abs{u\cdot z} \phi_{d-1}(z) d z/5.
\end{align*}
Moreover, notice that 
\begin{align*}
    \tan\theta \int_{z \in B_{ij},\tan\theta \abs{u\cdot z}\ge 1} \abs{u\cdot z} \phi_{d-1}(z) d z & 
    \le \tan\theta \int_{\abs{u\cdot z}\ge 1/\tan\theta} \abs{u\cdot z} \phi_{d-1}(z) dz\\
    &\le O(\tan\theta \exp(-1/(2\tan^2\theta)/2))\;,
\end{align*}
where the last inequality follows from $u\cdot z\sim N(0,1)$ and the identity
$\int_a^\infty t\phi_1(t)\,dt=\phi_1(a)=O(e^{-a^2/2})$.
The above implies that 
\begin{align*}
        & \Pr_{x\sim \cN(0,I)}[(w^*_{ij}\cdot x)(w\cdot x)<0, f^*(x) \in \{i,j\}] \\
        & \ge \tan\theta \int_{z \in B_{ij}} \abs{u\cdot z} \phi_{d-1}(z) d z/5 - \tan\theta \int_{z \in B_{ij},\tan\theta \abs{u\cdot z} \ge 1} \abs{u\cdot z} \phi_{d-1}(z) d z/5 - \tilde O(\sqrt{\log k}\tan^2\theta/\Phi_{ij}) \\
        & \ge \tan\theta \int_{z \in B_{ij}} \abs{u\cdot z} \phi_{d-1}(z) dz/5- \tilde O(\sqrt{\log k}\tan^2\theta/\Phi_{ij}) \;.
    \end{align*}
By the anti-concentration of the standard normal distribution,
$\Pr_{z\sim \cN(0,I)}[\abs{u\cdot z}<\mathcal{T}_{ij}/2] \le \mathcal{T}_{ij}/2$, which implies that $\Pr_{z\sim \cN(0,I)}[\abs{u\cdot z}<\mathcal{T}_{ij}/2 \mid B_{ij}] \le 1/2$.
This gives,
\begin{align*}
    \tan\theta \int_{z \in B_{ij}} \abs{u\cdot z} \phi_{d-1}(z) \ge \tan\theta \mathcal{T}_{ij} \E_{z \sim \cN(0,I)}[\abs{u\cdot z} \mid B_{ij}] \ge \tan\theta \mathcal{T}^2_{ij}/4.
\end{align*}
Thus, we have 
\begin{align*}
    \Pr_{x\sim \cN(0,I)}[(w^*_{ij}\cdot x)(w\cdot x)<0, f^*(x) \in \{i,j\}] \ge \tan \theta \mathcal{T}_{ij}^2/20 - \tilde O(\sqrt{\log k}\tan^2\theta/\Phi_{ij}).
\end{align*}

In particular, by \Cref{lm error count}, we have 
\begin{align*}
 & \Pr_{x\sim \cN(0,I)}[(w^*_{ij}\cdot x)(w\cdot x)<0, f^*(x) \in \{i,j\}]\\ & \ge \int_{z \in B_{ij}} \phi_{d-1}(z) d z\int_0^ {\tan\theta_0 \abs{u\cdot z} }\phi_1(t) dt   - \tilde O(\sqrt{\log k}\tan^2\theta_0/\Phi_{ij}) \\
  & \ge \tan \theta_0 \mathcal{T}_{ij}^2/20 - \tilde O(\sqrt{\log k}\tan^2\theta_0/\Phi_{ij}) \\
  &\ge \Omega(\mathcal{T}_{ij}^4\Phi_{ij}/\sqrt{\log k}).
\end{align*}

\end{proof}

\subsection{Proof of \Cref{lm localization surface}}\label{app proof localization surface}

Before presenting the proof of \Cref{lm localization surface}, we give the following structural lemma to show that after localization, the critical angle of a multiclass linear classifier will not change too much.

\begin{lemma}\label{lm critical angle}
    Let $u,v \in \s^{d-1}$ be two unit vectors and let $\theta$ be the acute angle between $u,v$. Let $w \in \s^{d-1}$ such that $\alpha:=\sin \theta(u,w) \le 1/(16C)$, where $C>1$. Let $\Sigma = I+(\sigma^2-1)ww^\top$, where $\sigma = C\alpha$. Let $\tilde \theta$ be the acute angle between $\Sigma^{1/2}u,\Sigma^{1/2}v$. Then we have $\min\{\tan \tilde \theta,1\} \ge \Omega(\min\{\tan \theta,1\}/C)$.
\end{lemma}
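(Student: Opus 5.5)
The plan is a Gram-determinant (area) argument. It is cleanest to lower bound $\sin\tilde\theta$, because $\tan\tilde\theta\ge\sin\tilde\theta$ for an acute angle. The argument combines three facts:
\begin{itemize}
\item $\Sigma^{1/2}$ shrinks areas in any $2$-plane by at most a factor $\sigma$;
\item $\Sigma^{1/2}u$ is short, of length at most $\sqrt2\,C\alpha$, because $u$ is almost parallel to $w$;
\item $\Sigma^{1/2}v$ has length at most $1$.
\end{itemize}

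\textbf{Step 1 (area distortion).} Let $P=\mathrm{span}(u,v)$ and let $Q\in\R^{d\times 2}$ have orthonormal columns spanning $P$. For $a=\Sigma^{1/2}u$ and $b=\Sigma^{1/2}v$ we have
\[
\|a\|^2\|b\|^2-(a\cdot b)^2=\big(\|u\|^2\|v\|^2-(u\cdot v)^2\big)\det(Q^\top\Sigma Q)=\sin^2\theta\,\det(Q^\top\Sigma Q).
\]
Since $Q^\top\Sigma Q=I_2-(1-\sigma^2)qq^\top$ with $q=Q^\top w$ and $\|q\|\le 1$, this gives
\[
\det(Q^\top\Sigma Q)=1-(1-\sigma^2)\|q\|^2\ge\sigma^2 .
\]
Consequently $\sin\tilde\theta\,\|a\|\|b\|\ge\sigma\sin\theta$. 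This determinant computation is the one step that needs care. The naive bound, that each eigenvalue of $Q^\top\Sigma Q$ is at least $\sigma^2$, only gives $\sigma^4$ for the determinant, and that would lose a factor of $C\alpha$. The rank-one structure of $\Sigma$ is what gives the sharp factor $\sigma^2$.

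\textbf{Step 2 (norms).} Write $u=\cos\beta\,w+\alpha e$ with $e\perp w$ a unit vector and $\sin\beta=\alpha$. Then
\[
\|a\|^2=\sigma^2\cos^2\beta+\alpha^2\le C^2\alpha^2+\alpha^2\le 2C^2\alpha^2,
\]
so $\|a\|\le\sqrt2\,C\alpha$. Also $\|b\|\le\|\Sigma^{1/2}\|_2\le1$, since $\sigma\le 1/16<1$. Combining with Step 1,
\[
\sin\tilde\theta\ge\frac{\sigma\sin\theta}{\sqrt2\,C\alpha}=\frac{\sin\theta}{\sqrt2}.
\]
Here we use $\sigma=C\alpha$, and the hypothesis $\alpha\le1/(16C)$ is only needed to ensure $\sigma<1$ so that $\Sigma$ is a genuine localization.

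\textbf{Step 3 (conversion to tangents).} For acute $\theta$, we have $\sin\theta\ge\min\{\tan\theta,1\}/\sqrt2$. Indeed, if $\theta\le\pi/4$ then $\sin\theta=\tan\theta\cos\theta\ge\tan\theta/\sqrt2$, and otherwise $\sin\theta\ge 1/\sqrt2$. Since $\tilde\theta$ is acute, $\tan\tilde\theta\ge\sin\tilde\theta$, and so
\[
\min\{\tan\tilde\theta,1\}\ge\sin\tilde\theta\ge\tfrac12\min\{\tan\theta,1\}.
\]
This is $\Omega(\min\{\tan\theta,1\}/C)$ because $C>1$, which proves the lemma.
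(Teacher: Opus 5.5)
Your proof is correct, and it takes a genuinely different and sharper route than the paper's.

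\textbf{The paper's route.} The paper works with the normalized map $F(x)=\Sigma^{1/2}x/\|\Sigma^{1/2}x\|$ in polar coordinates around $w$. It splits $2\sin^2(\angle/2)$ into a radial term and a tangential term. It then applies the mean value theorem to $G(\phi)=\arctan(\tan\phi/\sigma)$ to show that $F$ expands half-angles by a factor $\Omega(1/\sigma)$ on the shell $S=\{x:\sin\theta(x,w)\in[\alpha/4,4\alpha]\}$. Finally it splits into cases:
\begin{itemize}
\item If $\sin\theta(u,v)\le\alpha/16$, then $v\in S$ and the expansion gives $\tan\tilde\theta\ge\Omega(\tan\theta)$.
\item Otherwise, a monotonicity argument for $G$ only yields $\tilde\theta\ge\Omega(\alpha/\sigma)=\Omega(1/C)$. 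This is where the $1/C$ loss in the statement comes from.
\end{itemize}

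\textbf{Your route.} Your Gram-determinant identity, combined with the matrix determinant lemma for the rank-one perturbation, replaces all of this with one global estimate. $\Sigma^{1/2}$ shrinks areas in $\mathrm{span}(u,v)$ by at most a factor $\sigma$. This shrinkage is exactly compensated by $\|\Sigma^{1/2}u\|\le\sqrt{\sigma^2+\alpha^2}$. The result is $\sin\tilde\theta\ge\sigma\sin\theta/\sqrt{\sigma^2+\alpha^2}\ge\sin\theta/\sqrt2$, with no case analysis. You only use $\alpha\le\sigma\le 1$.

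\textbf{What each buys.} You prove the $C$-free bound $\min\{\tan\tilde\theta,1\}\ge\tfrac12\min\{\tan\theta,1\}$, which is stronger than stated. In the proof of \Cref{lm localization surface}, $C\asymp\mathcal{T}_{ij}^{-1/2}$, so your bound would upgrade $\tilde\Phi_{ij}\ge\Omega(\sqrt{\mathcal{T}_{ij}}\Phi_{ij})$ to $\tilde\Phi_{ij}\ge\Omega(\Phi_{ij})$. The paper's local analysis additionally quantifies the $1/\sigma$ blow-up of small angles near $u$, but the lemma as stated does not need this.

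\textbf{Degenerate cases.} The case $u=\pm v$ is trivial. The case $\alpha=0$, where $\Sigma^{1/2}u=0$ and $\tilde\theta$ is undefined, is implicitly excluded by the statement; it is worth saying so in one line.
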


\begin{proof}[Proof of \Cref{lm critical angle}]
    For every $x \in \s^{d-1}$, we write $x = \cos\theta(w,x) w + \sin \theta(w,x) z$, where $z \in \s^{d-1}, z \perp w$. 
    Define the map $F(x):= \Sigma^{1/2}x/\norm{\Sigma^{1/2}x}$.
    That is to say
    \begin{align*}
        F(x) = \cos\theta'(w,x) w + \sin \theta'(w,x) z,
    \end{align*}
     with $\tan \theta'(w,x) = \tan \theta(w,x)/\sigma.$
    Define 
    \begin{align*}
        S=\{x \in \s^{d-1} \mid \abs{\sin\theta(x,w)} \in [\alpha/4,4\alpha]\}.
    \end{align*}
    We first show that for any $x^{(1)},x^{(2)} \in S$, we have $\sin(\theta(x^{(1)},x^{(2)})/2) \le O(\sigma) \sin(\theta(F(x^{(1)}),F(x^{(2)}))/2)$

    Let $\theta^{(i)} = \theta(w,x^{(i)})$ and $\tilde\theta^{(i)} = \theta(w,F(x^{(i)}))$, for $i=1,2$. Write $x^{(i)} = \cos \theta^{(i)} w + \sin \theta^{(i)} z^{(i)}$, where $z^{(i)} \perp w$. Then, 
    we have 
    \begin{align*}
        \cos \theta(x^{(1)},x^{(2)})  = \cos \theta^{(1)}\cos\theta^{(2)} + \sin \theta^{(1)}\sin\theta^{(2)}(z^{(1)}\cdot z^{(2)}).       
    \end{align*}
This implies that 
\begin{align*}
    2\sin^2(\theta(x^{(1)},x^{(2)})/2)&= 1- \cos \theta(x^{(1)},x^{(2)}) = 1-\cos(\theta^{(1)}-\theta^{(2)}) + \sin\theta^{(1)}\sin\theta^{(2)}(1-(z^{(1)}\cdot z^{(2)})) \\
    & = 2\sin^2((\theta^{(1)}-\theta^{(2)})/2) + \sin\theta^{(1)}\sin\theta^{(2)}(1-(z^{(1)}\cdot z^{(2)})) \\
    & \le (\theta^{(1)}-\theta^{(2)})^2/2 + \sin\theta^{(1)}\sin\theta^{(2)}(1-(z^{(1)}\cdot z^{(2)}))
\end{align*}
    and 
\begin{align*}
    2\sin^2(\theta(F(x^{(1)}),F(x^{(2)}))/2) & = 1- \cos \theta(F(x^{(1)}),F(x^{(2)}))\\
    &= 1-\cos(\tilde\theta^{(1)}-\tilde\theta^{(2)}) + \sin\tilde\theta^{(1)}\sin\tilde\theta^{(2)}(1-(z^{(1)}\cdot z^{(2)})) \\
    & = 2\sin^2((\tilde\theta^{(1)}-\tilde\theta^{(2)})/2) + \sin\tilde\theta^{(1)}\sin\tilde\theta^{(2)}(1-(z^{(1)}\cdot z^{(2)})) \\
    & \ge (\tilde\theta^{(1)}-\tilde\theta^{(2)})^2/5 + \sin\tilde\theta^{(1)}\sin\tilde\theta^{(2)}(1-(z^{(1)}\cdot z^{(2)})) \;.
\end{align*}
We first show that $(\theta^{(1)}-\theta^{(2)})^2 \le O(\alpha^2)(\tilde\theta^{(1)}-\tilde\theta^{(2)})^2.$
Notice that the map $G(\theta(x,w)) = \theta(F(x),w)$ is a differentiable and monotone map in $\theta(x,w)$.
By the mean value theorem, there is some $\theta^* \in (\theta^{(1)},\theta^{(2)})$ such that 
\begin{align*}
    \tilde\theta^{(1)}-\tilde\theta^{(2)} = G'(\theta^*)(\theta^{(1)}-\theta^{(2)}).
\end{align*}
So, it suffices to bound $\abs{G'(\theta^*)}$.
Recall that $\tan \theta(F(x),w) = \tan \theta(x,w)$, which implies that $G(\theta) = \arctan(\tan \theta/\sigma)$.
Thus, we have 
\begin{align*}
G'(\theta) = \sigma/(\sigma^2\cos^2\theta+\sin^2\theta) \ge \sigma/(\sigma^2 + \sin^2\theta).    
\end{align*}
Since $\sin \theta \in [\alpha/4,4\alpha]$ and $\sigma = C\alpha$, we know that $G'(\theta) \ge 1/(16C\alpha)$.
This implies that $(\theta^{(1)}-\theta^{(2)})^2 \le O(C^2\alpha^2)(\tilde\theta^{(1)}-\tilde\theta^{(2)})^2.$

On the other hand, we have 
\begin{align*}
    \sin\tilde\theta^{(1)}\sin\tilde\theta^{(2)} & = \cos \tilde\theta^{(1)} \cos \tilde\theta^{(2)} \tan \tilde\theta^{(1)} \tan \tilde\theta^{(2)} = \cos \tilde\theta^{(1)} \cos \tilde\theta^{(2)} \tan \theta^{(1)} \tan \theta^{(2)}/\sigma^2\\
    & \ge \Omega(\sin\theta^{(1)}\sin\theta^{(2)}/\sigma^2) \;.
\end{align*}
Together we obtain
\begin{align}\label{eq map}
    \sin(\theta(x^{(1)},x^{(2)})/2) \le O(\sigma) \sin(\theta(F(x^{(1)}),F(x^{(2)}))/2) \;.
\end{align}
We now use \eqref{eq map} to show \Cref{lm critical angle}. 
Since $\sin\theta(u,w) = \alpha$, we know that if $\sin\theta(u,v) \le \alpha/16$, then it must be the case that $v \in S$. This implies that 
\begin{align*}
   \tan(\theta(F(u),F(v)))\ge \sin(\theta(F(u),F(v))/2) \ge \Omega(\sigma^{-1}) \sin(\theta(u,v)/2) \ge \Omega(\tan\theta(u,v)) \;.
\end{align*}
We next consider the case when $\sin\theta(u,v) \ge \alpha/16$.
Recall that the map $G(\theta(x,w)) = \theta(F(x),w)$ is a differentiable and monotone map in $\theta(x,w)$. 
This implies that for $x \in S$, the image of $G$ is $[G(\arcsin(\alpha/4)),G(\arcsin(4\alpha))]$.
Since $\sin \theta(u,w) = \alpha$, if 
\begin{align*}
\sin\theta(F(u),F(v)) \le \sin( \min\{G(\arcsin(4\alpha)-G(\arcsin(\alpha),G(\arcsin(\alpha))-G(\arcsin(\alpha/4)\}/4),   
\end{align*}
 then it must be the case that $v \in S$.
In this case, we have 
\begin{align*}
\tan(\theta(F(u),F(v)))\ge \sin(\theta(F(u),F(v))/2) \ge \Omega(\sigma^{-1}) \sin(\theta(u,v)/2) \ge \Omega(\alpha/\sigma) = \Omega(1/C).    
\end{align*}
This concludes the proof of \Cref{lm critical angle}.
\end{proof}

\begin{lemma}[Restatement of \Cref{lm localization surface}]   
Let $f^*(x) = \argmax_{i \in [k]} (w^*_i \cdot x) : \R^d \to [k]$ be a multiclass linear classifier and let $h_w(x) = \sign(w \cdot x)$ be a halfspace. For $i,j \in [k], i \neq j$, let $\theta = \theta(w^*_{ij},w)$ with $\sin \theta < \sqrt{\mathcal{T}_{ij}}/C$ for a sufficiently large constant $C>0$. 
    Then we have 
    \begin{align*}
        \Pr_{x\sim \cN(0,\Sigma)}[(w^*_{ij}\cdot x)(w\cdot x)<0, f^*(x) \in \{i,j\}] \ge  \tilde \Omega(\mathcal{T}_{ij}^{4.5}\Phi_{ij}/\sqrt{\log k})\;,
    \end{align*}
where $\Sigma = I+(\sigma^2-1)ww^\top$ for any $\sigma \in [2\sin\theta/\sqrt{\mathcal{T}_{ij}},4\sin\theta/\sqrt{\mathcal{T}_{ij}}]$.
\end{lemma}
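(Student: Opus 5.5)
We will reduce the localized probability to the unlocalized bound of \Cref{col surface}, applied to a transformed classifier. Write $x=\Sigma^{1/2}z$ with $z\sim\cN(0,I)$. Since $\Sigma^{1/2}$ is symmetric, $w^*_{ij}\cdot x=(\Sigma^{1/2}w^*_{ij})\cdot z$ and $w\cdot x=\sigma\,(w\cdot z)$. Moreover $f^*(\Sigma^{1/2}z)=\argmax_r (\Sigma^{1/2}w^*_r)\cdot z$ is again a $k$-MLC $\tilde f$, with normalized pairwise normals $\tilde w^*_{ab}\propto \Sigma^{1/2}w^*_{ab}$. The target probability therefore equals
\[
\Pr_{z\sim\cN(0,I)}\bigl[(\tilde w^*_{ij}\cdot z)(w\cdot z)<0,\ \tilde f(z)\in\{i,j\}\bigr].
\]
So it suffices to control three quantities for $\tilde f$: the angle $\tilde\theta=\theta(\tilde w^*_{ij},w)$, the effective boundary mass $\tilde{\mathcal T}_{ij}$, and the critical-angle parameter $\tilde\Phi_{ij}$. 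We then invoke the lower bound of \Cref{col surface} for $\tilde f$.

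\textbf{Step 1 (angle).} Write $w^*_{ij}=\cos\theta\,w+\sin\theta\,u$ with $u\perp w$. Then $\Sigma^{1/2}w^*_{ij}=\sigma\cos\theta\,w+\sin\theta\,u$, so
\[
\tan\tilde\theta=\frac{\tan\theta}{\sigma}\in\Bigl[\tfrac{\sqrt{\mathcal T_{ij}}}{4},\tfrac{\sqrt{\mathcal T_{ij}}}{2}\Bigr]\quad\text{up to constants.}
\]
Thus $\tan\tilde\theta=\Theta(\sqrt{\mathcal T_{ij}})$: this is small, since $\mathcal T_{ij}\le 1$, but it is not too small.

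\textbf{Step 2 (critical angle).} Apply \Cref{lm critical angle} with $u=w^*_{ij}$, $\alpha=\sin\theta$, $C=\Theta(1/\sqrt{\mathcal T_{ij}})$ and $v\in\{w^*_{ir},w^*_{jr}\}$. The hypothesis $\alpha\le 1/(16C)$ holds because $\sin\theta<\sqrt{\mathcal T_{ij}}/C_0$ for a large $C_0$. This gives $\tilde\Phi_{ij}\ge\Omega(\Phi_{ij}\sqrt{\mathcal T_{ij}})$, and this is where the extra factor $\mathcal T_{ij}^{0.5}$ in the exponent comes from.

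\textbf{Step 3 (boundary mass).} Here we need $\tilde{\mathcal T}_{ij}\ge\Omega(\mathcal T_{ij})$. The plan is to describe $\tilde B_{ij}$ as a subset of the hyperplane $\tilde H_{ij}=\{z:\Sigma^{1/2}w^*_{ij}\cdot z=0\}$, via $x=\Sigma^{1/2}z$ and the linear map between $H_{ij}$ and $\tilde H_{ij}$. The transformation acts nontrivially only on the two-dimensional span of $w$ and $u$. Restricted to the hyperplane, the direction $u$ inside $H_{ij}$ is mapped to a direction whose stretching is controlled by $\cos\tilde\theta$ and $\cos\theta$. Since both angles are small, the restricted map is close to an isometry, and the Gaussian density changes by at most a constant factor. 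We then compare the two $(d-1)$-dimensional Gaussian measures directly.

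\textbf{Step 3 is the main obstacle.} Alternatively, the proof may avoid a clean comparison of $\tilde{\mathcal T}_{ij}$ altogether. In that case we would lower bound the main integral of \Cref{lm error count},
\[
\int_{\tilde B_{ij}}\phi_{d-1}\int_0^{\tan\tilde\theta\,|u'\cdot z|}\phi_1,
\]
by pulling it back to $B_{ij}$ and using anti-concentration of $|u\cdot z|$ on $B_{ij}$, exactly as in the proof of \Cref{col surface}. Even a loss of a further $\sqrt{\mathcal T_{ij}}$ here would need to be absorbed in the final exponent.

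\textbf{Step 4 (conclusion).} Plug into the lower bound of \Cref{col surface} for $\tilde f$:
\[
\tan\tilde\theta\,\tilde{\mathcal T}_{ij}^2/20-\tilde O\bigl(\sqrt{\log k}\,\tan^2\tilde\theta/\tilde\Phi_{ij}\bigr).
\]
With $\tan\tilde\theta=\Theta(\sqrt{\mathcal T_{ij}})$, the second-order term may dominate. We will handle this via the monotonicity clause of \Cref{lm error count}: choose
\[
\theta_0=\tilde\Theta\bigl(\mathcal T_{ij}^2\tilde\Phi_{ij}/\sqrt{\log k}\bigr)<\tilde\theta,
\]
which yields a bound of order
\[
\tilde\Omega\bigl(\mathcal T_{ij}^4\tilde\Phi_{ij}/\sqrt{\log k}\bigr)=\tilde\Omega\bigl(\mathcal T_{ij}^{4.5}\Phi_{ij}/\sqrt{\log k}\bigr),
\]
as claimed.
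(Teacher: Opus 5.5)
Your outline follows the paper's proof. It uses the same change of variables $x=\Sigma^{1/2}z$ to a transformed MLC $\tilde f$ and the same computation $\tan\tilde\theta=\tan\theta/\sigma=\Theta(\sqrt{\mathcal T_{ij}})$. It applies \Cref{lm critical angle} in the same way, with $C=\sigma/\sin\theta=\Theta(\mathcal T_{ij}^{-1/2})$, to get $\tilde\Phi_{ij}\ge\Omega(\sqrt{\mathcal T_{ij}}\,\Phi_{ij})$. It ends with the same appeal to the threshold clause of \Cref{col surface} for $\tilde f$. Your Step 4 is right that at $\tan\tilde\theta\approx\sqrt{\mathcal T_{ij}}$ you need that clause, not the first-order bound, and that this is where $\mathcal T_{ij}^{4}\cdot\mathcal T_{ij}^{1/2}$ comes from. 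Steps 1, 2 and 4 are fine.

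The one step that fails as written is Step 3. The restricted map $\Sigma^{-1/2}|_{H_{ij}}:H_{ij}\to\tilde H_{ij}$ is an isometry composed with a stretch along the single unit direction $e_1\in H_{ij}\cap\mathrm{span}(w,w^*_{ij})$. The stretch factor is $\rho=\cos\theta/\cos\tilde\theta$, with $\rho^2=\cos^2\theta+\sin^2\theta/\sigma^2\ge 1$. Pulling back gives
\[
\tilde{\mathcal T}_{ij}=\E_{x\sim\cN(0,I_{H_{ij}})}\bigl[\Ind\{x\in B_{ij}\}\,\rho\,e^{-(\rho^2-1)(e_1\cdot x)^2/2}\bigr].
\]
This density ratio is not bounded below by any constant: it tends to $0$ as $|e_1\cdot x|\to\infty$. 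So ``the Gaussian density changes by at most a constant factor'' is false. Closeness to an isometry is also not enough by itself. What you need is the quantitative bound $\rho^2-1\le(\sin\theta/\sigma)^2\le\mathcal T_{ij}/4$, which follows directly from the range of $\sigma$. With it, $e^{-s}\ge 1-s$ and $\E[(e_1\cdot x)^2]=1$ give $\tilde{\mathcal T}_{ij}\ge\mathcal T_{ij}-(\rho^2-1)/2\ge 7\mathcal T_{ij}/8$.

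The paper gets the same bound through total variation. The pulled-back law is $\cN(0,\Sigma')$ with a single non-unit eigenvalue $\rho^{-2}$. Pinsker's inequality plus the one-eigenvalue KL formula then give $|\tilde{\mathcal T}_{ij}-\mathcal T_{ij}|\le(\sin\theta/\sigma)^2/2\le\mathcal T_{ij}/8$. Either way the loss is additive and of size $O(\rho^2-1)$, so $\rho^2-1=O(\mathcal T_{ij})$ is the essential input. Your fallback of pulling back the main integral of \Cref{lm error count} is not needed.
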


\begin{proof}[Proof of \Cref{lm localization surface}]
Let $x \sim \cN(0,\Sigma)$, where $\Sigma^{1/2} w = \sigma w$ and $\Sigma^{1/2} u = u$ for every $u \perp w$.
Write $x = \Sigma^{1/2} z$, with $z \sim \cN(0,I)$.
For each $z$, we associate an artificial label $y^*(z) = f^*(x)$. In other words, 
$y^*(z) = \tilde f(z) = \argmax_{i \in [k]}(\Sigma^{1/2}w^*_i \cdot z)$ is also labeled by a multiclass linear classifier.
Denote by $\tilde w^*_{ij} = \Sigma^{1/2} w^*_{ij}$, $\tilde B_{ij} = \{z \mid \tilde w^*_{ij}\cdot z = 0, \tilde w^*_{ir}\cdot z \ge 0, \tilde w^*_{jr}\cdot z \ge 0, r \in [k] \setminus\{i,j\} \}$ and $\tilde{\mathcal{T}}_{ij}, \tilde\Phi_{ij}$ be the corresponding quantities for the multiclass linear classifier $\tilde f$.
This implies that
\begin{align*}
    \Pr_{x\sim \cN(0,\Sigma)}[(w^*_{ij}\cdot x)(w\cdot x)<0, f^*(x) \in \{i,j\}] = \Pr_{z\sim \cN(0,I)}[(\tilde w^*_{ij}\cdot z)(w\cdot z)<0, y^*(z) \in \{i,j\}].
\end{align*}
Let $\tilde \theta = \theta(w,\tilde w^*_{ij})$. By \Cref{col surface}, we have that 
\begin{align*}
    \Pr_{z\sim \cN(0,I)}[(\tilde w^*_{ij}\cdot z)(w\cdot z)<0, y^*(z) \in \{i,j\}] \ge \tan \tilde \theta \tilde{\mathcal{T}}_{ij}^2/20 - \tilde O(\sqrt{\log k}\tan^2\tilde\theta/\tilde \Phi_{ij}) \;.
\end{align*}
We show that $\tilde{\mathcal{T}}_{ij}$ is comparable to $\mathcal{T}_{ij}$.
Let $H = \{x \mid w^*_{ij} \cdot x = 0\}$. Write $w = a w^*_{ij} + bu$, where $u \in \s^{d-1} \cap H$.
Without loss of generality, we assume that $u = \e_1$.
Let $\{\e_1,\dots,\e_{d-1}\}$ be a standard basis of $H$. 
For every $x \in H$, we write $x = x_1\e_1 +x'$.
Notice that 
\begin{align*}
\Sigma^{-1/2} x & = \Sigma^{-1/2} (x_1\e_1) +x' = (I + (\sigma^{-1}-1)ww^\top) (x_1\e_1) +x' 
\\
&= x + (\sigma^{-1}-1) bx_1w  \;. 
\end{align*}
In particular, this implies that $\e_1^\top \Sigma^{-1} \e_1 = \rho^2:= a^2 + (b/\sigma)^2$.

Recall that for $x = \Sigma^{1/2} z$, $z \in \tilde B_{ij}$, it must be the case that $x \in B_{ij}$.
This implies that 
\begin{align*}
    \tilde{\mathcal{T}}_{ij} = \Pr_{z \sim \cN(0,I)}[z \in \tilde B_{ij}] = \Pr_{x \sim \cN(0,\Sigma')}[x \in  B_{ij}],
\end{align*}
where the covariance matrix $\Sigma'$ satisfies $\Sigma'\e_i = \e_i$, with $i =2,\dots,d-1$ and $\e_1^\top\Sigma'\e_1 = \rho^{-2}$.
Recalling that $\mathcal{T}_{ij} = \Pr_{x \sim \cN(0,I)}[x \in B_{ij}]$, we have that
$\abs{\mathcal{T}_{ij}-\tilde{\mathcal{T}}_{ij}} \le \dtv(\cN(0,I),\cN(0,\Sigma'))\le \sqrt{\mathrm{KL}(\cN(0,\Sigma')||\cN(0,I))/2}$.
Since $\Sigma'$ has only one eigenvalue different from that of $I$, we know that 
\begin{align*}
    \mathrm{KL}(\cN(0,\Sigma')||\cN(0,I)) = \frac{1}{2}(\rho^{-2}-1 -\log(\rho^{-2})) \le (\rho^{2}-1)^2/2 \;.
\end{align*}
Recall that $\rho^2 = a^2 + (b/\sigma)^2 = 1 + b^2(\sigma^{-2}-1)$, which gives that 
\begin{align*}
    \mathrm{KL}(\cN(0,\Sigma')||\cN(0,I)) \le (b^2(\sigma^{-2}-1))^2/2 \;.
\end{align*}
We thus obtain that 
\begin{align*}
    \abs{\mathcal{T}_{ij}-\tilde{\mathcal{T}}_{ij}} \le (b/\sigma)^2/2 \le \mathcal{T}_{ij}/8.
\end{align*}
This gives 
\begin{align*}
    \Pr_{z\sim \cN(0,I)}[(\tilde w^*_{ij}\cdot z)(w\cdot z)<0, y^*(x) \in \{i,j\}] \ge \tan \tilde \theta \mathcal{T}_{ij}^2/30 - \tilde O(\sqrt{\log k}\tan^2\tilde\theta/\tilde \Phi_{ij}).
\end{align*}
By \Cref{lm critical angle}, we have that $\tilde \Phi_{ij} \ge \Omega(\sqrt{\mathcal{T}_{ij}}\Phi_{ij})$.
Since $\tan \tilde \theta = b/(a \sigma) = \sigma^{-1} \tan \theta$, we know that when $\sin \theta<\sqrt{\mathcal{T}_{ij}}/C$, for some large constant $C>1$, for $\sigma \in[2\sin\theta/\sqrt{\mathcal{T}_{ij}},4\sin\theta/\sqrt{\mathcal{T}_{ij}}]$, we have
$ \tan \tilde{\theta}  = \Omega\!\left(\sqrt{\mathcal T_{ij}}\right)$. Therefore,  \Cref{col surface} implies that  
\begin{align*}
    \Pr_{z\sim \cN(0,I)}[(\tilde w^*_{ij}\cdot z)(w\cdot z)<0, y^*(x) \in \{i,j\}]  \ge \Omega(\mathcal{T}_{ij}^{4.5}\Phi_{ij}/\sqrt{\log k}).
\end{align*}
\end{proof}

\subsection{Proof of \Cref{lm surface localization}}\label{app proof surface localization}

\begin{lemma}[Restatement of \Cref{lm surface localization}]
Consider the problem of agnostic learning a $k$-multiclass linear classifier under the standard Gaussian distribution. Given any $i \neq j \in [k]$ and $\eps,\delta \in (0,1)$, \Cref{alg pairwise localization surface} draws $m=d\poly(k/\eps,\log(1/\delta))$ examples from $D$, runs in $\poly(m)$ time and with probability at least $1-\delta$ outputs a halfspace $\widehat{h}_{ij}$ such that $\err_{ij}(\widehat{h}_{ij}) \le \poly(\mathcal{T}_{ij}^{-1}\Phi_{ij}^{-1})\log k \mathcal{O}_{ij}+\eps$, where 
$\mathcal{O}_{ij} = \sum_{a\neq b:a \in \{i,j\}\; \mathrm{or }\; b \in \{i,j\}}\opt_{a,b}$, $\opt_{a,b} = \Pr_{(x,y) \in D}[f^*(x) = a, y=b]$.  
\end{lemma}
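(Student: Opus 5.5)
We follow the template of the proof of \Cref{lm localization}, replacing \Cref{cl error prob} by \Cref{col surface} and \Cref{lm localization surface}. Write $\theta_t=\theta(w^{(t)}_{ij},w^*_{ij})$ and let $\Lambda:=\poly(\mathcal{T}_{ij}^{-1}\Phi_{ij}^{-1})\log k$ be the target blow-up.

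\textbf{Initialization.} We may assume $\Lambda\mathcal{O}_{ij}<\sqrt{\mathcal{O}_{ij}}$; otherwise the output of \Cref{alg pairwise} already meets the bound by \Cref{lm pairwise}, since $\err^*_{ij}\le\mathcal{O}_{ij}$ as in the proof of \Cref{lm error decomposition}. We may also assume $\tilde O(\sqrt{\mathcal{O}_{ij}})\ge\eps/2$, or else we are again done. Then \Cref{lm pairwise} gives
\[
\Pr_x[(w^*_{ij}\cdot x)(w^{(0)}\cdot x)<0,\ f^*(x)\in\{i,j\}]\le\err_{ij}(h_{w^{(0)}})+\mathcal{O}_{ij}\le\tilde O(\sqrt{\mathcal{O}_{ij}}).
\]
The contrapositive of the ``in particular'' part of \Cref{col surface} then yields $\theta_0\le\phi_0=\tilde\Theta(\mathcal{T}_{ij}^2\Phi_{ij}/\sqrt{\log k})$. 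In this regime $\sin\theta_0<\sqrt{\mathcal{T}_{ij}}/C$, so \Cref{lm localization surface} applies.

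\textbf{Progress step.} Fix $t$ and suppose $\theta_t\le\phi_t$ while $\err_{ij}(h_{w^{(t)}})>\Lambda\mathcal{O}_{ij}+\eps$. By the upper bound in \Cref{col surface}, large error forces $\tan\theta_t\ge\Omega(\Lambda\mathcal{O}_{ij}/(\mathcal{T}_{ij}\sqrt{\log(1/\mathcal{T}_{ij})}))$. Whenever $\phi_t\le 2\theta_t$, the chosen $\sigma_t=3\sin\phi_t/\sqrt{\mathcal{T}_{ij}}$ falls, up to constants, in the window $[2\sin\theta_t/\sqrt{\mathcal{T}_{ij}},4\sin\theta_t/\sqrt{\mathcal{T}_{ij}}]$ of \Cref{lm localization surface} (we widen the window by a constant, or use a finer grid). \Cref{lm localization surface} then says a $\tilde\Omega(\mathcal{T}_{ij}^{4.5}\Phi_{ij}/\sqrt{\log k})$ fraction of the localized clean mass is misclassified with labels in $\{i,j\}$.

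On these points $u\cdot z$ has the correct sign, as in the $k=3$ proof. Applying \Cref{lm convex body} gives $g^*\cdot u\ge\tilde\Omega(\mathcal{T}_{ij}^{9}\Phi_{ij}^2/\log k)$. The corruption in the localized distribution has mass at most $\mathcal{O}_{ij}/\sigma_t$ by \Cref{lem:rejection-sampling}, and by \Cref{fact:simpleperturbation} it perturbs the correlation by $\tilde O(\mathcal{O}_{ij}/\sigma_t)$. Since $\sigma_t\gtrsim\tan\theta_t/\sqrt{\mathcal{T}_{ij}}\gtrsim\Lambda\mathcal{O}_{ij}/\poly(\mathcal{T}_{ij}^{-1})$, choosing $\Lambda$ a large enough polynomial makes this perturbation a small fraction of the signal. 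Combining this with $\norm{g}\le O(1)$ from \Cref{lm gaussian event mean}, and since the acceptance rate $\sigma_t\ge\poly(\eps)$, \Cref{lm gradient descent} decreases $\norm{w^{(t)}-w^*_{ij}}^2$ by $\poly(\eps)$.

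\textbf{Invariant.} We maintain $\theta_t\le\phi_t$, which is the main obstacle. If $\theta_t\le\phi_t/2$, the step size $\mu=\eps^3/C$ moves $w$ by at most $O(\mu)\le\eta/2$, so $\theta_{t+1}\le\phi_t-\eta$ for suitable constants $C$. If $\theta_t\in(\phi_t/2,\phi_t]$, the progress step above decreases $\theta$ by at least $\Omega(\mu\,\poly(\eps))$ in angle, and we need this to be at least $\eta$. We therefore choose $\eta=\eps^{C}$ much smaller than $\mu\cdot\mathrm{signal}$; this is consistent because $\eta$ is a higher power of $\eps$ than $\mu$.

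\textbf{Termination and selection.} After $T=(1/\eps)^C\gg\phi_0/\eta$ steps, either some iterate has error at most $\Lambda\mathcal{O}_{ij}+\eps$, or $\phi_t$ has fallen below $\eps$. In the latter case $\theta\le\eps$, and the error is at most $\mathcal{O}_{ij}+O(\eps)$. Final selection by empirical error over $t\in[T]$, with Hoeffding and a union bound, and gridding over the unknown $\mathcal{T}_{ij},\Phi_{ij}$ (again selected by validation error), complete the argument as in \Cref{lm localization}.
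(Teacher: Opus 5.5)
Your proposal is correct and follows essentially the same route as the paper: a warm start from \Cref{alg pairwise} with the contrapositive of \Cref{col surface}, then maintaining $\theta_t\le\phi_t$ by splitting on whether $\theta_t$ is close to $\phi_t$, and finally letting the signal from \Cref{lm localization surface} and \Cref{lm convex body} dominate the $\tilde O(\mathcal{O}_{ij}/\sigma_t)$ corruption. The paper splits at $\sin\theta_t\ge 3\sin\phi_t/4$, which puts $\sigma_t$ directly in the stated window without widening it. One small slip: in the case $\theta_t\le\phi_t/2$, the inequality $O(\mu)\le\eta/2$ is backwards given your choice $\eta\ll\mu$, though the conclusion still holds because $\theta_{t+1}\le\phi_t/2+O(\mu)\le\phi_t-\eta$ whenever $\phi_t\ge\eps$.
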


\begin{proof}[Proof of \Cref{lm surface localization}]
    We first show that by running \Cref{alg pairwise} for $i,j$ with parameter $\eps/2$, we are able to get a halfspace with normal vector $w_{ij}$ such that either $\err_{ij}(h_{w_{ij}}) \le \poly(\mathcal{T}_{ij}^{-1}\Phi_{ij}^{-1})\mathcal{O}_{ij}\log k+\eps$ or 
    $\theta(w_{ij},w^*_{ij}) \le \tilde \Theta(\mathcal{T}^2_{ij} \Phi_{ij}/\sqrt{\log k})$.

    By \Cref{lm pairwise}, it follows that with probability at least $1-\delta/2$, we have 
    \begin{align*}
        \err_{ij}(h_{w_{ij}}) \le \tilde O (\sqrt{\mathcal{O}_{ij}}) + \eps/2 \;.
    \end{align*}
    If $\tilde O (\sqrt{\mathcal{O}_{ij}}) \le \eps/2$, we are done, because in this case, we have $\err_{ij}(h_{w_{ij}}) \le \eps \le \poly(\mathcal{T}_{ij}^{-1}\Phi_{ij}^{-1})\mathcal{O}_{ij}+\eps$. 
    So, we can, without loss of generality, assume that $\tilde O (\sqrt{\mathcal{O}_{ij}}) \ge \eps/2$.
    Now, if $\poly(\mathcal{T}_{ij}^{-1}\Phi_{ij}^{-1})\mathcal{O}_{ij}\log k \le \tilde O (\sqrt{\mathcal{O}_{ij}})$.
    Then it must be the case that $\tilde O(\sqrt{\mathcal{O}_{ij}}) \le  \tilde\Theta(\mathcal{T}^{4}_{ij}\Phi_{ij}/\log k)$ and $\err_{ij}(h_{w_{ij}}) \le \tilde \Theta(\mathcal{T}^{4}_{ij}\Phi_{ij}/\log k)$.
    But by \Cref{col surface}, if $\theta(w_{ij},w^*_{ij})> \tilde\Theta(\mathcal{T}^2_{ij}\Phi_{ij}/\sqrt{\log k})$, then 
    \begin{align*}
        \err_{ij}(h_{w_{ij}}) \ge \Pr_{x\sim \cN(0,I)}[(w^*_{ij}\cdot x)(w_{ij}\cdot x)<0, f^*(x) \in \{i,j\}]-\mathcal{O}_{ij} \ge \tilde \Theta(\mathcal{T}^{4}_{ij}\Phi_{ij}/\sqrt{\log k}),
        \end{align*}
    which gives a contradiction.

Now given $w \in \s^{d-1}$ such that $\theta(w,w^*_{ij})<\tilde \Theta(\mathcal{T}^2_{ij}\Phi_{ij}/\sqrt{\log k})$, we define 
the localization matrix $\Sigma \in \R^{d \times d}$ to be a symmetric matrix such that $\Sigma w = \sigma^2 w$ and $\Sigma u = u$, for every $u \perp w$.
For $\phi \in (0,\pi/2)$ such that $\phi \ge \theta(w,w^*_{ij})$, we choose $\sigma = \min\{ 3\sin \phi/\sqrt{\mathcal{T}_{ij}},1\}$.

For each $x \in \R^d$, we write $x = \Sigma^{1/2} z$, $z \sim \cN(0,I)$ we will associate an artificial label $\tilde y(z):= y(x)$ for each $z \in \R^d$ and we define the artificial mistake set as 
\begin{align*}
    \widetilde E_{ij}(w) = \{(z,\tilde y) \mid \tilde y = i, w\cdot z \le 0\} \cup \{(z,\tilde y) \mid \tilde y = j, w\cdot z \ge 0\}.
\end{align*}
Denote by $D'(w)$ the distribution of $(z,\tilde y)$.

We define the random vector
\begin{align*}
    G_{ij}(w):=\proj_{w^\perp}(z\Ind\{w\cdot z\le 0,  \tilde y=i\}-z\Ind\{w \cdot z \ge 0, \tilde y = j\}).
\end{align*}
Denote by $g_{ij}(w) = \E_{(z,\tilde y)\sim D'(w)}[G_{ij}(w)]$ and $$g^*_{ij}(w) = \E_{z\sim \cN(0,I)}[z \Ind\{(w\cdot z)(\tilde w^*_{ij}\cdot z) \le 0, \tilde y=y^*=i\} - z \Ind\{(w\cdot z)(\tilde w^*_{ij}\cdot z) \le 0, \tilde y=y^*=j\}] \;.$$
Assuming that $\err_{ij}(h_w) \ge \poly(\mathcal{T}_{ij}^{-1}\Phi_{ij}^{-1})\mathcal{O}_{ij}+\eps$,
we next lower bound the correlation between $g_{ij}(w)$ and $w^*_{ij}$.
Notice that $\phi>\eps$, because otherwise, $\theta(w,w^*_{ij})<\eps$, which implies that $\err_{ij}(h_w) \le \mathcal{O}_{ij}+\eps$.
Thus, we will consider two cases.
In the first case, we have $\sin\theta < 3\sin\phi/4$.
In this case, 
we have $\sin \phi - \sin \theta \ge \eps/16$.
By \Cref{lm gaussian event mean}, we have that $\norm{g_{ij}(w)} \le O(1)$. 
Also by Hoeffding's inequality, for every $\delta>0$, by taking $d\poly(1/\eps)\log(1/\delta)$ examples from $D'(w)$, which can be done via a rejection sampling from $D$ with an acceptance rate at least $\eps$,
the empirical mean estimation $\widehat{g}_{ij}(w)$ satisfies $\norm{g_{ij}(w) - \widehat{g}_{ij}(w)}< \eps/100$.
This implies that by setting the step size as $\mu = \eps^2/C$, we have that after the update $w' = \proj_{\s^{d-1}}(w+\mu \hat{g}_{ij}(w))$, we have
$\norm{w-w'} \le \eps^2/C$.
This in turn implies that $\sin \theta(w',w^*_{ij}) \le \sin \phi - \eps^4/C^2$.
In the rest of the proof, we will assume that $\sin \theta(w,w^*_{ij}) \ge 3\sin \phi/4$. This implies that the choice of $\sigma \in [2\sin \theta(w,w^*_{ij})/\sqrt{\mathcal{T}_{ij}},4\sin \theta(w,w^*_{ij})/\sqrt{\mathcal{T}_{ij}}]$.

Since $G_{ij}(w) \perp w$, we know that 
\begin{align*}
    g_{ij}(w) \cdot w^* & = g_{ij}(w) \cdot (a w + b u) = b g_{ij}(w) \cdot u \\&\ge b g^*_{ij}(w) \cdot u - b \E_{(z,\tilde y)\sim D'(w)}[\abs{z\cdot u}\Ind\{y^*\neq \tilde y, y^* \in \{i,j\}\;\mathrm{or}\; \tilde y \in \{i,j\}\}]\;.
\end{align*}

Notice that 
\begin{align*}
    \Pr_{(z,\tilde y) \sim D'(w)}[y^*\neq \tilde y, y^* \in \{i,j\}\;\mathrm{or}\; \tilde y \in \{i,j\}] = \Pr_{x\sim \cN(0,\Sigma)}[f^*\neq y, f^* \in \{i,j\}\;\mathrm{or}\;  y \in \{i,j\}] \le \mathcal{O}_{ij}/\sigma.
\end{align*}
By \Cref{fact:simpleperturbation}, we have 
\begin{align*}
    \E_{(z,\tilde y)\sim D'(w)}[\abs{z\cdot u}\Ind\{y^*\neq \tilde y, y^* \in \{i,j\}\;\mathrm{or}\; \tilde y \in \{i,j\}\}] =\tilde{O}(\mathcal{O}_{ij}/\sigma).
\end{align*}

By \Cref{col surface}, since $\theta(w,w^*_{ij}) \le \tilde \Theta(\mathcal{T}^2_{ij}\Phi_{ij}/\sqrt{\log k})$, we know that 
\begin{align*}
 \tilde\Omega(\mathcal{T}_{ij}^{-7.5}\Phi_{ij}^{-2} \log k)\mathcal{O}_{ij}   \le \err_{ij}(h_w) \le \tilde O(\mathcal{T}_{ij})\sin\theta(w,w^*_{ij}) + \mathcal{O}_{ij}, 
\end{align*}
which implies that $\sin \theta \ge \Omega(\mathcal{T}_{ij}^{-8.5}\Phi_{ij}^{-2} \log k)\mathcal{O}_{ij}$ and $\tilde{O}(\mathcal{O}_{ij}/\sigma)\le \tilde O (\mathcal{T}_{ij}^{9}\Phi_{ij}^2/\log(k))$.

Recall that $\tilde w^*_{ij} = a \sigma w + bu$, which implies that for every $z$ such that $y^*(z) = i$, if $(w\cdot z)(\tilde w ^*_{ij}\cdot z) \le 0$, then we have $u \cdot z \ge 0$. 
Similarly, if $(w\cdot z)(\tilde w ^*_{ij}\cdot z) \le 0$, and $y^*(z) = j$ then we have $u \cdot z \le 0$.

By \Cref{lm convex body}, and Jensen's inequality, we have 
\begin{align*}
    &g^*_{ij}(w) \cdot u \\& \ge \sqrt{ \frac{\pi}{2}}\left( \Pr_{(z,\tilde y)\sim D'(w)}[(w\cdot z)(\tilde w^*_{ij}\cdot z) {\le} 0, \tilde y{=}y^*{=}i]^2 + \Pr_{(z,\tilde y)\sim D'(w)}[(w\cdot z)(\tilde w^*_{ij}\cdot z) {\le} 0, \tilde y{=}y^*{=}j]^2\right) \\
    & \ge \Omega \left(\Pr_{(z,\tilde y)\sim D'(w)}[(w\cdot z)(\tilde w^*_{ij}\cdot z) \le 0, \tilde y=y^* \in \{i,j\}]^2\right)\;.
\end{align*}
By \Cref{lm localization surface}, we have that $\Pr_{(z,\tilde y)\sim D'(w)}[\widetilde{E}_{ij}(w)] \ge \Omega(\mathcal{T}_{ij}^{4.5}\Phi_{ij}/\sqrt{\log k})$. Since
\begin{align*}
    \Pr_{(z,\tilde y)\sim D'(w)}[\widetilde{E}_{ij}(w)] \le  \Pr_{(z,\tilde y)\sim D'(w)}[(w\cdot z)(\tilde w^*_{ij}\cdot z) {\le} 0, \tilde y{=}y^* {\in} \{i,j\}] + \Pr_{(z,\tilde y)\sim D'(w)}[ \tilde y {\in} \{i,j\}, y^* {\neq} \tilde y],
\end{align*}
and 
\begin{align*}
\Pr_{(z,\tilde y)\sim D'(w)}[ \tilde y \in \{i,j\}, y^* \neq \tilde y] =  \Pr_{x \sim \cN(0,\Sigma)}[y(x) \in \{i,j\}, f^*(x) \neq y(x)]  \le \mathcal{O}_{ij}/\sigma 
\end{align*}

\begin{align*}
    \Pr_{(z,\tilde y)\sim D'(w)}[(w\cdot z)(\tilde w^*_{ij}\cdot z) \le 0, \tilde y=y^* \in \{i,j\}] \ge \Omega(\mathcal{T}_{ij}^{4.5}\Phi_{ij}/\sqrt{\log k}) \;,
\end{align*}
and 
$g^*_{ij}(w) \cdot u \ge \Omega(\mathcal{T}_{ij}^{9}\Phi^2_{ij}/\log k)$.
This implies that whenever $\err_{ij}(h_w) \ge \tilde O(\mathcal{T}_{ij}^{-7.5}\Phi_{ij}^{-2} \log k)\mathcal{O}_{ij} + \eps$, we have 
\begin{align*}
    g_{ij}(w) \cdot w^* \ge b \tilde \Omega(\mathcal{T}_{ij}^{9}\Phi^2_{ij}/\log k).
\end{align*}
Therefore, $g_{ij}(w) \cdot w^* \ge \poly(\eps)$.
Furthermore, by \Cref{lm gaussian event mean}, we have $\norm{g_{ij}(w)} \le O(1)$.
Notice that by Hoeffding's inequality, for every $\delta>0$, by taking $d\poly( 1/\eps,\log(1/\delta))$ examples from $D'(w)$, which can be done via a rejection sampling from $D$ with an acceptance rate at least $\eps$,
the empirical mean estimation $\widehat{g}_{ij}(w)$ satisfies $\norm{g_{ij}(w) - \widehat{g}_{ij}(w)}< \eps/100$.
By \Cref{lm gradient descent}, this implies that 
the update satisfies 
\begin{align*}
    \norm{w^*_{ij}-w'}^2 \le \norm{w^*_{ij}-w}^2 - \poly(\eps),
\end{align*}
and thus $\sin\theta(w',w^*_{ij}) \le \sin(\phi) - \poly(\eps)$.
In summary, if we have $\norm{\hat{g}_{ij}(w)-g_{ij}(w)}\le \eps/100$, then as long as $\sin \phi \ge \sin\theta(w,w^*)$, and $\err_{ij}(h(w)) \ge \tilde O(\mathcal{T}_{ij}^{-7.5}\Phi_{ij}^{-2} \log k)\mathcal{O}_{ij} + \eps$, we have $\sin \theta(w',w) \le \sin \phi - \poly(\eps)$.

We now show the correctness of \Cref{alg pairwise localization surface}, let $T = 1/\poly(\eps)$.
Suppose that for $t \in [T]$, we have $\err_{ij}(h^t_{ij}) = \err_{ij}(\sign(w^t_{ij}\cdot x))>\tilde O(\mathcal{T}_{ij}^{-7.5}\Phi_{ij}^{-2} \log k)\mathcal{O}_{ij} + \eps$.
Then, for $t \in [T]$, with probability at least $1-\delta/T$, we have $\sin \theta(w^{(t)}_{ij},w^*_{ij}) \le \sin \phi_{t} = \sin \phi_0 - t \poly(\eps)$.
This implies that after $T$ rounds, $\sin \theta(w^{(T)}_{ij},w^*_{ij})<0$, which gives contradiction.
This implies that there must be some $t \in [T]$ such that $\err_{ij}(\sign(w^t_{ij}\cdot x))\le \tilde O(\mathcal{T}_{ij}^{-7.5}\Phi_{ij}^{-2} \log k)\mathcal{O}_{ij} + \eps$.

By running a hypothesis selection, we know that \Cref{alg pairwise localization surface} output a hypothesis with error at most $\tilde O(\mathcal{T}_{ij}^{-7.5}\Phi_{ij}^{-2} \log k)\mathcal{O}_{ij} + \eps$.
\end{proof}

\subsection{Proof of \Cref{th surface area}}\label{app proof surface area}

\begin{theorem}[Restatement of \Cref{th surface area}]
    Consider the problem of agnostically learning $k$-MLC over $\R^d$, there is an algorithm that draws $n=d\poly(k/\eps)\log(1/\delta)$ samples, runs in $\poly(n)$ time, and with probability at least $1-\delta$ outputs a hypothesis with error $k\log k\poly(\mathcal{T}^{-1}\Phi^{-1})\opt +\eps$, where $\mathcal{T} = \min_{(i,j):i \neq j \in [k]}\mathcal{T}_{ij}$ and $\Phi = \min_{(i,j):i \neq j \in [k]}\Phi_{ij}$, with $\Phi_{ij}:=\min \left\{|\tan \theta^*_{ij}|,1 \right\}$.
\end{theorem}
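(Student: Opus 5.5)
The plan is to run \Cref{alg pairwise to pseudo} with \Cref{alg pairwise localization surface} as the pairwise subroutine. Each of the $\binom{k}{2}$ calls gets accuracy $\eps/k^2$ and confidence $\delta/k^2$. The pairwise guarantees are then aggregated exactly as in the proof of \Cref{th k=3}.

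\textbf{Pairwise step.} Fix $i\neq j$. \Cref{lm surface localization}, applied with parameters $(\eps/k^2,\delta/k^2)$, gives, with probability $1-\delta/k^2$,
\[
\err_{ij}(\widehat h_{ij})\le \poly(\mathcal{T}_{ij}^{-1}\Phi_{ij}^{-1})\log k\,\mathcal{O}_{ij}+\eps/k^2 .
\]
Since $\mathcal{T}\le\mathcal{T}_{ij}$ and $\Phi\le\Phi_{ij}$, and the polynomial is decreasing in both arguments, we may replace the constant by a uniform one, $P:=\poly(\mathcal{T}^{-1}\Phi^{-1})\log k$. The sample cost of each call is $d\poly(k/\eps)\log(k/\delta)$, and this is multiplied by $k^2$ calls. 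The algorithm needs the unknown parameters $\mathcal{T}_{ij},\Phi_{ij}$, which \Cref{alg pairwise localization surface} obtains by gridding. I would handle this by running the subroutine over a geometric grid of guesses and choosing the best output by empirical $(i,j)$-error on fresh samples. This costs only a $\poly(1/\eps)$ factor: if the true values are below $\poly(\eps)$, the bound is vacuous relative to $\eps$ and any guess is acceptable. A union bound over all pairs gives overall probability $1-\delta$.

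\textbf{Aggregation step.} As in the proof of \Cref{th k=3}, $h_W$ errs on $(x,y=j)$ only if some pairwise comparison involving $j$ is lost, so
\[
\err(h_W)\le \tfrac12\sum_{i\neq j}\err_{ij}(\widehat h_{ij})\le \tfrac{P}{2}\sum_{i\neq j}\mathcal{O}_{ij}+\eps .
\]
Next, count how often each $\opt_{a,b}$ appears in $\sum_{i\neq j}\mathcal{O}_{ij}$. The term $\opt_{a,b}$ is included in $\mathcal{O}_{ij}$ exactly when $\{i,j\}\cap\{a,b\}\neq\emptyset$, which happens for $O(k)$ ordered pairs. Hence $\sum_{i\neq j}\mathcal{O}_{ij}\le O(k)\sum_{a\neq b}\opt_{a,b}=O(k\,\opt)$, and so
\[
\err(h_W)\le O(k\log k)\poly(\mathcal{T}^{-1}\Phi^{-1})\opt+\eps .
\]

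\textbf{Main obstacle.} The only delicate point is making the parameter gridding rigorous. The subroutine must use the true geometry only through valid lower bounds on $\mathcal{T}_{ij},\Phi_{ij}$. I would check that \Cref{lm surface localization} stays valid when $\mathcal{T}_{ij},\Phi_{ij}$ are replaced by underestimates within a constant factor; its dependence enters only through the initial $\phi_0$ and the choice of $\sigma_t$. Final selection by empirical $(i,j)$-error then costs only an additive $\eps/k^2$ per pair.
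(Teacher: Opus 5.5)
Your proposal is correct and follows the paper's proof. Like the paper, you run \Cref{alg pairwise to pseudo} with \Cref{alg pairwise localization surface} at accuracy $\eps/k^2$ and confidence $\delta/k^2$, apply \Cref{lm surface localization} to each pair, and aggregate via $\err(h_W)\le\frac12\sum_{i\neq j}\err_{ij}(\widehat h_{ij})$, noting that each $\opt_{a,b}$ appears in only $O(k)$ of the $\mathcal{O}_{ij}$.

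Your gridding discussion goes beyond what the paper writes, since the paper leaves this to the algorithm. One correction there: when $\mathcal{T}_{ij}$ or $\Phi_{ij}$ is below $\poly(\eps)$, the target bound is vacuous only if $\mathcal{O}_{ij}\gtrsim\eps^2$; otherwise it is the warm start from \Cref{alg pairwise} that covers the case, because its $\tilde O(\sqrt{\mathcal{O}_{ij}})+\eps$ guarantee is then already $O(\eps)$ (provided the warm start is among the candidates in the final selection).
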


\begin{proof}[Proof of \Cref{th surface area}]
    We show that for $i\neq j \in [k]$, \Cref{alg pairwise localization surface} runs in $\poly(dk/\eps)$ time and outputs a halfspace $\hat{h}_{ij}$ such that $\err_{ij}(\hat{h}_{ij})\le \log k\poly(\mathcal{T}_{ij}^{-1}\Phi_{ij}^{-1})\mathcal{O}_{ij}+\eps/k^2$.
Suppose this is true, then we have 
 \begin{align*}
    \err(h_W)  & = \sum_{j = 1}^k \Pr_{(x,y)\sim D}[h_W(x) \neq y, y=j] \le \sum_{j = 1}^k \sum_{i \neq j} \Pr_{(x,y)\sim D}[w_{ji}\cdot x \le 0, y=j] \\
    & = \frac{1}{2} \sum_{(i,j):i \neq j} (\Pr_{(x,y)\sim D}[w_{ij}\cdot x <0, y=i ]+\Pr_{(x,y)\sim D}[w_{ij}\cdot x >0, y=j ]) \\
    &\le \frac{1}{2} \sum_{(i,j):i \neq j} \left( \poly(\mathcal{T}_{ij}^{-1}\Phi_{ij}^{-1})\mathcal{O}_{ij}+\eps/k^2\right) \\
    &\leq \frac{\log k \poly(\mathcal{T}^{-1}\Phi^{-1})}{2} \sum_{(i,j):i \neq j} \left(\sum_{r \neq i} (\opt_{ri} + \opt_{ir}) + \sum_{r \neq j} (\opt_{rj} + \opt_{jr}) \right) + \eps\\
    & \le k \log k\poly(\mathcal{T}^{-1}\Phi^{-1})\opt + \eps.
\end{align*}
\end{proof}

\subsection{Discussion on Effective Decision Boundary and Critical Angles} \label{sec:randommlcs}
In this section, we provide a detailed discussion on the effective decision boundary and critical angles. We restate the definition below.

\begin{definition}[Effective Decision Boundary and Critical Angle]
    Let $f^*(x) = \argmax_{i \in [k]} (w^*_i \cdot x)$ be a multiclass linear classifier. For $i,j \in [k], i \neq j$, we define the effective decision boundary of $f^*$ for classes $i,j$ as 
    \begin{align*}
     B_{ij}(f^*):=\{x \mid w^*_{ij}\cdot x = 0, w^*_{ir}\cdot x \ge 0, w^*_{jr}\cdot x \ge 0, \forall r \in [k] \setminus \{i,j\}\},   
    \end{align*}
    where for $i\neq j \in [k]$, $w^*_{ij} = (w^*_i-w^*_j)/\|w^*_i-w^*_j\|$.
The Gaussian measure of $B_{ij}$ is defined as $\mathcal{T}_{ij} \eqdef \Pr_{x \sim \cN(0,I)}[B_{ij}]$, where $x$ is drawn from a $(d-1)$-dimensional standard Gaussian supported over $H_{ij}:=\{x \mid w^*_{ij}\cdot x = 0\}$. Define the critical angle $\tan \theta^*_{ij} = \min_{r \in [k]\setminus\{i,j\}}\{\min\{|\tan\theta(w^*_{ij},w^*_{ir})| ,|\tan\theta(w^*_{ji},w^*_{jr})|\}\}$ and $\Phi_{ij} = \min\{\tan \theta^*_{ij},1\}$
\end{definition}

We prove that, with high probability, a random multiclass linear classifier of sufficiently high dimension has boundary mass and critical angles (see \Cref{def:boundary}) at least \(1/\poly(k)\).
The intuition is as follows: A multiclass linear classifier is fully determined by the scores \(w_i \cdot x\). In sufficiently large dimension, the vectors \(w_i\) are nearly orthogonal with high probability. This immediately gives us that the critical angles are non-trivial. Moreover, it implies that the classifier is close to the MLC \( \arg\max_{i \in [k]} x_i\). For this canonical classifier, by independence, each point on a boundary between any two classes is equally likely to belong in any class. Consequently, a \(1/\poly(k)\) fraction of the boundary mass survives inside the corresponding class regions. 

We prove these two ingredients in turn: first, we show that the critical angles are non-trivial in \Cref{lem:random-mlc-critical-angles}, and then we show that the boundary mass is non-trivial in \Cref{lem:random-mlc-boundary-mass}. Finally, combining these two bounds via a union bound gives \Cref{thm:random-mlc-regular}.

\begin{lemma}[Random multiclass linear classifiers have non-trivial critical angles]
\label{lem:random-mlc-critical-angles}
Let $C>0$ be a sufficiently large universal constant. Let $w_1,\ldots,w_k \in \mathbb{S}^{d-1}$ be drawn independently and uniformly at random, and define the multiclass linear classifier $f(x)=\arg\max_{i\in[k]} w_i\cdot x$. For each pair $i,j\in[k]$, let $\theta^*_{i,j}$ denote the critical angle of $f$, as in \Cref{def:boundary}, and define $\Phi_{i,j}=\min\{\tan\theta^*_{i,j},1\}$. If $d\ge C\log(k/\delta)$, then with probability at least $1-\delta$, simultaneously for all pairs $(i,j), i\neq j$, we have $\Phi_{i,j}\ge 1/C$.
\end{lemma}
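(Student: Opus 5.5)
The plan is to show that, with high probability, the vectors $w_1,\dots,w_k$ are nearly orthonormal. A direct computation will then show that every pair of normalized difference vectors $w_{ij},w_{ir}$ (with $r\notin\{i,j\}$) makes an angle bounded away from $0$ and $\pi$, uniformly in $k$. Concretely, I will use standard concentration on the sphere: for independent uniform $w_a,w_b\in\s^{d-1}$, the inner product $w_a\cdot w_b$ is sub-Gaussian with variance proxy $O(1/d)$, so $\Pr[|w_a\cdot w_b|>\gamma]\le 2e^{-c\gamma^2 d}$. A union bound over the $\binom{k}{2}$ pairs, with $\gamma$ a small absolute constant (say $\gamma=1/20$) and $d\ge C\log(k/\delta)$, gives that with probability at least $1-\delta$ all pairwise inner products satisfy $|w_a\cdot w_b|\le\gamma$. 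All vectors are unit by construction, so no norm concentration is needed.

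On this event I fix $i,j$ and $r\notin\{i,j\}$ and compute $\cos\theta(w_{ij},w_{ir})$ deterministically. First,
\[
(w_i-w_j)\cdot(w_i-w_r)=1-w_i\cdot w_r-w_i\cdot w_j+w_j\cdot w_r\in[1-3\gamma,1+3\gamma].
\]
Second, $\|w_i-w_j\|^2=2-2w_i\cdot w_j\in[2-2\gamma,2+2\gamma]$, and likewise for $\|w_i-w_r\|^2$. Hence
\[
\cos\theta(w_{ij},w_{ir})\in\Bigl[\tfrac{1-3\gamma}{2+2\gamma},\,\tfrac{1+3\gamma}{2-2\gamma}\Bigr],
\]
which for $\gamma=1/20$ lies in a compact subinterval of $(0,1)$ around $1/2$. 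The unperturbed value $1/2$ corresponds to an angle of $60^\circ$. Therefore $\theta(w_{ij},w_{ir})$ lies in a fixed interval $[\theta_{\min},\theta_{\max}]\subset(0,\pi/2)$, and
\[
|\tan\theta(w_{ij},w_{ir})|\ge \tan\theta_{\min}\ge 1/C.
\]
By the symmetric argument with $i$ and $j$ swapped, the same bound holds for $|\tan\theta(w_{ji},w_{jr})|$. Taking the minimum over $r$ gives $\tan\theta^*_{ij}\ge 1/C$, and hence $\Phi_{ij}\ge\min\{1/C,1\}=1/C$, simultaneously for all pairs $(i,j)$.

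The only step requiring care is choosing the constants consistently. The concentration constant $c$ in the spherical tail bound dictates how large $C$ must be relative to $\gamma$ through $d\ge C\log(k/\delta)$. I also need to make sure $\gamma$ is small enough that the cosine stays strictly inside $(0,1)$: the numerator must stay positive and the ratio must stay below $1$, both of which are immediate for $\gamma\le 1/20$. Since $C$ is allowed to be a sufficiently large universal constant, both requirements are met by taking $C\ge\max\{2/(c\gamma^2),\,1/\tan\theta_{\min}\}$. I do not expect any genuine obstacle beyond this bookkeeping.
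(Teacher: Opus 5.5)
Your proposal is correct and takes essentially the same route as the paper: near-orthogonality of the $w_a$ via spherical concentration and a union bound, followed by the same deterministic computation of $(w_i-w_j)\cdot(w_i-w_r)$ and $\|w_i-w_j\|^2$ to keep $\cos\theta(w_{ij},w_{ir})$ bounded away from $1$. The only difference is cosmetic: the paper bounds just $|\cos\theta|\le 2/3$, while you also keep the cosine away from $0$, which is harmless but not needed for a lower bound on $|\tan\theta|$.
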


\begin{proof}
For $i \neq j$, define
$    w_{ij} := {(w_i-w_j)}/{\|w_i-w_j\|_2}.$
Define the acute critical-angle parameter
$
    \theta^*_{ij}
    \eqdef
    \min_{r\notin\{i,j\}}
    \min
    \left\{
         \theta(w_{ij},w_{ir}),
         \theta(w_{ij},w_{jr})
    \right\}$.

Note that since the $w_i$'s are independent and  distributed uniformly on the unit sphere,
we know that $\abs{w_i\cdot w_j}^2 \sim \mathrm{Beta}(1/2,(d-1)/2)$ (see Lemma~B.1 in \cite{kontonis2024gain}).
This implies that 
\begin{align*}
\pr[|w_i\cdot w_{j}|>\eps] \leq e^{\Omega(-d\eps^2)} \;.
\end{align*}
Therefore, for $\eps=c>0$, for a sufficiently large constant $C>0$, $d=C(1/c^2)\log(k/\delta)$ dimension suffices to have that $|w_i\cdot w_{j}|<c$ for all $i\neq j,i,j\in [k]$ simultaneously with probability at least $1-\delta$.
Now conditioning on this event for the rest of the proof.

We first start by proving that the critical angle is non-trivial with high-probability. 
Let distinct $i,j,r\in[k]$. 
We have 
\[w_{ij}\cdot w_{ir}
    =
    \frac{ (w_i-w_j)\cdot(w_i-w_r)
    }{
        \|w_i-w_j\|_2\|w_i-w_r\|_2
    }.
\]
The numerator satisfies
$
(w_i-w_j)\cdot (w_i-w_r)
    =
    1- w_i\cdot w_r- w_j\cdot w_i+ w_j\cdot w_r,
$.
 Hence, 
\[
    \left| (w_i-w_j)\cdot (w_i-w_r) -1
    \right|
    \le 3c.
\]
Moreover,
\[
    \|w_i-w_j\|_2^2
    =
    2-2 w_i\cdot w_j
    \in [2-2c,2+2c],
\]
and the same holds for $\|w_i-w_r\|_2^2$. Therefore
\[
    | u_{ij}\cdot w_{ir}|
    \le
    \frac{1+3c}{2(1-c)}
    \le \frac{2}{3},
\]
for $c>0$ sufficiently small.
Therefore we have that $\cos(\theta(w_{ij}w_{ir} ))\leq 2/3$ and $\sin(\theta(w_{ij}w_{ir} ))\geq \sqrt{1-(2/3)^2}$, which implies that $\tan(\theta(w_{ij}w_{ir} ))=\Omega(1)$ for all $r$. Similarly we have that the same $\theta(w_{ij}w_{jr})$. Therefore $\Phi_{i,j}\geq 1/C$, for all  $i\neq j,i,j\in[k]$. 
\end{proof}

\begin{lemma}[Random multiclass linear classifiers have non-trivial boundary mass]
\label{lem:random-mlc-boundary-mass}
Let $C>0$ be a sufficiently large universal constant. Let $w_1,\ldots,w_k \in \mathbb{S}^{d-1}$ be drawn independently and uniformly at random, and define the multiclass linear classifier $f(x)=\arg\max_{i\in[k]} w_i\cdot x$. For each pair $i,j\in[k]$, let $\mathcal T_{i,j}$ denote the mass of the effective decision boundary of $f$, as in \Cref{def:boundary}. If $d\ge C\log(k/\delta)$, then with probability at least $1-\delta$, simultaneously for all pairs $(i,j), i\neq j$, we have $\mathcal T_{i,j}\ge k^{-O(C)}$.
\end{lemma}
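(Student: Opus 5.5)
We would prove this by conditioning on near-orthogonality, exactly as in \Cref{lem:random-mlc-critical-angles}. With probability at least $1-\delta$ we have $|w_a\cdot w_b|\le c$ for all $a\neq b$, where $c$ is a small constant; this uses $d\ge C\log(k/\delta)$ and the Beta tail bound. We then fix the weights satisfying this event and show deterministically that every $\mathcal T_{ij}\ge k^{-O(1)}$.

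\textbf{Reformulation.} Fix $i\neq j$ and let $z$ be a standard Gaussian on $H_{ij}$. On $H_{ij}$ we have $w_i\cdot z=w_j\cdot z$. Hence, using only the unnormalized differences, $B_{ij}$ is exactly the event
\[
s\ge w_r\cdot z \quad\text{for all } r\notin\{i,j\},\qquad\text{where } s:=w_i\cdot z .
\]
Let $P$ denote the orthogonal projection onto $H_{ij}$. The quantities $(s,\{w_r\cdot z\}_r)$ are jointly Gaussian, with
\[
\var(s)=\|Pw_i\|^2,\qquad \var(w_r\cdot z)=\|Pw_r\|^2,\qquad \cov(s,w_r\cdot z)=Pw_i\cdot Pw_r .
\]
Write $P w_i = w_i - (w_i\cdot w_{ij})w_{ij}$ and use $w_i\cdot w_{ij}=(1-w_i\cdot w_j)/\|w_i-w_j\|\approx 1/\sqrt2$. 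This gives $\var(s)=\tfrac12\pm O(c)$ and $\var(w_r\cdot z)=1\pm O(c)$. It also gives $|Pw_i\cdot Pw_r|=O(c)$, because $w_r\cdot w_{ij}=O(c)$.

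\textbf{Conditioning argument.} Set $t=A\sqrt{\log k}$ for a large constant $A$. Then
\[
\mathcal T_{ij}\;\ge\;\Pr[s\ge t]\cdot\inf_{t'\ge t}\Pr\bigl[\forall r:\ w_r\cdot z\le t'\ \bigm|\ s=t'\bigr].
\]
By Komatsu's inequality (\Cref{fact bias}) and $\var(s)\ge 1/3$, the first factor is at least $e^{-O(t^2)}=k^{-O(A^2)}$. For the second factor, note that conditionally on $s=t'$, each $w_r\cdot z$ is Gaussian with mean $\frac{\cov(s,w_r\cdot z)}{\var(s)}\,t'=O(c)\,t'\le t'/2$ and variance at most $\var(w_r\cdot z)\le 2$. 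Therefore
\[
\Pr[w_r\cdot z>t'\mid s=t']\le e^{-t'^2/16}\le k^{-A^2/16}.
\]
A union bound over the $k-2$ indices $r$ shows that the conditional failure probability is at most $k^{1-A^2/16}\le 1/2$ once $A\ge 8$. Combining the two factors gives $\mathcal T_{ij}\ge \tfrac12 k^{-O(A^2)}=k^{-O(1)}$ for every pair simultaneously on the conditioning event. This yields the claimed $k^{-O(C)}$ bound.

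\textbf{Main obstacle.} The key step is the covariance bookkeeping after projecting onto $H_{ij}$. We must verify that the correlation between $s$ and each competitor $w_r\cdot z$ is $O(c)$, so that the conditional means are at most $t/2$, and that $\var(s)$ stays bounded below. This requires expanding $Pw_r = w_r-(w_r\cdot w_{ij})w_{ij}$ and controlling every inner product by $c$. This works only because $c$ is a sufficiently small constant, which fixes how large $C$ must be in $d\ge C\log(k/\delta)$.
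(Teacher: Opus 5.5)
Your proposal is correct and follows essentially the same argument as the paper. Both condition on $|w_a\cdot w_b|\le c$, push the common score of classes $i,j$ up to level $\Theta(\sqrt{\log k})$ at cost $k^{-O(1)}$, and then use the $O(c)$ correlations, a Gaussian tail bound, and a union bound over $r\notin\{i,j\}$ to keep all competitors below it with probability at least $1/2$. The differences are only cosmetic: you work directly on $H_{ij}$ via the projection $P$ and a half-line $\{s\ge t\}$, whereas the paper conditions on $s_i-s_j=0$ in the ambient space with $y=(s_i+s_j)/2$ and uses a thin window $[A,A+h]$.
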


\begin{proof}
For $i \neq j$, define
$    w_{ij} := (w_i-w_j)/{\|w_i-w_j\|_2}.$
As in \Cref{def:boundary} let $B_{ij}(f)$ be the effective decision boundary
\[
    B_{ij}(f)
    :=
    \left\{
        x \in \{w_{ij}\cdot x=0\}
        :
        w_i\cdot x \ge w_r\cdot x
        \text{ and }
        w_j\cdot x \ge w_r\cdot x
        \text{ for all } r\notin\{i,j\}
    \right\}.
\]
Let
$   \mathcal T_{ij}
    :=
    \Pr_{x\sim N(0,I_{H_{ij}})}
    \left[x\in B_{ij}(f)\right],
    H_{ij}:=\{x:w_{ij}\cdot x=0\}.
$

Note that since the $w_i$'s are independent and  distributed uniformly on the unit sphere, we know that $\abs{w_i\cdot w_j}^2 \sim \mathrm{Beta}(1/2,(d-1)/2)$ (see Lemma~B.1 in \cite{kontonis2024gain}).
This gives
\begin{align*}
\pr[|w_i\cdot w_{j}|>\eps] \leq e^{\Omega(-d\eps^2)} \;.
\end{align*}
Therefore, for $\eps=c>0$, $d=C(1/c^2)\log(k/\delta)$ dimension suffices to have that $|w_i\cdot w_{j}|<c$ for all $i,j\in [k],i\neq j$ simultaneously with probability at least $1-\delta$.
Now we will be conditioning on this event for the rest of the proof.

We proceed to lower bound the mass of the effective decision boundary. 
 Fix a pair $i\neq j$. Let
$
    x\sim N(0,I_d),
    s_\ell := w_\ell\cdot x.
$
Then $(s_1,\ldots,s_k)$ is a centered Gaussian vector with covariance
$
  \cov(s_i,s_j)=  w_i\cdot w_j.
$
The condition $x\in H_{ij}$ is equivalent to
    $s_i=s_j$.
Therefore, we can write $\mathcal T_{ij}$ as
\[
    \mathcal T_{ij}
    =
    \Pr\left[
        s_i=s_j\ge s_r
        \text{ for all } r\notin\{i,j\}
        \mid s_i=s_j
    \right].
\]
Define
$   D:=s_i-s_j,
    y:=\frac{s_i+s_j}{2}.
$
Conditioning on $D=0$, the common score of classes $i$ and $j$ is $y$.
Furthermore, 
\[
    \var(y\mid D=0)
    =
    \var(y)
    =
    \frac{1+ w_i\cdot w_j}{2}
    \in
    \left[
        \frac{1-c}{2},
        \frac{1+c}{2}
    \right].
\]
For each $r\notin\{i,j\}$, the conditional distribution of $s_r$ given
$D=0$ and $y=t$ is Gaussian. Its conditional variance is at most $1$,
and its
conditional mean is linear in $t$, i.e., 
\[
    \E[s_r\mid d=0,y=t]=\beta_r t, \beta_r \eqdef \cov(s_r,y)/\var(y) \;.
\]
Since
\[
  \cov(s_r,y)
    =
    \frac{w_r\cdot w_i+ w_r\cdot w_j
    }{2},
\]
and since $\var(y)\ge (1-c)/2$, we have that
\[
    |\beta_r|
    \le
    \frac{2c}{1-c}
    \le 3c.
\]
Now choose
$
    A:=c_0\sqrt{\log k},
    ,h:=\frac{1}{\sqrt{\log k}}.
$
Since $y\mid D=0$ is a one-dimensional Gaussian with variance $\Theta(1)$,
there is a universal constant $C_1>0$ such that
\[
    \Pr[y\in[A,A+h]\mid D=0]
    \ge
    \frac{1}{k^{C_1}\sqrt{\log k}}.
\]
Condition on any value $t\in[A,A+h]$. For each $r\notin\{i,j\}$,
\[
    \E[s_r\mid D=0,y=t]\le 3c t.
\]
Choosing $c>0$ sufficiently small, we have
\[
    t-\E[s_r\mid D=0,y=t]\ge \frac{t}{2}.
\]
Since the conditional variance of $s_r$ is at most $1$, the Gaussian tail bound
implies
\[
    \Pr[s_r>t\mid D=0,y=t]
    \le
    \exp(-c_1 t^2)
    \le
    k^{-c_2}
\]
for universal constants $c_1,c_2>0$.
Taking the constant $c_0$ in the definition 
of $A$ sufficiently large ensures $c_2\ge 3$. Hence, by a union bound over all
$r\notin\{i,j\}$, 
\[
    \Pr[
        \exists r\notin\{i,j\}: s_r>y
        \mid D=0,\; y\in[A,A+h]
    ]
    \le
    k\cdot k^{-3}
    \le
    \frac12 .
\]
Therefore,
\[
\begin{aligned}
    \mathcal T_{ij}
    &=
    \Pr[
        s_r\le y
        \text{ for all } r\notin\{i,j\}
        \mid D=0
    ] \\
    &\ge
    \Pr[y\in[A,A+h]\mid D=0]\cdot \frac12 \\
    &\ge
    \frac{1}{k^{C_1}\sqrt{\log k}}.
\end{aligned}
\]
The argument holds uniformly for every pair $i<j$.
Thus, 
$\min_{i<j}\mathcal T_{ij}
    \ge
    \frac{1}{k^{C'}},$
for a universal constant $C'>0$.
\end{proof}
\begin{corollary}
    [Random multiclass linear classifiers are regular]
\label{thm:random-mlc-regular}
Let $C>0$ be a sufficiently large universal constant. Let $w_1,\ldots,w_k \in \mathbb{S}^{d-1}$ be drawn independently and uniformly at random, and define the multiclass linear classifier $f(x)=\arg\max_{i\in[k]} w_i\cdot x$. For each pair $i,j\in[k]$, let $\mathcal T_{i,j}$ and $\theta^*_{i,j}$ denote, respectively, the mass of the effective decision boundary and the critical angle of $f$, as in \Cref{def:boundary}, and define $\Phi_{i,j}=\min\{\tan\theta^*_{i,j},1\}$. If $d\ge C\log(k/\delta)$, then with probability at least $1-\delta$, simultaneously for all relevant pairs $(i,j)$, we have $\mathcal T_{i,j}\Phi_{i,j}\ge k^{-O(C)}$.
\end{corollary}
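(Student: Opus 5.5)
The corollary follows by combining \Cref{lem:random-mlc-critical-angles} and \Cref{lem:random-mlc-boundary-mass} through a union bound over their failure events. Fix $\delta\in(0,1)$ and apply each lemma with confidence parameter $\delta/2$.

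Both lemmas require $d\ge C\log(k/\delta')$ with $\delta'=\delta/2$. Since $\log(2k/\delta)\le 2\log(k/\delta)$ whenever $k/\delta\ge 2$, the hypothesis $d\ge C\log(k/\delta)$ of the corollary meets this requirement, possibly after enlarging the universal constant $C$. Enlarging $C$ is harmless, because both lemmas hold for every sufficiently large universal constant. One could also note that both lemmas condition on the same high-probability event $\{|w_i\cdot w_j|<c\ \forall i\neq j\}$, so they fail together. Even so, the union bound gives total failure probability at most $\delta$ with no further work.

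On the good event, simultaneously for all pairs $i\neq j$:
\begin{itemize}
\item \Cref{lem:random-mlc-critical-angles} gives $\Phi_{i,j}\ge 1/C$;
\item \Cref{lem:random-mlc-boundary-mass} gives $\mathcal T_{i,j}\ge k^{-O(C)}$.
\end{itemize}
Multiplying the two bounds yields
\[
\mathcal T_{i,j}\Phi_{i,j}\ge k^{-O(C)}/C.
\]

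It remains to absorb the factor $1/C$ into $k^{-O(C)}$. For $k\ge 2$ we have $1/C\ge 2^{-C}\ge k^{-C}$, so $\mathcal T_{i,j}\Phi_{i,j}\ge k^{-O(C)}$. The case $k=2$ is degenerate: there are no competing classes, so $\Phi_{i,j}$ is naturally set to $1$ and $\mathcal T_{i,j}=1$, and the bound holds trivially.

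There is no real obstacle here. The only care needed is in the bookkeeping of constants (the $\delta/2$ split and the absorption of $1/C$) and in handling the trivial case $k=2$.
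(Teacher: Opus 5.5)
Your proposal is correct and matches the paper's proof: apply \Cref{lem:random-mlc-critical-angles} and \Cref{lem:random-mlc-boundary-mass} each with failure probability $\delta/2$, take a union bound, and absorb the factor $1/C$ into $k^{-O(C)}$. Your extra bookkeeping fills in details the paper leaves implicit, namely adjusting $C$ via $\log(2k/\delta)\le 2\log(k/\delta)$ and the chain $1/C\ge 2^{-C}\ge k^{-C}$; the separate $k=2$ remark is unnecessary, since that chain already covers every $k\ge 2$.
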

\begin{proof}
Apply \Cref{lem:random-mlc-critical-angles} with failure probability $\delta/2$ and \Cref{lem:random-mlc-boundary-mass} with failure probability $\delta/2$. By a union bound, with probability at least $1-\delta$, both conclusions hold simultaneously for all relevant pairs $(i,j)$. Hence, for every relevant pair $(i,j), i\neq j$,
\[
    \mathcal T_{i,j}\Phi_{i,j}
    \ge k^{-O(C)}/C
    =
    k^{-O(C)}.
\]
This proves the corollary.
\end{proof}

\end{document}